\documentclass{article}

\usepackage{microtype}
\usepackage{graphicx}
\usepackage{subcaption}
\usepackage{booktabs} 

\usepackage{hyperref}

\usepackage[preprint]{icml2026}

\usepackage{amsmath}
\usepackage{amssymb}
\usepackage{mathtools}
\usepackage{amsthm}
\usepackage[capitalize,noabbrev]{cleveref}
\usepackage{etoc}
\usepackage{colortbl}
\usepackage[utf8]{inputenc}

\usepackage{multirow}

\newcommand{\inc}[1]{_{\textcolor{green!60!black}{\scriptscriptstyle\uparrow #1\%}}}
\newcommand{\dec}[1]{_{\textcolor{red}{\scriptscriptstyle\downarrow #1\%}}}
\newcommand{\incinf}{_{\textcolor{green!60!black}{\scriptscriptstyle\uparrow \infty}}}

\usepackage{xcolor}

\theoremstyle{plain}
\newtheorem{theorem}{Theorem}[section]
\newtheorem{proposition}[theorem]{Proposition}
\newtheorem{lemma}[theorem]{Lemma}
\newtheorem{corollary}[theorem]{Corollary}
\theoremstyle{definition}
\newtheorem{definition}[theorem]{Definition}
\newtheorem{assumption}[theorem]{Assumption}
\theoremstyle{remark}
\newtheorem{remark}[theorem]{Remark}

\usepackage[textsize=tiny]{todonotes}

\icmltitlerunning{AdamO: A Collapse-Suppressed Optimizer for Offline RL}

\begin{document}

\twocolumn[
  \icmltitle{
  AdamO: A Collapse-Suppressed Optimizer for Offline RL}
  

  \begin{icmlauthorlist}
    \icmlauthor{Nan Qiao}{thu,csu}
    \icmlauthor{Sheng Yue}{sysu}
    \icmlauthor{Shuning Wang}{thu,csu}
    \icmlauthor{Ju Ren}{thu}
  \end{icmlauthorlist}

  \icmlaffiliation{csu}{Central South University, China}
  \icmlaffiliation{sysu}{Sun Yat-sen University, China}
  \icmlaffiliation{thu}{Tsinghua University, China}


  \icmlkeywords{Machine Learning}

  \vskip 0.3in
]



\printAffiliationsAndNotice{}  

\begin{abstract}


Offline reinforcement learning (RL) can fail spectacularly when bootstrapped temporal-difference (TD) updates amplify their own errors, driving the critic toward extreme and unusable Q-values.
A key counterintuitive insight of this work is that collapse is not only a property of the backup rule or network architecture: optimizer dynamics themselves can directly trigger or suppress instability.
From a control-theoretic viewpoint, we model offline TD learning as a feedback system and analyze Adam-based critic updates.
This yields a necessary and sufficient condition for stability of the induced local update dynamics: within the regime we analyze, these dynamics are stable if and only if the spectral radius of the corresponding update operator is strictly below one.
Further analysis suggests that standard Adam updates can inadvertently distort the parameter geometry, motivating explicit orthogonality constraints to prevent TD error amplification.
To this end, we propose \textit{AdamO}, an Adam-based optimizer with a decoupled orthogonality correction regulated by a strict task-alignment budget.
We prove that this design theoretically guarantees worst-case task safety and preserves Adam’s continuous-time dissipative dynamics.
Empirically, AdamO is broadly compatible with diverse offline RL baselines, improving stability and returns across a broad suite of benchmarks.

\end{abstract}
\section{Introduction}
Offline reinforcement learning (RL) aims to learn from fixed datasets without interaction, typically addressing severe distribution shift through mechanisms such as explicit policy constraints, conservative value regularization, or uncertainty-driven pessimism.
\citep{tarasov2023revisiting,lyu2022mildly,an2021uncertainty}. 
Despite recent progress, the critic learning remains the main bottleneck, as temporal difference (TD) learning dynamics can trigger the deadly triad\footnote{The deadly triad describes the combination of function approximation, bootstrapping, and off-policy training, which can lead to divergent value estimates in RL \cite{van2018deadly_triad}.}, leading to instabilities and ultimate \textit{value collapse}.
\cite{baird1995residual,chen2022offline,kumar2022dr3}. 

Empirically, this {value collapse} is typically preceded by a breakdown of the TD learning signal itself \cite{tarasov2022corl,offinerlkit}.
Once the bootstrapped target becomes an unstable feedback path, TD errors can be amplified rather than corrected, and the critic is pushed toward extreme and eventually unusable Q-values \cite{kumar2022dr3}.
A traditional way to stabilize deep Q-learning is to weaken or delay error propagation using target networks and Double-Q style updates \citep{Double-Q,td3}. 
{More recently, empirical evidence suggests that architectural choices, especially normalization, can substantially improve stability in both online and offline settings \citep{bhatt2019crossnorm,lb-sac,ball2023efficient,kang2023improving,scaled-ql,yue2023understanding}. 
At the same time, a growing line of work studies collapse more directly by attributing it to unstable generalization dynamics or harmful correlations introduced by squared TD objectives, proposing remedies through auxiliary regularization 
\citep{anonymous2025less,kumar2022dr3}.} 
However, these methods serve primarily as mitigation rather than a cure, as they treat the optimization mechanism as a black box, leaving the critical interplay between optimizer dynamics and the TD objective unaddressed.
In this paper, we ask the following question: what are the \textit{necessary and sufficient conditions} for TD error collapse from the perspective of optimization dynamics, and how do modern optimizers like Adam  \cite{kingma2015adam} interact with bootstrapping to trigger or suppress such collapse?

To this end, we study critic collapse through a control theory viewpoint by treating offline TD learning as a feedback system, where the learner is trained on its own bootstrapped predictions without external correction. 
Focusing on the optimizer that is used in practice, we analyze the critic updates under Adam \citep{kingma2015adam,robbins1951stochastic}.
We derive a closed-form linear recurrence for the TD error and prove a necessary and sufficient characterization of stability: \textit{learning critic is stable if and only if the spectral radius of an explicit augmented update matrix is strictly below one.} 
This yields a simple mechanism: value collapse occurs exactly when bootstrapping turns into positive feedback that amplifies TD errors faster than the update dynamics can ``suppress'' them.

This characterization also exposes which design knobs can suppress collapse and why. We derive a tractable sufficient condition for the Hurwitz requirement that separates two effects: a bootstrapped scale term and a geometric term quantifying how severely the parameter mapping deviates from isometry. The scale term can be controlled by standard normalization practices, while the geometric term is directly governed by the properties of layer-wise weights, motivating parameter orthogonality as a principled way to prevent error amplification. Crucially, standard loss-based penalties inevitably corrupt Adam’s adaptive moment estimates, so orthogonality should be imposed as a decoupled correction at the optimizer level. Guided by this insight, we propose AdamO, an Adam-based optimizer that adds a conservative orthogonality correction on selected layer-wise weight matrices, limits its interference with task descent through a per-layer budget, and keeps the correction magnitude bounded relative to the Adam step. AdamO reduces to Adam when the orthogonality strength is set to zero.


We provide theory that matches this design at the level of optimization dynamics. 
In a conflict-free mode where the orthogonality correction is not allowed to oppose the task gradient, AdamO is guaranteed to not increase the next step task loss compared with Adam under a standard smoothness condition and an explicit step size requirement. 
When a positive budget is allowed, exact next step non-inferiority cannot hold in general, and we instead derive an explicit upper bound that quantifies the worst case single step degradation as a function of the allowed conflict and the correction magnitude. 
We further relate these regimes to a continuous time energy interpretation of Adam, where the conflict-free mode preserves monotone decrease and the budgeted mode yields a quantified relaxation. 
Empirically, these theoretical guarantees translate into robust practice: simply replacing the critic optimizer with AdamO effectively suppresses value collapse across diverse D4RL benchmarks. As a result, AdamO achieves higher returns than both standard optimizers and stability-oriented alternatives across a broad suite of algorithms, as detailed in Table \ref{tab:d4rl_results_adamo_percentage}.


\section{Related Work}
\textbf{Offline reinforcement learning.}
Offline reinforcement learning fundamentally aims to extract optimal policies from fixed, pre-collected datasets without the ability to interact with the environment for correction \citep{lillicrap2015continuous, fujimoto2019off,qiao2025fova}. To handle the inevitable distribution shift between the learned policy and the static dataset, the field has largely converged on methods utilizing policy constraints \citep{tarasov2023revisiting, wu2019behavior, nair2020awac}, conservative value regularization \citep{kumar2020conservative, iql, lyu2022mildly}, or uncertainty-driven pessimism \citep{qiao2025sumo, an2021uncertainty}. However, the core challenge in these off-policy algorithms often stems from the instability of the value function itself \citep{kumar2019stabilizing, chen2022offline}. Specifically, when combined with bootstrapping and deep neural function approximation, offline learning becomes highly susceptible to the "deadly triad," a phenomenon characterized by unbounded value divergence and maximization bias \citep{sutton2018reinforcement, baird1995residual, tsitsiklis1996deadly_tirad, van2018deadly_triad}.

\textbf{Value-function collapse in reinforcement learning.}
To counteract such instability in deep Q-learning, standard protocols have historically relied on heuristic mechanisms like separate target networks and Double-Q learning to stabilize error propagation \citep{Double-Q, td3}. Beyond these algorithmic adjustments, recent empirical work suggests that architectural interventions, particularly normalization techniques like CrossNorm \citep{bhatt2019crossnorm} and LayerNorm, can significantly enhance training stability in both online and offline settings \citep{lb-sac, ball2023efficient, kang2023improving, scaled-ql}. While these methods provide empirical relief, attention has recently shifted toward understanding the underlying mechanics of why representations degrade. $C^4$ attributes collapse to harmful TD cross covariance effects induced by the squared TD loss, and proposes controlling this structure via clustered replay and regularization \citep{anonymous2025less}. DR3 highlights an implicit regularization effect that aligns features across backup pairs and can collapse representations, and counteracts it with an explicit feature similarity penalty \citep{kumar2022dr3}. Conversely, \citet{yue2023understanding} analyze divergence as a self excitation feedback in Q updates, introduce an NTK based predictor of divergence, and empirically show that LayerNorm can suppress collapse. However, these approaches generally rely on auxiliary regularization or architectural modifications without identifying the necessary and sufficient conditions for value collapse. Crucially, they overlook the potential of suppressing these instabilities directly from the optimizer.

\textbf{Optimizers for reinforcement learning.} Stochastic gradient descent (SGD) is the classical stochastic-approximation workhorse for learning from noisy gradients \citep{robbins1951stochastic}, and modern deep RL commonly adopts adaptive first-order methods such as Adam \citep{kingma2015adam} to train both policy and value networks. Beyond generic first-order updates, ACKTR performs scalable trust-region / natural-gradient optimization in actor-critic methods via a Kronecker-factored curvature approximation \citep{wu2017acktr}. Motivated by the non-stationary and bootstrapped nature of RL objectives, \citet{asadi2023resetting} show that reusing Adam-style moment statistics across rapidly shifting loss landscapes can be harmful and propose resetting optimizer states to improve value-based deep RL stability, while TRAC designs a parameter-free optimizer inspired by online convex optimization to improve adaptation under continual distribution shifts and mitigate plasticity loss \citep{muppidi2024fasttrac}. In parallel, learned-optimizer approaches aim to meta-learn update rules specialized to RL dynamics, including Optim4RL \citep{lan2023optim4rl} and OPEN \citep{goldie2024open}. Most recently, Stable Gradients studies performance collapse when scaling deep RL and proposes gradient-flow interventions, including a Kronecker-factored optimizer (Kron), to stabilize training at large depth/width \citep{creus2025stable}. However, these optimizer-centric studies largely improve stability through heuristic interventions (e.g., curvature approximations, state resets, or meta-learned updates) and do not characterize the concrete conditions that trigger \emph{value-function collapse} in offline RL. In contrast, we identify a necessary and sufficient condition for collapse, derive a tractable sufficient condition, and enforce it through an Adam-style modification to suppress collapse in the offline setting.
\section{Preliminary}
\paragraph{Offline RL.}
We consider a fixed offline dataset given by $\{(s_i,a_i,s_{i+1},r_i)\}_{i=1}^M$. Define the input set by $X=\{x_i=(s_i,a_i)\}_{i=1}^M$ and the reward vector $r=[r_1,\dots,r_M]^\top$. Let $Q_\theta(\cdot)$ denote the network output parameterized by $\theta\in\mathbb{R}^P$, evaluated on a finite set of inputs and stacked into a vector in $\mathbb{R}^M$. 
For instance, $Q_\theta(X)$ is the vector of Q-values on the dataset pairs. The discount factor is $\gamma\in(0,1)$.
To cover both on-policy and off-policy one-step TD updates, we introduce a {target action selection rule} at the next state. For each iterate $\theta_t$, define the greedy policy
$\hat\pi_{\theta_t}(s)=\arg\max_a Q_{\theta_t}(s,a)$.
In addition, when the offline data are provided as trajectories, we let $\tilde a_i$ denote the behavior action taken at the next state $s_{i+1}$ in the dataset, i.e., the action following $s_{i+1}$ along the same trajectory.
We then define, for each transition $i$, a target next-action $a'_{i,t}$, e.g.,
\begin{equation}
{a'_{i,t}}=
\begin{cases}
\hat\pi_{\theta_t}(s_{i+1}), & \text{(Q-learning / off-policy TD)},\\
\tilde a_i, & \text{(SARSA / on-policy TD)},
\end{cases}
\end{equation}
and form the corresponding target set $X'_t=\{x'_{i,t}\}_{i=1}^M$ where $x'_{i,t}=(s_{i+1},a'_{i,t})$.
We then define the TD error:
\begin{equation}
\label{eq:td-error-def}
\mathbf e_t
=
Q_{\theta_t}(X)-\Bigl(r+\gamma Q_{\theta_t}(X'_t)\Bigr),
\end{equation}
and the corresponding step-dependent squared TD loss
\begin{equation}
\label{eq:td-loss-def}
L_t(\theta)
=
\frac12
\left\|
Q_\theta(X)-\Bigl(r+\gamma Q_{\theta_t}(X'_t)\Bigr)
\right\|_2^2.
\end{equation}
Note that the target term uses $\theta_t$,\footnote{We do not backpropagate through the bootstrapped target.} matching the usual semi-gradient TD objective.
The above formulation specializes to Q-learning and SARSA under the respective choices of $a'_{i,t}$.


\paragraph{Adam optimizer.}
In the context of RL, Adam is arguably the most widely used optimizer due to its adaptive learning rate properties~\cite{ghada2023dormant,td3+bc,iql}. In our standard implementation, let $g_t=\nabla_\theta L_t(\theta_t)$ denote the gradient of the loss function with respect to the parameters $\theta$ at step $t$. The algorithm maintains exponential moving averages of the gradient's first and second moments, $m_t$ and $v_t$, to adaptively scale the updates. These moments are updated as:
\begin{align}
\label{eq:adam-m}
m_t &= \beta_1 m_{t-1} + (1-\beta_1)g_t, \\
v_t &= \beta_2 v_{t-1} + (1-\beta_2)g_t^{\odot 2},
\end{align}
where $g_t^{\odot 2}$ represents the element-wise square of the gradient. To counteract initialization bias, these estimates are corrected as $\hat m_t=m_t/(1-\beta_1^t)$ and $\hat v_t=v_t/(1-\beta_2^t)$. Consequently, the parameters are updated via the adaptive diagonal preconditioner $D_t=\mathrm{Diag}\bigl(1/(\sqrt{\hat v_t}+\epsilon)\bigr)$ according to:
\begin{equation}
\label{eq:adam-update}
\theta_{t+1} = \theta_t - \eta D_t \hat m_t,
\end{equation}
where $\eta$ is the learning rate and $\epsilon$ is a small constant for numerical stability~\cite{kingma2015adam}.

\paragraph{Neural Tangent Kernel.}
To analyze the optimization dynamics in the function space, we utilize the Neural Tangent Kernel (NTK) perspective \cite{jacot2018ntk}. Intuitively, the NTK characterizes the similarity between inputs through the lens of the parameterized network, where the Jacobian matrix serves as the feature extraction map. Let $Z_t(X) = \nabla_\theta Q_{\theta_t}(X) \in \mathbb{R}^{P \times M}$ denote the Jacobian of the $Q$-function.
Consider the dynamics of the network output under a parameter shift $\Delta \theta$. By applying a first-order Taylor expansion, the change in prediction can be approximated as:
$
\Delta Q_{\theta}(X) \approx Z_t(X)^\top \Delta \theta.
$
This expansion suggests that the evolution of the function values depends heavily on the alignment of the gradients. When considering gradient-based optimization, this interaction is governed by the correlation between these gradients. Consequently, the dynamics are summarized by the Gram matrix, which is defined as the inner product of the Jacobians:
\begin{equation}
\label{eq:gram-def}
G_t(X) = Z_t(X)^\top Z_t(X).
\end{equation}
This matrix $G_t$ captures the pairwise similarity and dictates the convergence properties by characterizing how modifications in the parameter space translate to updates in the function space \cite{yue2023understanding}.

\begin{figure}[ht]
\centering
\includegraphics[width=1\linewidth]{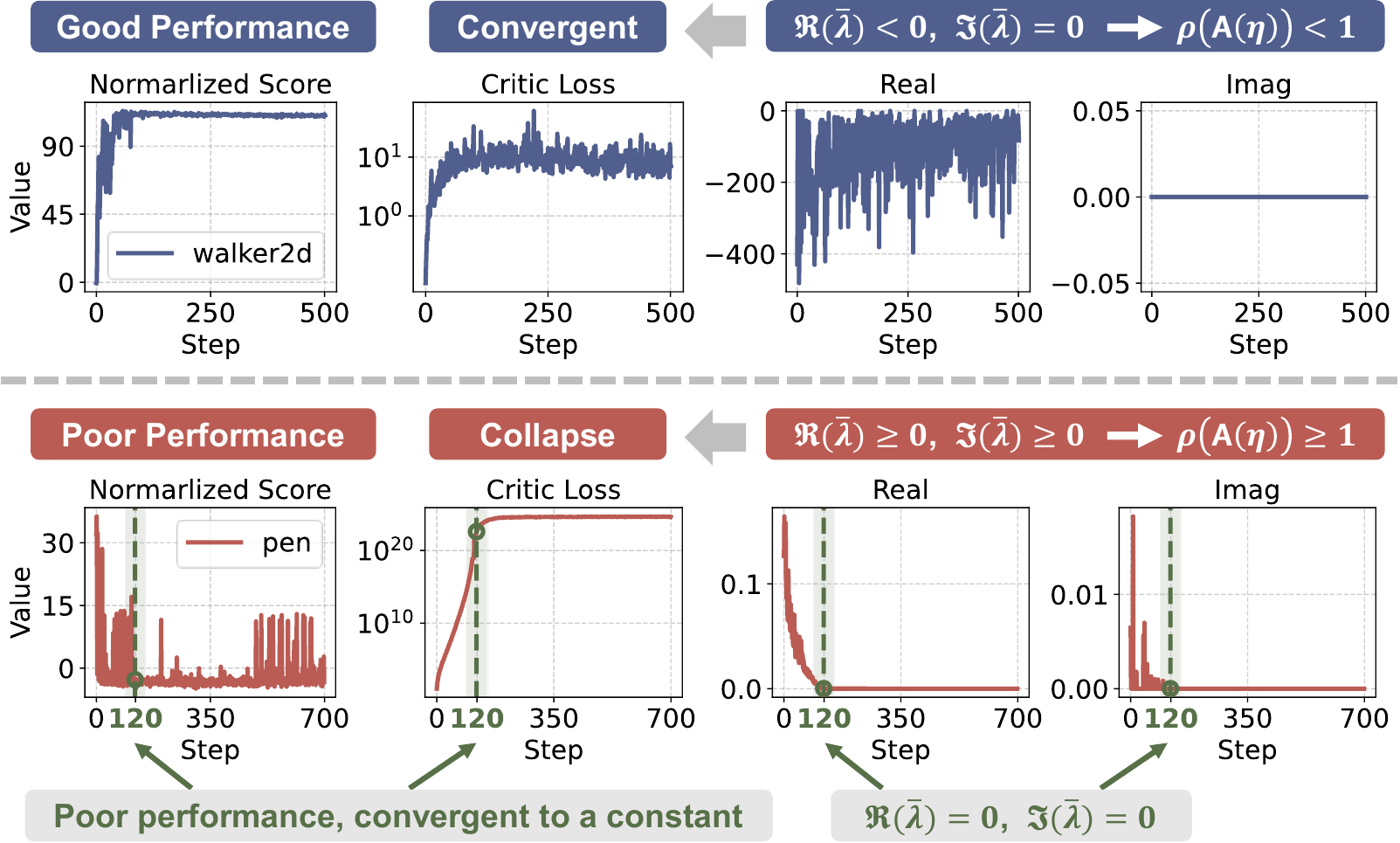}
\caption{{Convergence vs. collapse of critic loss using TD3+BC.}
\textbf{Top:} On the \textit{walker2d-medium-expert} task, the update operator satisfies the contraction condition $\rho(\mathsf A(\eta)) < 1$. This suppresses bootstrapping errors, which maintains a convergent critic loss. 
\textbf{Bottom:} On the \textit{pen-human} task, the expansive regime $\rho(\mathsf A(\eta)) > 1$ triggers a value explosion (critic loss collapse). The green vertical line highlights the critical boundary state where $\rho(\mathsf A(\eta)) = 1$, leading to collapse stagnation and resulting in poor performance.
}
\vspace{-15pt}
\label{fig:insight1}
\end{figure}

\section{A Linearized Divergence Criterion for Adam in TD Learning}
\label{sec:adam-td-linearized}
We further develop a local, operator-level stability and divergence criterion for applying Adam to the semi-gradient squared TD objective introduced in the preliminary section. To keep the main text concise, we present a simplified operator-level derivation here and defer the full technical development to Appendix~D.
We rely on standard local analysis assumptions (detailed in Appendix~\ref{subsec:adam-td-assumptions}), including: the stepsize $\eta$ is small enough to permit a first-order Taylor approximation, the action gap is sufficient to keep the greedy policy stable, and the process has entered a ``terminal phase.'' In this phase, the bootstrapped targets $X^{'}$, the network Jacobians $Z(\cdot)$, and the Adam preconditioner $D$ can be treated as effectively frozen constants.
In particular, for any finite input sets $X_1$ and $X_2$, define the preconditioned Gram operator
$
\mathsf{K}(X_1,X_2)\triangleq Z(X_1)^\top D\, Z(X_2).
$
Because TD learning is bootstrapped through the greedy next-action targets, the same parameter update also propagates through ${} X^{'}$, leading to the linearized TD-error dynamics
\begin{equation}
\mathbf e_{t+1}
=
\mathbf e_t
+\eta\,\mathsf S\,\bar{\mathbf e}_t
+o(\eta),
\end{equation}
where $\bar{\mathbf e}_t$ is Adam's exponential moving average of past TD errors and the TD dynamics operator is
\begin{equation}
\label{eq:TD dynamics-S1}
\mathsf S \triangleq \gamma\,\mathsf K({} X{'},X)-\mathsf K(X,X).
\end{equation}
By coupling this TD-error propagation with the EMA recursion for $\bar{\mathbf e}_t$, we obtain a closed-form linear recurrence for TD error $\mathbf e_t$. The following theorem provides the exact stability criterion for this linearized system.

\begin{theorem}
\label{thm:adamtd_exact_schur}
Under the assumptions of linearization, greedy stability, and terminal freezing (Assumptions~\ref{as:first-order-linearization}--\ref{as:terminal-freezing} in Appendix), for all $t\ge t_0$ the TD error satisfies
\begin{equation}
\label{eq:second-order}
\mathbf e_{t+1}
=
\Bigl((1+\beta_1)I+\eta(1-\beta_1)\mathsf{S}\Bigr)\mathbf e_t
-
\beta_1 \mathbf e_{t-1}
+
o(\eta),
\end{equation}
where $\mathsf S$ is defined in Eq.~\eqref{eq:TD dynamics-S1}.
In the local first order regime, ignore the $o(\eta)$ term and define
\[
\mathsf A(\eta)\triangleq
\begin{bmatrix}
(1+\beta_1)I+\eta(1-\beta_1)\mathsf S & -\beta_1 I\\
I & 0
\end{bmatrix}.
\]
Then the frozen linearized dynamics converges exponentially to $0$ if and only if
\(
\rho(\mathsf A(\eta))<1.
\)
\end{theorem}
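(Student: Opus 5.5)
The argument has two parts: first derive the second-order recurrence \eqref{eq:second-order} by eliminating Adam's moving average $\bar{\mathbf e}_t$, then read off exponential stability from the spectral radius of the companion matrix $\mathsf A(\eta)$.

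\textbf{Deriving the recurrence.} We start from the linearized one-step relation $\mathbf e_{t+1}=\mathbf e_t+\eta\mathsf S\bar{\mathbf e}_t+o(\eta)$. Under terminal freezing the gradient is $g_t=Z(X)\mathbf e_t$. Since $Z$ is constant, the first moment satisfies $m_t=Z(X)\bar{\mathbf e}_t$, where $\bar{\mathbf e}_t=\beta_1\bar{\mathbf e}_{t-1}+(1-\beta_1)\mathbf e_t$. Because $t\ge t_0$ is large, the bias correction $1-\beta_1^t$ equals $1+o(1)$, and its effect is absorbed into the $o(\eta)$ term. The elimination then proceeds in three steps.
\begin{itemize}
\item Write the same relation at time $t-1$, multiply it by $\beta_1$, and subtract it from the relation at time $t$.
\item This gives $\mathbf e_{t+1}-\beta_1\mathbf e_t=\mathbf e_t-\beta_1\mathbf e_{t-1}+\eta\mathsf S(\bar{\mathbf e}_t-\beta_1\bar{\mathbf e}_{t-1})+o(\eta)$.
\item The EMA recursion gives $\bar{\mathbf e}_t-\beta_1\bar{\mathbf e}_{t-1}=(1-\beta_1)\mathbf e_t$; substituting and rearranging yields \eqref{eq:second-order}.
\end{itemize}
The step I expect to be the main obstacle is justifying that the $o(\eta)$ remainders and the bias-correction factor combine uniformly in $t$. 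I will handle it by invoking the frozen-constant assumptions, so that all remainders are $o(\eta)$ uniformly for $t\ge t_0$.

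\textbf{Companion form.} After dropping the $o(\eta)$ term, stack the state as $\mathbf y_t=(\mathbf e_t,\mathbf e_{t-1})$. The recurrence then becomes exactly $\mathbf y_{t+1}=\mathsf A(\eta)\mathbf y_t$, so $\mathbf y_t=\mathsf A(\eta)^{t-t_0}\mathbf y_{t_0}$.

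\textbf{Sufficiency.} If $\rho(\mathsf A)<1$, Gelfand's formula (or a Jordan-form bound) gives, for any $\rho(\mathsf A)<r<1$, a constant $C$ such that $\|\mathsf A^k\|\le Cr^k$. Hence $\mathbf e_t\to0$ exponentially for every initial condition.

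\textbf{Necessity.} Suppose $\rho(\mathsf A)\ge1$, and let $\lambda$ be an eigenvalue with $|\lambda|\ge1$ and eigenvector $w$. Because $\mathsf A$ is a companion matrix, $w$ has the form $(\lambda u,u)$ with $u\neq0$. Starting from $\mathbf y_{t_0}=w$ (taking real or imaginary parts if $\lambda$ is complex, since $\mathsf A$ is real), we get $\|\mathbf y_t\|=|\lambda|^{t-t_0}\|w\|\not\to0$. Therefore $\mathbf e_t$ does not converge to $0$ for this initialization, which establishes the equivalence.
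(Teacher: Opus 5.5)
Your proposal is correct. The derivation of the recurrence matches the paper's in substance. Subtracting $\beta_1$ times the shifted relation and then using $\bar{\mathbf e}_t-\beta_1\bar{\mathbf e}_{t-1}=(1-\beta_1)\mathbf e_t$ is the same manipulation as the paper's step of substituting the EMA and replacing $\eta\mathsf S\bar{\mathbf e}_{t-1}$ by $\mathbf e_t-\mathbf e_{t-1}$. Your sufficiency argument via Gelfand's formula is identical to theirs.

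The genuine difference is necessity. The paper reads ``converges exponentially'' as a bound $\|\mathbf z_{t_0+k}\|\le C\alpha^k\|\mathbf z_{t_0}\|$ with $C,\alpha$ uniform over initial states. Taking the supremum gives $\|\mathsf A(\eta)^k\|\le C\alpha^k$, and a second use of Gelfand yields $\rho(\mathsf A(\eta))\le\alpha<1$. You instead argue by contraposition with an explicit eigenmode $(\lambda u,u)$, $|\lambda|\ge 1$. Your route is more elementary and proves slightly more. It shows that mere convergence $\mathbf e_t\to 0$ from every initial state, at any rate and with no uniform constants, already forces $\rho(\mathsf A(\eta))<1$. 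It also exhibits the non-decaying mode concretely, which fits the paper's later eigenmode reading of collapse and stagnation (e.g.\ the unit root $r=1$ when $0\in\mathrm{spec}(\mathsf S)$). The paper's version is shorter but leans on the stronger uniform hypothesis.

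Two details need tightening.

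\emph{Complex eigenvalues.} For complex $\lambda$, the identity $\|\mathbf y_t\|=|\lambda|^{t-t_0}\|w\|$ does not transfer to $\Re w$ or $\Im w$ individually. Use $\|\mathsf A^k\Re w\|^2+\|\mathsf A^k\Im w\|^2=|\lambda|^{2k}\|w\|^2\ge\|w\|^2$ instead. This shows at least one of the two real initializations fails to converge. Alternatively, note that by linearity, convergence for all real data implies convergence for complex data.

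\emph{Bias correction.} The factor $c_t=(1-\beta_1^t)^{-1}$ is not automatically absorbed into $o(\eta)$. For fixed $t_0$, the discrepancy $\eta(c_t-1)\mathsf S\bar{\mathbf e}_t$ is $O(\eta\beta_1^{t_0})$, which is of order $\eta$. Your subtraction step also generates an extra term $\eta\beta_1(c_t-c_{t-1})\mathsf S\bar{\mathbf e}_{t-1}$. The paper avoids both by assuming $c_t\equiv c$ for $t\ge t_0$ and replacing $\eta$ by the effective stepsize $c\,\eta$. Adopting that convention makes your elimination exact.
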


Theorem~\ref{thm:adamtd_exact_schur} establishes that stability is determined by the spectral radius of the augmentation matrix $\mathsf A(\eta)$. However, evaluating the spectral radius of a step-dependent block matrix is analytically difficult. To provide a more practical condition, we examine the behavior of the system as the stepsize $\eta$ approaches zero. In this limit, the stability of the discrete Adam updates is governed by the spectral properties of the continuous-time operator $\mathsf{S}$.


\begin{theorem}
\label{thm:adamtd_hurwitz_sufficient}
Work in the frozen regime of Theorem~\ref{thm:adamtd_exact_schur} and ignore the $o(\eta)$ term.
If $\mathsf S$ is Hurwitz, then there exists $\eta_0>0$ such that
$
\rho(\mathsf A(\eta))<1,
\forall\eta\in(0,\eta_0).
$
\end{theorem}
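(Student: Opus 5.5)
The plan is to reduce the spectral question for the block matrix $\mathsf A(\eta)$ to a family of scalar quadratics, one per eigenvalue of $\mathsf S$. If $\lambda\ne 0$ is an eigenvalue of $\mathsf A(\eta)$ with eigenvector $(u,w)$, the second block row gives $u=\lambda w$. Substituting into the first block row shows that $\lambda$ is an eigenvalue of $\mathsf A(\eta)$ if and only if
\[
\det\!\Bigl(\lambda^2 I-\lambda\bigl((1+\beta_1)I+\eta(1-\beta_1)\mathsf S\bigr)+\beta_1 I\Bigr)=0 .
\]
This is the usual companion-matrix determinant identity. Since every block is a polynomial in $\mathsf S$, we can triangularize $\mathsf S$ (Schur form), and the determinant factors over the eigenvalues $\mu_j$ of $\mathsf S$. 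Hence the spectrum of $\mathsf A(\eta)$ is the union of the roots of
\[
p_{\mu}(\lambda)=\lambda^2-\bigl(1+\beta_1+\eta(1-\beta_1)\mu\bigr)\lambda+\beta_1,\qquad \mu\in\sigma(\mathsf S).
\]
No diagonalizability is needed, only the triangular form.

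Next we analyze each quadratic perturbatively in $\eta$. At $\eta=0$, $p_\mu(\lambda)=(\lambda-1)(\lambda-\beta_1)$, with simple roots $1$ and $\beta_1$, since $\beta_1\in[0,1)$. Because the roots are simple, they depend analytically on $\eta$ near $0$. The root near $\beta_1$ stays within distance $O(\eta)$ of $\beta_1<1$, so it lies strictly inside the unit disk for small $\eta$. For the root near $1$, write $\lambda(\eta)=1+c\eta+O(\eta^2)$. Differentiating $p_\mu(\lambda(\eta))=0$ at $\eta=0$ gives
\[
c\,(2-(1+\beta_1))-(1-\beta_1)\mu=0,
\]
so $c=\mu$. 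Then $|\lambda(\eta)|^2=1+2\eta\,\mathrm{Re}\,\mu+O(\eta^2)$. Since $\mathsf S$ is Hurwitz, $\mathrm{Re}\,\mu<0$, and therefore $|\lambda(\eta)|<1$ for all sufficiently small $\eta>0$.

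The main obstacle is making the $O(\eta^2)$ remainder uniform, so that a single $\eta_0$ works. This is handled by finiteness: $\mathsf S$ is $M\times M$ and has at most $M$ eigenvalues. Each yields its own threshold $\eta_0(\mu)>0$, obtained from the explicit quadratic formula or from analytic dependence of simple roots, and we take $\eta_0=\min_\mu\eta_0(\mu)$.

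To be careful, we can also give explicit bounds. For the root near $1$, use the quadratic formula with a branch of the square root chosen continuously near $\eta=0$; its discriminant at $\eta=0$ is $(1-\beta_1)^2\neq0$. Then bound the Taylor remainder on a compact $\eta$-interval. Finally, $\rho(\mathsf A(\eta))$ is the maximum modulus over the $2M$ roots, which is $<1$ for all $\eta\in(0,\eta_0)$.
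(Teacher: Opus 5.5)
Your proposal is correct and follows essentially the same route as the paper's proof. Both use the companion structure and Schur triangularization to reduce $\mathrm{spec}(\mathsf A(\eta))$ to the roots of $r^2-(1+\beta_1+\eta(1-\beta_1)\lambda)r+\beta_1$ for $\lambda\in\mathrm{spec}(\mathsf S)$, then use the implicit function theorem at the simple roots $1$ and $\beta_1$ to get $|r_1(\eta)|^2=1+2\eta\,\Re(\lambda)+O(\eta^2)$ together with continuity of the $\beta_1$-branch, and finally take a minimum of the per-mode thresholds over the finite spectrum.
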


This result bridges the gap between the optimizer's discrete recurrence and the underlying operator-theoretic properties of the TD updates. Collectively, Theorems~\ref{thm:adamtd_exact_schur} and~\ref{thm:adamtd_hurwitz_sufficient} establish that a sufficient condition for the exponential decay of the frozen linearized TD error is simply that $\mathsf S$ is Hurwitz (i.e., its eigenvalues lie strictly in the left half of the complex plane) and the stepsize $\eta$ is sufficiently small.

\begin{remark}
For a real matrix $\mathsf S$, Hurwitzness is equivalent to $\Re(\lambda)<0$ for every $\lambda\in\mathrm{spec}(\mathsf S)$.
Furthermore, if $\Re(\bar\lambda)=\max_{\lambda\in\mathrm{spec}(\mathsf S)}\Re(\lambda)=0$, then $|r_1(\eta)|^2=1+O(\eta^2)$, and if an eigenvalue $\bar\lambda$ attaining the maximum also satisfies $\Im(\bar\lambda)=0$ then $\bar\lambda=0$ and the characteristic polynomial admits the root $r=1$ for all $\eta$, so $\rho(\mathsf A(\eta))\ge 1$.
\end{remark}

Detailed proofs of these theorems, along with supporting lemmas regarding gradient factorization and momentum recursion, are provided in Appendix~\ref{subsec:adam-td-lemmas} and \ref{subsec:adam-td-main-proofs}. Notably, in supervised learning (SL), the bootstrapped feedback term $\gamma\,\mathsf K({} X^{'},X)$ is absent. Consequently, $\mathsf S=-\mathsf K(X,X)$. This term is negative semidefinite due to the inherent properties of $\mathsf K(X,X)$. Thus, the dynamics lack positive feedback and typically satisfy $\rho(\mathsf A(\eta))\le 1$, thereby explaining why collapse is much less common in SL settings.

\paragraph{Empirical verification.}
To connect the operator-level criteria above to observable training behavior, we monitor the dominant eigenmode
$\bar\lambda \in \mathrm{spec}(\mathsf S)$ in the late stage where the ``frozen terminal'' approximation is most accurate,
and juxtapose its evolution with the critic loss\footnote{Critic loss of TD3+BC is equal to its TD loss.}  and task return.
Figure~\ref{fig:insight1} illustrates a tight correspondence between the predicted Schur stability of the augmented dynamics
(Theorem~\ref{thm:adamtd_exact_schur}) and the learning curves: when the spectrum of $\mathsf S$ remains in a regime compatible
with contractive $\rho(\mathsf A(\eta))$,
\footnote{In particular, when $\mathsf S$ is Hurwitz so that a sufficiently small stepsize would
guarantee $\rho(\mathsf A(\eta))<1$ by Theorem~\ref{thm:adamtd_hurwitz_sufficient}.} training exhibits bounded critic loss and
sustained performance. In contrast, once the leading mode drifts toward nonnegative real part, the error-propagation feedback
ceases to be contractive and the loss rapidly amplifies, accompanied by a sharp degradation in return.
Moreover, the observed transition concentrates near the marginal boundary described in Remark: when
$\Re(\bar\lambda)\approx 0$ with $\Im(\bar\lambda)\approx 0$, the augmented recurrence develops a unit root and exactly a unit root when $\bar\lambda=0$, so the iterates no longer decay and training tends to stagnate in a degenerate,
near-constant regime rather than recovering\footnote{More detailed experimental settings and derivations regarding Figure~\ref{fig:insight1} can be found in Appendix~\ref{subsec:adam-td-ema-stopgrad}, and we also provide a theoretical case study for online RL in Appendix~\ref{subsec:online-rl-schur-analogue}.}.

Notably, while previous works such as \citet{kumar2022dr3} and \citet{yue2023understanding} derive their insights primarily from the dynamics of SGD, they overlook the complex internal state dynamics of adaptive optimizers like Adam.
\textit{Adam is not just 'fast SGD'}. Its momentum and variance adaptation mechanisms create a completely different dynamical system (a second-order difference equation system) compared to SGD. Our work is the first to derive stability conditions specifically for this Adam-dominated regime, which explains why our spectral condition differs from theirs.
\section{Adam with Orthogonality Correction}
\subsection{When is the TD dynamics operator Hurwitz?}
\label{sec:hurwitz-sufficient}

We have shown that the frozen linearized TD-error dynamics is exponentially stable
for sufficiently small stepsize once the TD dynamics operator $S$ is Hurwitz.
The goal here is therefore to give a compact sufficient condition for Hurwitzness that cleanly
separates two effects controlled by different knobs:
(i) a {bootstrapped scale} term, primarily controlled by input normalization/clipping and spectral norm constraints,
and (ii) a {feature-conditioning} term, controlled by near-orthogonality together with input normalization.
For simplicity, we denote
$
\Phi = D^{1/2}Z(X),\ \Phi_* = D^{1/2}Z({}X')
$
so that
$
S=\gamma \Phi_*^\top\Phi-\Phi^\top\Phi.
$
We begin with a compact sufficient condition for Hurwitzness:

\begin{proposition}
\label{prop:phi-orth-suff-hurwitz}
If
\begin{equation}
\gamma \|\Phi\|_2 \|\Phi_*\|_2  +\|\Phi^\top\Phi-I\|_2 
< 1,
\end{equation}
then $S$ is Hurwitz.
\end{proposition}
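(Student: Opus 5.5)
Our plan is to prove Hurwitzness through the symmetric part of $S$, working on $\mathbb{C}^M$. Take an eigenpair $Sv=\lambda v$ with $\|v\|_2=1$. Then $\lambda = v^* S v$, so
\[
\Re(\lambda)=\Re(v^*Sv)=v^*\tfrac{S+S^\top}{2}v \le \lambda_{\max}\!\Bigl(\tfrac{S+S^\top}{2}\Bigr).
\]
Since $S$ is real, $S+S^\top$ is real symmetric. It therefore suffices to show
\[
\lambda_{\max}\!\bigl(\tfrac12(S+S^\top)\bigr)<0,
\]
which is equivalent to $\Re(x^\top S x)<0$ for every nonzero real $x$; by the Hermitian form above this also covers complex $v$. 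This is the standard fact that a negative definite symmetric part forces every eigenvalue strictly into the left half-plane.

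Next we split $S=\gamma\Phi_*^\top\Phi-\Phi^\top\Phi$ using the isometry defect. Write
\[
\Phi^\top\Phi = I + E,\qquad E=\Phi^\top\Phi-I.
\]
Then
\[
S = -I - E + \gamma\Phi_*^\top\Phi,
\]
and for a unit vector $v$,
\[
\Re(v^*Sv) = -1 - v^*Ev + \gamma\Re\bigl(v^*\Phi_*^\top\Phi v\bigr),
\]
where $v^*Ev$ is real because $E$ is symmetric. We bound the two perturbation terms separately. For the first, $-v^*Ev\le\|E\|_2=\|\Phi^\top\Phi-I\|_2$. For the second, Cauchy–Schwarz and submultiplicativity give
\[
|v^*\Phi_*^\top\Phi v|=|\langle \Phi_* v,\Phi v\rangle|\le \|\Phi_*\|_2\|\Phi\|_2 .
\]
Combining the two bounds,
\[
\Re(\lambda)\le -1+\|\Phi^\top\Phi-I\|_2+\gamma\|\Phi\|_2\|\Phi_*\|_2,
\]
and the hypothesis makes the right-hand side strictly negative. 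Hence every eigenvalue of $S$ has negative real part, so $S$ is Hurwitz.

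The step needing most care is the first: the inequality $\Re(\lambda)\le\lambda_{\max}$ of the symmetric part must be justified for a non-normal real $S$ with complex eigenvectors. We handle it by working with the Hermitian form as above, using $\Re(v^*Sv)=v^*\tfrac{S+S^\top}{2}v$ for real $S$. Two further points should be checked explicitly. First, the factorization $S=\gamma\Phi_*^\top\Phi-\Phi^\top\Phi$ needs $D$ to be positive diagonal, which holds since $\epsilon>0$. Second, the operator norms are spectral norms of rectangular $P\times M$ matrices, which is exactly what Cauchy–Schwarz requires. No earlier theorem is needed beyond the definition of $S$.
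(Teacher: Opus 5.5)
Your proof is correct and is essentially the paper's argument. The paper obtains this statement from Proposition~\ref{prop:near-ortho-hurwitz}, which bounds the symmetric part $\mathsf S_s=\gamma\tfrac12(A+A^\top)-\Phi^\top\Phi$ using Weyl's inequality, $\lambda_{\min}(\Phi^\top\Phi)\ge 1-\|\Phi^\top\Phi-I\|_2$, together with $\|\tfrac12(A+A^\top)\|_2\le\|\Phi_*\|_2\|\Phi\|_2$, and then invokes the symmetric-part lemma; your direct Rayleigh-quotient bound on a unit eigenvector is the same argument written out explicitly.
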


Proposition~\ref{prop:phi-orth-suff-hurwitz} can be obtained by the conclusion of Proposition \ref{prop:near-ortho-hurwitz}. To further extend this to the under-parameterized regime, we refer to Proposition~\ref{pro:row-Gram}, which still guarantees convergence under a slightly weaker condition.
Moreover, it offers an intuitive recipe for stability by balancing two competing forces:
$\gamma\|\Phi\|_2\|\Phi_*\|_2$ quantifies the {bootstrapped amplification} through greedy targets,
while $\|\Phi^\top\Phi-I\|_2$ quantifies how far the Jacobian features deviate from a near-orthonormal geometry.
The remaining question is how to control these two terms in practice.

\smallskip
\noindent\textbf{Controlling the bootstrapped scale term $\gamma\|\Phi\|_2\|\Phi_*\|_2$.}
To make Proposition~\ref{prop:phi-orth-suff-hurwitz} operational,
Appendix Lemma~\ref{lem:phi-opnorm} shows that, under standard input normalization/clipping and spectral norm constraints/normalization, the operator norms of the Adam-whitened Jacobian feature matrices
are uniformly bounded
$ 
\|\Phi\|_2,\ \|\Phi_*\|_2 \ \le\ \sqrt{M} G,
$ 
where $M\triangleq |X|$ and $G$ is the explicit constant defined in Lemma~\ref{lem:phi-opnorm}.
Therefore, we have
$ 
\gamma\|\Phi\|_2\|\Phi_*\|_2 \ \le\ \gamma M G^2.
$ 
Thus, to control $\gamma\|\Phi\|_2\|\Phi_*\|_2$, it suffices to keep the per-sample Jacobian magnitude bounded
via \textit{input normalization} and \textit{spectral norm constraints} on the critic network.

\smallskip
\noindent\textbf{Controlling the feature-conditioning term $\|\Phi^\top\Phi-I\|_2$.}
The second term in Proposition~\ref{prop:phi-orth-suff-hurwitz} requires that the Gram matrix of whitened Jacobian
features be close to identity.
Appendix Proposition~\ref{prop:gram-bound} explains how to control this Gram deviation by separating a
parameter contribution from a data contribution.
Specifically, assume that at the frozen iterate the whitened Jacobian feature matrix admits a dictionary-type factorization $\Phi=\Psi U$, which is analyzed in the Appendix~\ref{app:hurwitz}.
Then Proposition~\ref{prop:gram-bound} yields:
\begin{equation}
\label{eq:gram-dev-bound-sec5}
\|\Phi^\top\Phi-I\|_2
\le \bigl(1+\delta+(M-1)\rho\bigr)\varepsilon + \delta+(M-1)\rho,
\end{equation}
where $\varepsilon\triangleq \|\Psi^\top\Psi-I\|_2$ is a {parameter-only near-isometry / orthogonality}
quantity,
while $\delta$ and $\rho$ quantify
{code normalization and overlap} (how close $\|u_i\|_2^2$ is to $1$ and how large $|u_i^\top u_j|$ can be for
$i\neq j$).
This decomposition is the bridge we need:
enforcing parameter orthogonality in the optimizer targets $\varepsilon$ directly,
while input normalization/clipping also helps keep the induced
per-sample codes on a comparable scale, which supports small $\delta$ and limits the data-induced contribution
to the Gram deviation\footnote{The constraint of $\delta$ and $\rho$ is supported by normalization in the frozen regime, which is discussed in the Appendix~\ref{app:hurwitz}.}.
In summary, it suffices to control the parameter near-isometry term $\varepsilon=\|\Psi^\top\Psi-I\|_2$ and the intput normalization, in order to constrain $\|\Phi^\top\Phi-I\|_2$.


\paragraph{End-to-end sufficient condition and takeaway.}
In this subsection, Proposition~\ref{prop:phi-orth-suff-hurwitz} reduces stability of the frozen linearized TD dynamics to Hurwitzness of
the TD operator $S$. Appendix Lemma~\ref{lem:phi-opnorm} shows that input normalization/clipping and spectral norm
constraints bound the bootstrapped scale term $\gamma\|\Phi\|_2\|\Phi_*\|_2$, while Appendix
Proposition~\ref{prop:gram-bound} reduces the conditioning term $\|\Phi^\top\Phi-I\|_2$ to the parameter orthogonality
distortion $\varepsilon=\|\Psi^\top\Psi-I\|_2$. By
Theorem~\ref{thm:adamtd_hurwitz_sufficient}, these controls suffice to ensure exponential decay of the frozen
linearized TD error for sufficiently small stepsize. The first two are ubiquitous in deep RL practice \cite{td3+bc,yu2020mopo,miyato2018spectral}, whereas the
next subsection focuses on how to control $\|\Psi^\top\Psi-I\|_2$ \emph{directly in the optimizer} without
contaminating Adam's moment statistics.

\subsection{Enforcing parameter orthogonality in the optimizer}
\label{sec:adamo}

Section~\ref{sec:hurwitz-sufficient} shows that normalization/clipping and spectral constraints mainly control the scale term in the Hurwitz sufficient condition, whereas the term $\varepsilon=\|\Psi^\top\Psi-I\|_2$ requires separate control. This quantity quantifies how far the parameter-induced subspace geometry deviates from an isometry and therefore captures the parameter-side contribution to the conditioning requirement. Since $\Psi$ is not explicitly represented in a general deep network, we cannot constrain $\varepsilon$ directly. Instead, we act on the matrix-shaped weight blocks that implement the network's linear maps and enforce blockwise near-orthogonality on these blocks.

In a general deep network, however, $\Psi$ is only an implicit object induced by the frozen linearization, so
$\varepsilon$ cannot be constrained directly.
Our strategy is therefore to control a tractable surrogate at the level of the actual model parameters: the matrix-shaped
weight blocks that implement the network's linear maps.
This is the sense in which we use the phrase \emph{parameter orthogonality} below, namely, blockwise near-orthogonality
of selected weight matrices as a proxy for improving the geometry of the induced dictionary $\Psi$.
Concretely, let $\omega$ denote the full parameter vector.
We select a collection of constrained blocks, indexed by $b\in\mathcal{B}$, and reshape them
into matrices $\{W_b\}_{b\in\mathcal{B}}$.
We then penalize deviations from semi-orthogonality by
$\bar R(\omega)\triangleq \sum_{b\in\mathcal{B}} R\bigl(W_b\bigr),$
where\footnote{When multiple layers are constrained, $\bar R(\omega)$ is the sum of these terms over the selected blocks.}
for a generic block matrix $W\in\mathbb{R}^{r\times c}$,
\begin{align}
\label{eq:orth_potential}
R(W)=
\begin{cases}
\frac14\|WW^\top-I_r\|_F^2,& r<c,\\
\frac14\|W^\top W-I_c\|_F^2,& r\ge c,
\end{cases}
\end{align}
and denote its gradient by
$r_t \triangleq \nabla_{\omega_t} \bar R(\omega_t).$
Appendix~\ref{app:weight-ortho-to-epsilon} makes the surrogate relation precise in the same frozen regime used in the
stability analysis.
Under the local continuation and baseline-distortion assumptions stated there, the induced dictionary distortion satisfies
\[
\varepsilon(\omega)\triangleq \|\Psi(\omega)^\top\Psi(\omega)-I\|_2
\le
\varepsilon_0 + c_1\sqrt{\bar R(\omega)} + c_2 \bar R(\omega),
\]
for some local constants $c_1,c_2>0$.
Consequently, reducing the blockwise orthogonality defect of the actual weight matrices reduces the parameter-side geometry term
$\varepsilon$ that enters the Hurwitz sufficient condition in Sec.~\ref{sec:hurwitz-sufficient}.
Put differently, normalization/clipping and spectral constraints control the scale term, while blockwise weight orthogonality provides a practical handle on the geometry term.

This leaves one implementation question:
\textit{how should this orthogonality control be imposed when the base optimizer is Adam?}
The next paragraph explains why directly adding $\bar R(\omega)$ to the task loss is not the right mechanism under Adam, and
why the orthogonality correction should instead be enforced as a decoupled optimizer-side drift.

\paragraph{Why enforcing orthogonality in the Adam optimizer?}
A standard approach is to add an auxiliary penalty,
\begin{equation}
\label{eq:naive_loss_reg}
\widetilde L_t(\omega)=L_t(\omega)+\lambda \bar R(\omega),
\end{equation}
and run Adam on $\nabla \widetilde L_t=g_t+\lambda r_t$, where $g_t=\nabla L_t(\omega_t)$ is the task gradient.
However, under Adam the second-moment recursion uses elementwise squares,
\begin{equation}
\label{eq:adam_v_contam}
v_{t+1}=\beta_2 v_t + (1-\beta_2)\,(g_t+\lambda r_t)^{\odot 2},
\end{equation}
so $\lambda r_t$ is injected into Adam's moment path.

\emph{Sufficiency.}
Orthogonality can be enforced in the optimizer because $R$ depends only on the current parameters.
In particular, for a matrix block $W\in\mathbb{R}^{r\times c}$, the gradient of the regularization Eq.~\eqref{eq:orth_potential}
admits a closed form:
\begin{equation}
\nabla_W R(W)=
\begin{cases}
(WW^\top-I_r)\,W, & r<c,\\
W\,(W^\top W-I_c), & r\ge c.
\end{cases}
\end{equation}
Therefore, $r_t$ can be evaluated deterministically at $\omega_t$ and applied as an {additional drift}
without changing the stochastic task-gradient stream $g_t$ that Adam is meant to adapt to.

\emph{Necessity under Adam.}
If $\lambda r_t$ is included in the gradient stream, then Eq.~\eqref{eq:adam_v_contam} records it into $(m_t,v_t)$,
creating a history- and coordinate-dependent effective regularization strength and potentially distorting future task
updates even after $r_t$ becomes small\footnote{cf.~the decoupled weight decay in AdamW.}~\cite{loshchilov2017decoupled}.
Therefore, we decouple orthogonality from Adam: {Adam only sees $g_t$}, while orthogonality is enforced by an
extra drift update acting on the current parameters.
A formal statement of this rationale is given in
Proposition~\ref{prop:why_decouple}.

\begin{algorithm}[h]
\caption{AdamO: Adam with Orthogonality Correction}
\label{alg:adamo_compact}
\begin{algorithmic}[1]
\STATE \textbf{Init} $\omega_0, m_0=v_0=0$, $\eta, \kappa, \tau$ and Adam params
\FOR{$t=0, 1, \dots$}
    \STATE $g_t \gets \nabla L_t(\omega_t)$
    \STATE Compute standard Adam update $u_t$
    \STATE Compute $\delta_t$ by applying \eqref{eq:delta0_main}--\eqref{eq:delta_final_piecewise} layer-wise
    \STATE $\omega_{t+1} \gets \omega_t - \eta \,(u_t + \delta_t)$
\ENDFOR
\end{algorithmic}
\end{algorithm}
\paragraph{AdamO: Adam optimizer with orthogonality correction.}
In TD learning, $g_t=\nabla L_t(\omega_t)$ is the task gradient and $u_t$ denotes the standard Adam update direction
computed {only} from $g_t$.
We define an orthogonality correction for each constrained parameter matrix.
For clarity, we describe the map for a single constrained matrix $W$.
In practice, it is applied independently to each constrained matrix,
with $g_t,u_t,r_t$ understood as the corresponding layer-wise restrictions.
First, we compute a {scale-normalized reference step} $\delta_{t,0}$ that matches the local Adam scale:
\begin{equation}
\label{eq:delta0_main}
\delta_{t,0}=\kappa\,\frac{\|u_t\|_F}{\|r_t\|_F+\varepsilon_r}r_t.
\end{equation}
This is applied {layer-wise} to ensure scale invariance.
To prevent the correction from hijacking task progress, we enforce a local budget constraint
\begin{equation}
\label{eq:budget_main}
\langle g_t,\delta_t\rangle_F \ge -\tau\bigl(\langle g_t,u_t\rangle_F\bigr)_+,
\qquad \tau\in[0,1).
\end{equation}
Among all scalings of $\delta_{t,0}$ along its ray, AdamO takes the {largest feasible} one, yielding the closed form:
\begin{equation}
\label{eq:delta_final_piecewise}
\delta_t =
\begin{cases}
\delta_{t,0}, & \text{if } \langle g_t, \delta_{t,0}\rangle_F \ge T_t, \\
\frac{T_t}{\langle g_t, \delta_{t,0}\rangle_F}\,\delta_{t,0}, & \text{otherwise},
\end{cases}
\end{equation}
where $T_t \triangleq -\tau\bigl(\langle g_t,u_t\rangle_F\bigr)_+$.
Finally, AdamO performs
\begin{equation}
\omega_{t+1}=\omega_t-\eta\,(u_t+\delta_t),
\end{equation}
where $\delta_t$ collects the per-layer corrections.
The pseudocode is given in Algorithm \ref{alg:adamo_compact}.


\paragraph{Theoretical analysis of orthogonality correction module.}
To enforce orthogonality while preserving Adam update behavior, AdamO applies orthogonality as a decoupled correction within the optimizer instead of mixing an orthogonality penalty into the task gradient.
If one runs Adam on a penalized gradient that includes the orthogonality term, the orthogonality signal enters the moment estimates and can distort future task steps even after the orth gradient vanishes.
This is exactly the moment of the contamination issue in Proposition~\ref{prop:why_decouple}, and it motivates keeping the Adam moments driven only by the task gradient.

\par
AdamO then makes the orthogonality correction explicitly safe for task optimization.
The closed form scaling satisfies the per-block task progress budget exactly in Lemma~\ref{lem:exact_halfspace_app}.
The correction is also ratio clipped by construction, so its magnitude is bounded by the Adam update magnitude, as shown in Lemma~\ref{lem:magnitude_bound_app} through $\|\Delta_t\|\le \kappa\|u_t\|$.
In the conflict-free regime with $\tau=0$, the budget enforces $\langle g_t,\Delta_t\rangle\ge 0$, so the orthogonality correction never cancels first-order task descent, and under the smoothness stepsize condition in Eq.~\eqref{eq:eta_bound}, the next step task loss under AdamO is noninferior to Adam, as stated in Theorem~\ref{thm:onestep_app}.
When $\tau>0$, AdamO allows a controlled amount of misalignment to strengthen orthogonality, and Theorem~\ref{thm:onestep_app} provides an explicit upper bound on the worst-case single-step degradation that depends on $\tau$ and on curvature through $\mu$, the stepsize $\eta$, and the correction scale $\kappa$.

\par
Finally, the continuous time Hamiltonian view explains why the orthogonality module does not break the Adam stability structure.
Proposition~\ref{prop:ct_adam_dissipation} shows that Adam admits a dissipative Hamiltonian whenever $\beta_1\ge \beta_2/4$.
AdamO changes only the parameter dynamics by adding the orthogonality drift, so the Hamiltonian derivative gains exactly one additional inner product term, and Theorem~\ref{thm:ct_adamo_hamiltonian} shows that monotone decrease is preserved for $\tau=0$ and becomes a controlled differential inequality for $\tau>0$.
Overall, the orthogonality module does not introduce uncontrolled growth in the task dynamics, and its effect is explicitly bounded and regulated by $\kappa$ and $\tau$.
Specifically, we establish the theoretical admissible ranges for $\kappa$ in Appendix \ref{app:adamo:kappa}.
\begin{table*}[ht]
\centering
\caption{Comparison between standard \textbf{Adam} and our \textbf{AdamO} on D4RL benchmarks. \textbf{Bold} indicates the higher score. Subscripts denote the relative percentage improvement ($\textcolor{green!60!black}{\uparrow}$) or degradation ($\textcolor{red}{\downarrow}$) of AdamO over Adam.
Quantitative analysis across the 10k samples Mujoco Locomotion suite and AntMaze domain where `m' and `med' denote medium quality and `mr' represents medium replay datasets while `me' stands for medium expert and `div' indicates diverse data distributions.
}
\label{tab:d4rl_results_adamo_percentage}
\resizebox{\textwidth}{!}{%
\setlength{\tabcolsep}{3pt} 
\begin{tabular}{l cc cc cc cc cc cc}
\toprule
\multirow{2}{*}{Task Name} & \multicolumn{2}{c}{\textbf{TD3+BC}} & \multicolumn{2}{c}{\textbf{IQL}} & \multicolumn{2}{c}{\textbf{ReBRAC}} & \multicolumn{2}{c}{\textbf{ACTIVE}} & \multicolumn{2}{c}{\textbf{PARS}} & \multicolumn{2}{c}{\textbf{SQOG}} \\
\cmidrule(lr){2-3} \cmidrule(lr){4-5} \cmidrule(lr){6-7} \cmidrule(lr){8-9} \cmidrule(lr){10-11} \cmidrule(lr){12-13}
 & Adam & AdamO & Adam & AdamO & Adam & AdamO & Adam & AdamO & Adam & AdamO & Adam & AdamO \\
\midrule
AntMaze-umaze      & $72.2$ & $\mathbf{92.5}\inc{28}$ & $80.5$ & $\mathbf{83.8}\inc{4}$ & $94.3$ & $\mathbf{96.5}\inc{2}$ & $92.4$ & $\mathbf{95.1}\inc{3}$ & $\mathbf{97.3}$ & $93.5\dec{4}$ & $89.6$ & $\mathbf{93.1}\inc{4}$ \\
AntMaze-umaze-div  & $47.0$ & $\mathbf{82.2}\inc{75}$ & $55.8$ & $\mathbf{68.2}\inc{22}$ & $87.2$ & $\mathbf{91.5}\inc{5}$ & $75.9$ & $\mathbf{78.2}\inc{3}$ & $\mathbf{93.2}$ & $92.4\dec{1}$ & $72.8$ & $\mathbf{87.1}\inc{20}$ \\
AntMaze-med-play   & $0.3$  & $\mathbf{28.5}\incinf$ & $\mathbf{70.4}$ & $70.1\dec{0.4}$ & $85.2$ & $\mathbf{89.4}\inc{5}$ & $73.7$ & $\mathbf{86.5}\inc{17}$ & $\mathbf{91.5}$ & $92.8\inc{1}$ & $60.9$ & $\mathbf{65.4}\inc{7}$ \\
AntMaze-med-div    & $0.2$  & $\mathbf{16.4}\incinf$ & $66.9$ & $\mathbf{75.5}\inc{13}$ & $79.8$ & $\mathbf{85.1}\inc{7}$ & $77.8$ & $\mathbf{82.4}\inc{6}$ & $\mathbf{87.1}$ & $90.6\inc{4}$ & $65.8$ & $\mathbf{81.2}\inc{23}$ \\
AntMaze-large-play & $0.0$  & $\mathbf{13.7}\incinf$ & $38.5$ & $\mathbf{41.8}\inc{9}$ & $55.4$ & $\mathbf{61.8}\inc{12}$ & $50.9$ & $\mathbf{56.5}\inc{11}$ & $46.6$ & $\mathbf{52.9}\inc{14}$ & $52.6$ & $\mathbf{57.4}\inc{9}$ \\
AntMaze-large-div  & $0.0$  & $\mathbf{16.5}\incinf$ & $32.4$ & $\mathbf{36.1}\inc{11}$ & $50.5$ & $\mathbf{56.2}\inc{11}$ & $46.6$ & $\mathbf{61.8}\inc{33}$ & $50.8$ & $\mathbf{56.5}\inc{11}$ & $47.5$ & $\mathbf{52.8}\inc{11}$ \\
AntMaze-ultra-div  & $0.0$  & $\mathbf{2.4}\incinf$  & $19.0$ & $\mathbf{31.1}\inc{64}$ & $5.7$  & $\mathbf{9.4}\inc{65}$ & $10.0$  & $\mathbf{19.8}\inc{98}$  & $42.1$ & $\mathbf{48.6}\inc{15}$ & $0.0$  & $\mathbf{4.2}\incinf$ \\
AntMaze-ultra-play & $0.0$  & $\mathbf{1.8}\incinf$  & $21.0$ & $\mathbf{29.9}\inc{42}$ & $20.6$ & $\mathbf{26.8}\inc{30}$ & $13.2$  & $\mathbf{11.5}\dec{13}$  & $55.9$ & $\mathbf{62.4}\inc{12}$ & $0.0$  & $\mathbf{2.1}\incinf$ \\
\midrule
\textit{AntMaze Avg.} & $15.0$ & $\mathbf{31.8}\inc{112}$ & $48.1$ & $\mathbf{54.6}\inc{14}$ & $59.8$ & $\mathbf{64.6}\inc{8}$ & $55.1$ & $\mathbf{61.4}\inc{11}$ & $\mathbf{70.6}$ & $73.7\inc{4}$ & $48.7$ & $\mathbf{55.4}\inc{14}$ \\
\midrule
HalfCheetah-m  & $35.9$ & $\mathbf{53.5}\inc{49}$ & $29.7$ & $\mathbf{35.2}\inc{19}$ & $42.8$ & $\mathbf{49.4}\inc{15}$ & $42.9$ & $\mathbf{48.5}\inc{13}$ & $44.9$ & $\mathbf{51.2}\inc{14}$ & $36.5$ & $\mathbf{41.8}\inc{15}$ \\
HalfCheetah-mr & $39.1$ & $\mathbf{46.2}\inc{18}$ & $32.7$ & $\mathbf{38.4}\inc{17}$ & $40.7$ & $\mathbf{47.1}\inc{16}$ & $34.9$ & $\mathbf{40.6}\inc{16}$ & $45.4$ & $\mathbf{50.8}\inc{12}$ & $30.2$ & $\mathbf{35.5}\inc{18}$ \\
HalfCheetah-me & $33.5$ & $\mathbf{60.1}\inc{79}$ & $48.1$ & $\mathbf{54.6}\inc{14}$ & $63.9$ & $\mathbf{71.5}\inc{12}$ & $55.7$ & $\mathbf{62.3}\inc{12}$ & $75.7$ & $\mathbf{82.9}\inc{10}$ & $58.9$ & $\mathbf{65.4}\inc{11}$ \\
Hopper-m       & $40.7$ & $\mathbf{75.5}\inc{86}$ & $38.9$ & $\mathbf{45.1}\inc{16}$ & $78.6$ & $\mathbf{86.2}\inc{10}$ & $26.2$ & $\mathbf{31.8}\inc{21}$ & $73.7$ & $\mathbf{80.4}\inc{9}$  & $45.8$ & $\mathbf{51.3}\inc{12}$ \\
Hopper-mr      & $21.3$ & $\mathbf{55.8}\inc{162}$ & $46.6$ & $\mathbf{52.9}\inc{14}$ & $64.2$ & $\mathbf{70.8}\inc{10}$ & $62.5$ & $\mathbf{68.7}\inc{10}$ & $67.5$ & $\mathbf{74.1}\inc{10}$ & $61.4$ & $\mathbf{67.2}\inc{9}$ \\
Hopper-me      & $32.6$ & $\mathbf{84.2}\inc{158}$ & $66.5$ & $\mathbf{73.2}\inc{10}$ & $78.5$ & $\mathbf{85.9}\inc{9}$  & $62.7$ & $\mathbf{69.1}\inc{10}$ & $75.5$ & $\mathbf{81.6}\inc{8}$  & $72.9$ & $\mathbf{78.5}\inc{8}$ \\
Walker2d-m     & $21.2$ & $\mathbf{68.4}\inc{223}$ & $54.9$ & $\mathbf{61.5}\inc{12}$ & $62.2$ & $\mathbf{69.4}\inc{12}$ & $53.6$ & $\mathbf{59.2}\inc{10}$ & $68.0$ & $\mathbf{74.5}\inc{10}$ & $49.8$ & $\mathbf{55.6}\inc{12}$ \\
Walker2d-mr    & $19.3$ & $\mathbf{52.5}\inc{172}$ & $51.4$ & $\mathbf{57.8}\inc{12}$ & $69.4$ & $\mathbf{76.1}\inc{10}$ & $45.8$ & $\mathbf{51.4}\inc{12}$ & $63.4$ & $\mathbf{69.2}\inc{9}$  & $58.5$ & $\mathbf{64.3}\inc{10}$ \\
Walker2d-me    & $22.4$ & $\mathbf{98.5}\inc{340}$ & $57.3$ & $\mathbf{63.7}\inc{11}$ & $74.0$ & $\mathbf{80.8}\inc{9}$  & $58.6$ & $\mathbf{64.9}\inc{11}$ & $81.6$ & $\mathbf{88.3}\inc{8}$  & $68.7$ & $\mathbf{74.9}\inc{9}$ \\
\midrule
\textit{Locomotion Avg.} & $29.6$ & $\mathbf{66.1}\inc{123}$ & $47.3$ & $\mathbf{53.6}\inc{13}$ & $63.8$ & $\mathbf{70.8}\inc{11}$ & $49.2$ & $\mathbf{55.2}\inc{12}$ & $66.2$ & $\mathbf{72.6}\inc{10}$ & $53.6$ & $\mathbf{59.4}\inc{11}$ \\
\midrule
Pen-human      & $-4.1$ & $\mathbf{83.1}\incinf$ & $79.1$ & $\mathbf{89.8}\inc{14}$ & $\mathbf{105.4}$ & $102.5\dec{3}$ & $106.2$ & $\mathbf{109.8}\inc{3}$ & $88.1$ & $\mathbf{94.5}\inc{7}$ & $\mathbf{77.0}$ & $75.9\dec{1}$ \\
Pen-cloned     & $5.6$  & $\mathbf{82.4}\inc{1371}$ & $46.5$ & $\mathbf{82.2}\inc{77}$ & $\mathbf{98.5}$  & $92.4\dec{6}$ & $96.5$  & $\mathbf{100.2}\inc{4}$ & $\mathbf{107.1}$ & $105.8\dec{1}$ & $73.6$  & $\mathbf{87.4}\inc{19}$ \\
Door-human     & $-0.3$ & $\mathbf{0.2}\incinf$  & $3.5$  & $\mathbf{4.8}\inc{37}$ & $-0.1$  & $\mathbf{0.1}\incinf$   & $0.0$   & $\mathbf{0.2}\incinf$   & $0.1$  & $\mathbf{0.3}\inc{200}$  & $-0.1$  & $\mathbf{0.1}\incinf$ \\
Door-cloned    & $-0.3$ & $\mathbf{0.1}\incinf$  & $3.3$  & $\mathbf{4.5}\inc{36}$ & $0.1$   & $\mathbf{0.5}\inc{400}$ & $0.1$   & $\mathbf{0.4}\inc{300}$ & $0.1$  & $\mathbf{0.3}\inc{200}$  & $0.1$   & $\mathbf{0.5}\inc{400}$ \\
Hammer-human   & $\mathbf{1.1}$  & $0.5\dec{55}$  & $1.9$  & $\mathbf{3.1}\inc{63}$ & $0.3$   & $\mathbf{0.8}\inc{167}$ & $0.3$   & $\mathbf{0.9}\inc{200}$ & $0.3$  & $\mathbf{0.8}\inc{167}$ & $0.3$   & $\mathbf{0.8}\inc{167}$ \\
Hammer-cloned  & $0.3$  & $\mathbf{0.4}\inc{33}$  & $1.7$  & $\mathbf{3.2}\inc{88}$ & $5.4$   & $\mathbf{7.5}\inc{39}$ & $5.9$   & $\mathbf{7.8}\inc{32}$ & $4.6$  & $\mathbf{7.0}\inc{52}$ & $5.1$   & $\mathbf{6.8}\inc{33}$ \\
Relocate-human & $-0.3$ & $\mathbf{0.2}\incinf$  & $0.1$  & $\mathbf{0.5}\inc{400}$ & $0.2$   & $\mathbf{0.6}\inc{200}$ & $0.2$   & $\mathbf{0.7}\inc{250}$ & $0.1$  & $\mathbf{0.7}\inc{600}$ & $0.2$   & $\mathbf{0.6}\inc{200}$ \\
Relocate-cloned & $\mathbf{0.1}$ & $0.0\dec{100}$  & $0.0$  & $0.0$  & $0.0$   & $\mathbf{0.4}\incinf$   & $\mathbf{0.2}$   & $0.1\dec{50}$   & $\mathbf{0.1}$  & $0.0\dec{100}$  & $-0.2$  & $\mathbf{0.6}\incinf$ \\
\midrule
\textit{Adroit Avg.} & $0.3$ & $\mathbf{20.9}\inc{6866}$ & $17.0$ & $\mathbf{23.5}\inc{38}$ & $\mathbf{26.2}$ & $25.6\dec{2}$ & $26.2$ & $\mathbf{27.5}\inc{5}$ & $25.1$ & $\mathbf{26.2}\inc{4}$ & $19.5$ & $\mathbf{21.6}\inc{11}$ \\
\bottomrule
\end{tabular}
}
\vspace{-13pt}
\end{table*}

\section{Experiment}

\subsection{Experimental Setup}
\label{sec:exp_setup}

\paragraph{Baselines.} To demonstrate the versatility of our approach, we incorporate the proposed optimizer into a comprehensive suite of offline RL methods. Our evaluation spans diverse paradigms, including policy-constraint and regularization methods (TD3+BC, \citealp{td3+bc}; ReBRAC, \citealp{tarasov2023revisiting}), and in-sample learning (IQL, \citealp{iql}; ACTIVE, \citealp{chen2025active}). 
Furthermore, we assess performance on algorithms designed for robustness against distribution shift (PARS, \citealp{kim2025penalizing}; SQOG, \citealp{yao2025sqog}). 
Additionally, we compare our approach against general-purpose first-order optimizers, including SGD \citep{robbins1951stochastic}, Adam \citep{kingma2015adam}, and AdamW \citep{loshchilov2017decoupled}. We also compare against stability-focused techniques: periodic state \emph{Resetting} \citep{asadi2023resetting}, the adaptive TRAC \citep{muppidi2024fasttrac}, Kron \citep{creus2025stable}, 
and cautious momentum methods such as
C-AdamW \citep{liang2025cautious}. 
In all actor-critic experiments, these optimizers are applied exclusively to the critic networks to strictly isolate their influence on value stability.

\paragraph{Domains and Datasets.}
Our evaluation suite covers numerous tasks across distinct domains, encompassing locomotion, manipulation, and high-dimensional control. We utilize standard D4RL datasets \citep{fu2020d4rl} for \emph{AntMaze}, \emph{Adroit}, \emph{MuJoCo}, and \emph{FrankaKitchen}. 
This selection ensures coverage of diverse data qualities and horizon lengths.


\begin{figure}[h]
\centering
\begin{subfigure}{0.47\linewidth}
    \centering
    \includegraphics[width=\linewidth]{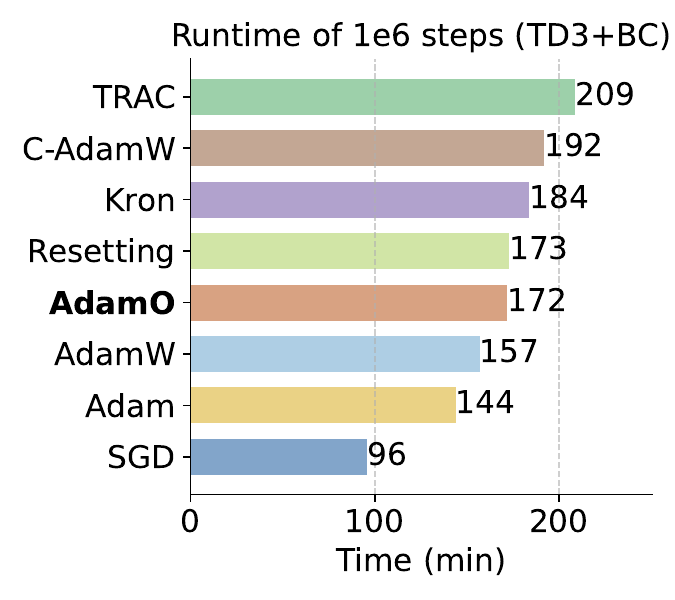}
\end{subfigure}
\hspace{1pt}
\begin{subfigure}{0.47\linewidth}
    \centering
    \includegraphics[width=\linewidth]{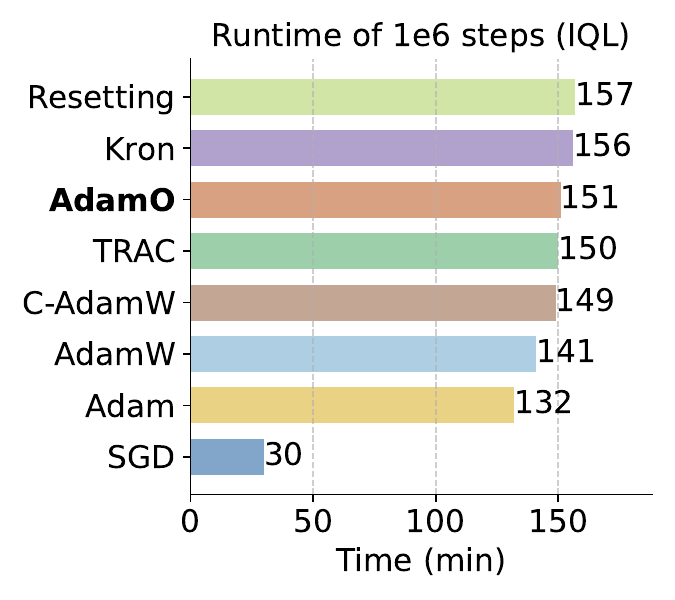}
\end{subfigure}
\vspace{-5pt}
\caption{Comparison of computational overhead and wall-clock runtime across different optimizers.}
\vspace{-15pt}
\label{fig:runtime}
\end{figure}

\section{Main Results}
\label{sec:main_results}

\paragraph{Computational Efficiency}
In Fig.~\ref{fig:runtime}, AdamO incurs only a minor computational overhead compared to Adam (e.g., 172 vs. 144 minutes on TD3+BC) while remaining faster than or comparable to advanced optimizers like TRAC and Kron. This confirms that AdamO strikes a favorable balance, delivering SOTA performance with manageable costs.

\paragraph{Performance Analysis}
Table \ref{tab:d4rl_results_adamo_percentage} details the quantitative comparison between Adam and AdamO across six offline RL algorithms on D4RL benchmarks. The results indicate that AdamO consistently outperforms the standard Adam optimizer across the vast majority of tasks. We observe robust improvements in the MuJoCo Locomotion suite and Adroit domain, where AdamO achieves higher scores on diverse dataset qualities and tasks relative to the baselines.
The advantages of AdamO are most evident in the challenging AntMaze domain where sparse rewards often hinder learning. While baseline algorithms using Adam frequently fail to learn meaningful policies on large and ultra maps, AdamO successfully recovers performance and achieves significant score increases. For instance, TD3+BC with AdamO improves the average AntMaze score by over 100 percent compared to the baseline. This demonstrates the capability of our optimizer to handle complex optimization landscapes better than standard methods.


\paragraph{Appendix Reference}
Due to space constraints, we provide more extended benchmark descriptions, implementation and baselines details in Appendix \ref{subsec:benchmark}--\ref{subsec:baseline_details}. This supplementary section also contains experimental details and complete results comparing different optimizers and algorithms. 
Furthermore, we refer readers to Appendix \ref{subsec:main_experiments} for additional results on generalization, hyperparameter robustness, and computational efficiency, along with a direct verification of the theoretical stability conditions.

\section{Conclusion}

This work establishes an optimizer-centric framework for offline reinforcement learning, identifying that value collapse is not merely an architectural flaw but a spectral phenomenon of the self-excitation operator. By introducing AdamO, we demonstrate that this instability can be actively suppressed through a principled, decoupled orthogonality correction.
However, a fundamental tension remains. While infinitesimal step sizes ($\eta \to 0$) theoretically permit robust stability control via a large orthogonality budget, they incur prohibitive computational costs in practice. Consequently, our approach remains bound by the inescapable trade-off between spectral stability and training efficiency.

We invite the community to look beyond architectural heuristics and address this optimization bottleneck. Uncovering the mechanism is merely the ``morning light'' of the Way, the pursuit of a complete solution continues.

\section{Impact Statement}
This paper aims to advance the reliability of offline reinforcement learning by providing an optimizer-centric characterization of temporal-difference error collapse and proposing \textit{AdamO} to improve critic stability via a decoupled orthogonality correction. Improved stability can have positive societal impact by making it more practical to learn from previously collected datasets, potentially reducing the need for risky online exploration and costly trial-and-error in safety-sensitive settings such as robotics, industrial control, or data-driven decision systems. At the same time, making offline RL training more stable may lower the barrier to deploying RL-based policies in high-stakes environments where distribution shift, reward misspecification, or biased/low-quality logged data can lead to harmful outcomes.
Importantly, AdamO does not itself mitigate issues such as dataset bias, privacy concerns in logged data, or downstream safety constraints. We therefore encourage future work and applications to pair optimizer-level stability improvements with careful dataset governance, privacy-preserving data practices, and rigorous safety evaluation and monitoring prior to real-world deployment.

\bibliography{reference}
\bibliographystyle{icml2026}

\onecolumn

\newpage
\appendix
\section{Experimental Setup}\label{sec:app-exp}
This section specifies the implementation and evaluation settings needed to replicate our results.

\subsection{Benchmarks}\label{subsec:benchmark}
Our empirical study covers four standard offline RL testbeds, namely MuJoCo, Adroit, FrankaKitchen, and AntMaze, following common practice in the literature. We describe each benchmark below.

\par\noindent\textbf{MuJoCo.}
We use the standard MuJoCo locomotion benchmarks from D4RL, including \texttt{Hopper}, \texttt{HalfCheetah}, and \texttt{Walker2d}. These continuous control tasks provide dense rewards that encourage forward progress while maintaining stability. We report results on the common offline dataset settings \texttt{medium}, \texttt{medium-replay}, and \texttt{medium-expert}, which cover different behavior quality and policy mixture levels.

\begin{figure}[h]
  \centering
  \begin{subfigure}[h]{0.17\linewidth}
    \centering
    \includegraphics[width=\linewidth]{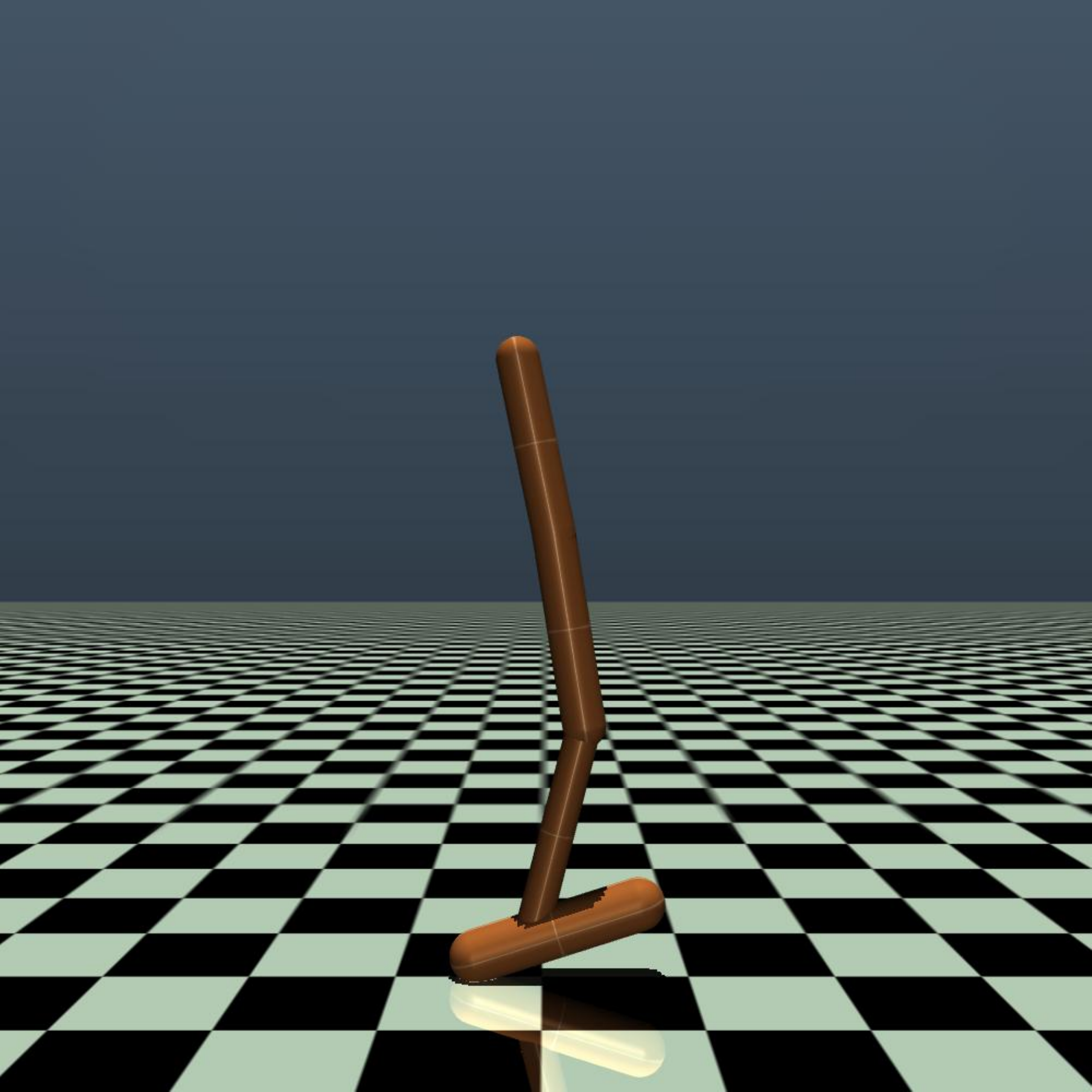}
    \caption{{Hopper}}
  \end{subfigure}
  \hspace{0.3cm}
  \begin{subfigure}[h]{0.17\linewidth}
    \centering
    \includegraphics[width=\linewidth]{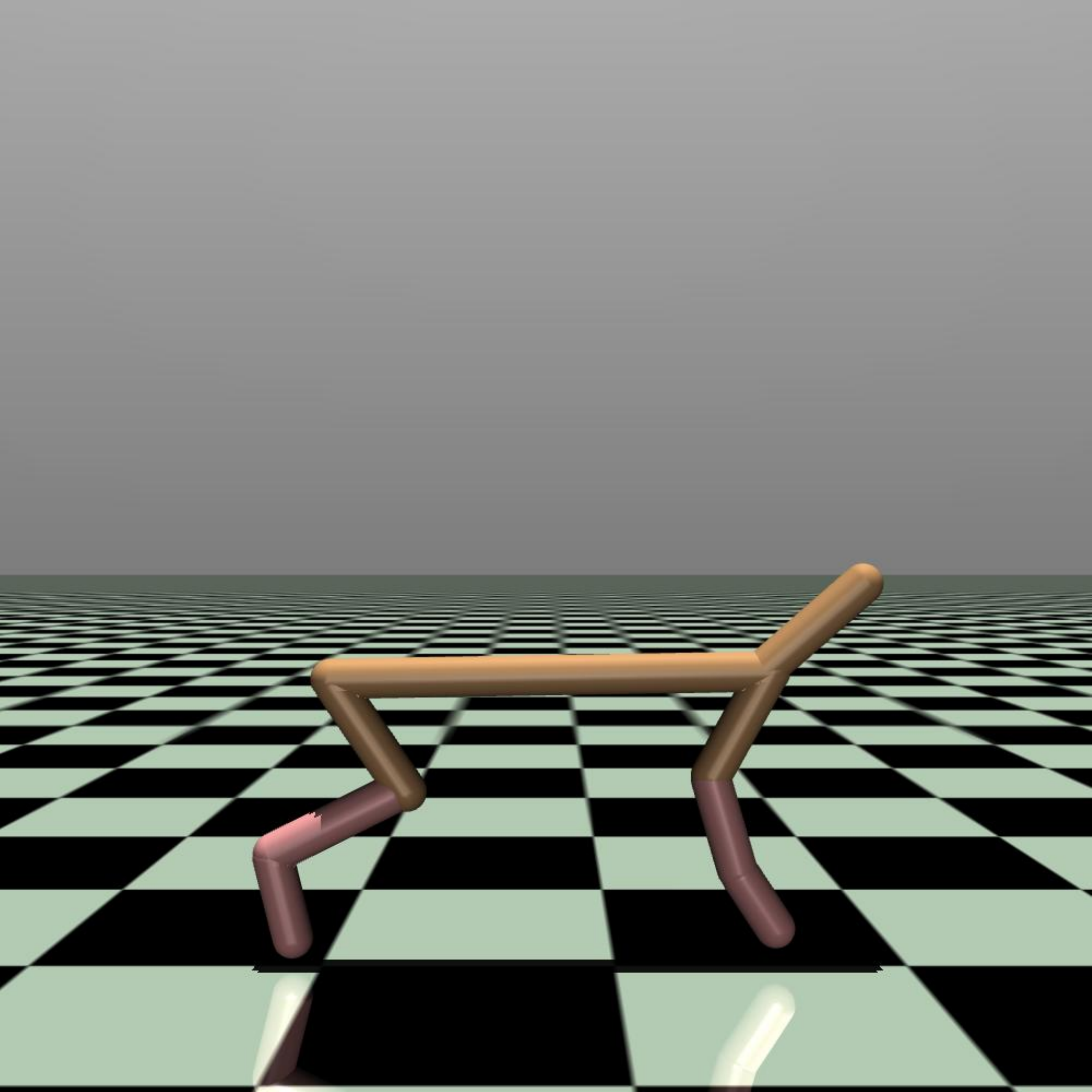}
    \caption{{HalfCheetah}}
  \end{subfigure}
  \hspace{0.3cm}
  \begin{subfigure}[h]{0.17\linewidth}
    \centering
    \includegraphics[width=\linewidth]{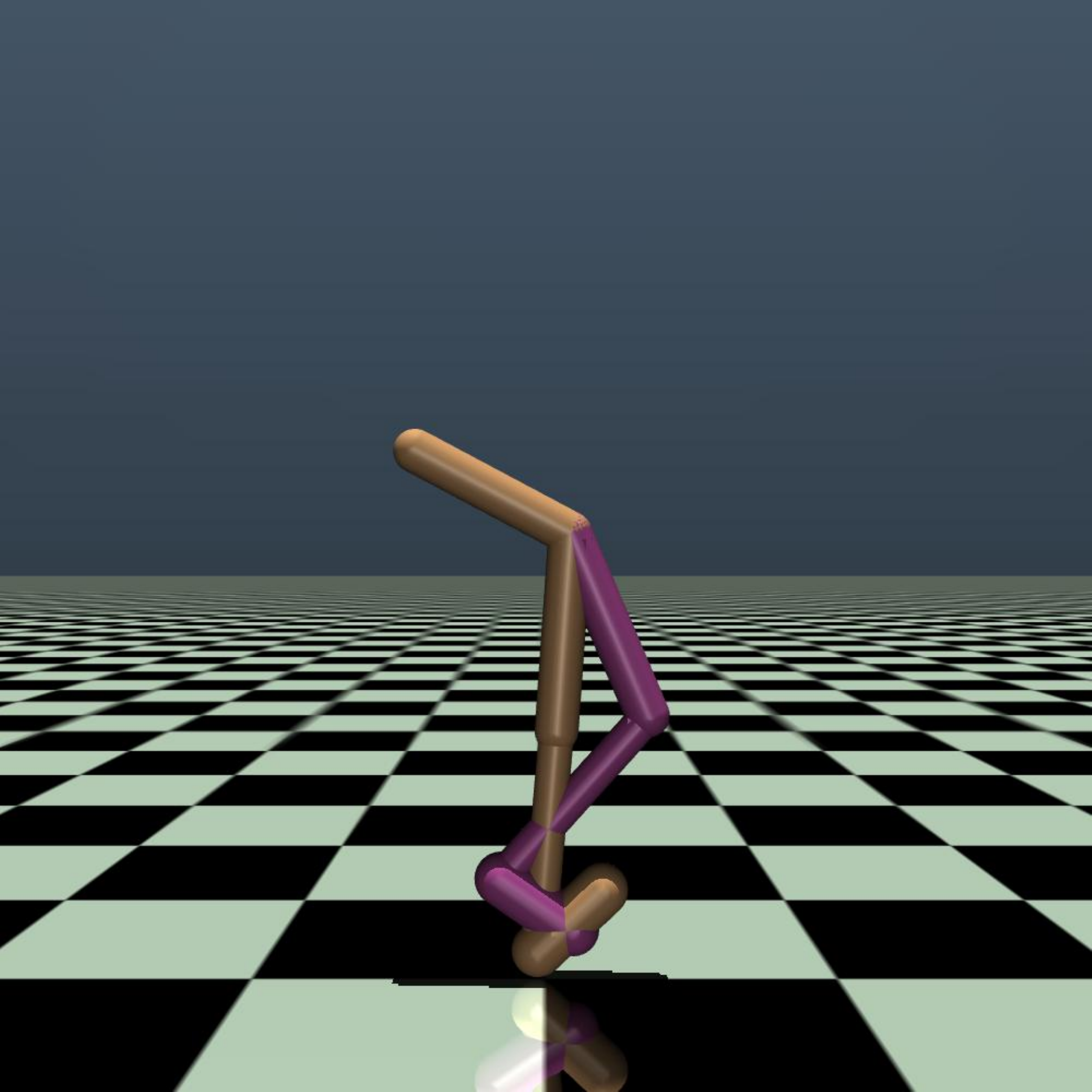}
    \caption{{Walker2d}}
  \end{subfigure}
  \hspace{0.3cm}
  \begin{subfigure}[h]{0.17\linewidth}
    \centering
    \includegraphics[width=\linewidth]{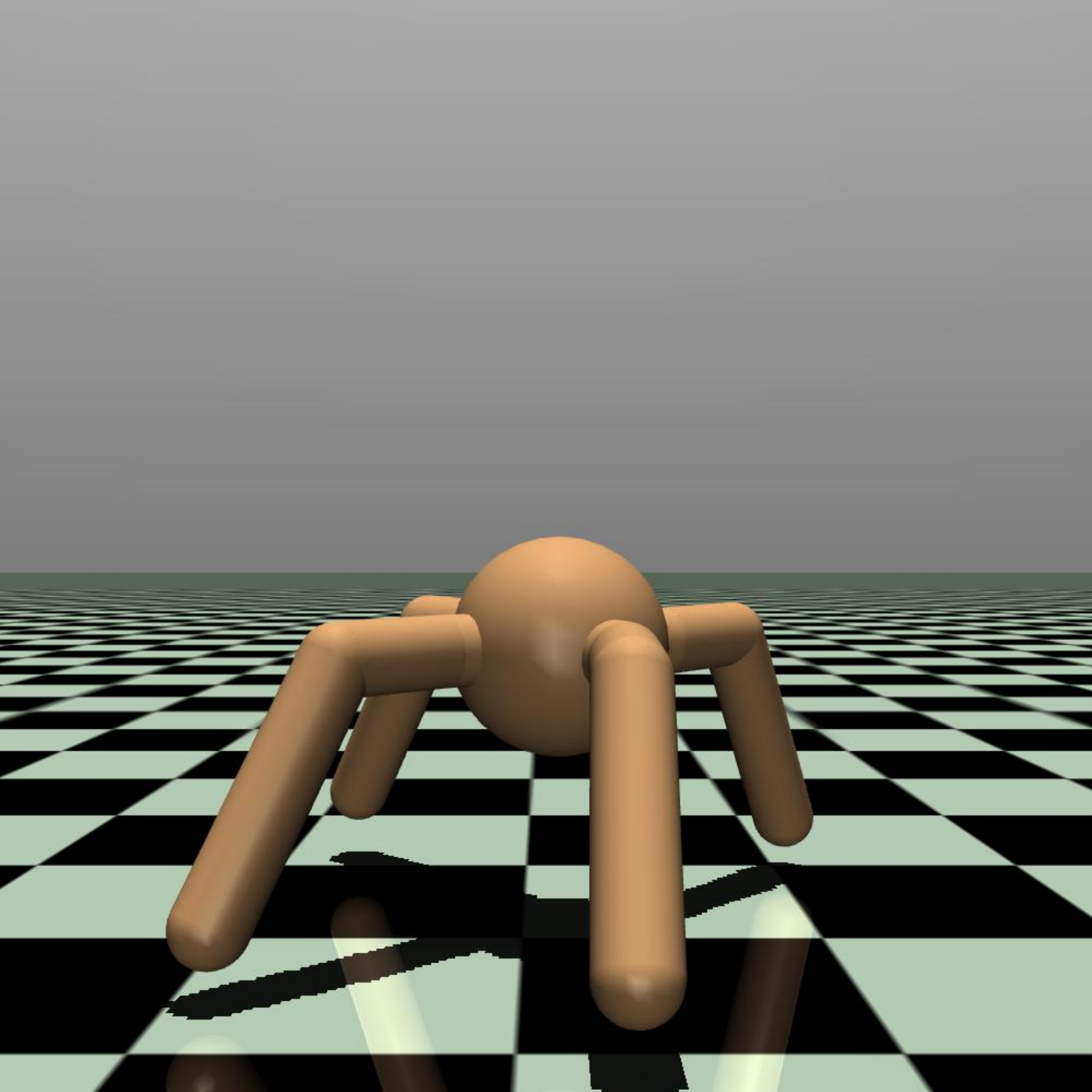}
    \caption{{Ant}}
  \end{subfigure}
  \caption{MuJoCo locomotion environments used in our offline RL benchmark.}
  \label{./Fig:dataset-mujoco}
\end{figure}

\par\noindent\textbf{Adroit.}
We evaluate dexterous manipulation with the Adroit suite, which requires fine grained contact rich control. Following D4RL, we include four tasks, \texttt{pen}, \texttt{hammer}, \texttt{door}, and \texttt{relocate}. We use the standard dataset variants \texttt{human}, \texttt{cloned}, and \texttt{expert}, which represent limited demonstrations, rollouts from an imitation policy mixed with demonstrations, and trajectories produced by a stronger policy.

\begin{figure}[h]
  \centering
  \begin{subfigure}[h]{0.17\linewidth}
    \centering
    \includegraphics[width=\linewidth]{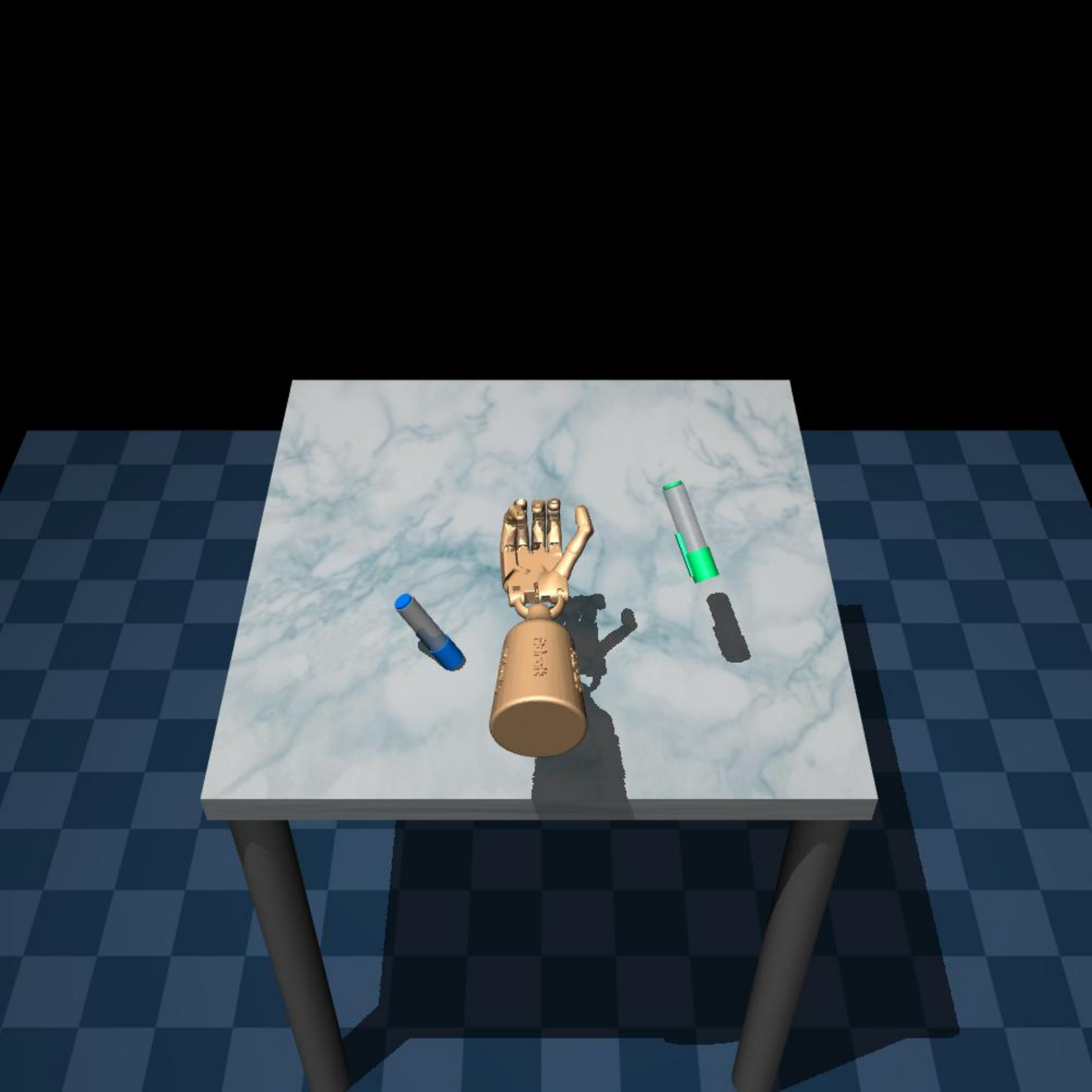}
    \caption{{pen}}
  \end{subfigure}
  \hspace{0.3cm}
  \begin{subfigure}[h]{0.17\linewidth}
    \centering
    \includegraphics[width=\linewidth]{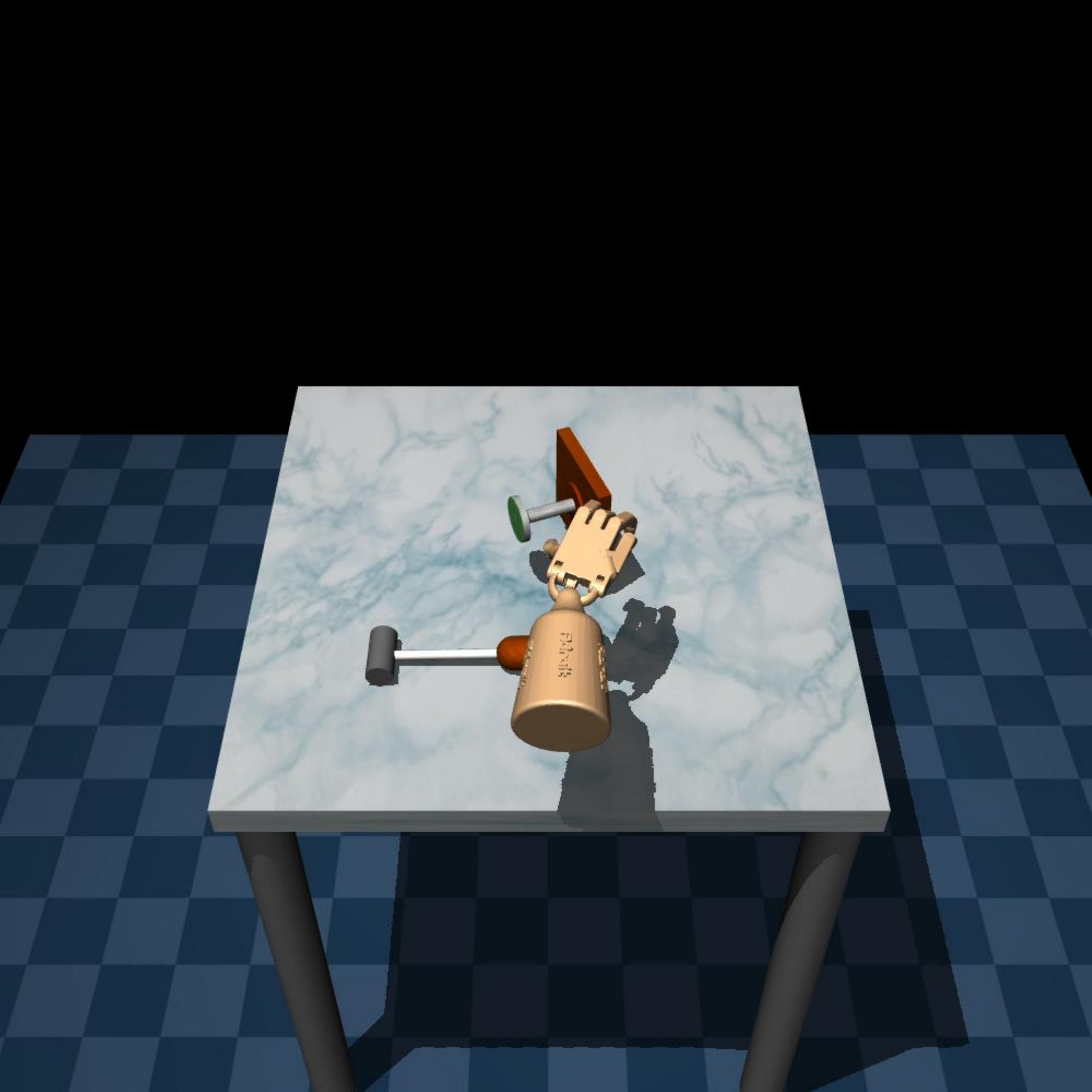}
    \caption{{hammer}}
  \end{subfigure}
  \hspace{0.3cm}
  \begin{subfigure}[h]{0.17\linewidth}
    \centering
    \includegraphics[width=\linewidth]{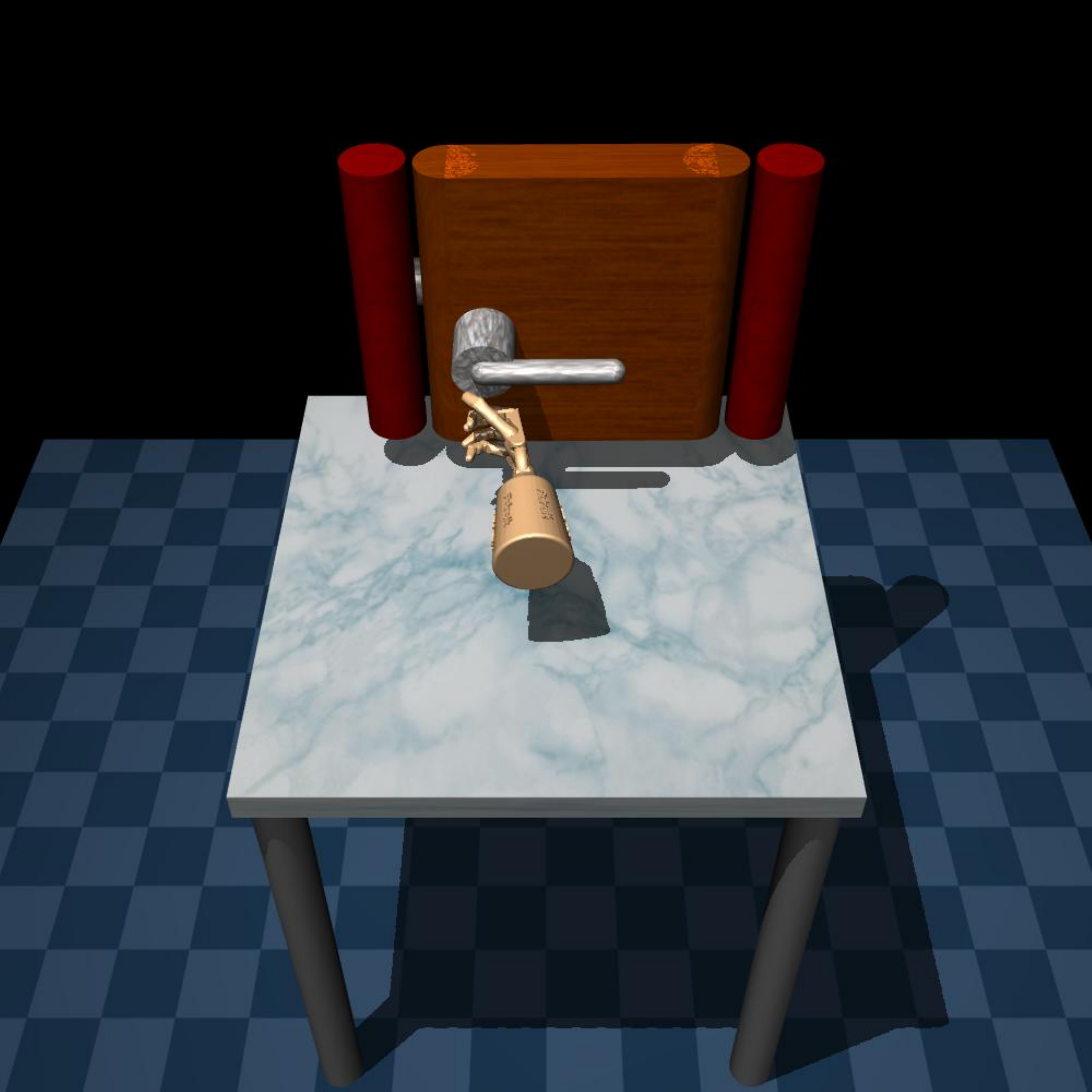}
    \caption{{door}}
  \end{subfigure}
  \hspace{0.3cm}
  \begin{subfigure}[h]{0.17\linewidth}
    \centering
    \includegraphics[width=\linewidth]{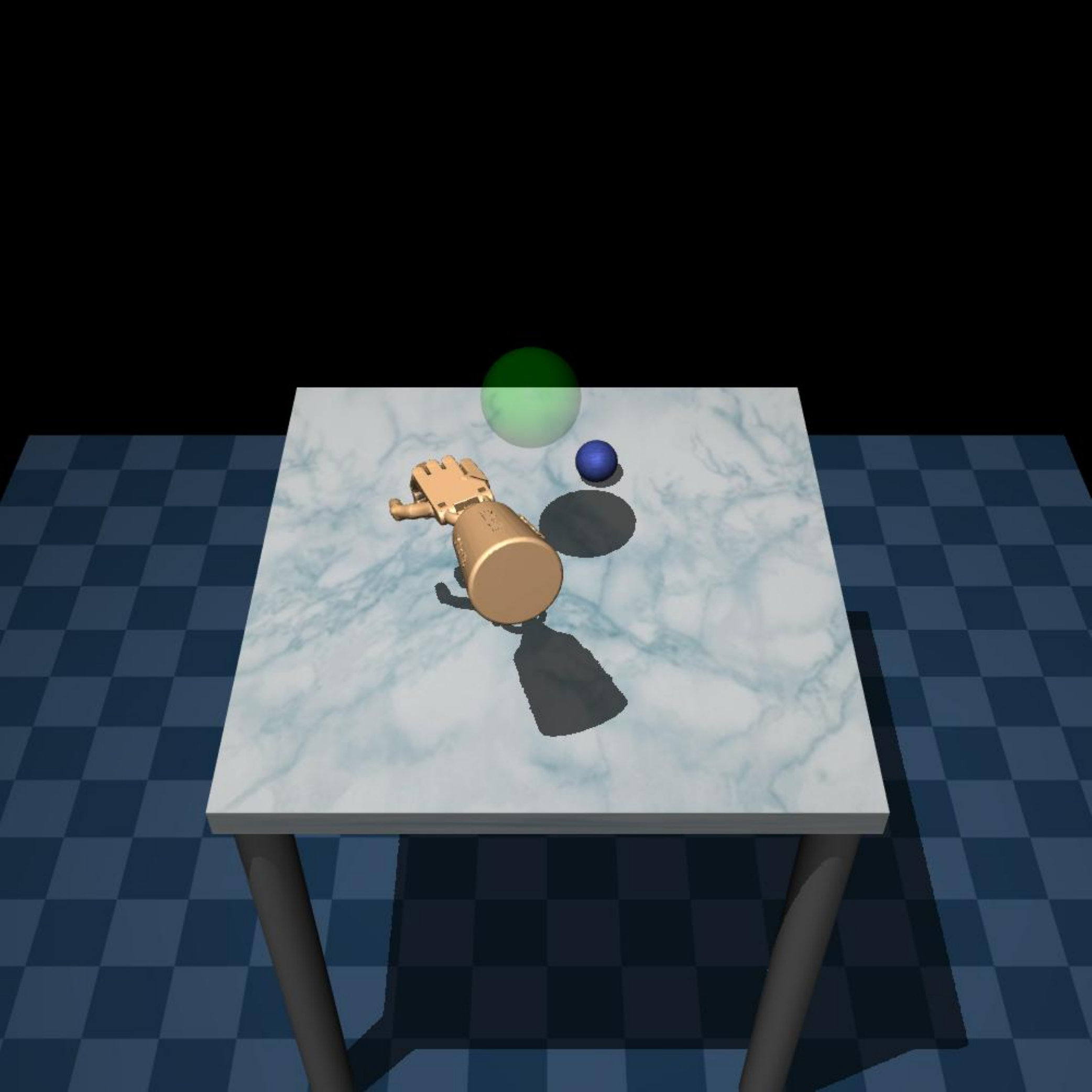}
    \caption{{relocate}}
  \end{subfigure}
  \caption{Adroit dexterous manipulation tasks in D4RL.}
  \label{./Fig:dataset-adroit}
\end{figure}

\par\noindent\textbf{AntMaze.}
We use AntMaze to test sparse reward goal reaching in mazes with the MuJoCo ant. The reward is given at the goal and is zero elsewhere, so offline learning must rely on credit assignment and good use of the dataset support. We include multiple maze scales and variants, and follow the D4RL collection protocol that uses waypoint-guided navigation to generate the offline data.

\begin{figure}[h]
  \centering
  \begin{subfigure}[h]{0.17\linewidth}
    \centering
    \includegraphics[width=\linewidth]{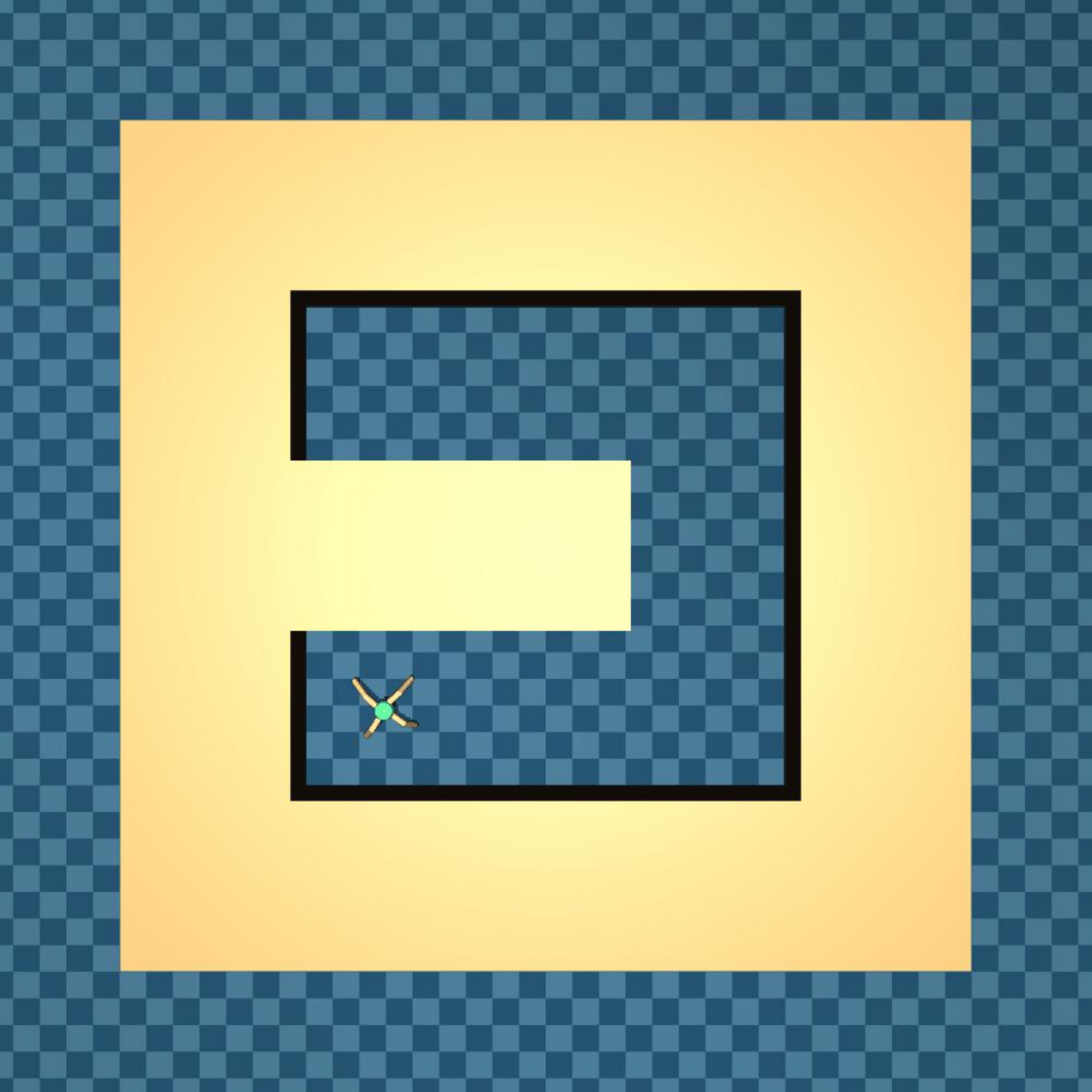}
    \caption{{umaze}}
  \end{subfigure}
  \hspace{0.3cm}
  \begin{subfigure}[h]{0.17\linewidth}
    \centering
    \includegraphics[width=\linewidth]{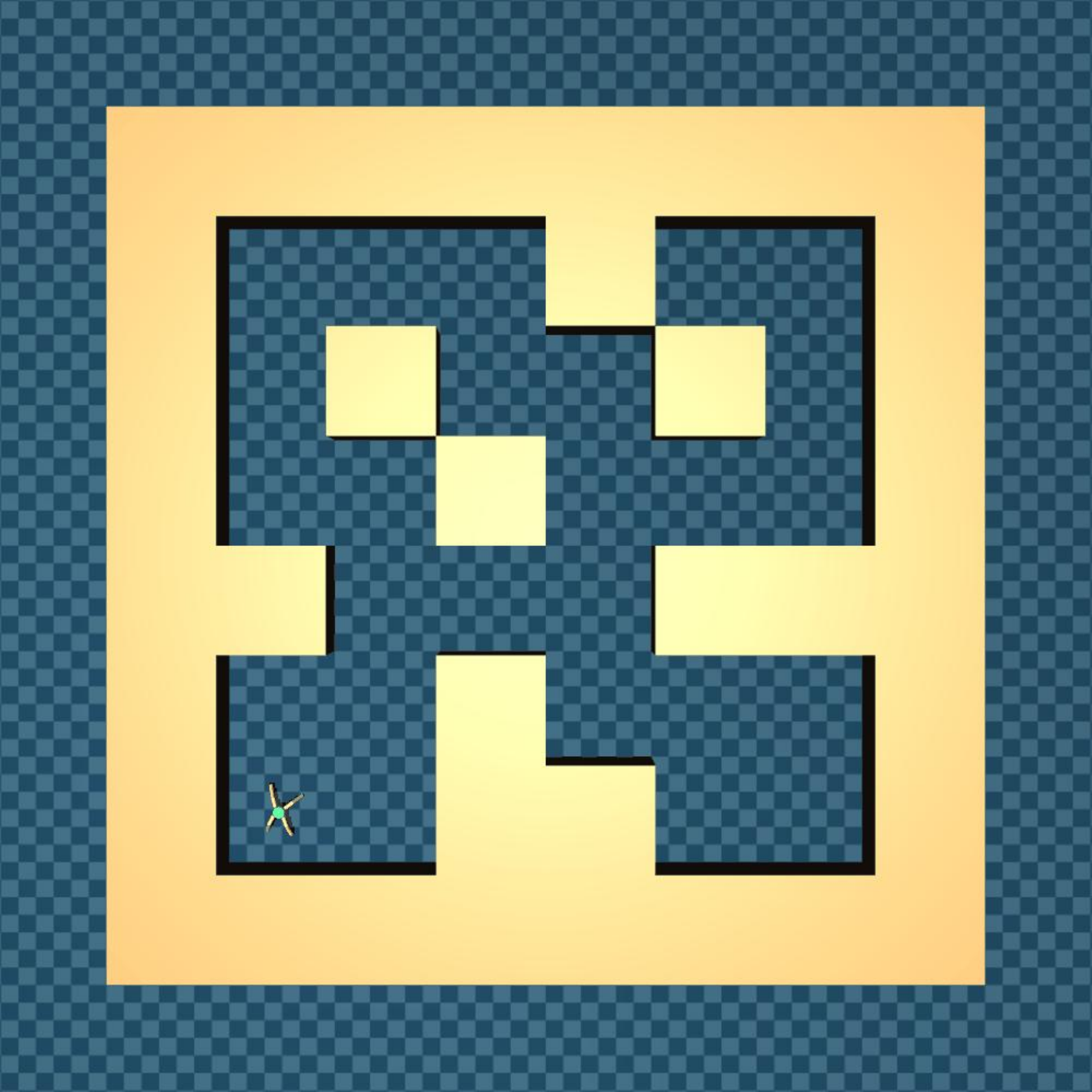}
    \caption{{medium}}
  \end{subfigure}
  \hspace{0.3cm}
  \begin{subfigure}[h]{0.17\linewidth}
    \centering
    \includegraphics[width=\linewidth]{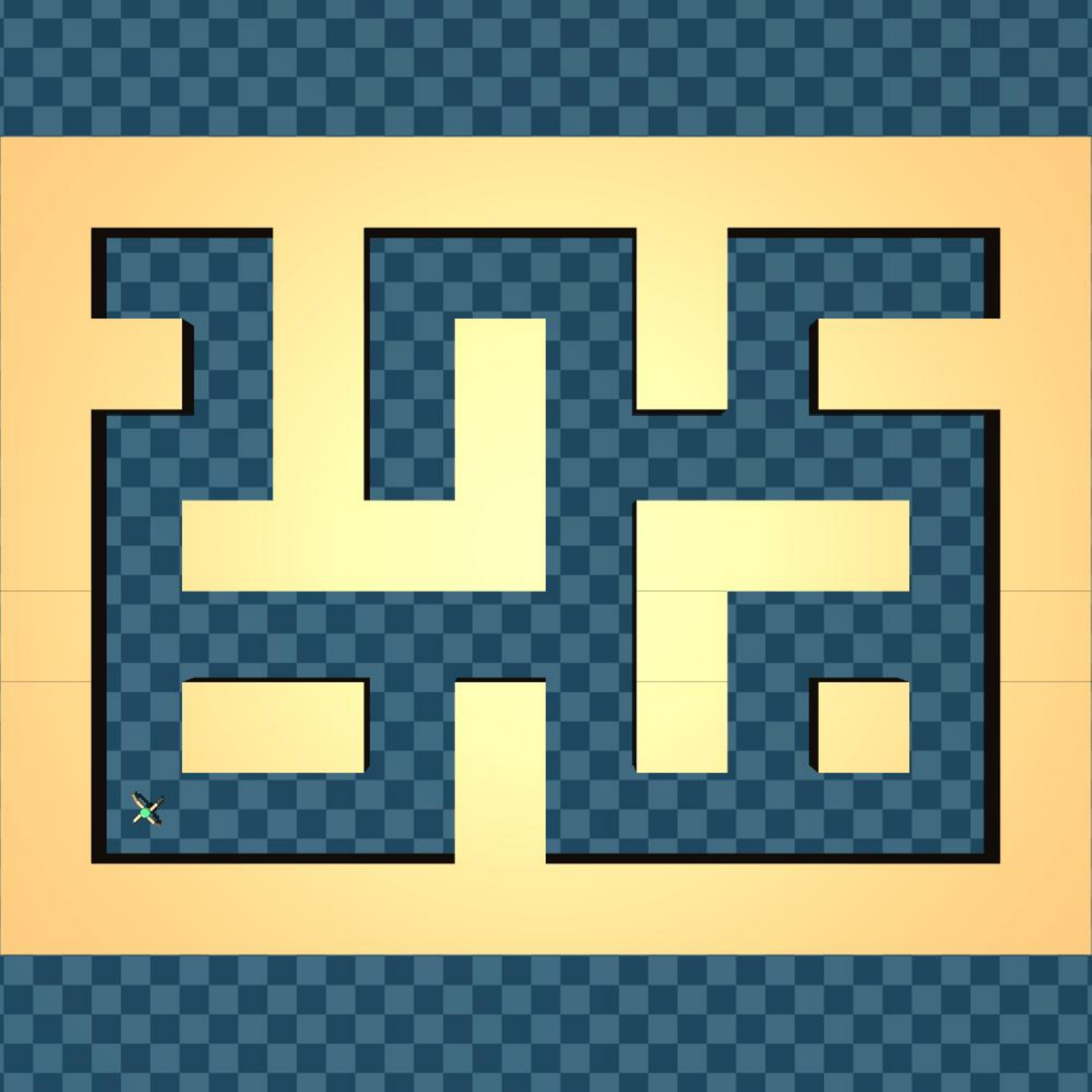}
    \caption{{large}}
  \end{subfigure}
  \hspace{0.3cm}
  \begin{subfigure}[h]{0.17\linewidth}
    \centering
    \includegraphics[width=\linewidth]{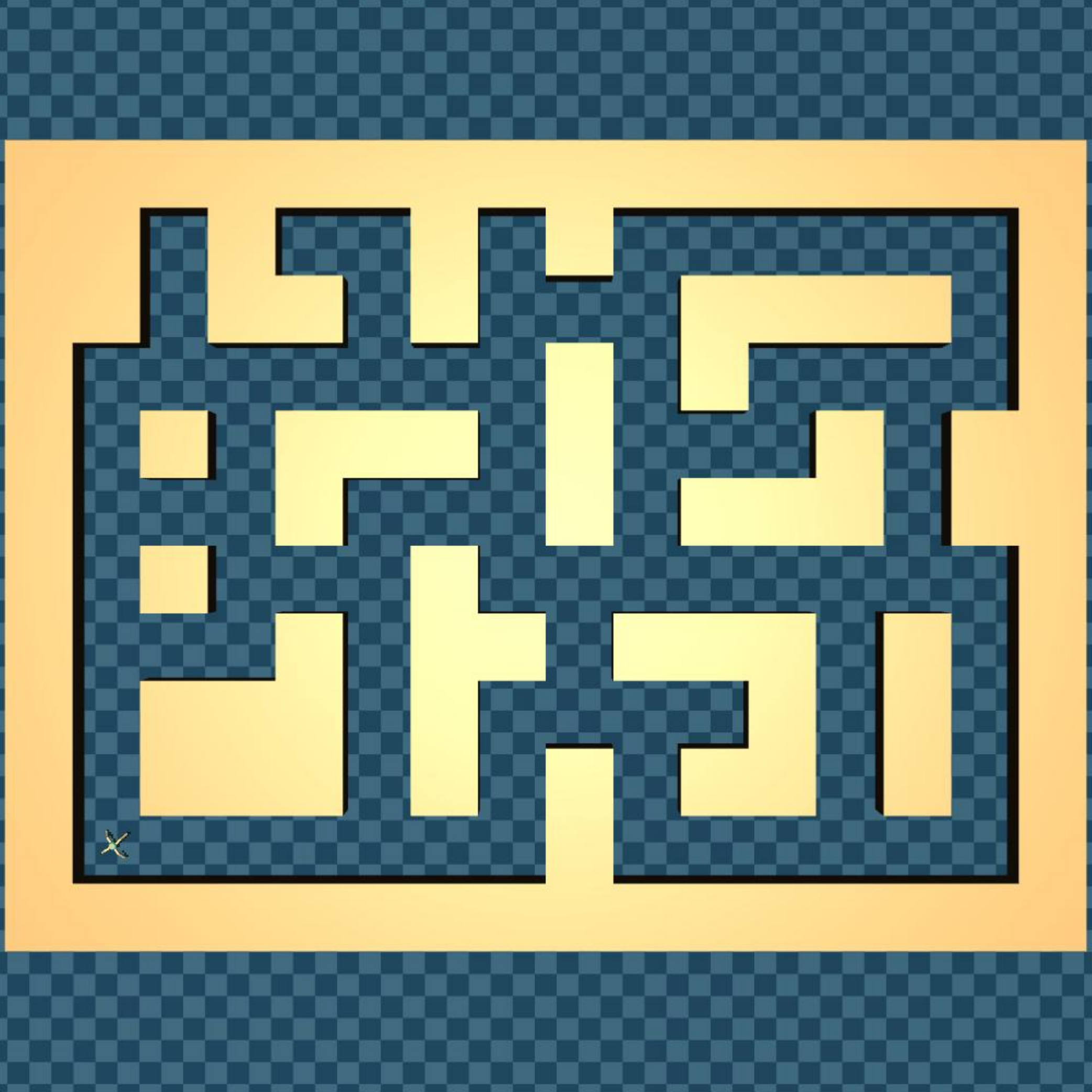}
    \caption{{ultra}}
  \end{subfigure}
  \caption{AntMaze layouts used for offline evaluation.}
  \label{./Fig:dataset-antmaze}
\end{figure}

\par\noindent\textbf{FrankaKitchen.}
We consider FrankaKitchen, a long-horizon manipulation domain where a Franka arm must coordinate multiple interactions in a kitchen scene to reach a target configuration. We follow the D4RL datasets \texttt{complete}, \texttt{partial}, and \texttt{mixed}, which differ in how directly the logged trajectories align with the evaluation objective and how much composition from subtrajectories is required.

\begin{figure}[h]
  \centering
  \begin{subfigure}[h]{0.17\linewidth}
    \centering
    \includegraphics[width=\linewidth]{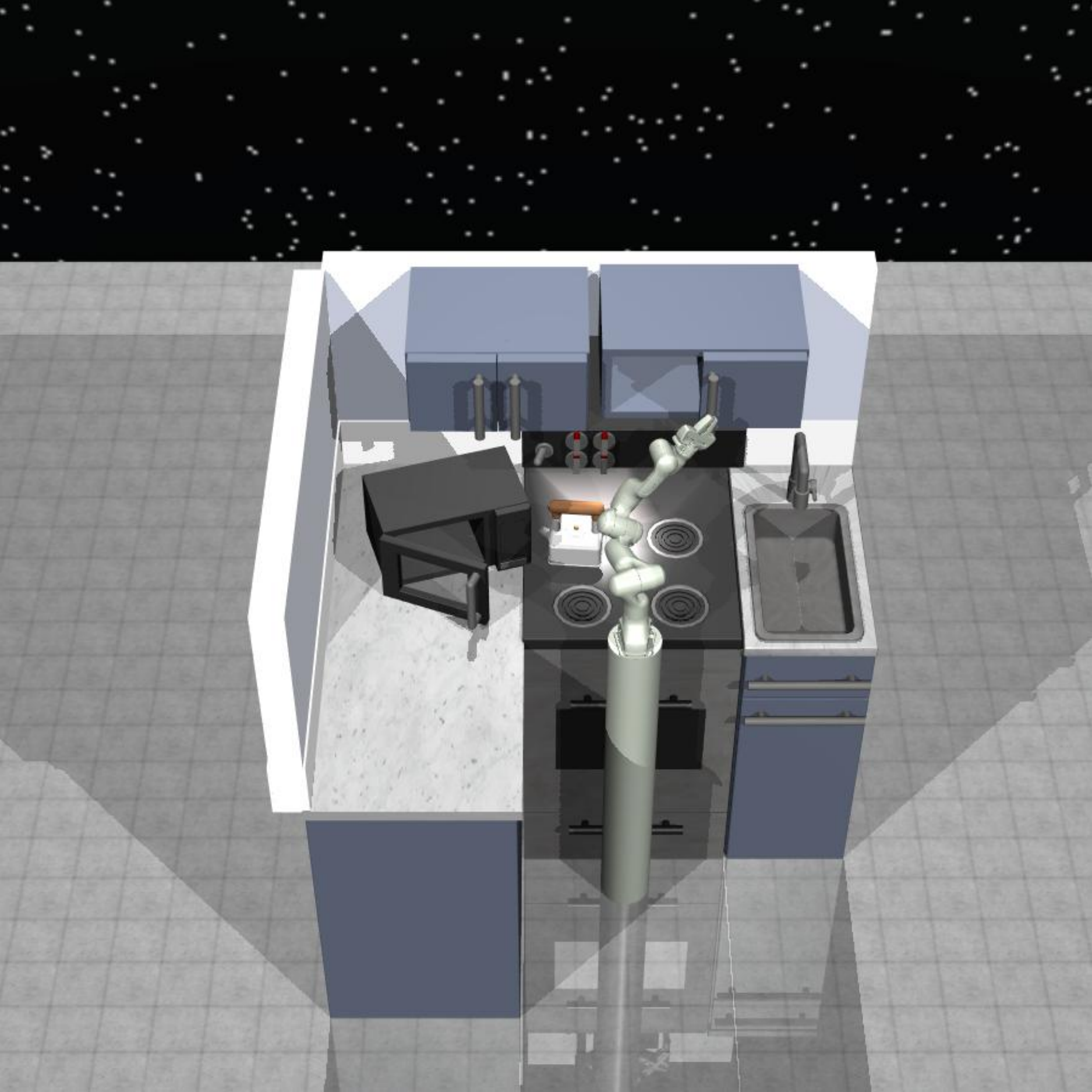}
  \end{subfigure}
  \hspace{0.3cm}
  \begin{subfigure}[h]{0.17\linewidth}
    \centering
    \includegraphics[width=\linewidth]{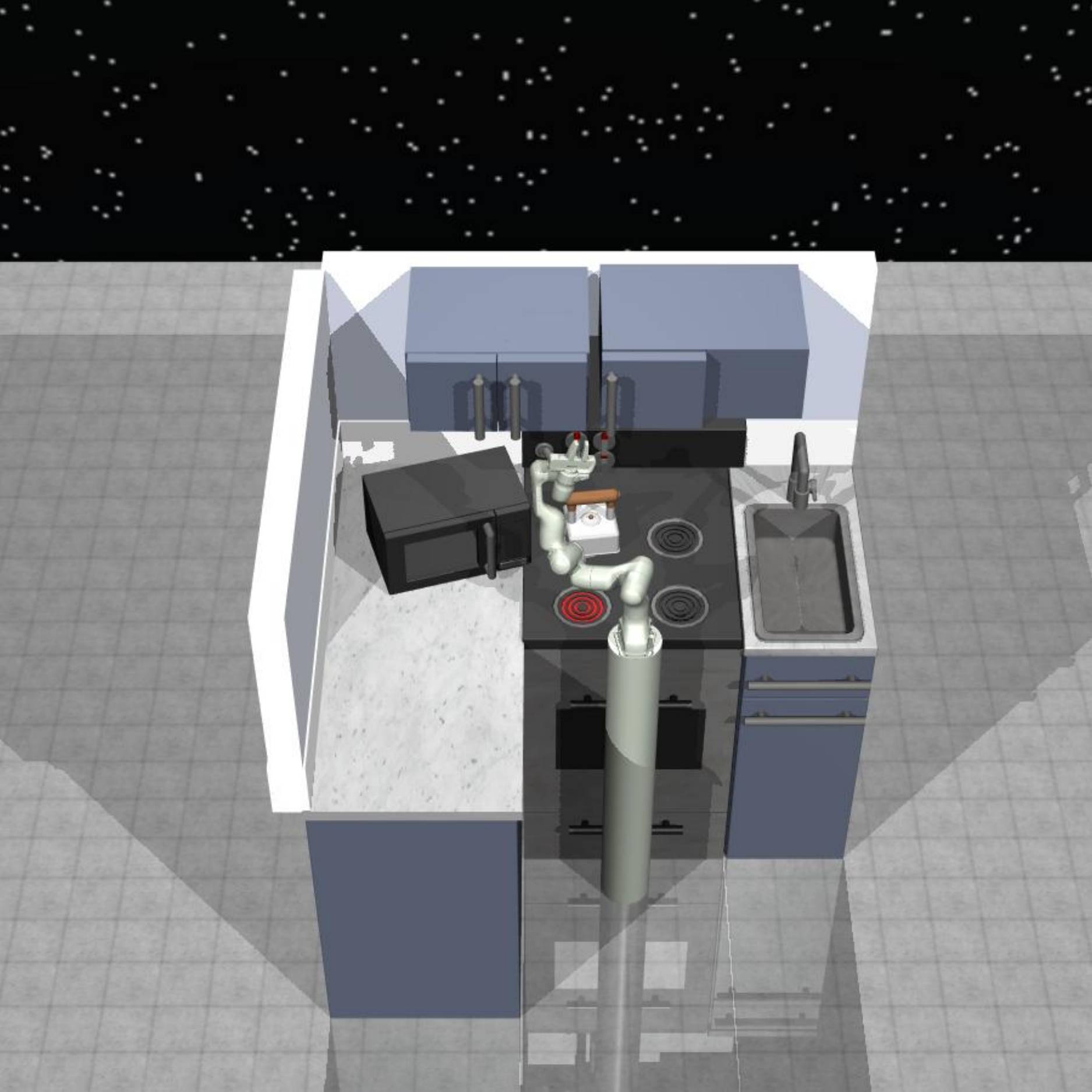}
  \end{subfigure}
  \hspace{0.3cm}
  \begin{subfigure}[h]{0.17\linewidth}
    \centering
    \includegraphics[width=\linewidth]{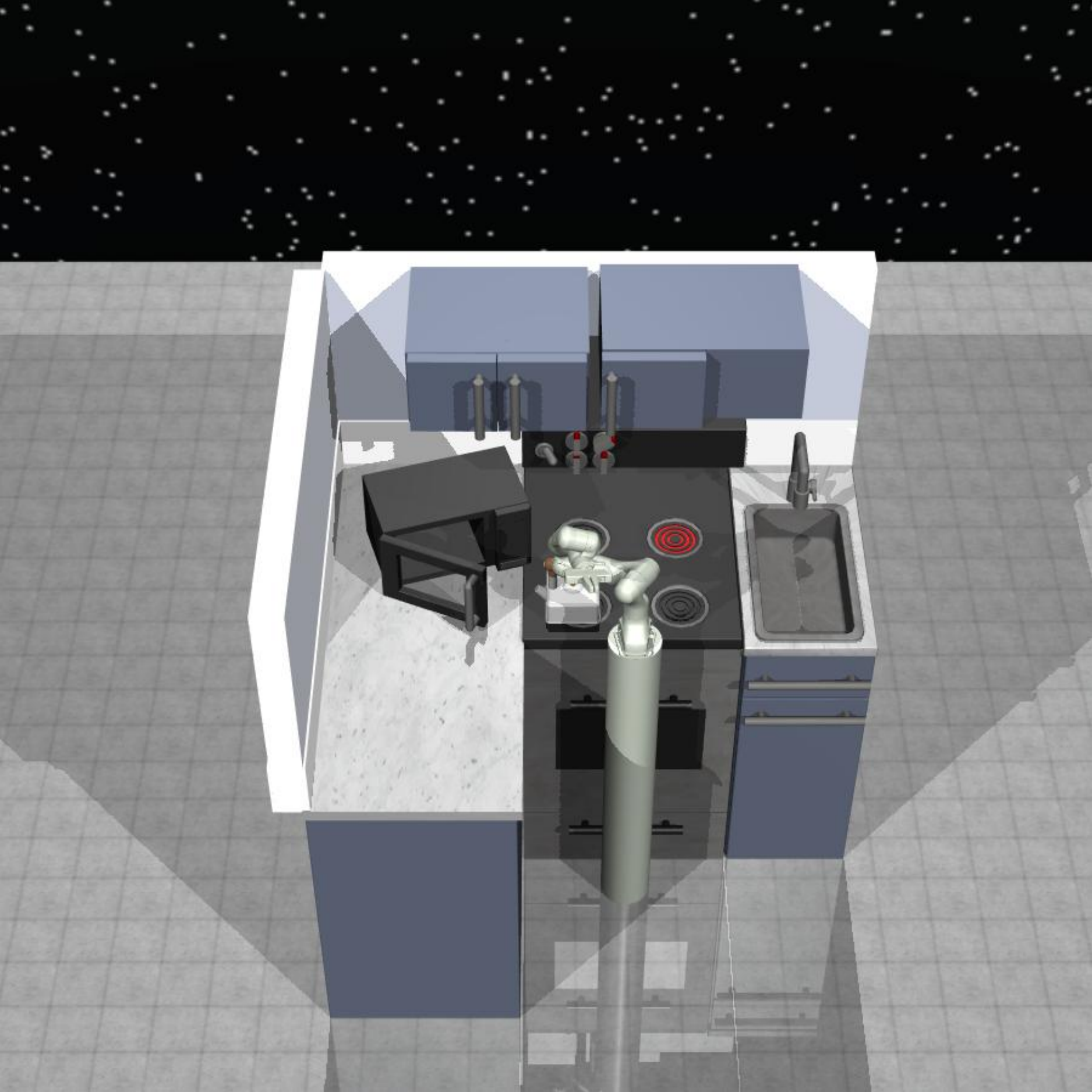}
  \end{subfigure}
  \hspace{0.3cm}
  \begin{subfigure}[h]{0.17\linewidth}
    \centering
    \includegraphics[width=\linewidth]{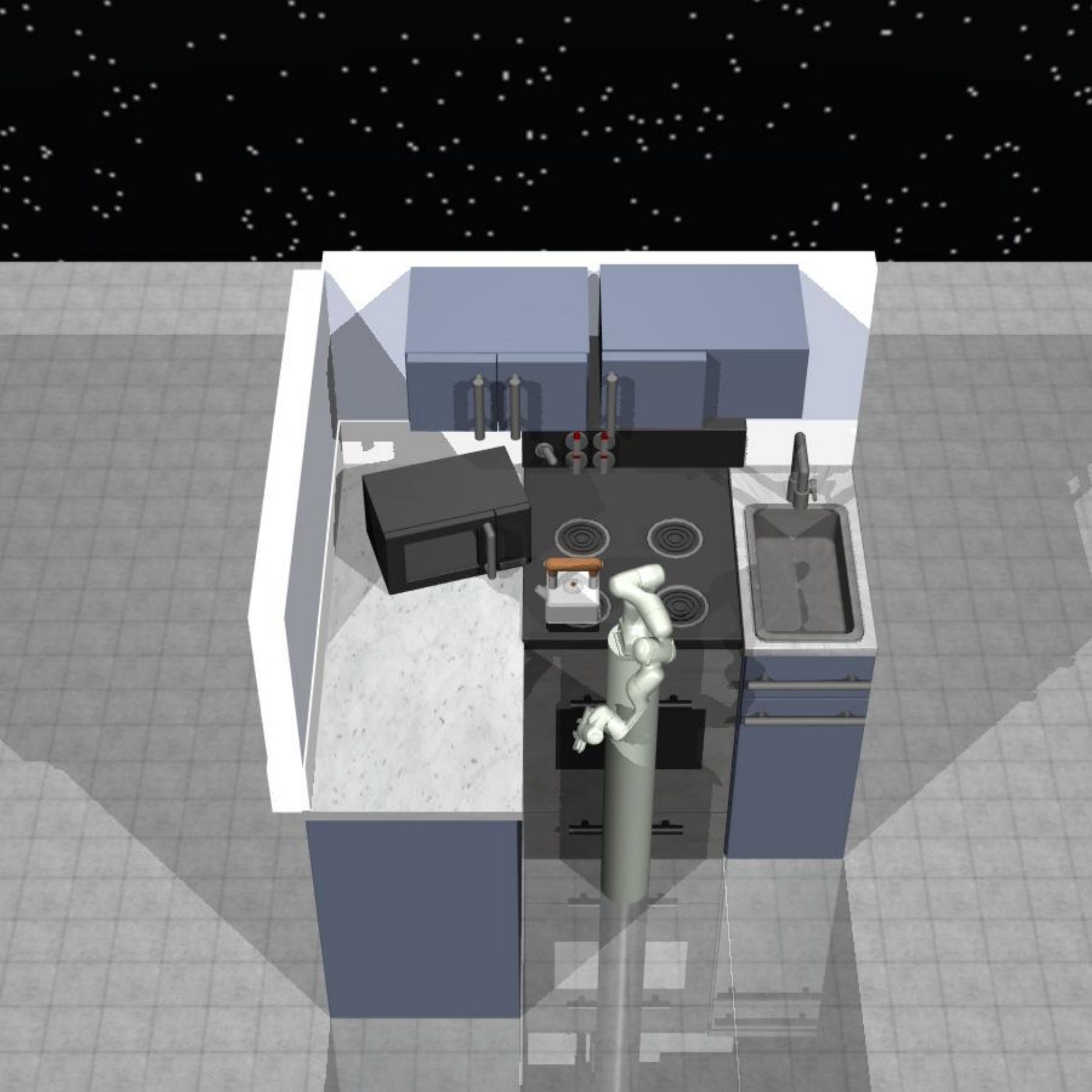}
  \end{subfigure}
  \caption{FrankaKitchen datasets and an example scene used in our benchmark.}
  \label{./Fig:dataset-kitchen}
\end{figure}

\subsection{Implementation Details}
\label{subsec:implementation}

We implement our framework using PyTorch 2.5.1. Our codebase is built upon the open-source libraries CORL: \url{https://github.com/tinkoff-ai/CORL} (Apache-2.0 License) and OfflineRL-Kit: \url{https://github.com/yihaosun1124/OfflineRL-Kit} (MIT License). All experiments were conducted on a server running Ubuntu 20.04.2 LTS, equipped with four NVIDIA GeForce RTX 3090 GPUs.

\paragraph{Network Architecture.}
Our method uses consistent network structures across benchmarks. The policy is parameterized as a 2-layer feedforward neural network with 256 hidden units, utilizing ReLU activation functions and Tanh Gaussian outputs. The discriminator follows a similar architecture: a 2-layer feedforward network with 256 hidden units and ReLU activations. For the more complex Antmaze tasks, the discriminator depth is increased to 3 layers.

\paragraph{Optimization.}
We train our models using the Adam and AdamO optimizers. For both optimizers, we use standard momentum parameters $\beta_1=0.9$ and $\beta_2=0.999$, a numerical stability term $\epsilon=1 \times 10^{-4}$, and no weight decay. The learning rates are distinct across components: the critic network is trained with a learning rate of $1 \times 10^{-4}$, while all other components utilize a learning rate of $3 \times 10^{-4}$.

\paragraph{Hyperparameters.}
Task-specific hyperparameters are adjusted according to the dataset characteristics. Specifically, for the Locomotion (10k samples) and Adroit (human, cloned) datasets, we set the regularization coefficient $\kappa=1$ and the temperature $\tau=0.05$. Conversely, for the Kitchen, Antmaze, and Adroit (expert) tasks, we adopt $\kappa=1 \times 10^{-4}$ and $\tau=0$.

\paragraph{Evaluation Metrics.}
All experiments are conducted over 3-5 random seeds. 
To facilitate cross-domain aggregation, we report the standard D4RL normalized score:
\begin{equation}
\label{eq:normalized_score}
\text{Score} = 100 \times \frac{R - R_{\mathrm{random}}}{R_{\mathrm{expert}} - R_{\mathrm{random}}},
\end{equation}
where $R$ denotes the learned policy's return, while $R_{\mathrm{random}}$ and $R_{\mathrm{expert}}$ represent the reference scores for random and expert policies, respectively \citep{fu2020d4rl}.

\subsection{Baseline Details}
\label{subsec:baseline_details}

In this section, we provide detailed implementation references for the baseline methods. To ensure reproducibility, we utilize official or widely recognized community implementations.

\paragraph{Offline RL Algorithms.} 
To maintain a strictly controlled comparison across standard offline RL paradigms, we utilize the unified implementations provided by the CORL library \citep{tarasov2022corl} available at \url{https://github.com/tinkoff-ai/CORL}. This codebase is used for the following methods:
\begin{itemize}
    \item \textit{TD3+BC} \citep{td3+bc}: A minimalist baseline combining TD3 with a behavior cloning regularization term.
    \item \textit{IQL} \citep{iql}: An in-sample learning approach utilizing expectile regression to avoid querying out-of-distribution actions.
    \item \textit{ReBRAC} \citep{tarasov2023revisiting}: An improved version of the minimalist approach that revisits network architecture and normalization choices.
\end{itemize}

For recent state-of-the-art (SOTA) methods, we utilize their official repositories:
\begin{itemize}
    \item \textit{ACTIVE} \citep{chen2025active}: A recent in-sample method employing actor-critic temperature adjustment. 
    Source: 
    \url{https://openreview.net/attachment?id=qiluFujVc8&name=supplementary_material}
    \item \textit{PARS} \citep{kim2025penalizing}: A method focused on robustness via reward scaling and penalizing infeasible actions. Source: \url{https://github.com/LGAI-Research/pars}
    \item \textit{SQOG} \citep{yao2025sqog}: An approach utilizing convex hulls and smooth Bellman operators for OOD generalization. Source: \url{https://github.com/yqpqry/SQOG}
\end{itemize}

\paragraph{Regularization Modules.}
We compare against standard architectural and algorithmic regularization techniques:
\begin{itemize}
    \item \textit{DR3} \citep{kumar2022dr3}: Explicit regularization of the value function rank.
    \item \textit{Normalization}: We evaluate standard normalization layers including Batch Normalization (BN), Layer Normalization (LN), Weight Normalization (WN), and Spectral Normalization (SN), implemented using standard \textit{PyTorch} modules.
\end{itemize}

\paragraph{Optimizer Baselines.}
In this paper, we benchmark against diverse optimizers.
\begin{itemize}
    \item \textit{Standard Optimizers}: We use the official PyTorch implementations for \textit{SGD} \citep{robbins1951stochastic}, \textit{Adam} \citep{kingma2015adam}, and \textit{AdamW} \citep{loshchilov2017decoupled}. Source: \url{https://github.com/pytorch/pytorch}
    \item \textit{AdamC} \citep{liang2025cautious}: Also known as C-AdamW, employing cautious momentum. Source: \url{https://github.com/kyleliang919/C-Optim}
    \item \textit{Kron} \citep{creus2025stable}: A parameter-efficient optimizer utilizing Kronecker-factored gradient preconditioners. Source: \url{https://github.com/roger-creus/stable-deep-rl-at-scale}
    \item \textit{TRAC} \citep{muppidi2024fasttrac}: An adaptive optimizer for managing non-stationarity. Source: \url{https://github.com/redsnic/torch_erf}
    \item \textit{Resetting} \citep{asadi2023resetting}: A technique involving the periodic resetting of the parameters of the optimizer. Pseudo-code provided in: \url{https://openreview.net/pdf?id=AnFUgNC3Yc}
\end{itemize}
\subsection{Main Experimental Analysis}
\label{subsec:main_experiments}

We further provide a comprehensive empirical evaluation to validate our theoretical framework. Our analysis is structured to peel back the layers of AdamO's performance, moving from broad generalization capabilities to fine-grained mechanistic verification and practical robustness. Specifically, we center our evaluation around five core inquiries:
Notably, we ensure comprehensive coverage of both Q-learning and SARSA paradigms. We employ TD3+BC and IQL as their respective representatives, applying both algorithms across all verification stages to ensure rigorous consistency.

\begin{itemize} 

\item \textit{Q1: Generalization.} 
Can AdamO consistently outperform baselines across diverse tasks and algorithmic backbones, verifying its universality (Appendix~\ref{subsubsec:generalization})?

\item \textit{Q2: Comparative Advantage.} How does AdamO position itself against strictly strictly architectural (e.g., LayerNorm) or objective-based (e.g., regularization) remedies (Appendix~\ref{subsubsec:comparison})?

\item \textit{Q3: Sensitivity \& Robustness.} Is AdamO resilient to hyperparameter variations ($\kappa, \tau$) and batch-size scaling, making it a reliable choice for practitioners (Appendix~\ref{subsubsec:robustness})?

\item \textit{Q4: Cost-Benefit Analysis.} Does AdamO incur a manageable computational overhead relative to the stability gains it provides (Appendix~\ref{subsubsec:efficiency})?

\item \textit{Q5: Mechanistic Verification.} Does the optimizer \textbf{empirically enforce} the spectral constraints (Hurwitz condition) on the TD operator as theoretically derived, thereby actively preventing collapse (Appendix~\ref{subsubsec:exp_stability})?

\end{itemize}

\subsubsection{Generalization Analysis (Q1)}
\label{subsubsec:generalization}
We evaluate the consistent performance benefits of AdamO by integrating it into fundamentally different algorithmic backbones across a broad spectrum of environments. As illustrated in Figure \ref{fig:radar}, AdamO provides a robust performance boost regardless of the underlying reinforcement learning paradigm. Specifically, we observe significant improvements in both IQL and TD3+BC. These two methods represent SARSA-style in-sample learning and Q-learning style policy constraints, respectively.
The radar charts reveal that AdamO consistently expands the performance frontier across diverse D4RL domains, including \textit{MuJoCo locomotion}, \textit{AntMaze} navigation, and \textit{Adroit} dexterous manipulation. For instance, AdamO achieves dominant scores in challenging tasks like \textit{Pen-Human} and \textit{AntMaze-Large} where standard Adam often fails to learn meaningful policies. This robust plug-and-play nature allows AdamO to suppress value instability at the optimization level across fundamentally different temporal-difference structures. Detailed percentage improvements across an even wider array of state-of-the-art algorithms are provided in Table \ref{tab:d4rl_results_adamo_percentage}.

\begin{figure}[htbp]
\centering
\includegraphics[width=0.6\linewidth]{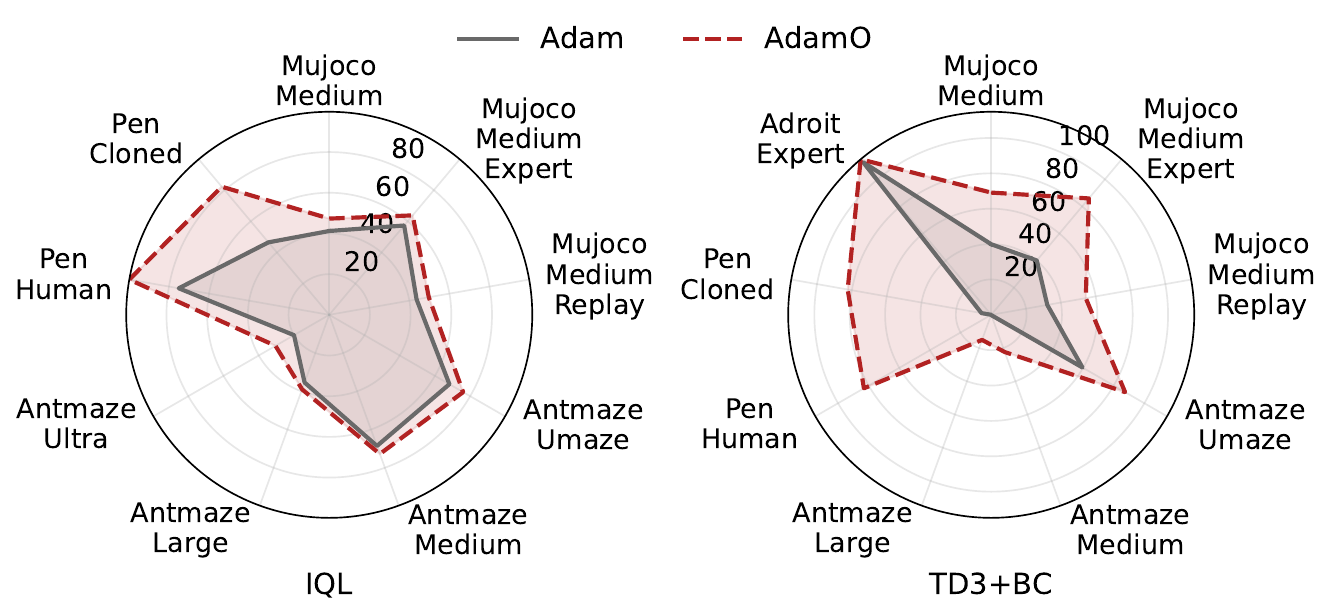}
\caption{Radar charts visualizing the aggregated normalized scores across diverse D4RL domains.}
\label{fig:radar}
\end{figure}

\subsubsection{Comparative Advantage (Q2)}
\label{subsubsec:comparison}

We evaluate the performance of our proposed AdamO optimizer against a robust set of architectural interventions and optimization baselines to determine the efficacy of an optimizer-level fix for value collapse.

\paragraph{Comparison with Normalization and Regularization.}
As demonstrated in Table~\ref{tab:merged_results}, AdamO consistently outperforms or matches traditional function-space stabilizers. While techniques such as Layer Normalization (+LN) and Spectral Normalization (+SN) provide significant stability gains over the vanilla ``Base'' backbones, their performance often varies depending on the task and algorithm. For instance, while +SN achieves a high score of $95.9$ on Walker2d-medium-expert with TD3+BC, AdamO reaches a superior $98.5$. More importantly, AdamO exhibits higher stability across fundamentally different paradigms, such as in the AntMaze domain, where it maintains the highest average scores for both TD3+BC ($41.6$) and IQL ($54.6$). Compared to explicit rank regularization like +DR3, which requires additional computational overhead, AdamO achieves better results across almost all Locomotion and AntMaze tasks by directly suppressing instability within the optimization dynamics.

\paragraph{Comparison with Stability-Focused Optimizers.}
Table~\ref{table:td3bc-d4rl} and Table~\ref{table:iql-d4rl} provide a detailed comparison between AdamO and other advanced optimizers. Standard first-order methods like SGD and Adam often struggle with value divergence in challenging offline environments, particularly in the AntMaze-large and Adroit-human tasks. Specialized optimizers like TRAC, Kron, and C-AdamW improve performance by managing non-stationarity or curvature, yet they remain susceptible to bootstrapping-induced collapse. AdamO significantly surpasses these baselines, achieving an average score of $66.1$ in Locomotion and $93.4$ in Adroit using the TD3+BC backbone, compared to the next best optimizer, Kron, which scores $58.0$ and $67.6$ respectively.
The stability benefits are further highlighted by the reduced standard deviations across multiple random seeds. In the challenging AntMaze-umaze task, AdamO achieves $92.5 \pm 1.2$, demonstrating much tighter convergence than the baseline Adam ($72.2 \pm 13.1$). By decoupling the orthogonality correction from the task-gradient descent, AdamO effectively mitigates the ``deadly triad'' instabilities that persist even when using other state-of-the-art optimizers. These results suggest that addressing value collapse at the optimizer level provides a more robust and principled solution than relying on heuristic architectural adjustments or generic curvature approximations.

\begin{table}[htbp]
    \centering
    \caption{{Normalized average scores on Benchmarks.} 
    We compare our method against two backbone algorithms and their variants with various regularization techniques. The ``Base'' column corresponds to the vanilla TD3+BC or IQL score, respectively.}
    \label{tab:merged_results}
    \resizebox{0.7\textwidth}{!}{%
    \begin{tabular}{lcccccccc}
        \toprule
        \textbf{Task Name} & \textbf{Base} & \textbf{+LN} & \textbf{+BN} & \textbf{+WN} & \textbf{+SN} & \textbf{+DR3} & \textbf{AdamO (Ours)} \\
        \midrule
        
        \multicolumn{8}{c}{\cellcolor{gray!10}\textit{\textbf{Backbone: TD3+BC}}} \\
        \midrule
        \multicolumn{8}{l}{\textit{MuJoCo Locomotion (10k samples)}} \\
        \midrule
        HalfCheetah-medium & 35.9 & 50.6 & 44.1 & 38.8 & 52.3 & 47.0 & \textbf{53.5} \\
        HalfCheetah-medium-replay & 39.1 & 45.2 & 42.5 & 40.1 & 44.9 & 43.2 & \textbf{46.2} \\
        HalfCheetah-medium-expert & 33.5 & 56.4 & 44.9 & 40.2 & 55.0 & 47.8 & \textbf{60.1} \\
        Hopper-medium & 40.7 & 67.3 & 55.6 & 44.9 & 71.7 & 64.2 & \textbf{75.5} \\
        Hopper-medium-replay & 21.3 & 47.4 & 36.5 & 24.7 & 46.2 & 41.9 & \textbf{55.8} \\
        Hopper-medium-expert & 32.6 & 80.6 & 46.9 & 37.8 & 75.8 & 64.4 & \textbf{84.2} \\
        Walker2d-medium & 21.2 & 60.8 & 36.3 & 28.7 & 59.3 & 50.4 & \textbf{68.4} \\
        Walker2d-medium-replay & 19.3 & 44.9 & 32.4 & 26.4 & 49.5 & 39.5 & \textbf{52.5} \\
        Walker2d-medium-expert & 22.4 & 84.1 & 46.1 & 34.0 & 95.9 & 69.5 & \textbf{98.5} \\
        Ant-medium & 61.7 & 78.2 & 69.5 & 63.4 & 75.2 & 74.0 & \textbf{79.1} \\
        Ant-medium-replay & 49.3 & 62.7 & 54.7 & 52.3 & 59.4 & 59.2 & \textbf{62.8} \\
        Ant-medium-expert & 72.0 & 101.3 & \textbf{103.6} & 78.3 & 83.6 & 68.7 & 100.9 \\
        \midrule
        \multicolumn{8}{l}{\textit{AntMaze}} \\
        \midrule
        AntMaze-Umaze & 72.2 & 89.5 & 80.1 & 75.1 & 91.8 & 84.8 & \textbf{92.5} \\
        AntMaze-Umaze-diverse & 47.0 & 78.1 & 64.2 & 52.3 & 73.5 & 69.9 & \textbf{82.2} \\
        AntMaze-medium-play & 0.3 & 25.4 & 14.0 & 6.0 & 24.8 & 20.0 & \textbf{28.5} \\
        AntMaze-medium-diverse & 0.2 & 12.7 & 5.5 & 1.8 & 13.6 & 12.6 & \textbf{16.4} \\
        AntMaze-Large-Play & 0.0 & 12.9 & 5.5 & 2.1 & 10.6 & 9.2 & \textbf{13.7} \\
        AntMaze-Large-diverse & 0.0 & 14.1 & 2.3 & \textbf{16.5} & 12.8 & 10.5 & \textbf{16.5} \\
        \midrule
        \multicolumn{8}{l}{\textit{Adroit}} \\
        \midrule
        Pen-human & -4.1 & 69.3 & 29.3 & 12.0 & 69.5 & 49.7 & \textbf{83.1} \\
        Pen-cloned & 5.6 & 76.1 & 38.6 & 18.5 & 65.6 & 58.0 & \textbf{82.4} \\
        Pen-expert & 109.7 & 114.1 & 111.4 & 110.5 & 114.3 & 113.1 & \textbf{114.8} \\
        Relocate-expert & \textbf{106.2} & 105.7 & 105.9 & 106.1 & 105.7 & 105.8 & 105.6 \\
        Hammer-expert & 125.0 & 126.6 & 125.7 & 125.4 & 126.3 & 126.0 & \textbf{126.7} \\
        Door-expert & 109.4 & 120.2 & 110.2 & 108.9 & \textbf{121.7} & 116.4 & 112.3 \\
        
        \midrule
        \multicolumn{8}{c}{\cellcolor{gray!10}\textit{\textbf{Backbone: IQL}}} \\
        \midrule
        \multicolumn{8}{l}{\textit{MuJoCo Locomotion (10k samples)}} \\
        \midrule
        HalfCheetah-medium & 29.7 & 25.3 & 31.6 & 31.1 & 27.1 & 30.6 & \textbf{35.2} \\
        HalfCheetah-medium-replay & 32.7 & \textbf{42.0} & 16.0 & 32.2 & 28.7 & 24.4 & 38.4 \\
        HalfCheetah-medium-expert & 48.1 & 49.9 & 51.1 & 25.7 & 64.3 & \textbf{65.1} & 54.6 \\
        Hopper-medium & 38.9 & 30.6 & \textbf{59.3} & 17.2 & 34.8 & 40.0 & 45.1 \\
        Hopper-medium-replay & 46.6 & 42.3 & 40.2 & 30.2 & 46.5 & 41.9 & \textbf{52.9} \\
        Hopper-medium-expert & 66.5 & \textbf{74.7} & 42.5 & 41.6 & 73.7 & 50.1 & 73.2 \\
        Walker2d-medium & 54.9 & 46.4 & 32.1 & 35.3 & \textbf{65.6} & 33.5 & 61.5 \\
        Walker2d-medium-replay & 51.4 & \textbf{59.3} & 38.2 & 39.6 & 41.5 & 44.9 & 57.8 \\
        Walker2d-medium-expert & 57.3 & 52.4 & 59.4 & 36.5 & 59.5 & 44.5 & \textbf{63.7} \\
        \midrule
        \multicolumn{8}{l}{\textit{AntMaze}} \\
        \midrule
        AntMaze-umaze & 80.5 & 70.9 & 78.2 & 65.2 & \textbf{85.2} & 76.5 & 83.8 \\
        AntMaze-umaze-diverse & 55.8 & 56.7 & 61.7 & 46.8 & 51.5 & 44.6 & \textbf{68.2} \\
        AntMaze-medium-play & 70.4 & 64.6 & 65.1 & 63.4 & \textbf{75.3} & 47.4 & 70.1 \\
        AntMaze-medium-diverse & 66.9 & 67.6 & 71.5 & 67.6 & \textbf{76.3} & 49.9 & 75.5 \\
        AntMaze-large-play & 38.5 & 36.9 & 34.2 & \textbf{42.6} & 38.5 & 39.6 & 41.8 \\
        AntMaze-large-diverse & 32.4 & 26.7 & 24.2 & 27.0 & 24.9 & 32.7 & \textbf{36.1} \\
        AntMaze-ultra-diverse & 19.0 & 25.4 & 35.0 & 29.9 & \textbf{36.9} & 24.3 & 31.1 \\
        AntMaze-ultra-play & 21.0 & \textbf{31.9} & 31.3 & 27.5 & 23.3 & 28.5 & 29.9 \\
        \midrule
        \multicolumn{8}{l}{\textit{Adroit}} \\
        \midrule
        Pen-human & 75.1 & 97.7 & 78.6 & 90.2 & 95.6 & 83.0 & \textbf{99.8} \\
        Pen-cloned & 46.5 & 74.9 & 55.4 & 51.1 & 53.2 & 80.7 & \textbf{82.2} \\
        Door-human & 3.5 & \textbf{5.4} & 3.7 & 3.1 & 3.4 & 3.7 & 4.8 \\
        Door-cloned & 3.3 & 4.4 & 2.9 & 3.7 & \textbf{6.4} & 4.2 & 4.5 \\
        Hammer-human & 1.9 & \textbf{4.3} & 2.4 & 2.2 & 2.4 & 4.1 & 3.1 \\
        Hammer-cloned & 1.7 & 2.3 & 1.6 & 2.3 & \textbf{3.3} & 3.0 & 3.2 \\
        Relocate-human & 0.1 & \textbf{0.7} & 0.1 & 0.3 & 0.5 & 0.6 & 0.5 \\
        Relocate-cloned & 0.0 & 0.0 & 0.0 & 0.0 & 0.0 & 0.0 & 0.0 \\
        \bottomrule
    \end{tabular}
    }
\end{table}

\begin{table*}[htbp]
\centering
\caption{Performance comparison on benchmarks using IQL. We report the mean scores $\pm$ standard deviation for individual tasks, and the arithmetic mean for averages. \textbf{AdamO (Ours)} shows superior performance and improved stability across domains.}
\resizebox{\textwidth}{!}{%
\begin{tabular}{lcccccccc}
\toprule
Task Name & SGD & Adam & AdamW & C-AdamW & Resetting & TRAC & Kron & AdamO (Ours) \\
\midrule
\multicolumn{9}{c}{\textit{Mujoco Locomotion (10k samples)}} \\
\midrule
halfcheetah-m & $22.6 \pm 5.4$ & $29.7 \pm 3.1$ & $29.0 \pm 3.5$ & $30.6 \pm 2.8$ & $33.4 \pm 2.2$ & $32.8 \pm 3.0$ & $34.9 \pm 2.5$ & $\mathbf{35.2} \pm \mathbf{1.8}$ \\
halfcheetah-mr & $23.6 \pm 6.1$ & $32.7 \pm 4.2$ & $34.1 \pm 4.0$ & $36.5 \pm 3.5$ & $34.8 \pm 3.9$ & $\mathbf{39.0} \pm \mathbf{4.1}$ & $35.5 \pm 3.3$ & $38.4 \pm 2.9$ \\
halfcheetah-me & $37.7 \pm 8.2$ & $48.1 \pm 5.5$ & $45.9 \pm 6.1$ & $49.5 \pm 5.0$ & $49.3 \pm 4.8$ & $\mathbf{54.9} \pm \mathbf{5.2}$ & $53.5 \pm 4.5$ & $54.6 \pm 3.6$ \\
hopper-m & $33.2 \pm 7.5$ & $38.9 \pm 4.1$ & $42.0 \pm 4.5$ & $39.3 \pm 5.2$ & $39.6 \pm 4.0$ & $44.8 \pm 4.8$ & $42.0 \pm 3.9$ & $\mathbf{45.1} \pm \mathbf{3.2}$ \\
hopper-mr & $37.8 \pm 8.0$ & $46.6 \pm 6.8$ & $46.6 \pm 6.5$ & $48.6 \pm 5.5$ & $48.9 \pm 5.2$ & $51.0 \pm 6.0$ & $50.3 \pm 5.1$ & $\mathbf{52.9} \pm \mathbf{4.0}$ \\
hopper-me & $54.9 \pm 10.2$ & $66.5 \pm 8.4$ & $65.7 \pm 9.1$ & $70.3 \pm 7.5$ & $67.0 \pm 8.0$ & $69.6 \pm 7.2$ & $69.3 \pm 7.5$ & $\mathbf{73.2} \pm \mathbf{6.1}$ \\
walker2d-m & $44.4 \pm 9.5$ & $54.9 \pm 6.2$ & $54.5 \pm 6.8$ & $58.1 \pm 5.5$ & $58.4 \pm 5.1$ & $57.1 \pm 6.5$ & $58.4 \pm 5.9$ & $\mathbf{61.5} \pm \mathbf{4.8}$ \\
walker2d-mr & $40.9 \pm 8.8$ & $51.4 \pm 7.0$ & $51.4 \pm 7.2$ & $53.5 \pm 6.1$ & $55.2 \pm 5.8$ & $55.3 \pm 6.3$ & $\mathbf{57.8} \pm \mathbf{5.2}$ & $\mathbf{57.8} \pm \mathbf{4.9}$ \\
walker2d-me & $44.4 \pm 9.1$ & $57.3 \pm 8.5$ & $60.7 \pm 7.9$ & $57.4 \pm 7.2$ & $63.0 \pm 6.5$ & $62.6 \pm 6.8$ & $\mathbf{64.7} \pm \mathbf{6.0}$ & $63.7 \pm 5.5$ \\
\midrule
\textit{Locomotion Avg.} & 37.7 & 47.3 & 47.8 & 49.3 & 50.0 & 51.9 & 51.8 & \textbf{53.6} \\
\midrule
\multicolumn{9}{c}{\textit{AntMaze}} \\
\midrule
antmaze-umaze & $66.2 \pm 15.4$ & $80.5 \pm 9.2$ & $82.9 \pm 8.5$ & $79.8 \pm 10.1$ & $80.4 \pm 8.8$ & $81.5 \pm 7.5$ & $\mathbf{84.4} \pm \mathbf{6.1}$ & $83.8 \pm 5.5$ \\
antmaze-umaze-di & $52.9 \pm 12.8$ & $55.8 \pm 10.5$ & $55.3 \pm 11.2$ & $56.5 \pm 9.8$ & $58.4 \pm 9.5$ & $63.1 \pm 8.2$ & $65.5 \pm 7.8$ & $\mathbf{68.2} \pm \mathbf{6.5}$ \\
antmaze-med-play & $63.0 \pm 14.2$ & $\mathbf{70.4} \pm \mathbf{11.5}$ & $69.9 \pm 12.1$ & $70.5 \pm 10.8$ & $70.5 \pm 9.9$ & $69.7 \pm 10.2$ & $69.8 \pm 9.5$ & $70.1 \pm 8.2$ \\
antmaze-med-div & $55.3 \pm 13.5$ & $66.9 \pm 9.8$ & $71.3 \pm 9.2$ & $69.1 \pm 10.5$ & $72.3 \pm 8.5$ & $73.5 \pm 7.9$ & $71.2 \pm 8.8$ & $\mathbf{75.5} \pm \mathbf{7.2}$ \\
antmaze-large-play & $30.0 \pm 10.5$ & $38.5 \pm 8.2$ & $38.7 \pm 8.5$ & $39.1 \pm 7.8$ & $40.5 \pm 7.2$ & $39.3 \pm 7.5$ & $40.5 \pm 6.8$ & $\mathbf{41.8} \pm \mathbf{5.9}$ \\
antmaze-large-div & $26.6 \pm 9.8$ & $32.4 \pm 7.5$ & $32.6 \pm 7.2$ & $33.4 \pm 7.0$ & $34.0 \pm 6.5$ & $34.4 \pm 6.8$ & $34.8 \pm 6.2$ & $\mathbf{36.1} \pm \mathbf{5.5}$ \\
antmaze-ultra-div & $15.5 \pm 8.5$ & $19.0 \pm 6.2$ & $20.2 \pm 6.5$ & $20.4 \pm 5.8$ & $27.6 \pm 5.5$ & $26.4 \pm 5.2$ & $28.6 \pm 4.8$ & $\mathbf{31.1} \pm \mathbf{4.1}$ \\
antmaze-ultra-play & $14.9 \pm 8.1$ & $21.0 \pm 6.5$ & $18.4 \pm 7.1$ & $23.2 \pm 6.2$ & $26.1 \pm 5.8$ & $23.1 \pm 6.0$ & $25.5 \pm 5.5$ & $\mathbf{29.9} \pm \mathbf{4.5}$ \\
\midrule
\textit{AntMaze Avg.} & 40.6 & 48.1 & 48.7 & 49.0 & 51.2 & 51.4 & 52.5 & \textbf{54.6} \\
\midrule
\multicolumn{9}{c}{\textit{Adroit}} \\
\midrule
pen-human & $57.1 \pm 25.4$ & $75.1 \pm 18.2$ & $83.7 \pm 12.5$ & $94.4 \pm 5.8$ & $83.6 \pm 15.1$ & $95.2 \pm 4.2$ & $94.6 \pm 4.5$ & $\mathbf{99.8} \pm \mathbf{0.3}$ \\
pen-cloned & $30.5 \pm 15.2$ & $46.5 \pm 12.8$ & $57.2 \pm 11.5$ & $61.9 \pm 10.2$ & $63.8 \pm 9.5$ & $66.1 \pm 8.8$ & $70.9 \pm 8.1$ & $\mathbf{82.2} \pm \mathbf{5.4}$ \\
door-human & $3.3 \pm 2.5$ & $3.5 \pm 2.8$ & $4.0 \pm 2.1$ & $4.5 \pm 2.5$ & $4.2 \pm 2.2$ & $\mathbf{4.8} \pm \mathbf{2.9}$ & $4.1 \pm 2.0$ & $\mathbf{4.8} \pm \mathbf{1.5}$ \\
door-cloned & $3.0 \pm 2.1$ & $3.3 \pm 2.4$ & $4.2 \pm 2.5$ & $3.8 \pm 2.2$ & $3.8 \pm 2.0$ & $4.3 \pm 2.6$ & $4.2 \pm 2.1$ & $\mathbf{4.5} \pm \mathbf{1.8}$ \\
hammer-human & $1.3 \pm 1.2$ & $1.9 \pm 1.5$ & $1.6 \pm 1.1$ & $2.8 \pm 2.1$ & $2.0 \pm 1.8$ & $2.6 \pm 2.0$ & $2.9 \pm 1.9$ & $\mathbf{3.1} \pm \mathbf{1.6}$ \\
hammer-cloned & $1.1 \pm 0.9$ & $1.7 \pm 1.2$ & $1.8 \pm 1.4$ & $1.8 \pm 1.2$ & $2.5 \pm 1.5$ & $2.6 \pm 1.8$ & $3.1 \pm 2.0$ & $\mathbf{3.2} \pm \mathbf{1.5}$ \\
relocate-human & $0.0 \pm 0.0$ & $0.1 \pm 0.1$ & $0.2 \pm 0.2$ & $0.1 \pm 0.1$ & $0.3 \pm 0.2$ & $\mathbf{0.6} \pm \mathbf{0.4}$ & $0.2 \pm 0.2$ & $0.5 \pm 0.3$ \\
relocate-cloned & $0.0 \pm 0.0$ & $0.0 \pm 0.0$ & $0.1 \pm 0.1$ & $0.1 \pm 0.1$ & $0.1 \pm 0.1$ & $\mathbf{0.2} \pm \mathbf{0.2}$ & $\mathbf{0.2} \pm \mathbf{0.2}$ & $0.0 \pm 0.0$ \\
\midrule
\textit{Adroit Avg.} & 12.0 & 16.5 & 19.1 & 21.2 & 20.0 & 22.0 & 22.5 & \textbf{24.8} \\
\midrule
\multicolumn{9}{c}{\textit{Kitchen}} \\
\midrule
kitchen-complete & $50.5 \pm 4.3$ & $62.6 \pm 9.5$ & $58.5 \pm 7.5$ & $56.4 \pm 6.3$ & $60.6 \pm 2.4$ & $59.6 \pm 2.4$ & $61.9 \pm 1.5$ & $\mathbf{64.8} \pm \mathbf{8.7}$ \\
kitchen-mixed & $52.6 \pm 6.3$ & $54.9 \pm 7.3$ & $57.1 \pm 1.2$ & $58.4 \pm 9.6$ & $55.9 \pm 8.4$ & $61.1 \pm 2.9$ & $60.7 \pm 2.6$ & $\mathbf{62.3} \pm \mathbf{2.6}$ \\
{kitchen-partial} & $42.6 \pm 3.7$ & $48.7 \pm 5.7$ & $50.3 \pm 4.8$ & $52.5 \pm 3.6$ & $56.8 \pm 6.4$ & $\mathbf{58.0} \pm \mathbf{2.2}$ & $47.9 \pm 3.6$ & $\mathbf{61.7} \pm \mathbf{4.3}$ \\
\midrule
{Kitchen Avg.} & 48.6 & 55.4 & 55.3 & 55.8 & 57.8 & 59.6 & 56.8 & \textbf{62.9} \\
\bottomrule
\end{tabular}%
}
\label{table:iql-d4rl}
\end{table*}

\begin{table*}[htbp]
\centering
\caption{
Performance comparison on benchmarks using TD3+BC. We report the mean scores $\pm$ standard deviation for individual tasks. \textbf{AdamO (Ours)} shows not only superior performance but also improved stability across domains.
}
\resizebox{\textwidth}{!}{%
\begin{tabular}{lcccccccc}
\toprule
Task Name & SGD & Adam & AdamW & C-AdamW & Resetting & TRAC & Kron & AdamO (Ours) \\
\midrule
\multicolumn{9}{c}{\textit{AntMaze}} \\
\midrule
AntMaze-umaze & $65.5 \pm 11.5$ & $72.2 \pm 13.1$ & $79.4 \pm 2.5$ & $88.5 \pm 6.8$ & $78.5 \pm 1.9$ & $85.1 \pm 4.5$ & $89.2 \pm 2.0$ & $\mathbf{92.5 \pm 1.2}$ \\
AntMaze-umaze-div & $42.3 \pm 9.5$ & $47.0 \pm 9.3$ & $51.7 \pm 4.8$ & $59.8 \pm 6.2$ & $58.5 \pm 5.5$ & $68.4 \pm 3.9$ & $75.5 \pm 5.7$ & $\mathbf{82.2 \pm 3.5}$ \\
AntMaze-med-play & $0.2 \pm 0.1$ & $0.3 \pm 0.3$ & $0.3 \pm 0.3$ & $0.4 \pm 1.4$ & $5.6 \pm 2.1$ & $8.5 \pm 1.5$ & $9.2 \pm 2.5$ & $\mathbf{28.5 \pm 2.1}$ \\
AntMaze-med-div & $0.2 \pm 0.1$ & $0.2 \pm 0.2$ & $0.3 \pm 0.3$ & $0.4 \pm 1.4$ & $8.5 \pm 0.5$ & $5.2 \pm 1.0$ & $6.8 \pm 1.2$ & $\mathbf{16.4 \pm 4.5}$ \\
AntMaze-large-play & $0.0 \pm 0.0$ & $0.0 \pm 0.0$ & $0.0 \pm 0.0$ & $0.0 \pm 0.0$ & $1.5 \pm 0.3$ & ${2.4} \pm 0.5$ & $7.5 \pm 1.2$ & $\mathbf{13.7 \pm 8.2}$ \\
AntMaze-large-div & $0.0 \pm 0.0$ & $0.0 \pm 0.0$ & $0.0 \pm 0.0$ & $0.0 \pm 0.0$ & $9.2 \pm 6.8$ & $8.5 \pm 0.2$ & ${4.1} \pm 1.8$ & $\mathbf{16.5 \pm 5.4}$ \\
\midrule
\textit{AntMaze Avg.} & $18.0$ & $19.9$ & $22.0$ & $24.9$ & $27.0$ & $29.7$ & $32.0$ & $\mathbf{41.6}$ \\
\midrule
\multicolumn{9}{c}{\textit{Mujoco Locomotion (10k samples)}} \\
\midrule
HalfCheetah-m & $24.1 \pm 0.3$ & $35.9 \pm 8.4$ & $28.5 \pm 0.3$ & $31.4 \pm 0.2$ & $35.1 \pm 0.6$ & $42.5 \pm 1.7$ & $49.2 \pm 0.9$ & $\mathbf{53.5 \pm 3.5}$ \\
HalfCheetah-mr & $26.5 \pm 1.1$ & $39.1 \pm 8.3$ & $31.5 \pm 0.3$ & $34.2 \pm 0.3$ & $34.8 \pm 0.3$ & $38.5 \pm 1.5$ & $42.1 \pm 1.4$ & $\mathbf{46.2 \pm 2.4}$ \\
HalfCheetah-me & $21.2 \pm 2.7$ & $33.5 \pm 13.6$ & $25.8 \pm 0.6$ & $29.5 \pm 0.6$ & $35.6 \pm 3.1$ & $45.2 \pm 10.5$ & $52.8 \pm 3.8$ & $\mathbf{60.1 \pm 5.9}$ \\
Hopper-m & $28.5 \pm 5.1$ & $40.7 \pm 13.2$ & $34.5 \pm 4.5$ & $41.2 \pm 2.9$ & $45.6 \pm 0.2$ & $58.2 \pm 11.2$ & $68.9 \pm 1.4$ & $\mathbf{75.5 \pm 1.8}$ \\
Hopper-mr & $10.2 \pm 2.5$ & $21.3 \pm 4.7$ & $12.8 \pm 6.1$ & $15.6 \pm 8.2$ & $25.8 \pm 4.9$ & $35.6 \pm 2.9$ & $48.2 \pm 1.0$ & $\mathbf{55.8 \pm 5.1}$ \\
Hopper-me & $19.8 \pm 7.5$ & $32.6 \pm 13.9$ & $24.9 \pm 12.5$ & $28.5 \pm 8.9$ & $42.5 \pm 2.5$ & $55.8 \pm 13.1$ & $70.5 \pm 11.5$ & $\mathbf{84.2 \pm 2.5}$ \\
Walker2d-m & $10.1 \pm 3.5$ & $21.2 \pm 10.1$ & $12.5 \pm 2.8$ & $15.2 \pm 4.1$ & $30.5 \pm 2.5$ & $45.2 \pm 0.9$ & $58.9 \pm 1.9$ & $\mathbf{68.4 \pm 1.6}$ \\
Walker2d-mr & $8.5 \pm 4.2$ & $19.3 \pm 6.6$ & $10.2 \pm 9.2$ & $11.8 \pm 3.9$ & $22.5 \pm 2.0$ & $32.5 \pm 1.4$ & $45.6 \pm 3.1$ & $\mathbf{52.5 \pm 1.1}$ \\
Walker2d-me & $11.2 \pm 2.1$ & $22.4 \pm 11.2$ & $14.5 \pm 0.5$ & $18.2 \pm 1.1$ & $40.5 \pm 0.6$ & $65.2 \pm 0.5$ & $85.6 \pm 0.9$ & $\mathbf{98.5 \pm 2.7}$ \\
\midrule
\textit{Locomotion Avg.} & $17.8$ & $29.6$ & $21.7$ & $25.1$ & $34.8$ & $46.5$ & $58.0$ & $\mathbf{66.1}$ \\
\midrule
\multicolumn{9}{c}{\textit{Adroit}} \\
\midrule
Pen-human & $-4.5 \pm 0.3$ & $-4.1 \pm 0.2$ & $-3.8 \pm 3.4$ & $-2.5 \pm 2.2$ & $22.5 \pm 9.2$ & $45.2 \pm 7.1$ & $48.5 \pm 4.1$ & $\mathbf{83.1 \pm 6.5}$ \\
Pen-cloned & $5.1 \pm 4.6$ & $5.6 \pm 5.0$ & $6.2 \pm 5.6$ & $7.9 \pm 7.1$ & $24.5 \pm 6.5$ & $42.5 \pm 1.8$ & $52.8 \pm 2.2$ & $\mathbf{82.4 \pm 5.2}$ \\
Pen-expert & $102.5 \pm 7.3$ & $109.7 \pm 7.3$ & $115.2 \pm 21.3$ & $\mathbf{125.8} \pm 1.9$ & $111.2 \pm 2.3$ & $113.0 \pm 9.2$ & $101.5 \pm 6.3$ & ${114.8 \pm 2.1}$ \\
\midrule
\textit{Adroit Avg.} & $34.4$ & $37.1$ & $39.2$ & $43.7$ & $52.7$ & $66.9$ & $67.6$ & $\mathbf{93.4}$ \\
\bottomrule
\end{tabular}%
}
\label{table:td3bc-d4rl}
\vspace{-12pt}
\end{table*}

\subsubsection{Sensitivity and Robustness (Q3)}
\label{subsubsec:robustness}
We investigate the sensitivity of AdamO to its core hyperparameters, including the orthogonality coefficient $\kappa$ and the conflict budget $\tau$. Specifically, we analyze how these parameters balance the trade-off between enforcing geometric stability and preserving the optimization trajectory of the base algorithm.

\paragraph{Ablation Study on Conflict Budget $\tau$.}Figure \ref{fig:tau} illustrates the learning dynamics on the Pen-Cloned and Pen-Human tasks under different budget constraints. We observe that a small positive budget $\tau=0.05$ enables the optimizer to correct the spectral geometry even when the current noisy gradient opposes the orthogonality direction. The strict conflict-free mode $\tau=0$ prevents degradation of the immediate task loss but may be too conservative to arrest collapse in highly unstable early training phases. Conversely, an unconstrained correction with $\tau=100$ allows the orthogonality drift to dominate the update and destroys the task learning signal. This confirms that a modest controlled budget provides the best balance for challenging domains.

\paragraph{Sensitivity to Orthogonality Coefficient $\kappa$.}
Figure \ref{fig:sensitivity} presents a sweep of $\kappa$ values across multiple domains for both TD3+BC and IQL backbones. The results exhibit a distinct pattern consistent with our theoretical analysis in Appendix \ref{app:adamo:kappa}.
When $\kappa$ is too small (e.g. $< 10^{-5}$ in unstable tasks) the optimizer behaves like standard Adam and fails to prevent value collapse in challenging environments such as Adroit-human or sparse-data Locomotion.
As $\kappa$ increases, performance improves until it reaches a critical threshold. Beyond this point, performance degrades because large corrections violate the local approximation bounds required for Adam's moment statistics to remain valid under finite step sizes.

\begin{figure}[htbp]
  \centering
  \begin{subfigure}[h]{0.91\linewidth}
    \centering
    \includegraphics[width=\linewidth]{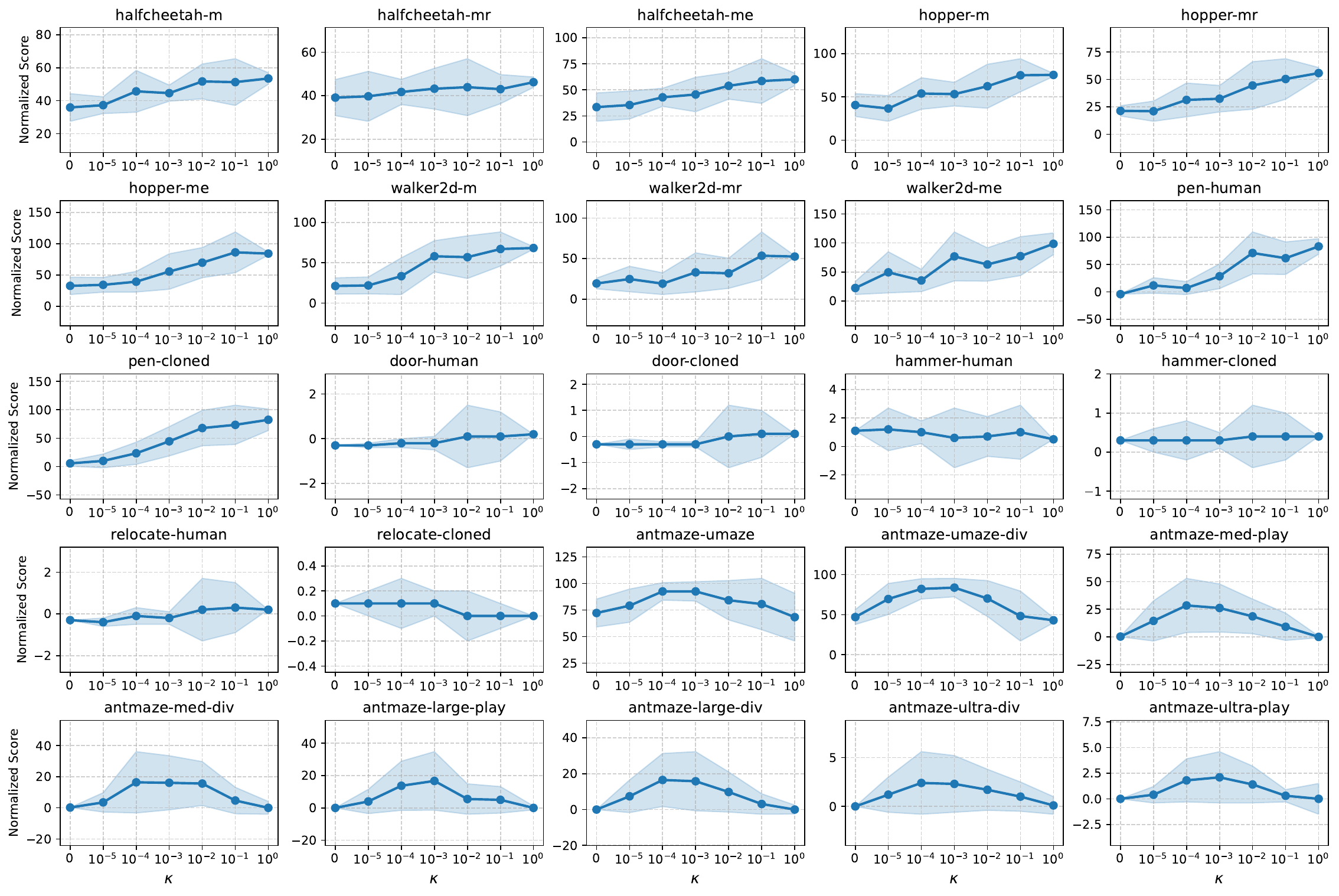}
    \caption{{Sensitivity Analysis of TD3+BC across different $\kappa$ values}}
  \end{subfigure}
  \begin{subfigure}[h]{0.91\linewidth}
    \centering
    \includegraphics[width=\linewidth]{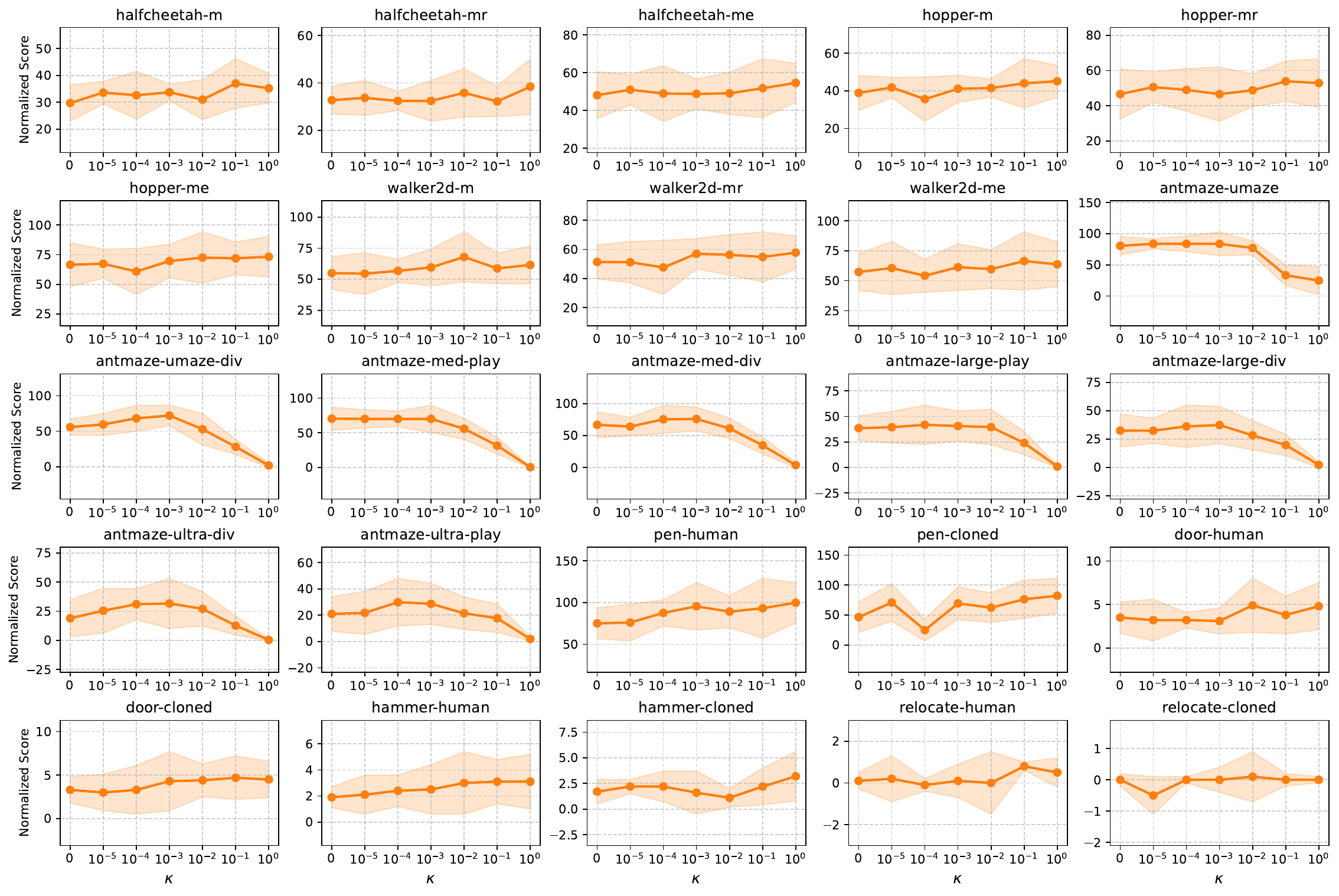}
    \caption{{Sensitivity Analysis of IQL across different $\kappa$ values}}
  \end{subfigure}
  \caption{Sensitivity Analysis of various algorithms across different $\kappa$ values.}
  \label{fig:sensitivity}
\end{figure}

\begin{figure}[htbp]
  \centering
  \begin{subfigure}[h]{0.48\linewidth}
    \centering
    \includegraphics[width=\linewidth]{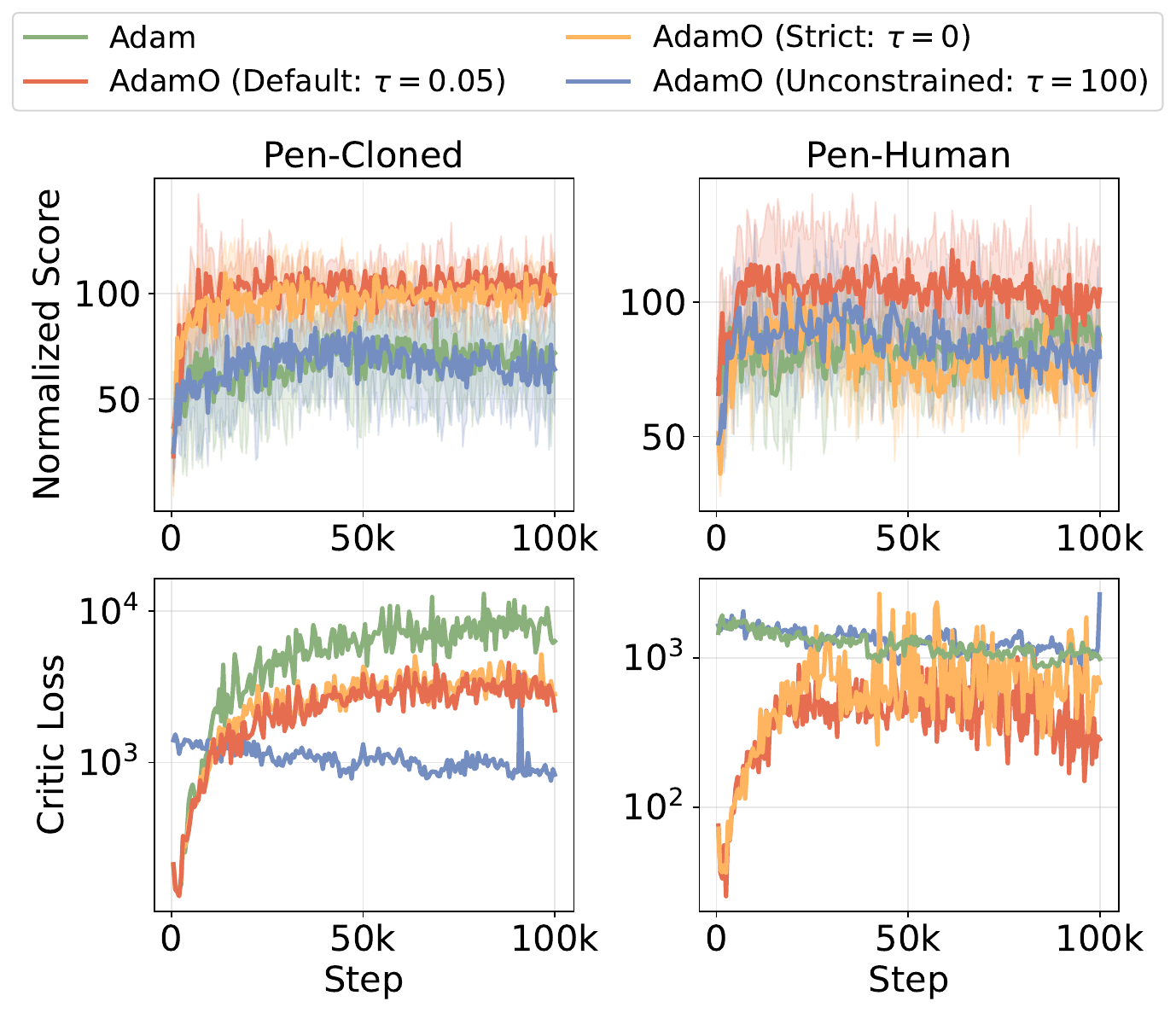}
  \end{subfigure}
  \caption{Ablation study on the conflict budget $\tau$. A moderate budget successfully prevents collapse while maintaining high performance.}
  \label{fig:tau}
\end{figure}

\paragraph{Task-Dependent Configuration.}
Our hyperparameter selection follows a principled logic based on dataset stability. For domains that are empirically prone to collapse such as Locomotion-10k and Adroit-human we utilize a stronger correction $\kappa=1$ and $\tau=0.05$. This aggressive setting is necessary to actively suppress the rapid error amplification characteristic of these tasks. In contrast we adopt conservative settings $\kappa=10^{-4}$ and $\tau=0$ for inherently stable tasks like Kitchen and AntMaze. In these regimes the priority is to preserve the adaptive statistics of Adam since the spectral radius is naturally well-behaved. This adaptation ensures AdamO is effective across the full spectrum of offline RL challenges.

\newpage
\subsubsection{Cost-Benefit Analysis (Q4)}
\label{subsubsec:efficiency}
We assess the computational overhead of AdamO in terms of both memory footprint and wall-clock runtime. Regarding spatial efficiency, AdamO maintains a GPU memory usage of $4.4\text{GB}$, which is identical to vanilla Adam and standard normalization techniques such as LN, BN, WN, and SN. Among all tested methods, only the explicit feature regularization (+DR3) incurs a higher memory cost of $4.8\text{GB}$. Temporally, as shown in Table~\ref{tab:resource_comparison}, AdamO requires $172$ minutes for $1$ million steps on the TD3+BC backbone. While this is a slight increase compared to the $144$ minutes required by vanilla Adam, AdamO remains more efficient than other stability-oriented optimizers, including Resetting ($173$ min), Kron ($184$ min), C-AdamW ($192$ min), and TRAC ($209$ min). These results indicate that AdamO provides a favorable balance between improved critic stability and manageable computational costs, making it highly practical for large-scale offline RL training.


\begin{table}[htbp]
\centering
\caption{Comparison of GPU memory usage and training runtime (TD3+BC) for 1 million steps.}
\label{tab:resource_comparison}
\resizebox{0.5\textwidth}{!}{%
\begin{tabular}{lcc}
\toprule
\textbf{Method / Optimizer} & \textbf{GPU Memory (GB)} & \textbf{Runtime (min)} \\
\midrule
Base & 4.4 & 144 \\
+LN & 4.4 & 145 \\
+BN & 4.4 & 145 \\
+WN & 4.4 & 142 \\
+SN & 4.4 & 147 \\
+DR3 & 4.8 & 196 \\
\midrule
SGD & 4.4 & 96 \\
Adam & 4.4 & 144 \\
AdamW & 4.4 & 157 \\
C-AdamW & 4.4 & 192 \\
Resetting & 4.4 & 173 \\
TRAC & 4.4 & 209 \\
Kron & 4.4 & 184 \\
\textbf{AdamO (Ours)} & \textbf{4.4} & \textbf{172} \\
\bottomrule
\end{tabular}
}
\end{table}

\subsubsection{Mechanistic Verification (Q5)}
\label{subsubsec:exp_stability}

We investigate whether AdamO effectively maintains the Hurwitz condition of the TD operator $S$ in practice, thereby preventing value-function collapse.

\paragraph{Global Stability across Tasks.} 
We first evaluate the training stability of the critic network across 20 D4RL benchmark tasks using the IQL algorithm. As illustrated in Figure~\ref{fig:exp-critic_loss}, the baseline optimizer, Adam (blue curves), exhibits significant instability in numerous environments. Specifically, in challenging domains such as \textit{AntMaze} and \textit{Adroit} (e.g., \texttt{pen-cloned}, \texttt{door-human}), Adam frequently suffers from loss spikes and divergence, indicating a failure in value function approximation. In sharp contrast, AdamO (red curves) demonstrates superior stability, maintaining consistently lower critic losses throughout the training process. This empirical evidence suggests that AdamO effectively mitigates the optimization instability inherent in offline RL settings.

\begin{figure}[htbp]
  \centering
\includegraphics[width=0.95\linewidth]{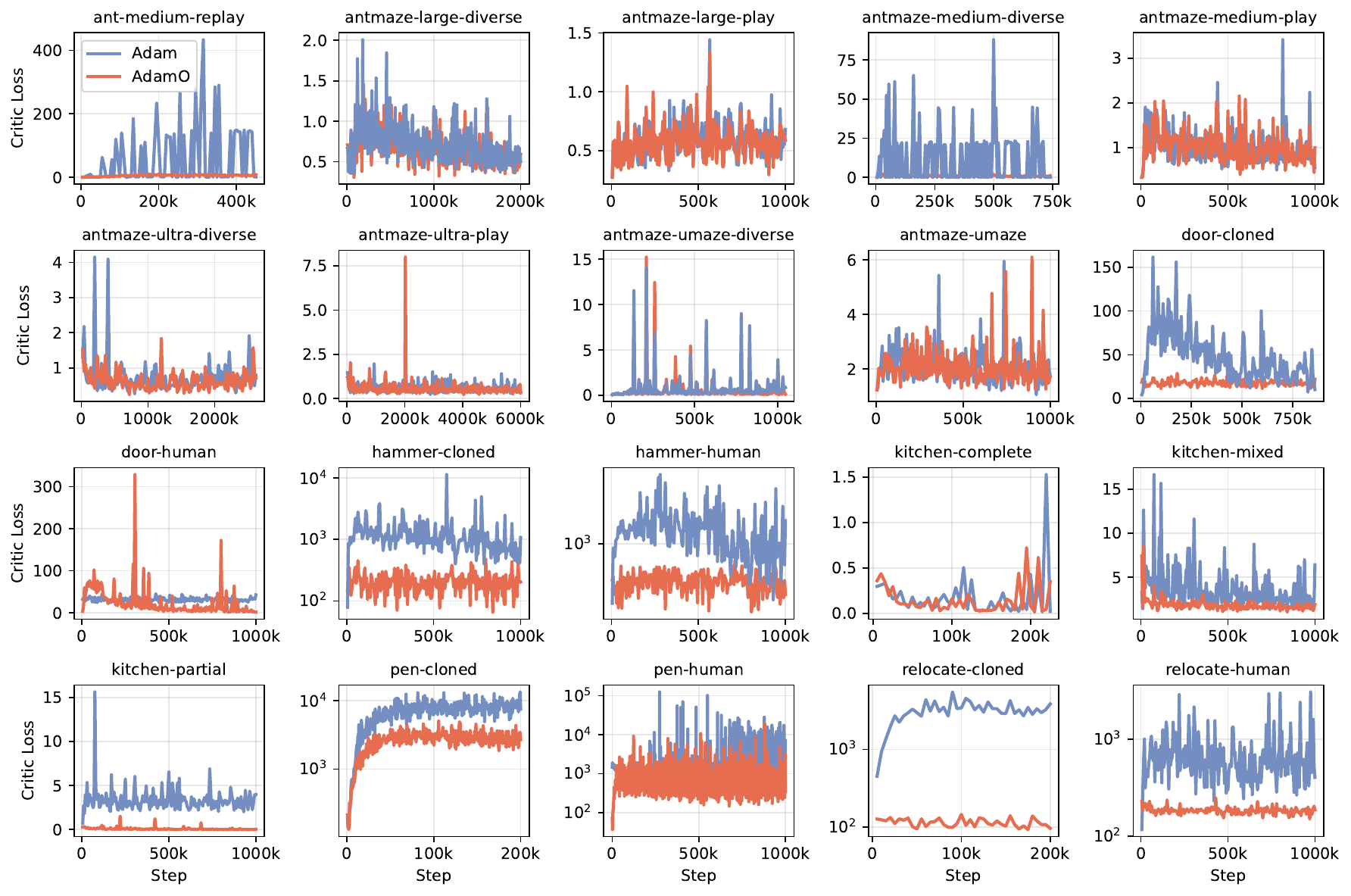}
  \caption{Critic loss across different tasks.}
  \label{fig:exp-critic_loss}
\end{figure}

\begin{figure}[htbp]
  \centering
\includegraphics[width=0.75\linewidth]{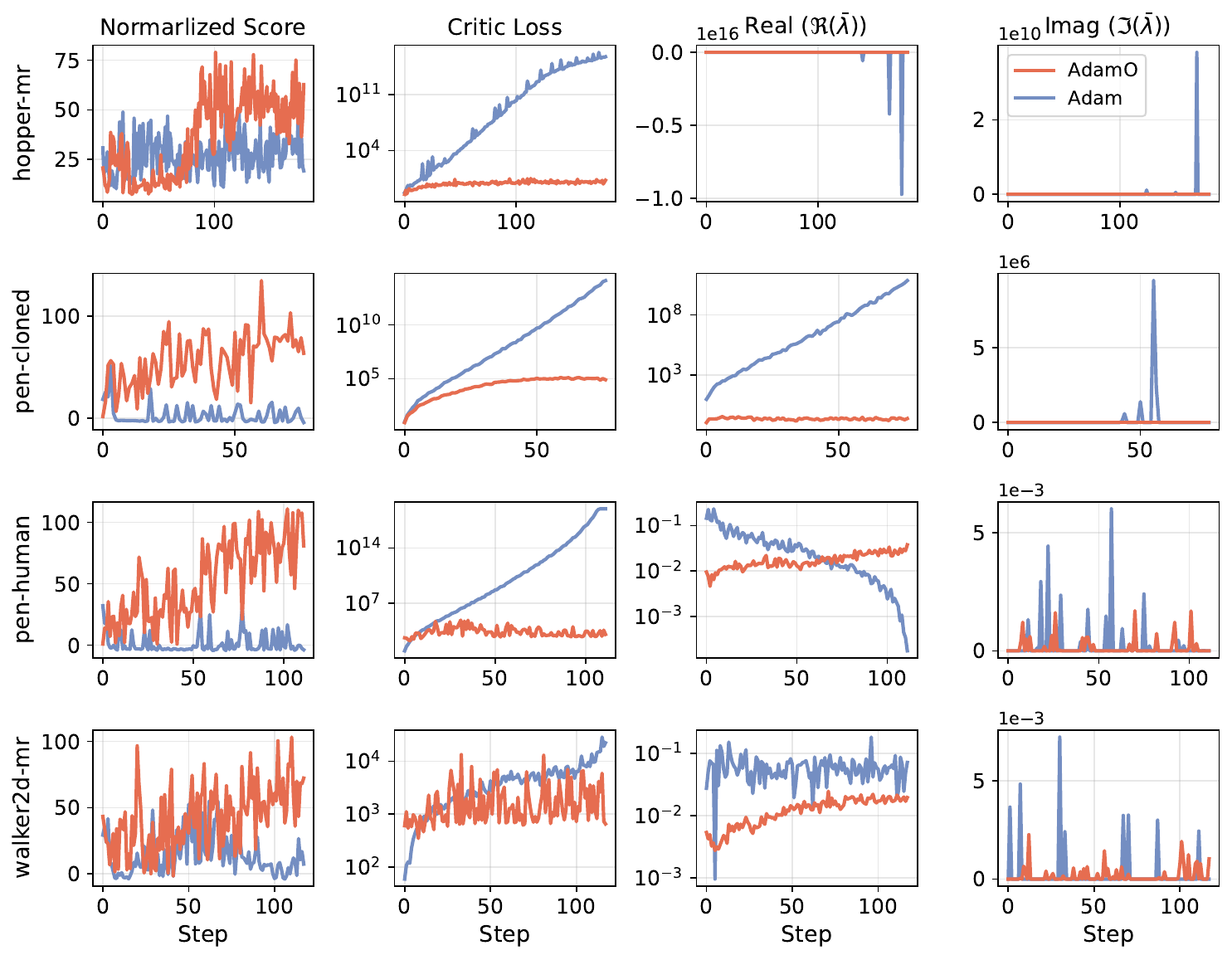}
  \caption{Training dynamics across four representative tasks.}
  \label{fig:exp-S}
\end{figure}

\paragraph{Spectral Analysis and Hurwitz Condition.}
To uncover the underlying mechanism of this stability, we conduct a fine-grained spectral analysis on the TD3+BC algorithm. Figure~\ref{fig:exp-S} visualizes the training dynamics across four representative tasks, tracking four metrics: normalized score, critic loss, and the maximum real $\text{Re}(\bar\lambda)$ and imaginary $\text{Im}(\bar\lambda)$ parts of the eigenvalues of the operator $S$.
The results reveal a decisive correlation between spectral properties and performance.

    We observe that value collapse in Adam (manifested by exploding critic loss and plummeting scores) coincides with the real part of the eigenvalues becoming significantly positive, e.g., reaching $10^{16}$ magnitude in \textit{pen-human}. This clearly violates the Hurwitz condition, driving the linear system towards divergence.
    Conversely, AdamO successfully constrains the real part of the eigenvalues near zero ($\text{Re}(\lambda) \approx 0$) throughout training. By actively suppressing the positive growth of the real spectrum, AdamO maintains the TD operator within the Hurwitz stability region. Furthermore, AdamO also exhibits smaller imaginary components, indicating reduced oscillatory behavior in the error dynamics.
These findings explicitly answer Q5: 

\textbf{\textit{AdamO prevents collapse not by chance, but by effectively enforcing the Hurwitz condition of the TD operator in practice.}}

\clearpage

\section{Auxiliary results and proofs for Section~\ref{sec:adam-td-linearized}}
\label{app:adam-td-linearized}

Our proofs are partly inspired by prior frameworks in control theory and offline RL \cite{nise2019control,yue2023understanding}.

\subsection{Assumptions}
\label{subsec:adam-td-assumptions}

We formalize the local regime used in the main text.
The first assumption justifies first-order Taylor expansions of network outputs over a single update.
The second ensures the greedy actions on dataset next-states do not change under sufficiently small parameter perturbations, so the target set remains stable at first order.
The third isolates a terminal regime in which the greedy targets, Jacobians, and Adam preconditioner can be treated as approximately time-invariant, enabling a clean spectral analysis.

\begin{assumption}
\label{as:first-order-linearization}
The stepsize $\eta$ is sufficiently small so that the first-order Taylor approximation of $Q_\theta(X)$ and $Q_\theta(X'_t)$ around $\theta_t$ is accurate up to $o(\eta)$ terms.
\end{assumption}

\begin{assumption}
\label{as:greedy-stability}
Assume the action set $\mathcal A$ is finite. For each $i\in\{1,\dots,M\}$, let
$
a_{i,t}^\star \in \arg\max_{a\in\mathcal A} Q_{\theta_t}(s_{i+1},a)
$
denote the greedy action at the dataset next-state $s_{i+1}$.
Assume there exists a (dataset) margin $\Delta_{\min}>0$ such that the greedy maximizer is unique and
\[
Q_{\theta_t}(s_{i+1},a_{i,t}^\star)
\ge
\max_{a\in\mathcal A\setminus\{a_{i,t}^\star\}} Q_{\theta_t}(s_{i+1},a)
+\Delta_{\min},
\qquad \forall i.
\]
Moreover, assume the per-action values are locally Lipschitz in parameters on these points:
there exists $L_Q>0$ such that for all $i$ and all $a\in\mathcal A$,
$
|Q_{\theta_{t+1}}(s_{i+1},a)-Q_{\theta_t}(s_{i+1},a)|
\le
L_Q\|\theta_{t+1}-\theta_t\|_2.
$
If $\|\theta_{t+1}-\theta_t\|_2 \le \Delta_{\min}/(2L_Q)$, then the greedy actions do not change on the dataset next-states,
i.e., $\hat\pi_{\theta_{t+1}}(s_{i+1})=\hat\pi_{\theta_t}(s_{i+1})$ for all $i$, and hence $X'_{t+1}=X'_t$.
\end{assumption}

\begin{assumption}
\label{as:terminal-freezing}
There exists $t_0$ such that for all $t\ge t_0$ the quantities $X'_t$, $Z_t(\cdot)$, and $D_t$ may be treated as constant up to higher-order effects.
More precisely, one may replace $X'_t$ by a fixed set ${}X'$, replace $Z_t(\cdot)$ by a fixed Jacobian map $Z(\cdot)$, and replace $D_t$ by a fixed diagonal matrix $D$.
\end{assumption}

Assumption~\ref{as:first-order-linearization} is the standard local smoothness condition underlying first-order expansions in optimization and stochastic approximation \cite{anonymous2025less,yue2023understanding}.
Assumption~\ref{as:greedy-stability} replaces the informal set-difference $\|X'_{t+1}-X'_t\|=o(\eta)$ with a concrete \emph{action-gap} (margin) condition ensuring that small value perturbations cannot change the greedy policy.
Such margin arguments are standard in RL and action-gap analyses \cite{farahmand2011actiongap,bellemare2016actiongap,bertsekas1996neurodynamic}.
Assumption~\ref{as:terminal-freezing} isolates a regime in which the bootstrapped target is effectively fixed (as in target-network style arguments) and the local linearized dynamics can be analyzed as an approximately time-invariant system \cite{borkar2000ode,Tsitsiklis1997Analysis}.
The freezing of Adam's preconditioner corresponds to the stabilized-moments regime commonly invoked in Adam-type convergence analyses \cite{kingma2015adam,reddi2018adam,chen2019adamtype}.

\subsection{Lemmas}
\label{subsec:adam-td-lemmas}

We now list the intermediate steps leading to Theorem~\ref{thm:adamtd_exact_schur}.
The sequence of lemmas mirrors the logic of the derivation:
first rewrite the gradient through the TD error, then propagate the TD error under a small update,
then express Adam's momentum as an exponential moving average in the frozen regime,
then package Jacobians and preconditioning into the operator $\mathsf S$,
and finally record a symmetric-part identity useful for spectral reasoning.

The next lemma rewrites the semi-gradient of the squared TD loss as a Jacobian--TD-error product.

\begin{lemma}
\label{lem:gradient-factorization}
For the squared TD loss $L_t(\theta)$ in Eq.~\eqref{eq:td-loss-def}, the gradient at $\theta_t$ satisfies
\begin{equation}
\label{eq:gradient-factorization}
g_t = Z_t(X)\,\mathbf e_t.
\end{equation}
\end{lemma}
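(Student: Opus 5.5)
The plan is to differentiate $L_t$ directly with the chain rule, using the stop-gradient convention on the bootstrapped target. Write the target as $y_t \triangleq r+\gamma Q_{\theta_t}(X'_t)$. This is a fixed vector in $\mathbb{R}^M$ that does not depend on the argument $\theta$ of $L_t$: its parameter enters only through $\theta_t$, and $X'_t$ is fixed at step $t$. Hence $L_t(\theta)=\tfrac12\|Q_\theta(X)-y_t\|_2^2$ is a composition of the smooth map $\theta\mapsto Q_\theta(X)\in\mathbb{R}^M$ with the quadratic $q\mapsto \tfrac12\|q-y_t\|_2^2$.

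Next I apply the chain rule. The gradient of the quadratic at $q$ is $q-y_t$. The Jacobian of $\theta\mapsto Q_\theta(X)$, in the paper's transposed convention, is $Z_t(X)=\nabla_\theta Q_{\theta_t}(X)\in\mathbb{R}^{P\times M}$, whose $i$-th column is $\nabla_\theta Q_{\theta_t}(x_i)$. This convention is consistent with $\Delta Q\approx Z_t(X)^\top\Delta\theta$. Evaluating at $\theta=\theta_t$ gives
\[
\nabla_\theta L_t(\theta_t)=Z_t(X)\bigl(Q_{\theta_t}(X)-y_t\bigr)=\sum_{i=1}^M \nabla_\theta Q_{\theta_t}(x_i)\,\bigl(Q_{\theta_t}(x_i)-r_i-\gamma Q_{\theta_t}(x'_{i,t})\bigr).
\]

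Finally I identify $Q_{\theta_t}(X)-y_t$ with $\mathbf e_t$ from Eq.~\eqref{eq:td-error-def}, which gives $g_t=Z_t(X)\mathbf e_t$.

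The only point needing care is the semi-gradient convention. No term $-\gamma Z_t(X'_t)\mathbf e_t$ appears, because the target is evaluated at $\theta_t$ rather than at $\theta$. The greedy $\arg\max$ defining $X'_t$ is also treated as fixed. So no differentiability of the greedy policy is needed, and Assumption~\ref{as:greedy-stability} is not required for this lemma.
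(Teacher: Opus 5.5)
Your proof is correct and matches the paper's: both fix the stop-gradient target $y_t=r+\gamma Q_{\theta_t}(X'_t)$, apply the chain rule to $\tfrac12\|Q_\theta(X)-y_t\|_2^2$, and evaluate at $\theta_t$, so the residual becomes $\mathbf e_t$. Your remark is also accurate: because the target is frozen, no $-\gamma Z_t(X'_t)\mathbf e_t$ term appears and greedy stability is not needed for this lemma.
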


\begin{proof}
By definition,
\[
L_t(\theta)=\frac12\|Q_\theta(X)-y_t\|_2^2,
\qquad
y_t=r+\gamma Q_{\theta_t}(X'_t).
\]
Differentiating yields
\[
\nabla_\theta L_t(\theta)
=
Z_t(X)\bigl(Q_{\theta}(X)-y_t\bigr),
\]
and evaluating at $\theta=\theta_t$ gives Eq.~\eqref{eq:gradient-factorization}.
\end{proof}

The next lemma tracks how the TD error changes at first order under a generic small parameter increment.

\begin{lemma}
\label{lem:one-step-td-update}
Let $\theta_{t+1}=\theta_t-\eta \Delta_t$ for some $\Delta_t\in\mathbb{R}^P$.
Consider a local regime in which the first-order linearization in Assumption~\ref{as:first-order-linearization} is accurate,
the greedy targets remain unchanged on the dataset next-states as in Assumption~\ref{as:greedy-stability},
and the update direction is bounded (i.e., $\|\Delta_t\|_2=O(1)$ independently of $\eta$).
Then
\begin{equation}
\label{eq:one-step-td-update}
\mathbf e_{t+1}
=
\mathbf e_t
+
\eta\Bigl(
\gamma Z_t(X'_t)^\top \Delta_t
-
Z_t(X)^\top \Delta_t
\Bigr)
+
o(\eta).
\end{equation}
\end{lemma}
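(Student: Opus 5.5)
The plan is to write $\mathbf e_{t+1}-\mathbf e_t$ as the difference of two first-order Taylor expansions, one for the online values on $X$ and one for the bootstrapped target values on $X'_{t+1}$. By definition Eq.~\eqref{eq:td-error-def},
\[
\mathbf e_{t+1}-\mathbf e_t
=
\bigl(Q_{\theta_{t+1}}(X)-Q_{\theta_t}(X)\bigr)
-\gamma\bigl(Q_{\theta_{t+1}}(X'_{t+1})-Q_{\theta_t}(X'_t)\bigr),
\]
since the reward vector $r$ cancels. The only subtlety is that the target set is indexed by $t+1$ in the first term and by $t$ in the second, so the first task is to replace $X'_{t+1}$ by $X'_t$.

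For that replacement I will invoke Assumption~\ref{as:greedy-stability}. Because $\|\theta_{t+1}-\theta_t\|_2=\eta\|\Delta_t\|_2=O(\eta)$ by the boundedness hypothesis, for all sufficiently small $\eta$ this is at most $\Delta_{\min}/(2L_Q)$. The margin argument then shows the greedy actions on every dataset next-state are unchanged, so $X'_{t+1}=X'_t$ exactly. In the SARSA case this is trivial, because $a'_{i,t}=\tilde a_i$ does not depend on $t$. Hence the bracketed difference becomes $Q_{\theta_{t+1}}(X'_t)-Q_{\theta_t}(X'_t)$, a change of a fixed finite input set under a parameter shift.

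Next I apply Assumption~\ref{as:first-order-linearization} to both fixed input sets $X$ and $X'_t$. Using $\Delta Q_\theta(\cdot)=Z_t(\cdot)^\top\Delta\theta+o(\|\Delta\theta\|)$ with $\Delta\theta=-\eta\Delta_t$, and noting $o(\eta\|\Delta_t\|)=o(\eta)$ because $\|\Delta_t\|=O(1)$, I get
\[
Q_{\theta_{t+1}}(X)-Q_{\theta_t}(X)=-\eta Z_t(X)^\top\Delta_t+o(\eta),
\]
\[
Q_{\theta_{t+1}}(X'_t)-Q_{\theta_t}(X'_t)=-\eta Z_t(X'_t)^\top\Delta_t+o(\eta).
\]
Substituting these and collecting signs yields $\eta(\gamma Z_t(X'_t)^\top\Delta_t-Z_t(X)^\top\Delta_t)+o(\eta)$, which is Eq.~\eqref{eq:one-step-td-update}.

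The main obstacle is the target-set step rather than the Taylor algebra. One must verify that the greedy margin applies uniformly over the finite set $\{s_{i+1}\}$ once $\eta$ is small relative to $\Delta_{\min}/(2L_Q\sup\|\Delta_t\|)$; this is exactly where the $O(1)$ bound on $\Delta_t$ is needed. It is also worth remarking that the semi-gradient convention (no backpropagation through the target) plays no role here, because $\mathbf e_t$ is defined with the current parameters in both terms.
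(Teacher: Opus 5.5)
Your proposal is correct and follows the paper's proof essentially step for step. Like the paper, you cancel $r$, use greedy stability to identify $X'_{t+1}$ with $X'_t$, and apply the first-order expansions on $X$ and $X'_t$ with $o(\eta\|\Delta_t\|_2)=o(\eta)$. Where the paper only says the target sets agree ``at first order'', you check $\eta\|\Delta_t\|_2\le\Delta_{\min}/(2L_Q)$ for small $\eta$ and conclude $X'_{t+1}=X'_t$ exactly, which is a slightly more careful version of the same step.
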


\begin{proof}
Write the TD error at time $t+1$ as
\[
\mathbf e_{t+1}
=
Q_{\theta_{t+1}}(X)-\Bigl(r+\gamma Q_{\theta_{t+1}}(X'_{t+1})\Bigr).
\]
In the local linearization regime, the network outputs admit the first-order expansions
\[
Q_{\theta_{t+1}}(X)
=
Q_{\theta_t}(X)
+
Z_t(X)^\top(\theta_{t+1}-\theta_t)
+
o\!\bigl(\|\theta_{t+1}-\theta_t\|_2\bigr),
\]
and, analogously,
\[
Q_{\theta_{t+1}}(X'_t)
=
Q_{\theta_t}(X'_t)
+
Z_t(X'_t)^\top(\theta_{t+1}-\theta_t)
+
o\!\bigl(\|\theta_{t+1}-\theta_t\|_2\bigr).
\]
Substituting $\theta_{t+1}-\theta_t=-\eta\Delta_t$ gives
\[
Q_{\theta_{t+1}}(X)-Q_{\theta_t}(X)
=
-\eta Z_t(X)^\top\Delta_t
+
o\!\bigl(\|\theta_{t+1}-\theta_t\|_2\bigr),
\]
\[
Q_{\theta_{t+1}}(X'_t)-Q_{\theta_t}(X'_t)
=
-\eta Z_t(X'_t)^\top\Delta_t
+
o\!\bigl(\|\theta_{t+1}-\theta_t\|_2\bigr).
\]
Since $\|\Delta_t\|_2=O(1)$, the increment satisfies $\|\theta_{t+1}-\theta_t\|_2=\eta\|\Delta_t\|_2=O(\eta)$, and therefore
$o(\|\theta_{t+1}-\theta_t\|_2)=o(\eta)$.
Moreover, in the same local regime the greedy actions do not change on the dataset next-states, so $X'_{t+1}=X'_t$ at first order.
Using this and the definition $\mathbf e_t=Q_{\theta_t}(X)-(r+\gamma Q_{\theta_t}(X'_t))$, we obtain
\begin{align*}
\mathbf e_{t+1}-\mathbf e_t
&=
\bigl(Q_{\theta_{t+1}}(X)-Q_{\theta_t}(X)\bigr)
-
\gamma\bigl(Q_{\theta_{t+1}}(X'_t)-Q_{\theta_t}(X'_t)\bigr)
+o(\eta)\\
&=
\Bigl(-\eta Z_t(X)^\top\Delta_t\Bigr)
-\gamma\Bigl(-\eta Z_t(X'_t)^\top\Delta_t\Bigr)
+o(\eta)\\
&=
\eta\Bigl(\gamma Z_t(X'_t)^\top\Delta_t - Z_t(X)^\top\Delta_t\Bigr)
+o(\eta),
\end{align*}
which is exactly Eq.~\eqref{eq:one-step-td-update}.
\end{proof}

The next lemma shows that, once the Jacobian map is frozen, Adam's first-moment recursion becomes a Jacobian applied to an EMA of TD errors.

\begin{lemma}
\label{lem:momentum-ema}
Under Assumption~\ref{as:terminal-freezing}, there exists $\bar{\mathbf e}_t\in\mathbb{R}^M$ such that
\begin{equation}
\label{eq:ema-recursion}
\bar{\mathbf e}_t = \beta_1 \bar{\mathbf e}_{t-1} + (1-\beta_1)\mathbf e_t,
\qquad
\bar{\mathbf e}_t = (1-\beta_1)\sum_{k\ge 0}\beta_1^k \mathbf e_{t-k},
\end{equation}
and the first moment satisfies
\begin{equation}
\label{eq:m-as-z-emae}
m_t = Z(X)\,\bar{\mathbf e}_t.
\end{equation}
\end{lemma}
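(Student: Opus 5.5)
The plan is to define $\bar{\mathbf e}_t$ directly by the EMA recursion in Eq.~\eqref{eq:ema-recursion}, using the same initialization convention as Adam's first moment. Set $\bar{\mathbf e}_{t_0-1}$ so that $m_{t_0-1}=Z(X)\bar{\mathbf e}_{t_0-1}$, or equivalently start from $m_0=0$ with $\bar{\mathbf e}_0=0$ and all $\mathbf e_k=0$ for $k\le 0$. Then let
\[
\bar{\mathbf e}_t=\beta_1\bar{\mathbf e}_{t-1}+(1-\beta_1)\mathbf e_t .
\]
Unrolling this linear recursion by straightforward induction gives the series form $\bar{\mathbf e}_t=(1-\beta_1)\sum_{k\ge0}\beta_1^k\mathbf e_{t-k}$. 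With the convention $\mathbf e_k=0$ for $k\le0$ the sum is finite. Under a bounded-history reading it converges absolutely, since $|\beta_1|<1$.

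For Eq.~\eqref{eq:m-as-z-emae} I will argue by induction on $t$. The first step is Lemma~\ref{lem:gradient-factorization}, which gives $g_t=Z_t(X)\mathbf e_t$. Assumption~\ref{as:terminal-freezing} then lets us replace $Z_t(X)$ by the frozen $Z(X)$, so $g_t=Z(X)\mathbf e_t$. Next, assume $m_{t-1}=Z(X)\bar{\mathbf e}_{t-1}$. Adam's recursion~\eqref{eq:adam-m} gives
\[
m_t=\beta_1 Z(X)\bar{\mathbf e}_{t-1}+(1-\beta_1)Z(X)\mathbf e_t .
\]
By linearity this equals $Z(X)\bigl(\beta_1\bar{\mathbf e}_{t-1}+(1-\beta_1)\mathbf e_t\bigr)=Z(X)\bar{\mathbf e}_t$. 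The base case is $m_0=0=Z(X)\cdot 0$. Equivalently, the identity can be read off the unrolled form $m_t=(1-\beta_1)\sum_k\beta_1^k g_{t-k}$ by factoring out the frozen $Z(X)$.

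The main obstacle is the base case. Before $t_0$ the Jacobians $Z_k(X)$ differ from the frozen $Z(X)$, so the pre-freezing gradients are not of the form $Z(X)\mathbf e_k$. I would handle this in one of two ways. The first is to read Assumption~\ref{as:terminal-freezing} as applying to the whole relevant history, so that every gradient feeding $m_t$ uses $Z(X)$. The second is to note that the residual $\beta_1^{t-t_0}\bigl(m_{t_0-1}-Z(X)\bar{\mathbf e}_{t_0-1}\bigr)$ decays geometrically, and to absorb it into the "up to higher-order effects" clause of that assumption. If $m_{t_0-1}$ lies in the range of $Z(X)$, one can also simply choose $\bar{\mathbf e}_{t_0-1}$ as a preimage, and then the identity is exact.
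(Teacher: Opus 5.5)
Your proposal is correct and follows the paper's argument. The paper likewise combines Lemma~\ref{lem:gradient-factorization} with the frozen Jacobian to get $g_t=Z(X)\mathbf e_t$, defines $\bar{\mathbf e}_t$ as the truncated sum $(1-\beta_1)\sum_{k=0}^{t-1}\beta_1^k\mathbf e_{t-k}$ and checks the recursion directly, and then unrolls Adam's first-moment recursion from $m_0=0$, which is your induction written in closed form.

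The base-case issue you flag is real. The paper resolves it only implicitly, by adopting your first reading: it unrolls from $m_0=0$ with $Z(X)$ frozen over the entire history, even though Assumption~\ref{as:terminal-freezing} is stated only for $t\ge t_0$. Your geometric-residual alternative is more careful, but note that it yields $m_t=Z(X)\bar{\mathbf e}_t$ only up to a transient term of order $\beta_1^{t-t_0}$, not exactly as the lemma states.
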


\begin{proof}
Under Assumption~\ref{as:terminal-freezing} one may replace $Z_t(X)$ by a fixed matrix $Z(X)$ for all $t\ge t_0$.
Lemma~\ref{lem:gradient-factorization} then gives $g_t=Z(X)\mathbf e_t$.
Substituting into Eq.~\eqref{eq:adam-m} yields the linear recursion
\begin{equation}
\label{eq:m-recursion-frozen}
m_t=\beta_1 m_{t-1}+(1-\beta_1)Z(X)\mathbf e_t.
\end{equation}
Define $\bar{\mathbf e}_t$ by the exponential moving average
\begin{equation}
\bar{\mathbf e}_t \triangleq (1-\beta_1)\sum_{k=0}^{t-1}\beta_1^k\,\mathbf e_{t-k},
\end{equation}
with $m_0=0$ so that the truncated sum is exact.
Then $\bar{\mathbf e}_t$ satisfies the recursion Eq.~\eqref{eq:ema-recursion} by a direct one-step check:
\begin{equation}
\beta_1\bar{\mathbf e}_{t-1}+(1-\beta_1)\mathbf e_t
=
(1-\beta_1)\sum_{k=1}^{t-1}\beta_1^k\mathbf e_{t-k}+(1-\beta_1)\mathbf e_t
=
(1-\beta_1)\sum_{k=0}^{t-1}\beta_1^k\mathbf e_{t-k}
=
\bar{\mathbf e}_t.
\end{equation}
Finally, unrolling Eq.~\eqref{eq:m-recursion-frozen} with $m_0=0$ gives
\begin{equation}
m_t
=
(1-\beta_1)\sum_{k=0}^{t-1}\beta_1^k\, Z(X)\mathbf e_{t-k}
=
Z(X)\Bigl((1-\beta_1)\sum_{k=0}^{t-1}\beta_1^k\,\mathbf e_{t-k}\Bigr)
=
Z(X)\bar{\mathbf e}_t,
\end{equation}
which is Eq.~\eqref{eq:m-as-z-emae}.
\end{proof}

The next lemma packages Jacobians and Adam's diagonal preconditioner into the preconditioned Gram operator, yielding the compact matrix $\mathsf S$ and the first-order TD-error dynamics it induces.

\begin{lemma}
\label{lem:self-excitation-operator}
Under Assumption~\ref{as:terminal-freezing}, define for any finite input sets $X_1$ and $X_2$ the preconditioned Gram matrix
\begin{equation}
\label{eq:gram-def}
\mathsf{K}(X_1,X_2) = Z(X_1)^\top D\, Z(X_2).
\end{equation}
Define
\begin{equation}
\label{eq:self-excitation-S}
\mathsf{S}
=
\gamma \mathsf{K}({}X',X) - \mathsf{K}(X,X).
\end{equation}
in the terminal frozen regime the (first-moment) bias-correction factor
$c_t^{(m)}\triangleq (1-\beta_1^t)^{-1}$ may be treated as constant for $t\ge t_0$
(e.g., $t_0$ is large enough that $\beta_1^{t_0}$ is negligible), so that $c_t^{(m)}\equiv c^{(m)}>0$.
Then for $t\ge t_0$,
\begin{equation}
\label{eq:first-order-e-dynamics}
\mathbf e_{t+1}=\mathbf e_t+\eta \mathsf{S}\,\bar{\mathbf e}_t+o(\eta),
\end{equation}
where $\bar{\mathbf e}_t$ is defined by Eq.~\eqref{eq:ema-recursion}, and where $\eta$ is understood as the
\emph{effective} (constant) stepsize $\eta\leftarrow \eta\,c^{(m)}$ after absorbing the frozen bias-correction scalar.
\end{lemma}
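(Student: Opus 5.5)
The plan is to combine Lemma~\ref{lem:one-step-td-update}, applied with the Adam direction $\Delta_t = D\hat m_t$, with the momentum representation of Lemma~\ref{lem:momentum-ema}. In the frozen regime of Assumption~\ref{as:terminal-freezing}, the Adam step reads $\theta_{t+1}=\theta_t-\eta\,c^{(m)} D m_t$. The bias-correction factor $c^{(m)}$ is treated as a constant for $t\ge t_0$, and I will absorb it into $\eta$ exactly as the statement instructs.

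First, I check the hypothesis of Lemma~\ref{lem:one-step-td-update} that $\|\Delta_t\|_2=O(1)$ independently of $\eta$. Since $D$ is a fixed diagonal matrix with entries at most $1/\epsilon$, and $m_t=Z(X)\bar{\mathbf e}_t$ is a bounded EMA of bounded TD errors in the local regime, the bound holds. For the same reason, $\|\theta_{t+1}-\theta_t\|_2=O(\eta)$. For $\eta$ small this step is below $\Delta_{\min}/(2L_Q)$, so Assumption~\ref{as:greedy-stability} gives $X'_{t+1}=X'_t={}X'$.

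Next, I apply Lemma~\ref{lem:one-step-td-update} with $\Delta_t = D m_t$, replacing the Jacobians by the frozen map $Z(\cdot)$ and the target set by ${}X'$:
\[
\mathbf e_{t+1}=\mathbf e_t+\eta\bigl(\gamma Z(X')^\top - Z(X)^\top\bigr) D m_t + o(\eta).
\]
Substituting $m_t = Z(X)\bar{\mathbf e}_t$ from Lemma~\ref{lem:momentum-ema} gives
\[
\bigl(\gamma Z(X')^\top D Z(X) - Z(X)^\top D Z(X)\bigr)\bar{\mathbf e}_t=\bigl(\gamma\mathsf K(X',X)-\mathsf K(X,X)\bigr)\bar{\mathbf e}_t=\mathsf S\bar{\mathbf e}_t,
\]
which is exactly Eq.~\eqref{eq:first-order-e-dynamics}.

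The main obstacle is bookkeeping of the frozen-regime errors rather than the algebra. The replacements $Z_t\to Z$, $D_t\to D$ and $X'_t\to X'$ must contribute only $o(\eta)$ after multiplication by $\eta$. I would handle this by noting that Assumption~\ref{as:terminal-freezing} states that these replacements hold up to higher-order effects, so each discrepancy term is $\eta\cdot o(1)=o(\eta)$. The truncation $m_0=0$ at the start of the frozen window also needs care. I would either restart the EMA at $t_0$ or note that the pre-$t_0$ contribution decays like $\beta_1^{t-t_0}$ and can be absorbed into the initial condition.
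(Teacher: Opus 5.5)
Your proposal is correct and follows essentially the same route as the paper. You absorb the frozen bias-correction factor $c^{(m)}$ into $\eta$, write the Adam direction as $\Delta_t = D m_t = D Z(X)\bar{\mathbf e}_t$ via Lemma~\ref{lem:momentum-ema}, apply Lemma~\ref{lem:one-step-td-update} with the frozen $Z(\cdot)$ and ${}X'$, and identify $\gamma\mathsf K(X',X)-\mathsf K(X,X)=\mathsf S$, which is exactly the paper's argument; your explicit checks that $\|\Delta_t\|_2=O(1)$ and that the step stays below $\Delta_{\min}/(2L_Q)$ are hypotheses the paper uses only implicitly, so they tighten rather than change the proof.
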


\begin{proof}
For $t\ge t_0$, Adam updates satisfy
\[
\theta_{t+1}=\theta_t-\eta D\,\hat m_t
=
\theta_t-\eta D\,\frac{m_t}{1-\beta_1^t}
=
\theta_t-\eta\,c_t^{(m)}\,D\,m_t.
\]
By the added terminal-regime condition, $c_t^{(m)}\equiv c^{(m)}$ is a constant scalar for $t\ge t_0$.
Thus we may absorb it into the stepsize by redefining $\eta\leftarrow \eta\,c^{(m)}$ (this does not affect any
spectral sign conditions since $c^{(m)}>0$).
By Lemma~\ref{lem:momentum-ema}, $m_t=Z(X)\bar{\mathbf e}_t$ in the frozen regime, hence the update takes the form
$\theta_{t+1}=\theta_t-\eta \Delta_t$ with $\Delta_t=D Z(X)\bar{\mathbf e}_t$.
Applying Lemma~\ref{lem:one-step-td-update} with $X'_t={}X'$ and $Z_t(\cdot)=Z(\cdot)$ yields
\[
\mathbf e_{t+1}
=
\mathbf e_t
+
\eta\Bigl(
\gamma Z({}X')^\top D Z(X)
-
Z(X)^\top D Z(X)
\Bigr)\bar{\mathbf e}_t
+
o(\eta),
\]
which is exactly Eq.~\eqref{eq:first-order-e-dynamics} with Eq.~\eqref{eq:gram-def}--Eq.~\eqref{eq:self-excitation-S}.
\end{proof}

The final lemma below is a purely algebraic identity connecting eigenvalues of $\mathsf S$ to its symmetric part; it is useful when turning stability checks into tractable sufficient conditions.

\begin{lemma}
\label{lem:symmetric-part}
Let $\mathsf S\in\mathbb{R}^{M\times M}$ and define
$
\mathsf S_s=\frac{\mathsf S+\mathsf S^\top}{2}.
$
If $\mathsf S v=\lambda v$ for some (possibly complex) eigenpair $(\lambda,v\neq 0)$, then
\begin{equation}
\label{eq:conj-mean}
\frac{\lambda+\bar\lambda}{2}
=
\frac{v^{'} \mathsf S_s v}{v^{'}v}
\le
\lambda_{\max}(\mathsf S_s),
\end{equation}
where $v^{'}$ denotes conjugate transpose and $\bar\lambda$ denotes complex conjugate.
\end{lemma}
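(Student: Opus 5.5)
The plan is to compute the Rayleigh quotient of $\mathsf S$ at the eigenvector $v$ directly. The key observation is that the Hermitian form associated with the symmetric part recovers exactly the real part of $\lambda$.

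\emph{Step 1.} Take the eigenpair $\mathsf S v=\lambda v$ with $v\neq 0$. Multiply on the left by $v'$ (the conjugate transpose) to get $v'\mathsf S v=\lambda\, v'v$. Since $v'v=\|v\|_2^2>0$, this gives $\lambda = v'\mathsf S v/(v'v)$.

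\emph{Step 2.} Take the complex conjugate of this scalar identity. Because $\mathsf S$ is real, $\mathsf S'=\mathsf S^\top$, and therefore
\[
\overline{v'\mathsf S v}=(v'\mathsf S v)'=v'\mathsf S^\top v .
\]
Since $v'v$ is real, it follows that $\bar\lambda = v'\mathsf S^\top v/(v'v)$.

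\emph{Step 3.} Average the two expressions from Steps 1 and 2:
\[
\frac{\lambda+\bar\lambda}{2}=\frac{v'\mathsf S_s v}{v'v}.
\]
This is the equality in Eq.~\eqref{eq:conj-mean}.

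\emph{Step 4.} For the inequality, note that $\mathsf S_s$ is real symmetric, so it is Hermitian as a complex matrix. By the spectral theorem, $\mathsf S_s=\sum_j \mu_j q_j q_j^\top$ with real eigenvalues $\mu_j$ and a real orthonormal basis $\{q_j\}$. Expanding $v$ in this basis gives
\[
v'\mathsf S_s v=\sum_j \mu_j |q_j^\top v|^2\le \lambda_{\max}(\mathsf S_s)\sum_j|q_j^\top v|^2=\lambda_{\max}(\mathsf S_s)\, v'v .
\]
This is the complex Rayleigh--Ritz bound, and dividing by $v'v$ completes the inequality.

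The only point needing care is Step 2. One must use the realness of $\mathsf S$ to identify $\mathsf S'$ with $\mathsf S^\top$, and work with Hermitian forms rather than plain transposes; otherwise $v^\top v$ could vanish for complex $v$. Everything else is routine linear algebra.
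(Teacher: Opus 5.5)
Your proposal is correct and follows essentially the same argument as the paper: left-multiply by $v'$, conjugate using the realness of $\mathsf S$ to get $v'\mathsf S^\top v=\bar\lambda\, v'v$, average, and then apply the Rayleigh-quotient bound for the real symmetric matrix $\mathsf S_s$. The one difference is that you prove this final bound explicitly through the spectral decomposition, while the paper simply cites it.
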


\begin{proof}
Multiply $\mathsf S v=\lambda v$ on the left by $v^{'}$ to obtain $v^{'}\mathsf S v=\lambda\, v^{'}v$.
Taking complex conjugates yields $v^{'}\mathsf S^\top v=\bar\lambda\, v^{'}v$.
Averaging the two identities gives
\[
v^{'}\Bigl(\tfrac{\mathsf S+\mathsf S^\top}{2}\Bigr)v
=
\frac{\lambda+\bar\lambda}{2}\,v^{'}v,
\]
which proves the first equality in Eq.~\eqref{eq:conj-mean}. The inequality follows from the Rayleigh-quotient bound for the real symmetric matrix $\mathsf S_s$.
\end{proof}

\subsection{Proofs of the main theorems in Section \ref{sec:adam-td-linearized}}
\label{subsec:adam-td-main-proofs}

We now combine the first-order TD-error evolution in Lemma~\ref{lem:self-excitation-operator} with the EMA recursion in Lemma~\ref{lem:momentum-ema} to eliminate $\bar{\mathbf e}_t$ and obtain a closed second-order recursion in $\mathbf e_t$.
This yields Theorem~\ref{thm:adamtd_exact_schur}. We then analyze the characteristic roots of the resulting block companion matrix to obtain the Hurwitz small-stepsize sufficient condition in Theorem~\ref{thm:adamtd_hurwitz_sufficient}.

\begin{proof}[Proof of Theorem \ref{thm:adamtd_exact_schur}]
We begin from the first-order relation in Lemma~\ref{lem:self-excitation-operator},
\begin{equation}
\label{eq:first-order-proof}
\mathbf e_{t+1}-\mathbf e_t=\eta \mathsf S \bar{\mathbf e}_t + o(\eta),
\end{equation}
together with the moving-average recursion in Lemma~\ref{lem:momentum-ema},
\begin{equation}
\label{eq:ema-proof}
\bar{\mathbf e}_t=\beta_1\bar{\mathbf e}_{t-1}+(1-\beta_1)\mathbf e_t.
\end{equation}
Shifting Eq.~\eqref{eq:first-order-proof} by one step yields
\begin{equation}
\label{eq:first-order-shift-proof}
\mathbf e_t-\mathbf e_{t-1}=\eta \mathsf S \bar{\mathbf e}_{t-1}+o(\eta).
\end{equation}
Substituting Eq.~\eqref{eq:ema-proof} into Eq.~\eqref{eq:first-order-proof} gives
\[
\mathbf e_{t+1}
=
\mathbf e_t
+
\eta\mathsf S\bigl(\beta_1\bar{\mathbf e}_{t-1}+(1-\beta_1)\mathbf e_t\bigr)
+
o(\eta).
\]
Using Eq.~\eqref{eq:first-order-shift-proof} to replace $\eta\mathsf S\bar{\mathbf e}_{t-1}$ by $(\mathbf e_t-\mathbf e_{t-1})+o(\eta)$ yields
\[
\mathbf e_{t+1}
=
\mathbf e_t
+
\beta_1(\mathbf e_t-\mathbf e_{t-1})
+
\eta(1-\beta_1)\mathsf S\,\mathbf e_t
+
o(\eta),
\]
which rearranges to Eq.~\eqref{eq:second-order}.
Ignore the $o(\eta)$ term and stack the state as
\[
\mathbf z_t \triangleq 
\begin{bmatrix}
\mathbf e_t\\
\mathbf e_{t-1}
\end{bmatrix},
\qquad
\mathbf z_{t+1} = \mathsf A(\eta)\mathbf z_t,
\qquad
\mathsf A(\eta)\triangleq
\begin{bmatrix}
(1+\beta_1)I+\eta(1-\beta_1)\mathsf S & -\beta_1 I\\
I & 0
\end{bmatrix}.
\]
Then for all $t\ge t_0$ we have $\mathbf z_t=\mathsf A(\eta)^{t-t_0}\mathbf z_{t_0}$.

\paragraph{Sufficiency.}
Assume $\rho(\mathsf A(\eta))<1$.
Let $\|\cdot\|$ be any matrix norm induced by a vector norm.
Fix any $\alpha$ such that $\rho(\mathsf A(\eta))<\alpha<1$.
By the Gelfand formula, $\lim_{k\to\infty}\|\mathsf A(\eta)^k\|^{1/k}=\rho(\mathsf A(\eta))$, so there exists $k_0$ such that
$\|\mathsf A(\eta)^k\|^{1/k}\le \alpha$ for all $k\ge k_0$.
This implies $\|\mathsf A(\eta)^k\|\le \alpha^k$ for all $k\ge k_0$.
Define $C \triangleq \max_{0\le k\le k_0}\|\mathsf A(\eta)^k\|\,\alpha^{-k}$, which is finite.
Then $\|\mathsf A(\eta)^k\|\le C\alpha^k$ holds for all $k\ge 0$.
Therefore, for all $t\ge t_0$,
\[
\|\mathbf z_t\|
=
\|\mathsf A(\eta)^{t-t_0}\mathbf z_{t_0}\|
\le
\|\mathsf A(\eta)^{t-t_0}\|\cdot \|\mathbf z_{t_0}\|
\le
C\alpha^{t-t_0}\|\mathbf z_{t_0}\|,
\]
which shows exponential convergence of $\mathbf z_t$ to $0$, hence $\mathbf e_t\to 0$ exponentially.

\paragraph{Necessity.}
Assume the frozen linear recursion $\mathbf z_{t+1}=\mathsf A(\eta)\mathbf z_t$ converges exponentially to $0$ for every initialization.
Then there exist constants $C>0$ and $\alpha\in(0,1)$ such that for all $k\ge 0$ and all $\mathbf z_{t_0}$,
\[
\|\mathsf A(\eta)^k\mathbf z_{t_0}\|
=
\|\mathbf z_{t_0+k}\|
\le
C\alpha^k\|\mathbf z_{t_0}\|.
\]
Taking the supremum over all $\mathbf z_{t_0}\neq 0$ yields $\|\mathsf A(\eta)^k\|\le C\alpha^k$ for all $k\ge 0$.
Applying the Gelfand formula gives
\[
\rho(\mathsf A(\eta))
=
\lim_{k\to\infty}\|\mathsf A(\eta)^k\|^{1/k}
\le
\lim_{k\to\infty}(C\alpha^k)^{1/k}
=
\alpha
<
1,
\]
so $\rho(\mathsf A(\eta))<1$.
\end{proof}

\begin{proof}[Proof of Theorem \ref{thm:adamtd_hurwitz_sufficient}]
Denote $r\in\mathbb{C}$ as an eigenvalue of $\mathsf A(\eta)$ with eigenvector $\begin{bmatrix}u\\ w\end{bmatrix}\neq 0$.
From the second block row we have $u=r w$.
Substituting into the first block row and multiplying by $r$ yields
\[
\Bigl(r^2 I-r\bigl((1+\beta_1)I+\eta(1-\beta_1)\mathsf S\bigr)+\beta_1 I\Bigr)u=0,
\]
so
\begin{equation}
\label{eq:matrix-charpoly-hurwitz}
\det\!\Bigl(r^2 I-r\bigl((1+\beta_1)I+\eta(1-\beta_1)\mathsf S\bigr)+\beta_1 I\Bigr)=0.
\end{equation}
Since $\mathsf S$ is real, by complex Schur triangularization $\mathsf S=UTU^{'}$ with $T$ upper-triangular and diagonal entries equal to $\lambda\in\mathrm{spec}(\mathsf S)$.
Eq.~\eqref{eq:matrix-charpoly-hurwitz} implies that $r$ must satisfy
\begin{equation}
\label{eq:mode-charpoly-hurwitz}
p(r,\eta,\lambda)
=
r^2-\bigl(1+\beta_1+\eta(1-\beta_1)\lambda\bigr)r+\beta_1
=
0
\end{equation}
for some $\lambda\in\mathrm{spec}(\mathsf S)$.
At $\eta=0$ the two roots of Eq.~\eqref{eq:mode-charpoly-hurwitz} are $r=1$ and $r=\beta_1$.
Let $r_1(\eta)$ denote the root satisfying $r_1(0)=1$.
Since $\partial_r p(1,0,\lambda)=1-\beta_1\neq 0$, the implicit function theorem applies and gives
\begin{equation}
\label{eq:drdeta-hurwitz}
\frac{dr_1}{d\eta}(0)
=
-\frac{\partial_\eta p(1,0,\lambda)}{\partial_r p(1,0,\lambda)}
=
\lambda.
\end{equation}
Therefore
\begin{equation}
\label{eq:r-expansion-hurwitz}
r_1(\eta)=1+\eta\lambda+O(\eta^2),
\end{equation}
and consequently
\begin{equation}
\label{eq:root-modulus-expansion-hurwitz}
|r_1(\eta)|^2
=
1+2\eta\,\Re(\lambda)+O(\eta^2).
\end{equation}
If $\mathsf S$ is Hurwitz, then $\Re(\lambda)<0$ holds for every $\lambda\in\mathrm{spec}(\mathsf S)$.
Since the spectrum is finite, there exists $\alpha>0$ such that $\Re(\lambda)\le -\alpha$ for all eigenvalues.
In the local first order regime, the linear term in Eq.~\eqref{eq:root-modulus-expansion-hurwitz} dominates the $O(\eta^2)$ remainder for $\eta$ small enough,
so $|r_1(\eta)|<1$ holds uniformly over all modes.
It remains to control the second root branch $r_2(\eta)$ that satisfies $r_2(0)=\beta_1$.
The coefficients of Eq.~\eqref{eq:mode-charpoly-hurwitz} depend continuously on $\eta$, so the roots depend continuously on $\eta$ as well.
Since $|\beta_1|<1$, there exists $\eta_0>0$ such that $|r_2(\eta)|<1$ for all $\eta\in(0,\eta_0)$.
Taking the minimum of the smallness conditions needed for the two branches over the finite set $\mathrm{spec}(\mathsf S)$ yields an $\eta_0$ that works for all modes.
Hence every eigenvalue $r\in\mathrm{spec}(\mathsf A(\eta))$ satisfies $|r|<1$ when $\eta\in(0,\eta_0)$, so $\rho(\mathsf A(\eta))<1$.
\end{proof}

\subsection{A complete technical development for Sec.~\ref{sec:adam-td-linearized}}
\label{subsec:adam-td-linearized}

We further develop a local, operator-level stability and divergence criterion for applying Adam to the semi-gradient squared TD objective introduced in the {Preliminary section}.
The key difficulty is that TD learning is {bootstrapped}: the target uses a greedy next-action under the current network, and thus the effective objective changes with $\theta_t$ even when we do not backpropagate through the target.
Moreover, Adam introduces history-dependent preconditioning and momentum, which can couple the current TD error to past TD errors in a nontrivial way.
We rely on standard local analysis assumptions (detailed in Appendix~\ref{subsec:adam-td-assumptions}): we assume the stepsize $\eta$ is small enough to permit a first-order Taylor approximation, the action gap is sufficient to keep the greedy policy stable, and the process has entered a ``terminal phase.''
In this phase, the bootstrapped targets $X^{'}$, the network Jacobians $Z(\cdot)$, and the Adam preconditioner $D_t$ can be treated as effectively frozen constants.
In particular, we write ${}X'$ for the greedy target set in this regime, and replace $Z_t(\cdot)$ and $D_t$ by fixed matrices $Z(\cdot)$ and $D$.
In this frozen regime, the interaction between Adam and function approximation can be made explicit.
First, Lemma~\ref{lem:momentum-ema} shows that Adam's first moment reduces to a Jacobian EMA factorization
\begin{equation}
m_t = Z(X)\,\bar{\mathbf e}_t,
\qquad
\bar{\mathbf e}_t = \beta_1\bar{\mathbf e}_{t-1} + (1-\beta_1)\mathbf e_t,
\label{eq:mt_as_Z_ema_in_text}
\end{equation}
where $\bar{\mathbf e}_t$ is the exponential moving average of past TD errors.\footnote{If bias-correction is used, $\hat m_t=m_t/(1-\beta_1^t)$ simply rescales the effective stepsize and does not change the operators below.}
Furthermore, we consider one Adam step in the terminal phase. Using the frozen diagonal preconditioner $D$,
the parameter update takes the form
\begin{equation}
\theta_{t+1}-\theta_t \;=\; -\eta\, D\,m_t
\;=\; -\eta\, D\,Z(X)\bar{\mathbf e}_t.
\label{eq:param_update_frozen_in_text}
\end{equation}
Applying the first-order Taylor expansion in Assumption~\ref{as:first-order-linearization} to the network outputs on an arbitrary finite input set $X_1$ yields $Q_{\theta_{t+1}}(X_1) - Q_{\theta_t}(X_1)
\approx
Z(X_1)^\top(\theta_{t+1}-\theta_t)$.
$
Q_{\theta_{t+1}}(X_1) - Q_{\theta_t}(X_1)
\approx
Z(X_1)^\top(\theta_{t+1}-\theta_t)
=
-\eta\, Z(X_1)^\top D\,Z(X)\,\bar{\mathbf e}_t.
$
It reveals that, in function space, the update is governed by a {preconditioned Gram} operator:
for any two finite sets $X_1$ and $X_2$, define
\begin{equation}
\label{eq:K_def_in_text}
\mathsf{K}(X_1,X_2) \triangleq Z(X_1)^\top D\, Z(X_2).
\end{equation}
Equivalently, $\mathsf K(X_1,X_2)$ is the cross-Gram matrix of the ``whitened'' features $D^{1/2}Z(\cdot)$, i.e.,
$\mathsf K(X_1,X_2)=(D^{1/2}Z(X_1))^\top(D^{1/2}Z(X_2))$.
To expose how bootstrapping feeds back into the TD error, we combine the generic one-step TD-error propagation in Lemma~\ref{lem:one-step-td-update} with the Adam direction in Eq.~\eqref{eq:param_update_frozen_in_text}.
Lemma~\ref{lem:one-step-td-update} states that for $\theta_{t+1}=\theta_t-\eta\Delta_t$,
\begin{equation}
\mathbf e_{t+1}
=
\mathbf e_t
+\eta\Bigl(\gamma Z({}X')^\top\Delta_t - Z(X)^\top\Delta_t\Bigr)
+o(\eta),
\end{equation}
and substituting $\Delta_t = Dm_t = D Z(X)\bar{\mathbf e}_t$ gives
\begin{equation}
\label{eq:et_update_with_K}
\mathbf e_{t+1}
=
\mathbf e_t
+\eta\Bigl(\gamma\,\mathsf K({}X',X)-\mathsf K(X,X)\Bigr)\bar{\mathbf e}_t
+o(\eta).
\end{equation}
This motivates defining the \emph{TD dynamics} operator
\begin{equation}
\label{eq:TD dynamics-S}
\mathsf S \triangleq \gamma\,\mathsf K({}X',X)-\mathsf K(X,X).
\end{equation}
Intuitively, $-\mathsf K(X,X)$ is the preconditioned descent effect on the current dataset predictions,
whereas $\gamma\,\mathsf K({}X',X)$ captures how the same parameter change propagates through the greedy bootstrapped target.
The balance between these two terms, together with Adam's momentum state $\bar{\mathbf e}_t$, determines whether the TD error is damped or amplified.
By combining Eq.~\eqref{eq:et_update_with_K} with the EMA recursion Eq.~\eqref{eq:mt_as_Z_ema_in_text}, we obtain a closed-form linear recurrence for $\mathbf e_t$ in the frozen regime.
Subsequently, we provide the Theorem \ref{thm:adamtd_exact_schur} and \ref{thm:adamtd_hurwitz_sufficient} in the Sec.~\ref{sec:adam-td-linearized}.

\subsection{Stop-gradient and EMA targets in practice: a refined TD operator}
\label{subsec:adam-td-ema-stopgrad}

In the main development above we analyzed a ``single-network'' semi-gradient TD update in which the bootstrapped target
is computed using the \emph{current} critic parameters.
In many practical deep RL implementations---including TD3 and SAC, as well as their offline variants---the target is
instead computed using a \emph{separate target critic} updated by an exponential moving average (EMA), also known as
Polyak averaging.
Empirically, we found that incorporating this EMA structure into the local operator definition yields a noticeably
more accurate stability/convergence diagnosis than the $\alpha=1$ model, especially in offline RL where target handling
is central to preventing divergence.

\paragraph{Stop-gradient as the semi-gradient principle.}
Throughout, ``stop-gradient'' means that when forming the TD target $y_t$ we treat it as a constant in differentiation,
even though it is computed from neural networks.
This is exactly the semi-gradient TD convention used in DQN/TD3/SAC-style critic updates, and is the reason
Lemma~\ref{lem:gradient-factorization} holds in the form
$g_t = Z_t(X)\mathbf e_t$:
we backpropagate through $Q_\theta(X)$, but do \emph{not} backpropagate through the target branch producing $y_t$.
Stop-gradient is therefore \emph{not} merely an implementation detail; it is part of the algorithmic definition that
pins down the linearized Jacobian--error factorization driving the spectral analysis.

\paragraph{EMA target critics and the Polyak coefficient $\alpha$.}
Let $\theta_t$ denote the online critic parameters updated by Adam, and let $\bar\theta_t$ denote the target critic
parameters used to form bootstrap targets.
In TD3/SAC it is standard to update $\bar\theta_t$ by Polyak averaging:
\begin{equation}
\label{eq:polyak-update}
\bar\theta_{t+1} \;=\; (1-\alpha)\bar\theta_t + \alpha \theta_{t+1},\qquad \alpha\in(0,1].
\end{equation}
Equivalently, $\bar\theta_t$ is an EMA of past online parameters, with smaller $\alpha$ producing a slower-moving target.
The critic TD target then takes the form
\begin{equation}
\label{eq:td-target-ema}
y_t \;=\; r + \gamma Q_{\bar\theta_t}({}X'_t),
\end{equation}
where $\bar X'_t$ is the next-state input set induced by the target action-selection rule (e.g., target policy in
actor--critic methods, or a greedy action under a target critic).
As in Appendix~\ref{subsec:adam-td-assumptions}, we work in a local regime where these target actions are stable under
small perturbations (action-gap / margin conditions), so that $\bar X'_t$ may be treated as fixed at first order.

\paragraph{Why EMA changes the effective ``self-excitation'' strength.}
The operator $\mathsf S$ in Eq.~\eqref{eq:TD dynamics-S} quantifies a \emph{feedback loop}:
a parameter update changes $Q_\theta(X)$ (a descent term $-\mathsf K(X,X)$), but it also changes the next-step target
values $Q_\theta({}X')$ (a bootstrap term $+\gamma\,\mathsf K({}X',X)$).
With a target critic updated by Eq.~\eqref{eq:polyak-update}, only a $\alpha$-fraction of each online update is injected into
the target network per iteration.
Consequently, the instantaneous bootstrap feedback is attenuated by $\alpha$, which motivates replacing
\[
\mathsf S
\;=\;
\gamma\,\mathsf K({}X',X)-\mathsf K(X,X)
\qquad\text{by}\qquad
\mathsf S_\alpha
\;\triangleq\;
\alpha\gamma\,\mathsf K({}X',X)-\mathsf K(X,X).
\]
In experiments, using $\mathsf S_\alpha$ (with the same $\alpha$ as the target-network update) materially improved the
alignment between the spectral test predicted by Theorem~\ref{thm:adamtd_exact_schur} and observed convergence/divergence.

\medskip
We now record a precise first-order statement in the same style as
Lemma~\ref{lem:self-excitation-operator}.
To keep the presentation parallel, we introduce an additional terminal-phase approximation ensuring that the \emph{increment}
of the target parameters is dominated (at first order) by the injected online increment.

\begin{assumption}
\label{as:ema-tracking-terminal}
In addition to Assumption~\ref{as:terminal-freezing}, suppose the target critic is updated by
\eqref{eq:polyak-update} with a fixed $\alpha\in(0,1]$.
Assume there exists $t_0$ such that for all $t\ge t_0$,
\begin{equation}
\label{eq:ema-increment-approx}
\bar\theta_{t+1}-\bar\theta_t
\;=\;
\alpha(\theta_{t+1}-\theta_t)+o(\eta),
\end{equation}
and that the Jacobian maps on $X$ and ${}X'$ may be replaced by frozen maps $Z(X)$ and $Z({}X')$
as in Assumption~\ref{as:terminal-freezing}.
\end{assumption}

Assumption~\ref{as:ema-tracking-terminal} is a local statement about the \emph{increment} of the target network.
Intuitively, once training enters a stabilized terminal phase, the mismatch $\|\theta_t-\bar\theta_t\|$ becomes small and
slowly varying, so the per-step change in $\bar\theta_t$ is primarily the injected fraction $\alpha(\theta_{t+1}-\theta_t)$.
(If one wishes to avoid Eq.~\eqref{eq:ema-increment-approx}, the analysis can be extended by augmenting the state with the
target-network mismatch, producing a higher-dimensional linear system; the attenuation captured by $\alpha$ is the dominant
effect we observed empirically.)

\begin{lemma}
\label{lem:self-excitation-operator-ema}
Work under Assumptions~\ref{as:first-order-linearization}, \ref{as:greedy-stability},
\ref{as:terminal-freezing}, and \ref{as:ema-tracking-terminal}.
Define the preconditioned Gram matrix $\mathsf K(\cdot,\cdot)$ as in Eq.~\eqref{eq:gram-def}.
Then in the frozen terminal regime, the first-order TD-error dynamics becomes
\begin{equation}
\label{eq:first-order-e-dynamics-ema}
\mathbf e_{t+1}
=
\mathbf e_t
+\eta\,\mathsf S_\alpha\,\bar{\mathbf e}_t
+o(\eta),
\qquad
\mathsf S_\alpha
\;\triangleq\;
\alpha\gamma\,\mathsf K({}X',X)-\mathsf K(X,X),
\end{equation}
where $\bar{\mathbf e}_t$ is the EMA of TD errors induced by Adam's $\beta_1$-momentum
(Eq.~\eqref{eq:ema-recursion}), and $\eta$ denotes the effective constant stepsize (after absorbing frozen bias-correction
scalars as in Lemma~\ref{lem:self-excitation-operator}).
\end{lemma}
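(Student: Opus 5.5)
The plan is to mirror the proof of Lemma~\ref{lem:self-excitation-operator}, with one change: the bootstrapped target is now evaluated at the target parameters $\bar\theta_t$, not at $\theta_t$. With the target critic, the TD error reads
\[
\mathbf e_t = Q_{\theta_t}(X) - \bigl(r + \gamma Q_{\bar\theta_t}(X')\bigr).
\]
Stop-gradient means Lemma~\ref{lem:gradient-factorization} still holds as $g_t = Z(X)\mathbf e_t$, because the target branch is treated as a constant. Lemma~\ref{lem:momentum-ema} then applies verbatim and gives $m_t = Z(X)\bar{\mathbf e}_t$. Absorbing the frozen bias-correction scalar into $\eta$, as before, the online step is
\[
\theta_{t+1} - \theta_t = -\eta\,\Delta_t, \qquad \Delta_t = D Z(X)\bar{\mathbf e}_t .
\]
Here $\|\Delta_t\|_2 = O(1)$ in the frozen regime.

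Next I would take the difference $\mathbf e_{t+1} - \mathbf e_t$ and split it into an online part and a target part. Assumption~\ref{as:greedy-stability} keeps $X'$ fixed, so there is no third contribution from the target set moving.

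\textbf{Online part.} By Assumption~\ref{as:first-order-linearization} and the frozen Jacobian,
\[
Q_{\theta_{t+1}}(X) - Q_{\theta_t}(X) = -\eta\, Z(X)^\top \Delta_t + o(\eta).
\]

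\textbf{Target part.} Here I invoke Assumption~\ref{as:ema-tracking-terminal}, which gives $\bar\theta_{t+1} - \bar\theta_t = -\eta\alpha\Delta_t + o(\eta)$. Since this increment is $O(\eta)$, a first-order expansion with the frozen map $Z(X')$ yields
\[
Q_{\bar\theta_{t+1}}(X') - Q_{\bar\theta_t}(X') = -\eta\alpha\, Z(X')^\top \Delta_t + o(\eta).
\]

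Combining the two parts gives
\[
\mathbf e_{t+1} - \mathbf e_t = \eta\bigl(\alpha\gamma\, Z(X')^\top - Z(X)^\top\bigr)\Delta_t + o(\eta).
\]
Substituting $\Delta_t = D Z(X)\bar{\mathbf e}_t$ and using the definition of $\mathsf K$ produces exactly $\eta\,\mathsf S_\alpha\bar{\mathbf e}_t + o(\eta)$, which is the claim.

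The main obstacle is justifying the Taylor expansion of $Q$ at the target parameters using the frozen Jacobian $Z(X')$. That Jacobian was frozen around the online iterate, not around $\bar\theta_t$. I would handle this by reading Assumption~\ref{as:ema-tracking-terminal} as saying the Jacobian on $X'$ may be frozen along both trajectories. Equivalently, in the terminal phase $\|\theta_t - \bar\theta_t\|$ is small, so the two Jacobians agree up to terms that only affect higher order. I would also check that $o(\eta)$ remainders remain $o(\eta)$ after multiplication by $\alpha \le 1$ and $\gamma < 1$, which is immediate.
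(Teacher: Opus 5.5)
Your proposal is correct and follows essentially the same route as the paper's proof. Both write the TD error with the target critic $\bar\theta_t$, expand the online and target increments to first order with the frozen Jacobians $Z(X)$ and $Z(X')$, use Assumption~\ref{as:ema-tracking-terminal} to replace $\bar\theta_{t+1}-\bar\theta_t$ by $\alpha(\theta_{t+1}-\theta_t)$, and substitute the frozen Adam step $-\eta D Z(X)\bar{\mathbf e}_t$. You also state two points the paper's proof only uses implicitly: that stop-gradient keeps the factorization $g_t=Z(X)\mathbf e_t$ for the new TD error, and that the assumption is read as freezing $Z(X')$ along the target trajectory.
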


\begin{proof}
Define the TD error using the EMA target critic:
\[
\mathbf e_t
=
Q_{\theta_t}(X)-\Bigl(r+\gamma Q_{\bar\theta_t}({}X')\Bigr).
\]
Under Assumption~\ref{as:first-order-linearization} and the frozen Jacobian approximation, we have the first-order
increments
\[
Q_{\theta_{t+1}}(X)-Q_{\theta_t}(X)
=
Z(X)^\top(\theta_{t+1}-\theta_t)+o(\eta),
\qquad
Q_{\bar\theta_{t+1}}({}X')-Q_{\bar\theta_t}({}X')
=
Z({}X')^\top(\bar\theta_{t+1}-\bar\theta_t)+o(\eta).
\]
Subtracting the target increment from the prediction increment yields
\[
\mathbf e_{t+1}-\mathbf e_t
=
Z(X)^\top(\theta_{t+1}-\theta_t)
-\gamma Z({}X')^\top(\bar\theta_{t+1}-\bar\theta_t)
+o(\eta).
\]
Using Assumption~\ref{as:ema-tracking-terminal}, $\bar\theta_{t+1}-\bar\theta_t=\alpha(\theta_{t+1}-\theta_t)+o(\eta)$, we obtain
\[
\mathbf e_{t+1}-\mathbf e_t
=
\Bigl(Z(X)^\top-\alpha\gamma Z({}X')^\top\Bigr)(\theta_{t+1}-\theta_t)
+o(\eta).
\]
Finally, in the frozen regime Adam yields $\theta_{t+1}-\theta_t=-\eta D m_t$ with $m_t=Z(X)\bar{\mathbf e}_t$
(Lemma~\ref{lem:momentum-ema}), so
\[
\mathbf e_{t+1}-\mathbf e_t
=
\eta\Bigl(\alpha\gamma Z({}X')^\top D Z(X)-Z(X)^\top D Z(X)\Bigr)\bar{\mathbf e}_t
+o(\eta),
\]
which is Eq.~\eqref{eq:first-order-e-dynamics-ema} after identifying $\mathsf K(\cdot,\cdot)$.
\end{proof}

\paragraph{Impact on Theorem~\ref{thm:adamtd_exact_schur}.}
With Lemma~\ref{lem:self-excitation-operator-ema} in hand, the remainder of the derivation is unchanged:
combining Eq.~\eqref{eq:first-order-e-dynamics-ema} with the EMA recursion
$\bar{\mathbf e}_t=\beta_1\bar{\mathbf e}_{t-1}+(1-\beta_1)\mathbf e_t$
eliminates $\bar{\mathbf e}_t$ and yields the same second-order companion form as in Eq.~\eqref{eq:second-order}, but
with $\mathsf S$ replaced by $\mathsf S_\alpha$.

\begin{corollary}
\label{cor:adamtd_exact_schur_ema}
Under the assumptions of Theorem~\ref{thm:adamtd_exact_schur} and Assumption~\ref{as:ema-tracking-terminal},
the TD error satisfies
\begin{equation}
\label{eq:second-order-ema}
\mathbf e_{t+1}
=
\Bigl((1+\beta_1)I+\eta(1-\beta_1)\mathsf{S}_\alpha\Bigr)\mathbf e_t
-
\beta_1 \mathbf e_{t-1}
+
o(\eta),
\end{equation}
where $\mathsf S_\alpha=\alpha\gamma\,\mathsf K({}X',X)-\mathsf K(X,X)$.
Ignoring $o(\eta)$ and defining
\[
\mathsf A_\alpha(\eta)\triangleq
\begin{bmatrix}
(1+\beta_1)I+\eta(1-\beta_1)\mathsf S_\alpha & -\beta_1 I\\
I & 0
\end{bmatrix},
\]
the frozen linearized dynamics converges exponentially to $0$ if and only if
$\rho(\mathsf A_\alpha(\eta))<1$.
\end{corollary}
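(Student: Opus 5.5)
The plan is to follow the proof of Theorem~\ref{thm:adamtd_exact_schur} verbatim, with Lemma~\ref{lem:self-excitation-operator} replaced by Lemma~\ref{lem:self-excitation-operator-ema}. I would start from the first-order relation
\[
\mathbf e_{t+1}-\mathbf e_t=\eta\,\mathsf S_\alpha\bar{\mathbf e}_t+o(\eta),
\]
valid for $t\ge t_0$, together with the Adam EMA recursion $\bar{\mathbf e}_t=\beta_1\bar{\mathbf e}_{t-1}+(1-\beta_1)\mathbf e_t$ from Lemma~\ref{lem:momentum-ema}. The momentum lemma only uses the frozen Jacobian $Z(X)$ and the factorization $g_t=Z(X)\mathbf e_t$. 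Both still hold here because stop-gradient means the target critic $\bar\theta_t$ does not enter the gradient, so Lemma~\ref{lem:momentum-ema} carries over unchanged when $\mathbf e_t$ is defined with the EMA target.

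The next step eliminates $\bar{\mathbf e}_t$. I substitute the EMA recursion into the first-order relation, giving $\mathbf e_{t+1}=\mathbf e_t+\eta\mathsf S_\alpha(\beta_1\bar{\mathbf e}_{t-1}+(1-\beta_1)\mathbf e_t)+o(\eta)$. Then I shift the first-order relation by one step, $\mathbf e_t-\mathbf e_{t-1}=\eta\mathsf S_\alpha\bar{\mathbf e}_{t-1}+o(\eta)$, and use it to replace $\eta\mathsf S_\alpha\bar{\mathbf e}_{t-1}$. Rearranging gives Eq.~\eqref{eq:second-order-ema}.

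The one point to check is that the shifted relation holds at index $t-1$, which requires $t-1\ge t_0$. I would handle this by restricting to $t\ge t_0+1$, or equivalently by enlarging $t_0$ by one. This is the only place where any care is needed, since the $o(\eta)$ remainders from two consecutive steps simply add.

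For the spectral equivalence, I drop $o(\eta)$, stack $\mathbf z_t=(\mathbf e_t,\mathbf e_{t-1})$, and write $\mathbf z_{t+1}=\mathsf A_\alpha(\eta)\mathbf z_t$. Exponential convergence for every initialization is then equivalent to $\rho(\mathsf A_\alpha(\eta))<1$, by the same Gelfand-formula argument as before.
\begin{itemize}
\item \emph{Sufficiency:} pick $\rho(\mathsf A_\alpha(\eta))<\alpha'<1$. The Gelfand formula gives $\|\mathsf A_\alpha^k\|\le C\alpha'^k$ for all $k\ge 0$, and this bound yields exponential decay of $\mathbf z_t$.
\item \emph{Necessity:} a uniform bound $\|\mathsf A_\alpha^k\mathbf z\|\le C\alpha'^k\|\mathbf z\|$ over all $\mathbf z$ gives $\|\mathsf A_\alpha^k\|\le C\alpha'^k$. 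The Gelfand formula then forces $\rho(\mathsf A_\alpha(\eta))\le\alpha'<1$.
\end{itemize}
I write $\alpha'$ here to avoid a clash with the Polyak coefficient $\alpha$. No further obstacle is expected, because the argument is purely linear-algebraic and never uses the specific form of $\mathsf S_\alpha$.
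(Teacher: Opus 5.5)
Your proposal is correct and matches the paper's own argument. The paper likewise substitutes Lemma~\ref{lem:self-excitation-operator-ema} for Lemma~\ref{lem:self-excitation-operator}, eliminates $\bar{\mathbf e}_t$ with the EMA recursion exactly as in the proof of Theorem~\ref{thm:adamtd_exact_schur}, and reuses the Gelfand-formula argument for the Schur criterion; your extra checks (that $g_t=Z(X)\mathbf e_t$ survives stop-gradient through the target critic, and that the shifted relation requires $t-1\ge t_0$) are details the paper leaves implicit.
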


\paragraph{Why the $\alpha$-refinement improves convergence diagnosis (especially offline).}
The substitution $\mathsf S\mapsto \mathsf S_\alpha$ changes the stability boundary in a structured way:
it \emph{attenuates only the bootstrap feedback term} $\gamma\mathsf K({}X',X)$, while leaving the
(descent-like) term $-\mathsf K(X,X)$ unchanged.
Since $\mathsf K(X,X)$ is a Gram matrix of whitened features, its symmetric part is positive semidefinite, so
$-\mathsf K(X,X)$ is dissipative.
Divergence in bootstrapped TD is thus naturally associated with the competition between this dissipative term and the
self-excitation induced by the bootstrap coupling.
In TD3/SAC-style methods, Polyak averaging with $\alpha\ll 1$ reduces this coupling per iteration, effectively moving the
spectrum of the operator toward the stable ``fixed-target regression'' limit $\alpha\to 0$.

This refinement is particularly important in offline RL, where the target $Q$ is typically computed using:
(i) a slowly moving target critic (EMA),
(ii) stop-gradient through the entire target branch (including $\arg\max$ / target-policy action selection),
and often (iii) additional stabilizers such as double critics with a $\min$ target.
All of these design choices reduce the instantaneous sensitivity of the target values to the current critic update,
and hence reduce the effective self-excitation measured by $\mathsf S$.
As a result, the $\alpha=1$ operator can systematically over-predict instability in regimes where the implemented algorithm
in fact converges.
Replacing $\mathsf S$ by $\mathsf S_\alpha$ incorporates the dominant stabilization mechanism introduced by the EMA target,
and we found that the resulting Schur test $\rho(\mathsf A_\alpha(\eta))<1$ tracks the observed convergence/divergence
boundary much more reliably in practice.

\subsection{Toward an online-RL analogue of Theorem~\ref{thm:adamtd_exact_schur}: TD3/SAC in a quasi-stationary replay-buffer regime}
\label{subsec:online-rl-schur-analogue}

The development above is stated for a fixed dataset $X$ (and a frozen target set $\bar X'$), which corresponds most directly to \emph{offline} TD regression.
Nevertheless, in our experiments we consistently observed that the linearized stability picture of Theorem~\ref{thm:adamtd_exact_schur} remains predictive in \emph{online} off-policy actor--critic methods, in particular TD3 and SAC, once training enters a late phase.
This is not accidental: although the environment interaction is online, the critic update in TD3/SAC is implemented as supervised TD regression on a replay buffer, and in a terminal regime the replay distribution, target networks, and actor often evolve slowly enough that the critic dynamics are well-approximated by a \emph{time-invariant} linear system plus small stochastic perturbations.

Below we outline a principled route to an online analogue of Theorem~\ref{thm:adamtd_exact_schur}, emphasizing what additional assumptions are required and what conclusions necessarily change.

\paragraph{Online TD regression template.}
Let $\mathcal{B}_t$ denote the replay buffer at time $t$, and let $X_t=\{(s_i,a_i)\}_{i=1}^M$ be a mini-batch sampled from $\mathcal{B}_t$ (typically uniformly).
A generic TD3/SAC-type critic update minimizes a semi-gradient squared TD error of the form
\begin{equation}
\label{eq:online-td-loss-template}
L_t(\theta)\;=\;\frac12\bigl\|Q_\theta(X_t)-y_t\bigr\|_2^2,
\qquad
y_t \;=\; r_t+\gamma\,\mathcal{T}_t,
\end{equation}
where $\mathcal{T}_t$ is a bootstrapped target evaluated at next-states $s_i'$ and some next-action rule.
For TD3, $\mathcal{T}_t$ is typically a clipped-double-$Q$ target with smoothed actions, e.g.
\[
\mathcal{T}_t^{\mathrm{TD3}}
=
\min\{Q_{\tilde\theta^{(1)}_t}(s_i',a_i'),\,Q_{\tilde\theta^{(2)}_t}(s_i',a_i')\},
\qquad
a_i'=\pi_{\tilde\phi_t}(s_i')+\varepsilon_i,
\]
where $(\tilde\theta_t^{(1)},\tilde\theta_t^{(2)})$ and $\tilde\phi_t$ are target-network parameters and $\varepsilon_i$ is the target-smoothing noise.
For SAC, $\mathcal{T}_t$ is a soft target, for example
\[
\mathcal{T}_t^{\mathrm{SAC}}
=
Q_{\tilde\theta_t}(s_i',a_i')-\alpha\log\pi_{\phi_t}(a_i'|s_i'),
\qquad
a_i'\sim \pi_{\phi_t}(\cdot|s_i'),
\]
with critic target network $\tilde\theta_t$ and temperature $\alpha$.

Conditioned on $(X_t,y_t)$, the semi-gradient factorization in Lemma~\ref{lem:gradient-factorization} still holds:
\begin{equation}
\label{eq:online-gradient-factorization}
g_t \;=\; Z_t(X_t)\,\mathbf e_t,
\qquad
\mathbf e_t\triangleq Q_{\theta_t}(X_t)-y_t.
\end{equation}
Thus, the key issue is not the gradient form, but whether we can control how $(X_t,y_t)$ and the associated Jacobians change from step to step so that a \emph{frozen} operator analysis becomes valid.

\paragraph{Additional assumptions needed in online RL.}
To obtain a theorem genuinely analogous to Theorem~\ref{thm:adamtd_exact_schur} (i.e., a Schur-stability characterization of a fixed linear recursion), one needs an online counterpart of the ``terminal freezing'' assumption that simultaneously controls: (i) the sampling distribution induced by the replay buffer, (ii) the actor/target-network drift that defines the bootstrap targets, and (iii) non-smooth target operators such as $\min(\cdot,\cdot)$.

A sufficient set of assumptions is as follows.

\begin{assumption}
\label{as:replay-quasistationary}
There exists $t_0$ and a fixed empirical distribution $\bar\mu$ over transitions such that for all $t\ge t_0$,
mini-batches $X_t$ may be treated as i.i.d.\ draws from $\bar\mu$ up to higher-order effects.
Equivalently, the replay distribution drift is $o(\eta)$ in the sense that $\bar\mu_t=\bar\mu+o(\eta)$ for $t\ge t_0$ (e.g., under a large buffer and slowly changing policy).
\end{assumption}

\begin{assumption}
\label{as:actor-target-freeze}
For $t\ge t_0$, the parameters governing the target mapping $\mathcal{T}_t$ (actor parameters $\phi_t$, target actor $\tilde\phi_t$, target critics $\tilde\theta_t$, and possibly temperature $\alpha_t$) evolve on a slower time scale than the critic step:
\[
\|\phi_{t+1}-\phi_t\|_2=o(\eta),
\qquad
\|\tilde\phi_{t+1}-\tilde\phi_t\|_2=o(\eta),
\qquad
\|\tilde\theta_{t+1}-\tilde\theta_t\|_2=o(\eta).
\]
In particular, this covers (i) delayed actor updates (as in TD3), and (ii) Polyak target updates with small coefficient, provided the critic has entered a regime where $\|\theta_t-\tilde\theta_t\|$ is already small.
\end{assumption}

\begin{assumption}
\label{as:target-branch-stability}
Any non-smooth component of the target mapping is locally stable.
For TD3's clipped double $Q$ target, assume there exists a margin $\Delta_{\min}^{\mathrm{min}}>0$ such that on all next-state/next-action points encountered in the terminal regime the active branch of the $\min$ is unique, i.e.
\[
\bigl|Q_{\tilde\theta^{(1)}_t}(s',a')-Q_{\tilde\theta^{(2)}_t}(s',a')\bigr|
\ge \Delta_{\min}^{\mathrm{min}},
\]
so the argmin does not switch under $o(\eta)$ perturbations.
For SAC, where the target uses a stochastic policy rather than an argmax, this assumption is typically replaced by local Lipschitz regularity of the policy reparameterization map and $\log\pi_\phi(\cdot|s)$ in $(\phi,s)$.
\end{assumption}

\begin{assumption}
\label{as:online-frozen-precond}
Analogously to Assumption~\ref{as:terminal-freezing}, for $t\ge t_0$ we may treat $D_t\simeq D$ and $Z_t(\cdot)\simeq Z(\cdot)$ up to higher-order effects, and the relevant Jacobians are uniformly bounded on the sampled support.
\end{assumption}

Assumptions~\ref{as:replay-quasistationary}--\ref{as:online-frozen-precond} are the online counterparts of the offline ``frozen terminal regime'' used above.
They formalize the empirical situation in which (a) the replay buffer behaves like a fixed dataset, (b) the actor and target networks are effectively constant over the critic's local linearization window, and (c) non-smooth target operators do not switch branches.

\paragraph{What changes relative to Theorem~\ref{thm:adamtd_exact_schur}?}
There are two structural differences in TD3/SAC versus the simplified bootstrapped target $y_t=r+\gamma Q_{\theta_t}(X_t')$ analyzed above:

\emph{(i) Target networks attenuate self-excitation.}
In TD3/SAC the target often uses \emph{target} critic parameters $\tilde\theta_t$ rather than $\theta_t$ itself.
At the level of the first-order TD error propagation (Lemma~\ref{lem:one-step-td-update}), this changes the strength with which a critic update feeds back into the next-step targets.
A convenient abstraction is to introduce an \emph{effective bootstrap-coupling coefficient} $\chi\in[0,1]$ such that, in the terminal regime,
\begin{equation}
\label{eq:chi-coupling}
Q_{\tilde\theta_{t+1}}(\bar X')-Q_{\tilde\theta_t}(\bar X')
\;=\;
-\eta\,\chi\,Z(\bar X')^\top \Delta_t \;+\; o(\eta).
\end{equation}
The cases of interest are:
\(
\chi=1   ,
\chi=0  ,
\chi\approx \tau.
\)
With Eq.~\eqref{eq:chi-coupling}, the same derivation as in Lemma~\ref{lem:self-excitation-operator} yields the \emph{modified} frozen operator
\begin{equation}
\label{eq:S-online}
\mathsf S_{\chi}
\;\triangleq\;
\chi\,\gamma\,\mathsf K(\bar X',X)\;-\;\mathsf K(X,X),
\qquad
\mathsf K(X_1,X_2)=Z(X_1)^\top D\,Z(X_2).
\end{equation}
Thus, the Schur stability condition remains the right object, but the ``self-excitation'' term is weakened when $\chi<1$, which is consistent with the common intuition that target networks stabilize TD learning.

\emph{(ii) The online recursion is stochastic and weakly time-varying.}
Even if the replay distribution is quasi-stationary, critic updates use random mini-batches and (in TD3/SAC) random next-action sampling or smoothing noise.
Consequently, the TD error dynamics inherit a martingale-like perturbation term and a small drift term from residual nonstationarity.
One should therefore expect the \emph{deterministic} recursion in Theorem~\ref{thm:adamtd_exact_schur} to be replaced by a \emph{stochastic} linear system whose stability conclusions are correspondingly weaker.

\paragraph{A Schur-type online statement.}
Under Assumptions~\ref{as:replay-quasistationary}--\ref{as:online-frozen-precond}, the same algebra as in the proof of Theorem~\ref{thm:adamtd_exact_schur} yields, for $t\ge t_0$,
\begin{equation}
\label{eq:online-second-order}
\mathbf e_{t+1}
=
\Bigl((1+\beta_1)I+\eta(1-\beta_1)\mathsf{S}_{\chi}\Bigr)\mathbf e_t
-
\beta_1 \mathbf e_{t-1}
+
\eta\,\boldsymbol{\xi}_{t+1}
+
o(\eta),
\end{equation}
where $\boldsymbol{\xi}_{t+1}$ collects the stochastic effects of mini-batch sampling and target-action noise, and $\mathsf S_\chi$ is given by Eq.~\eqref{eq:S-online}.
Ignoring $o(\eta)$ and the noise term for the moment, define the companion matrix
\begin{equation}
\label{eq:A-online}
\mathsf A_\chi(\eta)\triangleq
\begin{bmatrix}
(1+\beta_1)I+\eta(1-\beta_1)\mathsf S_\chi & -\beta_1 I\\
I & 0
\end{bmatrix}.
\end{equation}
Then the exact Schur criterion of Theorem~\ref{thm:adamtd_exact_schur} applies verbatim to the \emph{frozen mean system}:
\[
\mathbf z_{t+1}=\mathsf A_\chi(\eta)\mathbf z_t,
\qquad
\mathbf z_t=\begin{bmatrix}\mathbf e_t\\ \mathbf e_{t-1}\end{bmatrix}.
\]
In particular, $\rho(\mathsf A_\chi(\eta))<1$ is the sharp condition for exponential stability of the noiseless frozen recursion.
Equation~\eqref{eq:online-second-order} shows that it is clear what is modified in the conclusions:

\begin{itemize}
\item \textit{Exponential convergence to zero is no longer the generic guarantee under constant stepsizes.}
Even when $\rho(\mathsf A_\chi(\eta))<1$, the additive perturbation $\eta\boldsymbol{\xi}_{t+1}$ typically implies convergence only to a stationary neighborhood whose scale is controlled by $\eta$ (or by the noise variance), unless $\eta\to 0$.
Thus, the online analogue of ``$\mathbf e_t\to 0$ exponentially'' becomes a \emph{mean/mean-square stability} or \emph{bounded tracking} statement.

\item \textit{Uniform (robust) Schur stability is the natural requirement under slow drift.}
If $\mathsf S_{\chi}$ is not exactly constant but satisfies $\mathsf S_{\chi,t}=\mathsf S_{\chi}+o(1)$ (e.g., due to slow replay/policy drift), one typically needs a margin condition such as
$\rho(\mathsf A_\chi(\eta))\le 1-\kappa$ for some $\kappa>0$,
so that small time-variations do not destroy stability.

\item \textit{TD3/SAC modify $\mathsf S$ through both $\chi$ and the definition of $\bar X'$.}
For TD3, $\bar X'$ contains target-policy actions with smoothing noise and the target involves a stable $\min$ branch (Assumption~\ref{as:target-branch-stability});
for SAC, $\bar X'$ is induced by the stochastic policy and the target includes an additional entropy term (absorbed into $y_t$), which changes the effective residual vector but not the Jacobian factorization Eq.~\eqref{eq:online-gradient-factorization}.
In both cases, the same preconditioned Gram structure Eq.~\eqref{eq:S-online} persists once the target mapping is frozen.

\item \textit{Twin critics yield either decoupled or piecewise-coupled linear systems.}
If the two critics are updated independently given a fixed target, one may apply the analysis to each critic separately (with its own $\mathsf S_\chi$).
If the target uses $\min(Q^{(1)},Q^{(2)})$, then the coupled system is piecewise linear; Assumption~\ref{as:target-branch-stability} ensures one linear branch dominates locally so that a single frozen $\mathsf S_\chi$ is valid.
\end{itemize}

To extend Theorem~\ref{thm:adamtd_exact_schur} to online TD3/SAC in a theoretically clean way, one must augment the offline freezing assumptions with (i) quasi-stationary replay sampling, (ii) two-time-scale freezing of actor and target networks, and (iii) stability of any non-smooth target operator (e.g., the $\min$ in TD3).
Under these conditions, the same Schur-stability mechanism emerges, with a modified self-excitation operator $\mathsf S_\chi$ that explicitly accounts for target-network coupling via $\chi$.
The main qualitative change is that, because online updates are inherently stochastic and weakly nonstationary, the sharp deterministic statement ``$\rho(\mathsf A(\eta))<1 \iff \mathbf e_t\to 0$ exponentially'' becomes a stability/robustness criterion for the \emph{mean} dynamics and a sufficient condition for bounded tracking (or convergence under diminishing stepsizes) in the full online algorithm.

\newpage
\section{Sufficient conditions for $\mathsf S$ to be Hurwitz}
\label{app:hurwitz}

This appendix provides proofs for Sec.~\ref{sec:hurwitz-sufficient}.
We work throughout in the frozen regime of Assumption~\ref{as:terminal-freezing},
where $D$, $Z(\cdot)$, and the greedy target set ${} X^{'}$ can be treated as constants.
By Theorem~\ref{thm:adamtd_hurwitz_sufficient}, once we establish that $\mathsf S$ is Hurwitz,
the linearized TD-error dynamics is exponentially stable for sufficiently small stepsize.
In this section, we have following results:
(i) Lemma~\ref{lem:sym-part-test} reduces Hurwitzness of $\mathsf S$ to negativity of its symmetric part,
making the ``damping vs.\ excitation'' trade-off explicit.
(ii) Proposition~\ref{prop:near-ortho-hurwitz} turns that trade-off into a checkable condition involving
$\|\Phi^\top\Phi-I\|_2$ and $\|\Phi\|_2,\|\Phi_*\|_2$.
(iii) Lemma~\ref{lem:phi-opnorm} shows how input clipping/normalization and spectral constraints control
$\|\Phi\|_2$ and $\|\Phi_*\|_2$.
(iv) Proposition~\ref{prop:gram-bound} bounds $\|\Phi^\top\Phi-I\|_2$ by a parameter-only
orthogonality and a data-induced separation term.

\subsection{A symmetric-part sufficient condition}
\label{app:sym-part}

We first show that it suffices to make the symmetric part of $\mathsf S$ negative definite.

\begin{lemma}[Symmetric-part negativity $\Rightarrow$ Hurwitz]
\label{lem:sym-part-test}
Let $A\triangleq \mathsf K({} X^{'},X)$ and $B\triangleq \mathsf K(X,X)$, so that
$\mathsf S=\gamma A-B$.
Define $A_s\triangleq \tfrac12(A+A^\top)$ and $\mathsf S_s\triangleq \tfrac12(\mathsf S+\mathsf S^\top)$.
Then
\begin{equation}
\label{eq:Ss-form-app}
\mathsf S_s=\gamma A_s-B.
\end{equation}
If $B\succ \gamma A_s$ (equivalently $\mathsf S_s\prec 0$), then $\mathsf S$ is Hurwitz.
\end{lemma}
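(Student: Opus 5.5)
The argument has two parts: an algebraic identity for the symmetric part, followed by a Rayleigh-quotient bound supplied by Lemma~\ref{lem:symmetric-part}.

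\emph{Symmetric-part identity.} Since $B=\mathsf K(X,X)=Z(X)^\top D Z(X)$ and $D$ is diagonal, hence symmetric, we have $B^\top=B$. Transposition is linear, so
\[
\mathsf S^\top=\gamma A^\top-B,
\]
and averaging gives $\mathsf S_s=\tfrac12(\mathsf S+\mathsf S^\top)=\gamma A_s-B$. This is Eq.~\eqref{eq:Ss-form-app}. The same identity shows that $B\succ\gamma A_s$ is literally the statement $\mathsf S_s=\gamma A_s-B\prec 0$, which justifies the word ``equivalently'' in the lemma.

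\emph{Hurwitzness.} Take any eigenpair $(\lambda,v)$ of $\mathsf S$ with $v\in\mathbb C^M\setminus\{0\}$. Lemma~\ref{lem:symmetric-part} gives
\[
\Re(\lambda)=\frac{\lambda+\bar\lambda}{2}=\frac{v^{'}\mathsf S_s v}{v^{'}v}.
\]
Because $\mathsf S_s$ is real symmetric and negative definite, its quadratic form is strictly negative on nonzero complex vectors. To see this, write $v=x+iy$ with $x,y$ real. Then
\[
v^{'}\mathsf S_s v=x^\top\mathsf S_s x+y^\top\mathsf S_s y,
\]
since the cross terms cancel by symmetry. At least one of $x,y$ is nonzero, so the sum is strictly negative. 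Equivalently, $v^{'}\mathsf S_s v\le\lambda_{\max}(\mathsf S_s)\,v^{'}v$ with $\lambda_{\max}(\mathsf S_s)<0$. Hence $\Re(\lambda)<0$ for every eigenvalue, which means $\mathsf S$ is Hurwitz.

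The only step needing care is this last passage from real negative definiteness to a strict bound on a complex Hermitian form. The real/imaginary split handles it. Everything else is direct substitution.
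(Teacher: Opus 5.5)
Your proof is correct and follows the paper's argument: symmetry of $B=Z(X)^\top D Z(X)$ gives the identity $\mathsf S_s=\gamma A_s-B$, and Lemma~\ref{lem:symmetric-part} then gives $\Re(\lambda)\le\lambda_{\max}(\mathsf S_s)<0$ for every eigenvalue of $\mathsf S$. Splitting $v=x+iy$ only makes explicit the complex Rayleigh-quotient step, which the paper uses without comment.
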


\paragraph{Proof sketch.}
We compute
\[
\mathsf S_s=\frac{\mathsf S+\mathsf S^\top}{2}
=\frac{\gamma A-B+\gamma A^\top-B^\top}{2}
=\gamma\frac{A+A^\top}{2}-B.
\]
Moreover $B$ is symmetric since $B=\mathsf K(X,X)=Z(X)^\top D Z(X)$ with $D=D^\top\succeq 0$.
Thus $B\succ \gamma A_s$ implies $\mathsf S_s\prec 0$.
Finally, if $\mathsf S_s\prec 0$ then $\lambda_{\max}(\mathsf S_s)<0$, and the standard symmetric-part bound
(e.g.\ Lemma~\ref{lem:symmetric-part} in the main text) gives
$\Re(\lambda)\le \lambda_{\max}(\mathsf S_s)<0$ for all $\lambda\in\mathrm{spec}(\mathsf S)$,
hence $\mathsf S$ is Hurwitz.
\qed

\begin{remark}
$\mathsf S$ is the linear operator that maps (EMA-smoothed) TD errors back into next-step TD errors.
The term $-B=-\mathsf K(X,X)$ corresponds to the usual descent-induced contraction on current predictions
(damping), while $\gamma A=\gamma\mathsf K({} X^{'},X)$ is the bootstrapped feedback through greedy targets.
Lemma~\ref{lem:sym-part-test} implies if damping dominates excitation in the \emph{symmetric} energy sense for every
direction, then no error mode can grow, so the feedback loop is stable.
\end{remark}

\subsection{A norm/Gram-deviation sufficient condition}
\label{app:near-ortho}

Next we prove a practical condition implying $B\succ \gamma A_s$ via spectral norm bounds.


\begin{proposition}
\label{prop:near-ortho-hurwitz}
Define $\Phi\triangleq D^{1/2}Z(X)$ and $\Phi_*\triangleq D^{1/2}Z({} X^{'})$.
Then $\mathsf S=\gamma\,\Phi_*^\top\Phi-\Phi^\top\Phi$.
Assume $\|\Phi\|_2\le \Lambda$, $\|\Phi_*\|_2\le \Lambda_*$, and
\begin{equation}
\label{eq:near-ortho-cond-app}
\|\Phi^\top\Phi-I\|_2 < 1-\gamma\Lambda\Lambda_*.
\end{equation}
Then $\mathsf S$ is Hurwitz.
\end{proposition}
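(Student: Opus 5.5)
The plan is to reduce the claim to Lemma~\ref{lem:sym-part-test}. It therefore suffices to show $\mathsf S_s=\gamma A_s-B\prec 0$, where
\[
A=\mathsf K(X',X)=\Phi_*^\top\Phi, \qquad B=\mathsf K(X,X)=\Phi^\top\Phi .
\]

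First I would verify the factorization $\mathsf S=\gamma\Phi_*^\top\Phi-\Phi^\top\Phi$. Since $D$ is diagonal with positive entries $1/(\sqrt{\hat v}+\epsilon)$, the square root $D^{1/2}$ is well defined and symmetric. Hence $Z(X_1)^\top D Z(X_2)=(D^{1/2}Z(X_1))^\top(D^{1/2}Z(X_2))$, which gives the identity directly from Eq.~\eqref{eq:TD dynamics-S1}.

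Next I would bound the two terms of the quadratic form separately. Take any real unit vector $x$; real vectors suffice because $\mathsf S_s$ is real symmetric.

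For the damping term, $x^\top Bx=x^\top x+x^\top(\Phi^\top\Phi-I)x$. This is at least $1-\|\Phi^\top\Phi-I\|_2$, by the Rayleigh bound for the symmetric matrix $\Phi^\top\Phi-I$.

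For the excitation term, $x^\top A_s x=x^\top A x=\langle \Phi_* x,\Phi x\rangle$. By Cauchy--Schwarz this is at most $\|\Phi_*\|_2\|\Phi\|_2\le \Lambda\Lambda_*$.

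Combining the two bounds,
\[
x^\top \mathsf S_s x\le \gamma\Lambda\Lambda_*-1+\|\Phi^\top\Phi-I\|_2 ,
\]
and this is strictly negative by assumption \eqref{eq:near-ortho-cond-app}. Thus $\lambda_{\max}(\mathsf S_s)<0$, and Lemma~\ref{lem:sym-part-test} (equivalently Lemma~\ref{lem:symmetric-part}) yields $\Re(\lambda)<0$ for every eigenvalue of $\mathsf S$. Proposition~\ref{prop:phi-orth-suff-hurwitz} then follows by taking $\Lambda=\|\Phi\|_2$ and $\Lambda_*=\|\Phi_*\|_2$.

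I do not expect a genuine obstacle here. The one step that needs care is checking that the lower bound on $x^\top Bx$ is legitimate. The matrices $\Phi$ and $\Phi_*$ have the same column dimension $M$, so $\Phi^\top\Phi-I$ is $M\times M$. Its spectral norm controls the deviation of $B$ from $I$ uniformly over unit vectors, which is exactly what the bound uses. This holds regardless of whether $P\ge M$, although when $P<M$ the hypothesis forces $\|\Phi^\top\Phi-I\|_2\ge 1$ and so becomes vacuous. That degenerate case is the motivation for the row-Gram variant mentioned in the main text.
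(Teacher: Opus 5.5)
Your proof is correct and follows essentially the same route as the paper. Both arguments reduce to Lemma~\ref{lem:sym-part-test}, lower-bound the damping term by $1-\|\Phi^\top\Phi-I\|_2$, and upper-bound the symmetrized bootstrap term by $\gamma\Lambda\Lambda_*$; the paper states these bounds as Weyl's inequality for $\lambda_{\min}(\Phi^\top\Phi)$ and $\|\tfrac12(A+A^\top)\|_2\le\|A\|_2$, whereas you bound the Rayleigh quotient over unit vectors, which is the same argument.
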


\paragraph{Proof}
We will verify the symmetric-part condition of Lemma~\ref{lem:sym-part-test}.
Let $A\triangleq \Phi_*^\top\Phi$ and $B\triangleq \Phi^\top\Phi$ so that $\mathsf S=\gamma A-B$.
The symmetric part is $\mathsf S_s=\gamma\tfrac12(A+A^\top)-B$.
Let $C\triangleq \|B-I\|_2=\|\Phi^\top\Phi-I\|_2$.
By Weyl's inequality, $\lambda_{\min}(B)\ge 1-C$.
Also,
\[
\Bigl\|\tfrac12(A+A^\top)\Bigr\|_2 \le \|A\|_2 \le \|\Phi_*\|_2\|\Phi\|_2 \le \Lambda_*\Lambda.
\]
Therefore,
\[
\lambda_{\max}(\mathsf S_s)
\le
\gamma\Bigl\|\tfrac12(A+A^\top)\Bigr\|_2 - \lambda_{\min}(B)
\le
\gamma\Lambda_*\Lambda-(1-C).
\]
Condition Eq.~\eqref{eq:near-ortho-cond-app} is exactly $1-C>\gamma\Lambda\Lambda_*$, hence
$\lambda_{\max}(\mathsf S_s)<0$, i.e.\ $\mathsf S_s\prec 0$.
Lemma~\ref{lem:sym-part-test} then implies $\mathsf S$ is Hurwitz.
\qed

$\Phi^\top\Phi$ is the Gram matrix of Adam-whitened Jacobian features.
When $\Phi^\top\Phi\approx I$, the ``damping'' term $-\Phi^\top\Phi$ behaves like an \emph{isotropic contraction}
on TD errors.
The excitation term $\gamma\Phi_*^\top\Phi$ can still inject errors, but its symmetric-energy contribution is bounded
by $\gamma\|\Phi_*\|_2\|\Phi\|_2$.
Thus Eq.~\eqref{eq:near-ortho-cond-app} formalizes: ``keep the features close to orthonormal, and keep their scale bounded,
so that bootstrapping cannot overpower damping.''


\paragraph{Under-parameterized Regime: A complementary row-Gram condition for $M>P$.}
When $M>P$, the column-Gram condition in Proposition C.3 cannot hold, because
$\Phi^\top\Phi$ is singular and hence $\|\Phi^\top\Phi-I_M\|_2\ge 1$.
The right substitute in this regime is to control the row Gram matrix
$\Phi\Phi^\top$. This does not necessarily make $S$ Hurwitz, but it still yields
the sharp small-stepsize conclusion for the augmented Adam recursion.

\begin{proposition}
\label{pro:row-Gram}
Assume the same norm bounds as in Proposition C.3, namely
\[
\|\Phi\|_2\le \Lambda,
\qquad
\|\Phi^*\|_2\le \Lambda^*,
\]
but replace the column-Gram condition by
\begin{equation}
\|\Phi\Phi^\top-I_P\|_2 < 1-\gamma\Lambda\Lambda^*.
\label{eq:row-gram-C3}
\end{equation}
Then every nonzero eigenvalue of
\[
S=\gamma\Phi^{*\top}\Phi-\Phi^\top\Phi
\]
has strictly negative real part. Consequently, if
\[
A(\eta):=
\begin{bmatrix}
(1+\beta_1)I_M+\eta(1-\beta_1)S & -\beta_1 I_M\\
I_M & 0
\end{bmatrix}
\]
is the frozen companion matrix from Theorem~4.1 with the $o(\eta)$ term omitted, then there exists $\eta_0>0$ such that
\[
\rho(A(\eta))\le 1,
\qquad
\forall\,\eta\in(0,\eta_0).
\]
Moreover,
\[
\rho(A(\eta))<1 \ \text{for all }\eta\in(0,\eta_0)
\quad\Longleftrightarrow\quad
0\notin \mathrm{spec}(S).
\]
In particular, if $M>P$, then $0\in\mathrm{spec}(S)$ and hence
\[
\rho(A(\eta))=1,
\qquad
\forall\,\eta\in(0,\eta_0),
\]
implies the frozen linearized system still converges rather than diverges.
\end{proposition}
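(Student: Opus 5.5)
The plan is to move the spectral question from the $M\times M$ matrix $\mathsf S$ to a $P\times P$ matrix, where the row-Gram condition \eqref{eq:row-gram-C3} can be used. Write $C\triangleq \gamma\Phi_*^\top-\Phi^\top\in\mathbb{R}^{M\times P}$, so that $\mathsf S=C\Phi$. Define $T\triangleq \Phi C=\gamma\Phi\Phi_*^\top-\Phi\Phi^\top\in\mathbb{R}^{P\times P}$. By the standard fact that $XY$ and $YX$ have the same nonzero eigenvalues with the same multiplicities, it suffices to show that $T$ is Hurwitz.

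For this I will repeat the argument of Proposition~\ref{prop:near-ortho-hurwitz} with the roles of the Gram matrices exchanged. The symmetric part is $T_s=\gamma\tfrac12(\Phi\Phi_*^\top+\Phi_*\Phi^\top)-\Phi\Phi^\top$. Set $c\triangleq\|\Phi\Phi^\top-I_P\|_2$. Weyl's inequality gives $\lambda_{\min}(\Phi\Phi^\top)\ge 1-c$, and $\|\Phi\Phi_*^\top\|_2\le\Lambda\Lambda_*$. Hence $\lambda_{\max}(T_s)\le\gamma\Lambda\Lambda_*-(1-c)<0$ by \eqref{eq:row-gram-C3}. Lemma~\ref{lem:symmetric-part} then gives $\Re(\lambda)<0$ for every $\lambda\in\mathrm{spec}(T)$. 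This proves the first claim, and it also shows that $T$ is invertible.

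For the companion matrix I will reuse the modal reduction from the proof of Theorem~\ref{thm:adamtd_hurwitz_sufficient}: each eigenvalue $r$ of $\mathsf A(\eta)$ solves $p(r,\eta,\lambda)=0$ for some $\lambda\in\mathrm{spec}(\mathsf S)$. There are two kinds of modes.
\begin{itemize}
\item For $\lambda\neq0$ we have $\Re(\lambda)<0$. The same implicit-function expansion $|r_1(\eta)|^2=1+2\eta\Re(\lambda)+O(\eta^2)$, together with continuity of the branch $r_2$ near $\beta_1$, gives $|r|<1$ for $\eta<\eta_0$. Because the spectrum is finite, a single $\eta_0$ works for all such modes.
\item For $\lambda=0$ the polynomial factors exactly as $(r-1)(r-\beta_1)$ for every $\eta$.
\end{itemize}
So $\rho(\mathsf A(\eta))\le1$ on $(0,\eta_0)$, with equality if and only if $0\in\mathrm{spec}(\mathsf S)$, which is the stated equivalence. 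If $M>P$, then $\mathrm{rank}\,\mathsf S\le P<M$, so $0\in\mathrm{spec}(\mathsf S)$ and $\rho(\mathsf A(\eta))=1$.

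The main obstacle is the final claim that the system still converges rather than diverges. A unit root alone does not exclude polynomial growth; that would happen if $0$ were a defective eigenvalue of $\mathsf S$. I will show that $0$ is semisimple, i.e.\ $\ker \mathsf S^2=\ker \mathsf S$. Suppose $\mathsf S^2v=0$, i.e.\ $C\Phi C\Phi v=0$. Multiplying by $\Phi$ on the left gives $T^2(\Phi v)=0$. Invertibility of $T$ then gives $\Phi v=0$, hence $\mathsf S v=0$.

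It follows that $\mathbb{R}^M=\ker\mathsf S\oplus\mathrm{range}\,\mathsf S$ is an $\mathsf S$-invariant splitting, and the recursion decouples along it.
\begin{itemize}
\item On $\mathrm{range}\,\mathsf S$, all eigenvalues of $\mathsf S$ are nonzero, so the restricted companion matrix has spectral radius $<1$ and that component decays exponentially.
\item On $\ker\mathsf S$, the recursion reduces to $\mathbf e_{t+1}=(1+\beta_1)\mathbf e_t-\beta_1\mathbf e_{t-1}$. Its roots $1$ and $\beta_1$ are simple and distinct, so $\mathbf e_t$ converges to a constant vector, which is bounded.
\end{itemize}
Hence the frozen iterates converge, to a fixed point, rather than diverge.
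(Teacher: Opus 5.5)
Your proof is correct, and up to the final claim it follows the paper's route exactly. You factor $S=C\Phi$ and pass to $T=\Phi C$ (the paper's $\widetilde S$), which has the same nonzero spectrum. You then use the symmetric-part/Weyl bound $\lambda_{\max}(T_s)\le\gamma\Lambda\Lambda^*-(1-\|\Phi\Phi^\top-I_P\|_2)<0$, reduce to the modal quadratics $p_\lambda$ with the exact factor $(r-1)(r-\beta_1)$ at $\lambda=0$, and finish with the rank bound when $M>P$.

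One small point concerns the implication $0\in\mathrm{spec}(S)\Rightarrow\rho(A(\eta))=1$. The reduction you borrow from Theorem~\ref{thm:adamtd_hurwitz_sufficient} only says that every eigenvalue of $A(\eta)$ is a root of some $p_\lambda$, so this implication needs the converse. The paper gets it from the full factorization $\det(rI-A(\eta))=\prod_j p_{\lambda_j}(r)$. You can get it directly, since $A(\eta)$ fixes $\begin{bmatrix}v\\ v\end{bmatrix}$ for every $v\in\ker S$.

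The genuine difference is the last sentence of the proposition. The paper stops at $\rho(A(\eta))=1$ and simply asserts that the frozen system converges. You correctly note that a unit root alone allows linear growth if $0$ is a defective eigenvalue of $S$. You rule this out by showing $\ker S^2=\ker S$: multiply $C\Phi C\Phi v=0$ on the left by $\Phi$ and use the invertibility of $T$. You then decouple the recursion along $\ker S\oplus\mathrm{range}\,S$. This costs one extra line of linear algebra but supplies the justification the paper omits. It also identifies the limit: the TD error converges to a constant vector in $\ker S$, generally nonzero, rather than to $0$.
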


\begin{proof}
Let
\[
\widetilde S:=\gamma\Phi\Phi^{*\top}-\Phi\Phi^\top.
\]
We first show that $\widetilde S$ is Hurwitz. By Lemma~C.1, it suffices to control its symmetric part:
\[
\frac{\widetilde S+\widetilde S^\top}{2}
=
\gamma\,\frac{\Phi\Phi^{*\top}+\Phi^*\Phi^\top}{2}-\Phi\Phi^\top.
\]
Set $B_r:=\Phi\Phi^\top$. By Weyl's inequality and \eqref{eq:row-gram-C3},
\[
\lambda_{\min}(B_r)\ge 1-\|B_r-I_P\|_2 > \gamma\Lambda\Lambda^*.
\]
Also,
\[
\left\|\frac{\Phi\Phi^{*\top}+\Phi^*\Phi^\top}{2}\right\|_2
\le
\|\Phi\Phi^{*\top}\|_2
\le
\|\Phi\|_2\|\Phi^*\|_2
\le
\Lambda\Lambda^*.
\]
Hence
\[
\lambda_{\max}\!\left(\frac{\widetilde S+\widetilde S^\top}{2}\right)
\le
\gamma\Lambda\Lambda^*-\lambda_{\min}(B_r)
<0.
\]
Therefore $\widetilde S$ is Hurwitz.

Next write
\[
U:=\gamma\Phi^{*\top}-\Phi^\top\in\mathbb R^{M\times P},
\qquad
V:=\Phi\in\mathbb R^{P\times M},
\]
so that
\[
S=UV,
\qquad
\widetilde S=VU.
\]
We claim that $S$ and $\widetilde S$ have the same nonzero eigenvalues.
Indeed, if $Sx=\lambda x$ with $\lambda\neq 0$, then $Vx\neq 0$; otherwise
$Sx=UVx=0$, contradicting $\lambda\neq 0$. Thus
\[
\widetilde S(Vx)=VU(Vx)=V(UVx)=V(Sx)=\lambda(Vx),
\]
so $\lambda\in\mathrm{spec}(\widetilde S)$.
Conversely, if $\widetilde S y=\lambda y$ with $\lambda\neq 0$, then $Uy\neq 0$ and
\[
S(Uy)=UV(Uy)=U(VUy)=U(\widetilde S y)=\lambda(Uy),
\]
so $\lambda\in\mathrm{spec}(S)$.
Hence $S$ and $\widetilde S$ share exactly the same nonzero spectrum. Since
$\widetilde S$ is Hurwitz, every nonzero $\lambda\in\mathrm{spec}(S)$ satisfies
\[
\Re(\lambda)<0.
\]

We now analyze the eigenvalues of $A(\eta)$. A direct block-determinant computation gives
\[
\det(rI_{2M}-A(\eta))
=
\det\!\Bigl(
r^2I_M
-
\bigl((1+\beta_1)I_M+\eta(1-\beta_1)S\bigr)r
+
\beta_1 I_M
\Bigr),
\]
where for $r\neq 0$ this is the Schur complement of the lower-right block
$rI_M$, and hence the identity holds for all $r$ by polynomial continuation.
By Schur triangularization of $S$, if $\lambda_1,\dots,\lambda_M$ are the
eigenvalues of $S$ (counting algebraic multiplicity), then
\[
\det(rI_{2M}-A(\eta))
=
\prod_{j=1}^M
\Bigl(
r^2-\bigl(1+\beta_1+\eta(1-\beta_1)\lambda_j\bigr)r+\beta_1
\Bigr).
\]
Therefore every eigenvalue $r$ of $A(\eta)$ is obtained from some
$\lambda\in\mathrm{spec}(S)$ through
\[
p_\lambda(r;\eta):=
r^2-\bigl(1+\beta_1+\eta(1-\beta_1)\lambda\bigr)r+\beta_1=0.
\]

If $\lambda=0$, then
\[
p_0(r;\eta)=r^2-(1+\beta_1)r+\beta_1=(r-1)(r-\beta_1),
\]
so $r=1$ and $r=\beta_1$ are exact roots for every $\eta>0$.

If instead $\Re(\lambda)<0$, then at $\eta=0$ the two roots are
$r=1$ and $r=\beta_1$, both simple because $0\le \beta_1<1$.
Let $r_1(\eta)$ and $r_2(\eta)$ be the corresponding local root branches with
\[
r_1(0)=1,
\qquad
r_2(0)=\beta_1.
\]
Differentiating $p_\lambda(r_k(\eta);\eta)=0$ at $\eta=0$ yields
\[
r_1'(0)=\lambda,
\qquad
r_2'(0)=-\beta_1\lambda.
\]
Hence
\[
r_1(\eta)=1+\eta\lambda+O(\eta^2),
\qquad
r_2(\eta)=\beta_1-\eta\beta_1\lambda+O(\eta^2).
\]
Since $\Re(\lambda)<0$,
\[
|r_1(\eta)|^2
=
1+2\eta\Re(\lambda)+O(\eta^2)
<1
\]
for all sufficiently small $\eta>0$. Also, continuity and $0\le\beta_1<1$ imply
\[
|r_2(\eta)|<1
\]
for all sufficiently small $\eta>0$.

Because $S$ has only finitely many eigenvalues, there exists a common
$\eta_0>0$ such that, for every nonzero $\lambda\in\mathrm{spec}(S)$,
both roots of $p_\lambda(r;\eta)$ lie strictly inside the unit disk whenever
$\eta\in(0,\eta_0)$. Combining this with the exact factor for $\lambda=0$,
we obtain
\[
\rho(A(\eta))\le 1,
\qquad
\forall\,\eta\in(0,\eta_0),
\]
and
\[
\rho(A(\eta))<1 \ \text{for all }\eta\in(0,\eta_0)
\quad\Longleftrightarrow\quad
0\notin\mathrm{spec}(S).
\]

Finally, if $M>P$, then
\[
\mathrm{rank}(S)
=
\mathrm{rank}\!\bigl((\gamma\Phi^{*\top}-\Phi^\top)\Phi\bigr)
\le
\mathrm{rank}(\Phi)
\le
P<M,
\]
so $0\in\mathrm{spec}(S)$. Hence $\rho(A(\eta))=1$ for all
$\eta\in(0,\eta_0)$. 
\end{proof}



\subsection{Bounding $\|\Phi\|_2$ via clipping/normalization and spectral constraints}
\label{app:phi-bound}

We now derive explicit upper bounds for the \emph{scale term}
$\|\Phi\|_2\|\Phi_*\|_2$ that appears in Proposition~\ref{prop:near-ortho-hurwitz}.
Recall
\[
\Phi \triangleq D^{1/2}Z(X)\in\mathbb{R}^{P\times M},
\qquad
\Phi_* \triangleq D^{1/2}Z({} X^{'})\in\mathbb{R}^{P\times M}.
\]
Two generic inequalities will be useful:
\begin{align}
\label{eq:phi-opnorm-columns}
\|\Phi\|_2 \le \|\Phi\|_F
\le \sqrt{M}\,\max_{i\in[M]}\|\phi(x_i)\|_2,
\qquad
\phi(x)\triangleq D^{1/2}\nabla_{\theta}Q_{\theta}(x),
\\
\label{eq:phi-opnorm-gram}
\|\Phi\|_2^2=\lambda_{\max}(\Phi^\top\Phi)\le 1+\|\Phi^\top\Phi-I\|_2.
\end{align}
Eq.~\eqref{eq:phi-opnorm-columns} shows that bounding per-sample whitened Jacobian norms yields a $\sqrt{M}$-type scale bound.
Eq.~\eqref{eq:phi-opnorm-gram} shows that small Gram deviation
{automatically} implies a tight operator-norm bound.
Consider an $L$-layer feedforward Q-network, where $\varphi:\mathcal{X}\to\mathbb{R}^{d_0}$ denotes a fixed input feature map of the sample $x$. {In the simplest case, one may take $\varphi(x)=x$.}
\[
h_0(x)=\varphi(x),\quad
h_\ell(x)=\sigma(W_\ell h_{\ell-1}(x)+b_\ell)\ (\ell=1,\dots,L-1),\quad
Q_\theta(x)=w^\top h_{L-1}(x)+b_L.
\]
Assume for all relevant $x$ that
\[
\|h_0(x)\|_2\le R,\qquad
\|\mathrm{Diag}(\sigma'(u))\|_2\le \kappa\ \text{a.e.},\ \sigma(0)=0,
\qquad
\|W_\ell\|_2\le 1,\ \|b_\ell\|_2\le B_{\mathrm{bias}},\ \|w\|_2\le s,
\qquad
\|D^{1/2}\|_2\le d.
\]
For Adam-type preconditioners one may take $d\le \epsilon^{-1/2}$.

\begin{lemma}
\label{lem:phi-opnorm}
Define $H_0\triangleq R$ and $H_\ell\triangleq \kappa(H_{\ell-1}+B_{\mathrm{bias}})$ for $\ell=1,\dots,L-1$, and
\begin{equation}
\label{eq:Cgrad-def-app}
C_{\mathrm{grad}}^2 \triangleq
H_{L-1}^2 + 1
+
s^2\sum_{\ell=1}^{L-1}\kappa^{2(L-\ell)}\bigl(H_{\ell-1}^2+1\bigr).
\end{equation}
Let
\[
G \triangleq d\,C_{\mathrm{grad}}.
\]
Then for every relevant $x$,
\[
\|\nabla_\theta Q_\theta(x)\|_2 \le C_{\mathrm{grad}}
\quad\Longrightarrow\quad
\|\phi(x)\|_2=\|D^{1/2}\nabla_\theta Q_\theta(x)\|_2 \le G.
\]
Consequently, for any finite set $X=\{x_i\}_{i=1}^M$ and the greedy target set ${} X^{'}$ of the same size,
\begin{equation}
\label{eq:Phi-opnorm-app}
\|\Phi\|_2 \le \sqrt{M}\,G,
\qquad
\|\Phi_*\|_2 \le \sqrt{M}\,G,
\qquad
\|\Phi\|_2\,\|\Phi_*\|_2 \le M\,G^2.
\end{equation}
In particular, Proposition~\ref{prop:near-ortho-hurwitz} may take $\Lambda=\Lambda_*=\sqrt{M}\,G$.
\end{lemma}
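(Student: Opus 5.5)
The plan is to bound the parameter gradient $\nabla_\theta Q_\theta(x)$ layer by layer through backpropagation, using the forward-activation bounds $H_\ell$. Then we pass through $D^{1/2}$ and stack columns via Eq.~\eqref{eq:phi-opnorm-columns}.

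\textbf{Step 1: forward activations.} We show by induction that $\|h_\ell(x)\|_2\le H_\ell$. Since $\sigma(0)=0$ and $\|\mathrm{Diag}(\sigma')\|_2\le\kappa$ a.e., $\sigma$ is coordinatewise $\kappa$-Lipschitz, so $\|\sigma(u)\|_2\le\kappa\|u\|_2$. Therefore
\[
\|h_\ell\|_2\le\kappa\bigl(\|W_\ell\|_2\|h_{\ell-1}\|_2+\|b_\ell\|_2\bigr)\le\kappa(H_{\ell-1}+B_{\mathrm{bias}})=H_\ell,
\]
with base case $\|h_0\|_2\le R=H_0$.

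\textbf{Step 2: gradient blocks.} For the output layer, $\nabla_w Q=h_{L-1}$ and $\nabla_{b_L}Q=1$, which contribute $H_{L-1}^2+1$ to $\|\nabla_\theta Q\|_2^2$. For hidden layer $\ell$, write $u_\ell=W_\ell h_{\ell-1}+b_\ell$ and define the backpropagated signal
\[
\delta_\ell=\mathrm{Diag}(\sigma'(u_\ell))\,W_{\ell+1}^\top\cdots W_{L-1}^\top\mathrm{Diag}(\sigma'(u_{L-1}))\,w .
\]
Using $\|W_k\|_2\le1$, we get $\|\delta_\ell\|_2\le s\kappa^{L-\ell}$. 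The block gradients are $\nabla_{W_\ell}Q=\delta_\ell h_{\ell-1}^\top$ and $\nabla_{b_\ell}Q=\delta_\ell$. Since the Frobenius norm of a rank-one matrix is the product of the vector norms, these blocks contribute at most $s^2\kappa^{2(L-\ell)}(H_{\ell-1}^2+1)$. Summing all blocks gives exactly $C_{\mathrm{grad}}^2$ as defined in Eq.~\eqref{eq:Cgrad-def-app}.

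\textbf{Step 3: whitening and stacking.} We have $\|\phi(x)\|_2\le\|D^{1/2}\|_2\|\nabla_\theta Q\|_2\le dC_{\mathrm{grad}}=G$. Eq.~\eqref{eq:phi-opnorm-columns} applied to $X$, and likewise to $X'$ (whose points satisfy the same assumptions as ``relevant'' inputs), gives $\|\Phi\|_2,\|\Phi_*\|_2\le\sqrt M G$. The product bound $\|\Phi\|_2\|\Phi_*\|_2\le MG^2$ then follows, and so does the final choice $\Lambda=\Lambda_*=\sqrt MG$ in Proposition~\ref{prop:near-ortho-hurwitz}.

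\textbf{Expected obstacle.} The main point of care is the exponent bookkeeping in Step 2. Counting the activation derivatives from layer $\ell$ up to layer $L-1$ gives $L-\ell$ factors of $\kappa$, matching $\kappa^{2(L-\ell)}$ in Eq.~\eqref{eq:Cgrad-def-app}. The step also relies on $\sigma$ being differentiable a.e., as with ReLU; at kinks we use any element of the a.e. derivative bound.
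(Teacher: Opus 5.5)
Your proposal is correct and follows essentially the same route as the paper's proof. Both arguments use the inductive forward bound $\|h_\ell(x)\|_2\le H_\ell$, the backpropagation estimate $\|\delta_\ell(x)\|_2\le s\kappa^{L-\ell}$ (which the paper obtains from the recursion $\delta_{\ell-1}=\mathrm{Diag}(\sigma'(u_{\ell-1}))W_\ell^\top\delta_\ell$), blockwise Frobenius bounds summing to $C_{\mathrm{grad}}^2$, and finally multiplication by $\|D^{1/2}\|_2\le d$ followed by the column-stacking inequality $\|\Phi\|_2\le\|\Phi\|_F\le\sqrt{M}\max_i\|\phi(x_i)\|_2$.
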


\begin{proof}
Fix $x$ and write $u_\ell(x)=W_\ell h_{\ell-1}(x)+b_\ell$.
Since $\sigma$ is $\kappa$-Lipschitz and $\sigma(0)=0$,
\[
\|h_\ell(x)\|_2=\|\sigma(u_\ell(x))\|_2\le \kappa\|u_\ell(x)\|_2
\le \kappa\bigl(\|W_\ell\|_2\|h_{\ell-1}(x)\|_2+\|b_\ell\|_2\bigr).
\]
With $\|W_\ell\|_2\le 1$ and $\|b_\ell\|_2\le B_{\mathrm{bias}}$, the recursion defining $H_\ell$
implies $\|h_\ell(x)\|_2\le H_\ell$ for all $\ell$.

Standard backprop gives, for $\ell=1,\dots,L-1$,
\[
\frac{\partial Q_\theta(x)}{\partial W_\ell}
=
\delta_\ell(x)\,h_{\ell-1}(x)^\top,
\quad
\frac{\partial Q_\theta(x)}{\partial b_\ell}=\delta_\ell(x),
\quad
\frac{\partial Q_\theta(x)}{\partial w}=h_{L-1}(x),
\quad
\frac{\partial Q_\theta(x)}{\partial b_L}=1,
\]
where
\[
\delta_{L-1}(x)=\mathrm{Diag}(\sigma'(u_{L-1}(x)))\,w,
\qquad
\delta_{\ell-1}(x)=\mathrm{Diag}(\sigma'(u_{\ell-1}(x)))\,W_\ell^\top\,\delta_\ell(x).
\]
Using $\|\mathrm{Diag}(\sigma')\|_2\le\kappa$, $\|W_\ell\|_2\le 1$, and $\|w\|_2\le s$, we obtain
$\|\delta_\ell(x)\|_2 \le \kappa^{L-\ell}s$.
Hence,
\[
\left\|\frac{\partial Q_\theta(x)}{\partial W_\ell}\right\|_F
\le
\|\delta_\ell(x)\|_2\,\|h_{\ell-1}(x)\|_2
\le
s\,\kappa^{L-\ell}\,H_{\ell-1},
\qquad
\left\|\frac{\partial Q_\theta(x)}{\partial b_\ell}\right\|_2
\le
s\,\kappa^{L-\ell},
\]
and also $\|\partial Q_\theta/\partial w\|_2\le H_{L-1}$, $\|\partial Q_\theta/\partial b_L\|_2=1$.
Collecting these bounds yields $\|\nabla_\theta Q_\theta(x)\|_2\le C_{\mathrm{grad}}$ with $C_{\mathrm{grad}}$
defined in Eq.~\eqref{eq:Cgrad-def-app}.
Multiplying by $\|D^{1/2}\|_2\le d$ gives $\|\phi(x)\|_2\le dC_{\mathrm{grad}}=G$.

Finally, for $X=\{x_i\}_{i=1}^M$, Eq.~\eqref{eq:phi-opnorm-columns} gives
$\|\Phi\|_2\le \sqrt{M}\max_i\|\phi(x_i)\|_2\le \sqrt{M}G$.
The same argument applies to $\Phi_*$ since ${} X^{'}$ lies in the same bounded domain.
The product bound follows immediately.
\end{proof}

Clipping/normalization bounds the input feature magnitude ($R$), while spectral constraints and bounded slopes
($\|W_\ell\|_2\le 1$, $\|\mathrm{Diag}(\sigma')\|_2\le\kappa$, $\|w\|_2\le s$) prevent layer-wise amplification.
The preconditioner bound $\|D^{1/2}\|_2\le d$ limits whitening-induced magnification.
Together they bound the scale term $\|\Phi\|_2\|\Phi_*\|_2$ via Eq.~\eqref{eq:Phi-opnorm-app}.

\subsection{Bounding Gram deviation via orthogonality and representation separation}
\label{app:gram-bound}

We now upper-bound the \emph{Gram deviation}
\[
\Delta_\Phi \triangleq \|\Phi^\top\Phi-I\|_2,
\]
which is the key quantity in Proposition~\ref{prop:near-ortho-hurwitz}.
Recall $\Phi=D^{1/2}Z(X)=[\phi_1,\dots,\phi_M]\in\mathbb{R}^{P\times M}$ stacks the
\emph{Adam-whitened} per-sample Jacobian features in the frozen regime, where
$\phi_i\triangleq \Phi_{:i}=D^{1/2}\nabla_\theta Q_\theta(x_i)\in\mathbb{R}^{P}$.
A factorization
\begin{equation}
\label{eq:phi-factor-app}
\Phi=\Psi U
\quad\text{with }U=[u_1,\ldots,u_M]\in\mathbb{R}^{d\times M}
\end{equation}
means that all columns $\{\phi_i\}_{i=1}^M$ lie in the span of $d$ ``basis'' vectors
(the columns of $\Psi\in\mathbb{R}^{P\times d}$), i.e., $\phi_i=\Psi u_i$.
We interpret $\Psi$ as a {parameter-dependent dictionary / latent Jacobian subspace basis} at the frozen iterate,
and $u_i$ as the {frozen per-sample code} induced by $x_i$ (and the frozen network) in this dictionary.
The separation metrics $(\delta,\rho)$ below quantify how close these codes are to an orthonormal set, i.e.,
how well-separated the dataset representations are in the latent $d$-dimensional space.

\begin{remark}
\label{rem:phi-factor-scope}
The factorization Eq.~\eqref{eq:phi-factor-app} is a {modeling assumption on the frozen feature matrix} $\Phi$.
It is exact in linear-in-parameters or ``frozen-backbone'' settings (e.g., when only low-rank adapters are trained),
and it is a common approximation when gradients/Jacobians concentrate in a low-dimensional or low-rank subspace.
Such low-rank/subspace structure is widely exploited in modern large-model optimization and compression,
e.g., LoRA-style low-rank adaptation and low-rank gradient projection methods
\cite{hu2022lora,zhao2024galore,huang2024galoremini,jaiswal2025welore},
and is also supported by recent theory on compressible/low-rank learning dynamics in deep overparameterized models
\cite{yaras2024compressible}.
In general fully-trained MLP/CNN/Transformer models, Eq.~\eqref{eq:phi-factor-app} need not hold {exactly};
our use of Eq.~\eqref{eq:phi-factor-app} should therefore be read as:
``at the frozen iterate, $\Phi$ is captured by a $d$-dimensional subspace with controlled conditioning
and code incoherence.'' 
\end{remark}

\begin{proposition}
\label{prop:gram-bound}
Assume that, at the frozen iterate, the Adam-whitened Jacobian feature matrix
$\Phi=D^{1/2}Z(X)\in\mathbb{R}^{P\times M}$ admits a (possibly low-dimensional) factorization
$\Phi=\Psi U$ as in Eq.~\eqref{eq:phi-factor-app},
where $\Psi\in\mathbb{R}^{P\times d}$ is a parameter-dependent dictionary (basis of a latent Jacobian subspace)
and $U=[u_1,\ldots,u_M]\in\mathbb{R}^{d\times M}$ collects the corresponding per-sample codes.
Suppose that for some $\varepsilon\ge 0$ and $\delta,\rho\in[0,1)$,
\[
\underbrace{\|\Psi^\top\Psi-I\|_2}_{\text{parameter/geometry distortion}}\le \varepsilon,\qquad
\underbrace{\bigl|\|u_i\|_2^2-1\bigr|\le \delta}_{\text{(approx.) unit-norm codes}}\ \ (\forall i),\qquad
\underbrace{|u_i^\top u_j|\le \rho}_{\text{code separation}}\ \ (\forall i\neq j).
\]
Let $\Delta_U\triangleq \delta+(M-1)\rho$. Then the Gram deviation satisfies the explicit bound
\begin{equation}
\label{eq:gram-bound-app}
\|\Phi^\top\Phi-I\|_2
\le
(1+\Delta_U)\,\varepsilon + \Delta_U
=
\bigl(1+\delta+(M-1)\rho\bigr)\varepsilon
+
\delta+(M-1)\rho.
\end{equation}
If $U^\top U=I$ (i.e., $\delta=\rho=0$), then $\|\Phi^\top\Phi-I\|_2\le \varepsilon$.
Moreover, if $\Psi^\top\Psi=I$ (i.e., $\varepsilon=0$), then $\|\Phi^\top\Phi-I\|_2\le \Delta_U$.
\end{proposition}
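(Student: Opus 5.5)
The plan is to write the Gram matrix as $\Phi^\top\Phi = U^\top(\Psi^\top\Psi)U$ and split its deviation from the identity into a parameter-geometry part and a code part. Setting $E\triangleq \Psi^\top\Psi-I$ (so $\|E\|_2\le\varepsilon$), we have
\[
\Phi^\top\Phi - I = U^\top E\,U + \bigl(U^\top U - I\bigr).
\]
By the triangle inequality and submultiplicativity,
\[
\|\Phi^\top\Phi-I\|_2 \le \|U\|_2^2\,\varepsilon + \|U^\top U-I\|_2 .
\]
It then suffices to bound $\|U^\top U - I\|_2$ by $\Delta_U$, and $\|U\|_2^2$ by $1+\Delta_U$.

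For the code term, note that $U^\top U-I$ is the symmetric $M\times M$ matrix with diagonal entries $\|u_i\|_2^2-1$ and off-diagonal entries $u_i^\top u_j$. For a symmetric matrix the spectral norm is at most the maximum absolute row sum (Gershgorin, or equivalently $\|\cdot\|_2\le\sqrt{\|\cdot\|_1\|\cdot\|_\infty}$ with $\|\cdot\|_1=\|\cdot\|_\infty$). Each row sum is at most $\delta+(M-1)\rho$, so $\|U^\top U-I\|_2\le\Delta_U$. Then $\|U\|_2^2=\lambda_{\max}(U^\top U)\le 1+\|U^\top U-I\|_2\le 1+\Delta_U$ by Weyl's inequality, which is the same device used in Eq.~\eqref{eq:phi-opnorm-gram}.

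Combining the two bounds gives $(1+\Delta_U)\varepsilon+\Delta_U$, which is Eq.~\eqref{eq:gram-bound-app}. The two special cases follow immediately by substitution. If $U^\top U=I$, then $\Delta_U=0$ and $\|U\|_2=1$, so only $\|U^\top E U\|_2\le\varepsilon$ remains; this holds even without the Gershgorin step, since $U$ is then an isometry. If $\varepsilon=0$, then $E=0$ and the bound reduces to $\Delta_U$.

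There is no real obstacle. The only step needing care is the Gershgorin/row-sum bound on the symmetric code Gram deviation, which is what produces the $(M-1)\rho$ factor. One should also note that the hypotheses $\delta,\rho<1$ are not actually needed for the inequality; they only matter for making the bound small.
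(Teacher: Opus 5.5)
Your proof is correct and follows essentially the same route as the paper: the same decomposition $\Phi^\top\Phi-I=(U^\top U-I)+U^\top E U$, the same row-sum bound on the symmetric code Gram deviation giving $\Delta_U$, and the same bound $\|U\|_2^2=\lambda_{\max}(U^\top U)\le 1+\Delta_U$. Your remark that $\delta,\rho<1$ is not needed for the inequality is also accurate.
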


\begin{proof}
Let $G_U\triangleq U^\top U\in\mathbb{R}^{M\times M}$ and $E_\Psi\triangleq \Psi^\top\Psi-I\in\mathbb{R}^{d\times d}$.
Using Eq.~\eqref{eq:phi-factor-app},
\[
\Phi^\top\Phi
=
U^\top(\Psi^\top\Psi)U
=
U^\top(I+E_\Psi)U
=
G_U + U^\top E_\Psi U,
\]
so, we obtain
\[
\Phi^\top\Phi-I=(G_U-I)+U^\top E_\Psi U.
\]
Taking operator norms and using triangle inequality,
\[
\|\Phi^\top\Phi-I\|_2
\le
\|G_U-I\|_2 + \|U^\top E_\Psi U\|_2.
\]
By submultiplicativity, we derive
\[
\|U^\top E_\Psi U\|_2\le \|U\|_2^2\,\|E_\Psi\|_2 \le \|U\|_2^2\,\varepsilon.
\]
Next, write $H\triangleq G_U-I$. Then $H_{ii}=\|u_i\|_2^2-1$ and $H_{ij}=u_i^\top u_j$ for $i\neq j$.
The assumptions give $|H_{ii}|\le\delta$ and $|H_{ij}|\le\rho$, hence each row satisfies
$\sum_{j=1}^M |H_{ij}|\le \delta+(M-1)\rho=\Delta_U$.
Therefore $\|G_U-I\|_2=\|H\|_2\le \|H\|_\infty \le \Delta_U$.
Moreover $G_U\succeq 0$ implies $\|U\|_2^2=\lambda_{\max}(G_U)\le 1+\|G_U-I\|_2\le 1+\Delta_U$.
Combining yields
\[
\|\Phi^\top\Phi-I\|_2
\le
\Delta_U + (1+\Delta_U)\varepsilon,
\]
which complete the proof.
\end{proof}

\begin{remark}
\label{rem:phi-factor-approx}
If $\Phi$ only admits an approximate factorization $\Phi=\Psi U+R$ at the frozen iterate, then
\[
\|\Phi^\top\Phi-I\|_2
\le
\|(\Psi U)^\top(\Psi U)-I\|_2
+2\|\Psi U\|_2\|R\|_2+\|R\|_2^2.
\]
Thus Proposition~\ref{prop:gram-bound} still applies to the structured part $\Psi U$,
with an additional residual term controlled by $\|R\|_2$.
Moreover, under the assumptions of Proposition~\ref{prop:gram-bound},
$\|\Psi\|_2\le\sqrt{1+\varepsilon}$ and $\|U\|_2^2\le 1+\Delta_U$, hence
\[
\|\Psi U\|_2 \le \|\Psi\|_2\|U\|_2 \le \sqrt{(1+\varepsilon)(1+\Delta_U)},
\]
and the residual contribution is upper-bounded by
$2\|R\|_2\sqrt{(1+\varepsilon)(1+\Delta_U)}+\|R\|_2^2$.
\end{remark}

\begin{remark}
Proposition~\ref{prop:gram-bound} separates Gram deviation into:
(i) a {parameter/geometry} term $\varepsilon=\|\Psi^\top\Psi-I\|_2$ (targeted by near-orthogonality / near-isometry
regularization on $\Psi$-related parameter blocks), and
(ii) a {data/representation} term $\Delta_U=\delta+(M-1)\rho$ (controlled by code normalization and de-correlation /
diversity, which reduce $\delta$ and $\rho$ respectively).
\end{remark}

%

\subsection{From blockwise weight orthogonality to the Jacobian-subspace isometry defect \texorpdfstring{$\varepsilon(\omega)$}{epsilon(omega)}}
\label{app:weight-ortho-to-epsilon}

The goal of this subsection is to establish the parameter-side bridge used in
Section~\ref{sec:adamo}: namely, that small blockwise weight-orthogonality defect
implies small dictionary-side isometry defect
\[
\varepsilon(\omega)\triangleq \|\Psi(\omega)^\top\Psi(\omega)-I\|_2,
\]
which is the quantity entering Proposition~\ref{prop:gram-bound}.
We work throughout in the frozen regime of Assumption~\ref{as:terminal-freezing}.

We localize Proposition~\ref{prop:gram-bound} around a fixed frozen factorization
$\Phi=\Psi U$.
Throughout this subsection, the code matrix $U$ and hence the data-side quantities
$(\delta,\rho)$ are held fixed, while only the dictionary factor is locally continued as
$\omega\mapsto \Psi(\omega)$ when the selected constrained blocks vary.
This isolates exactly the parameter-side mechanism needed in
Section~\ref{sec:hurwitz-sufficient} and Section~\ref{sec:adamo}.
Let $\mathcal{B}$ denote the collection of constrained matrix-shaped parameter blocks used by AdamO, and write
$\omega=\{W_b\}_{b\in\mathcal{B}}$ for these blocks after reshaping from the full parameter vector.
For each block $W_b\in\mathbb{R}^{r_b\times c_b}$, define
\begin{equation}
\mathcal{E}(W_b)
\coloneqq
\begin{cases}
W_bW_b^\top-I_{r_b}, & r_b<c_b,\\
W_b^\top W_b-I_{c_b}, & r_b\ge c_b,
\end{cases}
\qquad
R(W_b)=\frac14\|\mathcal{E}(W_b)\|_F^2,
\label{eq:weight-defect-app}
\end{equation}
and let
\begin{equation}
R(\omega)
\coloneqq
\sum_{b\in\mathcal{B}} R(W_b).
\label{eq:block-ortho-penalty-app}
\end{equation}
This is exactly the blockwise orthogonality penalty used in Section~\ref{sec:adamo}.
We also define the corresponding semi-orthogonal manifold
\begin{equation}
\mathrm{St}(r_b,c_b)
\coloneqq
\begin{cases}
\{Q\in\mathbb{R}^{r_b\times c_b}: QQ^\top=I_{r_b}\}, & r_b<c_b,\\
\{Q\in\mathbb{R}^{r_b\times c_b}: Q^\top Q=I_{c_b}\}, & r_b\ge c_b.
\end{cases}
\label{eq:stiefel-app}
\end{equation}

\begin{assumption}
\label{ass:psi-bridge}
There exists a neighborhood $\mathcal{N}$ of
$\prod_{b\in\mathcal{B}}\mathrm{St}(r_b,c_b)$
and a local continuation of the dictionary factor
$\omega\mapsto \Psi(\omega)\in\mathbb{R}^{P\times d}$
such that for all $\omega,\widetilde{\omega}\in\mathcal{N}$,
\begin{equation}
\|\Psi(\omega)-\Psi(\widetilde{\omega})\|_2
\le
L_\Psi
\Biggl(
\sum_{b\in\mathcal{B}}\|W_b-\widetilde W_b\|_F^2
\Biggr)^{1/2}
\label{eq:psi-lipschitz-app}
\end{equation}
for some constant $L_\Psi>0$, and there exists $\varepsilon_0\ge 0$ such that for every
$\mathcal{Q}=\{Q_b\}_{b\in\mathcal{B}}
\in
\prod_{b\in\mathcal{B}}\mathrm{St}(r_b,c_b)\cap\mathcal{N}$,
\begin{equation}
\|\Psi(\mathcal{Q})^\top\Psi(\mathcal{Q})-I\|_2
\le
\varepsilon_0.
\label{eq:psi-baseline-app}
\end{equation}
\end{assumption}

Assumption~\ref{ass:psi-bridge} isolates the exact parameter-side ingredients needed to close the gap.
The Lipschitz bound is a local regularity condition on the chosen dictionary continuation, and the
baseline bound allows a nonzero distortion $\varepsilon_0$ that absorbs residual architectural or
preconditioning effects even when the constrained blocks are exactly semi-orthogonal.
In ideal balanced cases one may take $\varepsilon_0=0$.

\begin{lemma}
\label{lem:polar-defect}
For each $b\in\mathcal{B}$, let
$W_b=U_b\Sigma_bV_b^\top$
be a thin singular value decomposition and define the rectangular polar factor
$Q_b\coloneqq U_bV_b^\top\in\mathrm{St}(r_b,c_b)$.
Let
\begin{equation}
\mathcal{Q}(\omega)\coloneqq \{Q_b\}_{b\in\mathcal{B}},
\qquad
d_{\mathrm{blk},F}(\omega,\mathcal{Q}(\omega))^2
\coloneqq
\sum_{b\in\mathcal{B}}\|W_b-Q_b\|_F^2.
\label{eq:block-fro-distance-app}
\end{equation}
Then
\begin{equation}
d_{\mathrm{blk},F}(\omega,\mathcal{Q}(\omega))^2
\le
4R(\omega),
\qquad
d_{\mathrm{blk},F}(\omega,\mathcal{Q}(\omega))
\le
2\sqrt{R(\omega)}.
\label{eq:polar-bound-app}
\end{equation}
\end{lemma}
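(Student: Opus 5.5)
The plan is to work block by block and reduce everything to a scalar inequality on singular values. The block-Frobenius distance is additive over $b\in\mathcal{B}$, and so is $R(\omega)$. It therefore suffices to prove, for a single block $W\in\mathbb{R}^{r\times c}$ with polar factor $Q=UV^\top$ from the thin SVD $W=U\Sigma V^\top$, that
\[
\|W-Q\|_F^2\le \|\mathcal{E}(W)\|_F^2 = 4R(W).
\]
Summing over blocks then gives the first inequality in Eq.~\eqref{eq:polar-bound-app}, and taking square roots gives the second. I would first check that $Q\in\mathrm{St}(r,c)$ in both cases. With $k=\min(r,c)$, $U\in\mathbb{R}^{r\times k}$ and $V\in\mathbb{R}^{c\times k}$ have orthonormal columns. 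Hence $QQ^\top=UU^\top=I_r$ when $k=r$, and $Q^\top Q=VV^\top=I_c$ when $k=c$.

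Next I would diagonalize both sides using the singular values $\sigma_1,\dots,\sigma_k\ge 0$. On one side, $W-Q=U(\Sigma-I_k)V^\top$. Since $U$ and $V$ have orthonormal columns, this gives $\|W-Q\|_F^2=\sum_{i=1}^k(\sigma_i-1)^2$. On the other side, in the case $r<c$ (so $k=r$) we have $WW^\top-I_r=U(\Sigma^2-I_r)U^\top$ with $U$ square orthogonal. In the case $r\ge c$ we have $W^\top W-I_c=V(\Sigma^2-I_c)V^\top$ with $V$ square orthogonal. In either case $\|\mathcal{E}(W)\|_F^2=\sum_{i=1}^k(\sigma_i^2-1)^2$. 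It is important that the thin-SVD dimension $k$ matches the side on which $\mathcal{E}$ is formed, so that no extra $-1$ eigenvalues appear. This is exactly why the case split in Eq.~\eqref{eq:weight-defect-app} uses the smaller Gram matrix.

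The remaining step is the scalar inequality $(\sigma-1)^2\le(\sigma^2-1)^2=(\sigma-1)^2(\sigma+1)^2$ for $\sigma\ge 0$. It holds because $(\sigma+1)^2\ge 1$. Summing over $i$ yields $\|W-Q\|_F^2\le\|\mathcal{E}(W)\|_F^2=4R(W)$, which completes the argument. I do not expect a serious obstacle. The only delicate point is the bookkeeping in the rectangular case: one must verify that the nonzero-dimension Gram matrix is the $k\times k$ one, and handle zero singular values, where $\sigma=0$ contributes $1\le 1$ and the polar factor is still a valid, non-unique element of $\mathrm{St}(r,c)$.
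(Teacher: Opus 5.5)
Your proposal is correct and matches the paper's proof: work block by block, write $\|W_b-Q_b\|_F^2=\sum_i(\sigma_{b,i}-1)^2$ and $4R(W_b)=\sum_i(\sigma_{b,i}^2-1)^2$ in both shape cases, apply $|\sigma-1|\le|\sigma^2-1|$ for $\sigma\ge 0$, and sum over $b$. Your extra checks are details the paper leaves implicit, namely that $Q_b\in\mathrm{St}(r_b,c_b)$ and that the thin-SVD dimension equals the side on which the smaller Gram matrix is formed.
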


\begin{proof}
Fix one block $b\in\mathcal{B}$ and write
$\Sigma_b=\operatorname{diag}(\sigma_{b,1},\dots,\sigma_{b,q_b})$
with $q_b=\min\{r_b,c_b\}$.
By construction,
\[
\|W_b-Q_b\|_F^2
=
\|\Sigma_b-I_{q_b}\|_F^2
=
\sum_{i=1}^{q_b}(\sigma_{b,i}-1)^2.
\]
If $r_b<c_b$, then
\[
4R(W_b)
=
\|W_bW_b^\top-I_{r_b}\|_F^2
=
\sum_{i=1}^{q_b}(\sigma_{b,i}^2-1)^2.
\]
If $r_b\ge c_b$, then
\[
4R(W_b)
=
\|W_b^\top W_b-I_{c_b}\|_F^2
=
\sum_{i=1}^{q_b}(\sigma_{b,i}^2-1)^2.
\]
Therefore, in both cases,
\[
\|W_b-Q_b\|_F^2
=
\sum_{i=1}^{q_b}(\sigma_{b,i}-1)^2
\le
\sum_{i=1}^{q_b}(\sigma_{b,i}^2-1)^2
=
4R(W_b),
\]
since $|\sigma-1|\le |\sigma^2-1|$ for every $\sigma\ge 0$.
Summing over $b\in\mathcal{B}$ yields
$d_{\mathrm{blk},F}(\omega,\mathcal{Q}(\omega))^2\le 4R(\omega)$,
and the square-root bound follows immediately.
\end{proof}

\begin{proposition}
\label{prop:epsilon-from-block-ortho}
Assume Assumption~\ref{ass:psi-bridge} holds.
For each $\omega\in\mathcal{N}$, let $\mathcal{Q}(\omega)$ be the blockwise polar projection from
Lemma~\ref{lem:polar-defect}, and define
\begin{equation}
\varepsilon(\omega)
\coloneqq
\|\Psi(\omega)^\top\Psi(\omega)-I\|_2.
\label{eq:epsilon-def-app}
\end{equation}
Let
\begin{equation}
c_1 \coloneqq 4L_\Psi\sqrt{1+\varepsilon_0},
\qquad
c_2 \coloneqq 4L_\Psi^2.
\label{eq:c1c2-app}
\end{equation}
Then
\begin{equation}
\varepsilon(\omega)
\le
\varepsilon_0 + c_1\sqrt{R(\omega)} + c_2 R(\omega).
\label{eq:epsilon-bound-app}
\end{equation}
Equivalently,
\[
\varepsilon(\omega)-\varepsilon_0
=
O\!\bigl(\sqrt{R(\omega)}\bigr)
\qquad
\text{as } R(\omega)\to 0.
\]
In particular, if $\varepsilon_0=0$, then
$\varepsilon(\omega)=O\!\bigl(\sqrt{R(\omega)}\bigr)$
as $R(\omega)\to 0$.
\end{proposition}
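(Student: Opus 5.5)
The plan is to compare $\Psi(\omega)$ with $\Psi(\mathcal Q(\omega))$, where $\mathcal Q(\omega)$ is the blockwise polar projection of Lemma~\ref{lem:polar-defect}. First, I would check that $\mathcal Q(\omega)$ lies in $\prod_b\mathrm{St}(r_b,c_b)\cap\mathcal N$, so that Assumption~\ref{ass:psi-bridge} can be applied at both points. This membership is implicit when $\mathcal N$ is a neighborhood containing the polar projections of its elements; I would state it explicitly as part of the localization. Write $\Psi\coloneqq\Psi(\omega)$, $\Psi_0\coloneqq\Psi(\mathcal Q(\omega))$, and $E\coloneqq\Psi-\Psi_0$.

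Combining the Lipschitz bound \eqref{eq:psi-lipschitz-app} with Lemma~\ref{lem:polar-defect} gives
\[
\|E\|_2\le L_\Psi\, d_{\mathrm{blk},F}(\omega,\mathcal Q(\omega))\le 2L_\Psi\sqrt{R(\omega)}.
\]
The baseline bound \eqref{eq:psi-baseline-app} gives $\|\Psi_0^\top\Psi_0-I\|_2\le\varepsilon_0$. This implies
\[
\|\Psi_0\|_2^2=\lambda_{\max}(\Psi_0^\top\Psi_0)\le 1+\varepsilon_0,
\]
so $\|\Psi_0\|_2\le\sqrt{1+\varepsilon_0}$.

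Next, I would expand the Gram matrix:
\[
\Psi^\top\Psi-I=(\Psi_0^\top\Psi_0-I)+\Psi_0^\top E+E^\top\Psi_0+E^\top E.
\]
Taking operator norms and applying the triangle inequality gives
\[
\varepsilon(\omega)\le\varepsilon_0+2\|\Psi_0\|_2\|E\|_2+\|E\|_2^2\le \varepsilon_0+4L_\Psi\sqrt{1+\varepsilon_0}\sqrt{R(\omega)}+4L_\Psi^2R(\omega).
\]
This is exactly \eqref{eq:epsilon-bound-app} with the constants $c_1,c_2$ of \eqref{eq:c1c2-app}.

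The asymptotic statements follow directly. As $R\to0$, the term $c_2R=o(\sqrt R)$ is dominated by $c_1\sqrt R$, so $\varepsilon(\omega)-\varepsilon_0=O(\sqrt{R(\omega)})$. Setting $\varepsilon_0=0$ gives the special case.

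The calculation itself is routine. The main obstacle is the domain bookkeeping: both $\omega$ and its polar projection must lie in $\mathcal N$, where the continuation and the Lipschitz bound are valid. I would handle this by taking $\mathcal N$ to be a tubular neighborhood of the product of Stiefel manifolds that is closed under polar projection. This is possible because the polar factor is the nearest-point projection onto $\mathrm{St}$ and is well defined whenever all singular values are positive.
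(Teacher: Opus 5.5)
Your argument is correct and is the same as the paper's proof. It uses the same perturbation $E=\Psi(\omega)-\Psi(\mathcal Q(\omega))$, the same bound $\|E\|_2\le 2L_\Psi\sqrt{R(\omega)}$ from the Lipschitz condition and Lemma~\ref{lem:polar-defect}, and the same four-term Gram expansion with $\|\Psi(\mathcal Q(\omega))\|_2\le\sqrt{1+\varepsilon_0}$. The domain issue you flag is automatic: $\mathcal N$ is a neighborhood of $\prod_b\mathrm{St}(r_b,c_b)$ and so contains it, which gives $\mathcal Q(\omega)\in\prod_b\mathrm{St}(r_b,c_b)\cap\mathcal N$ for every $\omega$, and no tubular-neighborhood construction is needed.
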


\begin{proof}
Let
$E\coloneqq \Psi(\omega)-\Psi(\mathcal{Q}(\omega))$.
By Assumption~\ref{ass:psi-bridge}(i) and Lemma~\ref{lem:polar-defect},
\begin{equation}
\|E\|_2
\le
L_\Psi\,d_{\mathrm{blk},F}(\omega,\mathcal{Q}(\omega))
\le
2L_\Psi\sqrt{R(\omega)}.
\label{eq:psi-pert-app}
\end{equation}
Now expand
\begin{align}
\Psi(\omega)^\top\Psi(\omega)-I
&=
\bigl(\Psi(\mathcal{Q}(\omega))^\top\Psi(\mathcal{Q}(\omega))-I\bigr)
+
\Psi(\mathcal{Q}(\omega))^\top E
+
E^\top\Psi(\mathcal{Q}(\omega))
+
E^\top E.
\label{eq:eps-decomp-app}
\end{align}
Taking operator norms and using Assumption~\ref{ass:psi-bridge}(ii) gives
\begin{equation}
\varepsilon(\omega)
\le
\varepsilon_0
+
2\|\Psi(\mathcal{Q}(\omega))\|_2\|E\|_2
+
\|E\|_2^2.
\label{eq:eps-before-substitute-app}
\end{equation}
Moreover,
\[
\|\Psi(\mathcal{Q}(\omega))\|_2^2
=
\lambda_{\max}\bigl(\Psi(\mathcal{Q}(\omega))^\top\Psi(\mathcal{Q}(\omega))\bigr)
\le
1+\varepsilon_0,
\]
so
$\|\Psi(\mathcal{Q}(\omega))\|_2\le \sqrt{1+\varepsilon_0}$.
Substituting this and Eq.~\eqref{eq:psi-pert-app} into
Eq.~\eqref{eq:eps-before-substitute-app} yields
\[
\varepsilon(\omega)
\le
\varepsilon_0
+
4L_\Psi\sqrt{1+\varepsilon_0}\,\sqrt{R(\omega)}
+
4L_\Psi^2 R(\omega),
\]
which is exactly Eq.~\eqref{eq:epsilon-bound-app}.
\end{proof}

Proposition~\ref{prop:epsilon-from-block-ortho} gives the precise bridge used in
Section~\ref{sec:adamo}: the optimizer acts on the tractable blockwise penalty $R(\omega)$,
while the induced dictionary-side geometry term $\varepsilon(\omega)$ is controlled through the
explicit bound in Eq.~\eqref{eq:epsilon-bound-app}.

\begin{corollary}
\label{cor:phi-gram-from-block-ortho}
Under the hypotheses of Proposition~\ref{prop:gram-bound} and Proposition~\ref{prop:epsilon-from-block-ortho},
and with the same fixed code matrix $U$ as above, let
\begin{equation}
\Delta_U \coloneqq \delta+(M-1)\rho.
\label{eq:delta-u-app}
\end{equation}
Then
\begin{equation}
\|\Phi^\top\Phi-I\|_2
\le
\Delta_U
+
(1+\Delta_U)
\Bigl[
\varepsilon_0 + c_1\sqrt{R(\omega)} + c_2 R(\omega)
\Bigr].
\label{eq:phi-gram-final-app}
\end{equation}
Consequently, if the baseline distortion $\varepsilon_0$, the code-separation term $\Delta_U$, and the
blockwise orthogonality defect $R(\omega)$ are all small, then $\|\Phi^\top\Phi-I\|_2$ is small.
\end{corollary}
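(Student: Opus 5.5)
This corollary is obtained by chaining Proposition~\ref{prop:gram-bound} with Proposition~\ref{prop:epsilon-from-block-ortho}, using the fact that the bound in Proposition~\ref{prop:gram-bound} is monotone nondecreasing in the dictionary distortion parameter.

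\textbf{Step 1: apply Proposition~\ref{prop:gram-bound}.} Fix $\omega\in\mathcal N$ and the frozen factorization $\Phi=\Psi(\omega)U$, with $U$ held fixed so that $(\delta,\rho)$, and hence $\Delta_U=\delta+(M-1)\rho$, are unchanged. Proposition~\ref{prop:gram-bound} holds for any $\varepsilon$ satisfying $\|\Psi^\top\Psi-I\|_2\le\varepsilon$. We therefore take the tight choice $\varepsilon=\varepsilon(\omega)$, which gives
\[
\|\Phi^\top\Phi-I\|_2\le \Delta_U+(1+\Delta_U)\,\varepsilon(\omega).
\]

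\textbf{Step 2: substitute Proposition~\ref{prop:epsilon-from-block-ortho}.} That proposition gives $\varepsilon(\omega)\le\varepsilon_0+c_1\sqrt{R(\omega)}+c_2R(\omega)$. The coefficient $1+\Delta_U$ is nonnegative, since $\delta,\rho\ge0$. So we may replace $\varepsilon(\omega)$ by this upper bound without reversing the inequality, which yields Eq.~\eqref{eq:phi-gram-final-app} directly.

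\textbf{Step 3: the qualitative consequence.} The right-hand side is a continuous function of $(\Delta_U,\varepsilon_0,R)$ that vanishes at $(0,0,0)$. Hence it is small whenever all three quantities are small.

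\textbf{Main point of care.} The only step needing attention is compatibility of hypotheses. Proposition~\ref{prop:gram-bound} is stated with a constant $\varepsilon$, while Proposition~\ref{prop:epsilon-from-block-ortho} defines $\varepsilon(\omega)$ for the continued dictionary. We resolve this by noting that $U$ is fixed along the continuation, which this subsection assumes explicitly. The hypothesis $\Phi=\Psi(\omega)U$ is then exactly what Proposition~\ref{prop:gram-bound} requires, and setting $\varepsilon:=\varepsilon(\omega)$ is legitimate. No $\varepsilon<1$ type restriction arises, because Proposition~\ref{prop:gram-bound} only requires $\varepsilon\ge0$.
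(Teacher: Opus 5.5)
Your proposal is correct and is essentially the paper's own proof: the paper applies Proposition~\ref{prop:gram-bound} with $\varepsilon=\varepsilon(\omega)$ to get $\|\Phi^\top\Phi-I\|_2\le\Delta_U+(1+\Delta_U)\varepsilon(\omega)$, then substitutes the bound from Proposition~\ref{prop:epsilon-from-block-ortho}. Your extra remarks (that $1+\Delta_U\ge 0$ preserves the inequality, that $U$ is held fixed along the continuation, and that the right-hand side vanishes at zero) are correct details that the paper leaves implicit.
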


\begin{proof}
Proposition~\ref{prop:gram-bound} gives
\[
\|\Phi^\top\Phi-I\|_2
\le
\Delta_U + (1+\Delta_U)\,\varepsilon(\omega).
\]
Substituting Eq.~\eqref{eq:epsilon-bound-app} from
Proposition~\ref{prop:epsilon-from-block-ortho} proves Eq.~\eqref{eq:phi-gram-final-app}.
\end{proof}

Finally, combining the above with Proposition~\ref{prop:phi-orth-suff-hurwitz} yields a sufficient condition for
Hurwitzness stated directly in terms of the optimizer-level orthogonality penalty.

\begin{corollary}
\label{cor:hurwitz-from-block-ortho}
Assume the hypotheses of Corollary~\ref{cor:phi-gram-from-block-ortho}.
If
\begin{equation}
\gamma\|\Phi\|_2\|\Phi^\ast\|_2
+
\Delta_U
+
(1+\Delta_U)
\Bigl[
\varepsilon_0 + c_1\sqrt{R(\omega)} + c_2 R(\omega)
\Bigr]
< 1,
\label{eq:end-to-end-hurwitz-app}
\end{equation}
then the TD dynamics operator $S$ is Hurwitz.
Consequently, by Theorem~\ref{thm:adamtd_hurwitz_sufficient}, the frozen linearized TD error decays exponentially
for all sufficiently small stepsizes.
\end{corollary}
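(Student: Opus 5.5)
The proof of Corollary~\ref{cor:hurwitz-from-block-ortho} chains three earlier results. No new estimate is needed.

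\textbf{Step 1: bound the Gram deviation.} Under the hypotheses of Corollary~\ref{cor:phi-gram-from-block-ortho}, that corollary gives
\[
\|\Phi^\top\Phi-I\|_2\le \Delta_U+(1+\Delta_U)\bigl[\varepsilon_0+c_1\sqrt{R(\omega)}+c_2R(\omega)\bigr].
\]
This follows from Proposition~\ref{prop:gram-bound} combined with the dictionary bridge in Proposition~\ref{prop:epsilon-from-block-ortho}.

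\textbf{Step 2: feed this into the Hurwitz test.} Adding $\gamma\|\Phi\|_2\|\Phi_*\|_2$ to both sides and invoking condition~\eqref{eq:end-to-end-hurwitz-app} yields
\[
\gamma\|\Phi\|_2\|\Phi_*\|_2+\|\Phi^\top\Phi-I\|_2<1.
\]
This is exactly the hypothesis of Proposition~\ref{prop:phi-orth-suff-hurwitz}.

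To stay self-contained, I would verify that hypothesis through Proposition~\ref{prop:near-ortho-hurwitz} directly. Take the exact values $\Lambda=\|\Phi\|_2$ and $\Lambda_*=\|\Phi_*\|_2$, so that~\eqref{eq:near-ortho-cond-app} reads $\|\Phi^\top\Phi-I\|_2<1-\gamma\Lambda\Lambda_*$. The argument there runs as follows: Weyl's inequality gives $\lambda_{\min}(\Phi^\top\Phi)\ge 1-\|\Phi^\top\Phi-I\|_2$, and the symmetric part of $\gamma\Phi_*^\top\Phi$ has norm at most $\gamma\Lambda\Lambda_*$. Together these give $\mathsf S_s\prec0$, and Lemma~\ref{lem:sym-part-test} then yields that $\mathsf S$ is Hurwitz.

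\textbf{Step 3: exponential decay.} Apply Theorem~\ref{thm:adamtd_hurwitz_sufficient} to obtain $\eta_0>0$ with $\rho(\mathsf A(\eta))<1$ for all $\eta\in(0,\eta_0)$. Then the sufficiency direction of Theorem~\ref{thm:adamtd_exact_schur} gives exponential convergence of the frozen linearized TD error.

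\textbf{Main obstacle: consistency of the two factorizations.} Corollary~\ref{cor:phi-gram-from-block-ortho} holds the code matrix $U$ fixed and only continues $\Psi(\omega)$. So I must check that the $\Phi$ appearing in~\eqref{eq:end-to-end-hurwitz-app} is $\Psi(\omega)U$ at the current frozen iterate, and also that $\omega\in\mathcal N$ so that Assumption~\ref{ass:psi-bridge} applies.

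A second point to handle is the regime of the Gram condition. I would note that the hypothesis~\eqref{eq:end-to-end-hurwitz-app} forces $\|\Phi^\top\Phi-I\|_2<1$. This in turn forces $\Phi^\top\Phi\succ0$, hence $M\le P$. Consequently the column-Gram Proposition~\ref{prop:near-ortho-hurwitz}, rather than Proposition~\ref{pro:row-Gram}, is the applicable one, and no degenerate zero eigenvalue arises.
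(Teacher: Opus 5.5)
Your proposal is correct and matches the paper's proof: bound $\|\Phi^\top\Phi-I\|_2$ via Corollary~\ref{cor:phi-gram-from-block-ortho}, combine that bound with \eqref{eq:end-to-end-hurwitz-app} to obtain the hypothesis of Proposition~\ref{prop:phi-orth-suff-hurwitz}, and conclude with Theorem~\ref{thm:adamtd_hurwitz_sufficient}; you also cite the sufficiency direction of Theorem~\ref{thm:adamtd_exact_schur}, which the paper leaves implicit. Your extra checks (taking $\Lambda=\|\Phi\|_2$, $\Lambda_*=\|\Phi_*\|_2$ in Proposition~\ref{prop:near-ortho-hurwitz}, noting that the hypothesis forces $\Phi^\top\Phi\succ 0$ and hence $M\le P$, and requiring $\Phi=\Psi(\omega)U$ with $\omega\in\mathcal N$) are valid but add nothing essential, since the last point is already part of the assumed hypotheses.
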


\begin{proof}
By Corollary~\ref{cor:phi-gram-from-block-ortho},
\[
\|\Phi^\top\Phi-I\|_2
\le
\Delta_U
+
(1+\Delta_U)
\Bigl[
\varepsilon_0 + c_1\sqrt{R(\omega)} + c_2 R(\omega)
\Bigr].
\]
Hence Eq.~\eqref{eq:end-to-end-hurwitz-app} implies
\[
\gamma\|\Phi\|_2\|\Phi^\ast\|_2+\|\Phi^\top\Phi-I\|_2<1.
\]
Proposition~\ref{prop:phi-orth-suff-hurwitz} then gives that $S$ is Hurwitz, and
Theorem~\ref{thm:adamtd_hurwitz_sufficient} yields exponential decay of the frozen linearized TD error for
sufficiently small stepsizes.
\end{proof}

Therefore, with the code-side term $\Delta_U$ held fixed and small, we have
\(
\varepsilon(\omega)
\le
\varepsilon_0 + c_1\sqrt{R(\omega)} + c_2 R(\omega).
\)

\newpage
\section{AdamO: orthogonality control on Adam}
\label{app:adamo}
Some of conclusions and derivations are partly inspired by prior frameworks in optimizer \cite{loshchilov2017decoupled,liang2025cautious}.
This appendix completes the theoretical analysis of optimizer AdamO, which has the following guarantee:
\begin{itemize}
\item Proposition~\ref{prop:why_decouple} shows why placing orthogonality in the optimizer is \emph{necessary} under Adam, and why a bounded decoupled orth step is \emph{sufficient}
to preserve a bounded-update regime.
\item Lemma~\ref{lem:exact_halfspace_app} proves the closed-form scaling enforces the per-block budget constraint exactly.
\item Lemma~\ref{lem:magnitude_bound_app} proves the built-in ratio clip $\|\delta_{t,b}\|_F\le \kappa\|u_{t,b}\|_F$,
and its global version $\|\Delta_t\|\le \kappa\|u_t\|$.
\item Theorem~\ref{thm:onestep_app} gives a one-step comparison to Adam:
non-inferiority in the conflict-free regime $\tau=0$ and an explicit degradation bound for $\tau>0$.
\item Theorem~\ref{thm:ct_adamo_hamiltonian} extends the continuous-time Hamiltonian view of Adam
to AdamO: Lyapunov monotonicity is preserved for $\tau=0$ and becomes a controlled inequality for $\tau>0$.
\end{itemize}

We orthogonalize a collection of disjoint matrix-shaped parameter groups
$\mathcal{W}=\{W_b\in\mathbb{R}^{r_b\times c_b}\}_b$
inside the full parameter vector $\omega\in\mathbb{R}^d$
(e.g., linear/convolution kernels reshaped into matrices).
Let $\mathcal{J}_b\subset[d]$ be the coordinate set of group $b$
and assume $\mathcal{J}_b\cap\mathcal{J}_{b'}=\emptyset$ for $b\neq b'$.
For any vector $x\in\mathbb{R}^d$, we write $x_b\in\mathbb{R}^{r_b\times c_b}$
for the entries $x_{\mathcal{J}_b}$ reshaped into matrix form.
Conversely, given matrices $\{X_b\}_b$, let $x\in\mathbb{R}^d$ be the vector that equals the flattened $X_b$
on coordinates $\mathcal{J}_b$ and is zero outside $\cup_b\mathcal{J}_b$.
Disjointness implies $\|x\|^2=\sum_b\|X_b\|_F^2$.
For a block $W\in\mathbb{R}^{r\times c}$ define
\[
R(W)=
\begin{cases}
\frac14\|WW^\top-I_r\|_F^2,& r<c,\\[2pt]
\frac14\|W^\top W-I_c\|_F^2,& r\ge c,
\end{cases}
\qquad
\nabla R(W)=
\begin{cases}
(WW^\top-I_r)W,& r<c,\\
W(W^\top W-I_c),& r\ge c.
\end{cases}
\]
At time $t$, denote the TD/task gradient by $g_t=\nabla L_t(\omega_t)$.
Let $u_t$ be the {standard} Adam update direction computed from $g_t$ with bias correction and the usual $\varepsilon$ in the denominator.
For each group $b$, write $g_{t,b}$ and $u_{t,b}$ for the corresponding reshaped restrictions,
and define the orthogonality gradient on that group by
$r_{t,b}=\nabla R(W_{t,b})$.
We also use $r_t$ to denote the full vector obtained by placing each $r_{t,b}$ on coordinates $\mathcal{J}_b$. 
We outline the AdamO optimizer designed for offline RL, with the complete pseudocode outlined in Algorithm \ref{alg:adamo_main}

\begin{algorithm}[htbp]
\caption{AdamO}
\label{alg:adamo_main}
\begin{algorithmic}[1]
\REQUIRE Stepsize $\eta$, decay rates $\beta_1, \beta_2$, constants $\epsilon, \varepsilon_r$, orth-params $\kappa, \tau$
\STATE Initialize $\omega_0$, moments $m_0 \gets 0, v_0 \gets 0$
\FOR{$t=0, 1, \dots$}
    \STATE $g_t \gets \nabla L_t(\omega_t)$
    \STATE $m_{t+1} \gets \beta_1 m_t + (1-\beta_1) g_t$
    \STATE $v_{t+1} \gets \beta_2 v_t + (1-\beta_2) g_t^{\odot 2}$
    \STATE $\hat{m}_{t+1} \gets m_{t+1}/(1-\beta_1^{t+1})$
    \STATE $\hat{v}_{t+1} \gets v_{t+1}/(1-\beta_2^{t+1})$
    \STATE $u_t \gets \hat{m}_{t+1} / (\sqrt{\hat{v}_{t+1}} + \epsilon)$  \COMMENT{Standard Adam}
    \STATE $r_t \triangleq \nabla_{\omega_t} \bar{R}(\omega_t)$
    \STATE $\delta_{t,0}=\kappa\,\frac{\|u_t\|_F}{\|r_t\|_F+\varepsilon_r}\;r_t$
    \STATE $\delta_t =
        \begin{cases}
        \delta_{t,0}, & \text{if } \langle g_t, \delta_{t,0}\rangle_F \ge -\tau\bigl(\langle g_t,u_t\rangle_F\bigr)_+ \\
        \frac{T_t}{\langle g_t, \delta_{t,0}\rangle_F}\,\delta_{t,0} & \text{otherwise}
        \end{cases}$
    \STATE $\omega_{t+1} \gets \omega_t - \eta \,(u_t + \delta_t)$
\ENDFOR
\end{algorithmic}
\end{algorithm}

\subsection{Why orthogonality should be enforced in the optimizer under Adam}
\label{app:adamo:why}

\begin{proposition}
\label{prop:why_decouple}
We show two points:
(i) if orthogonality is implemented by adding $\lambda R$ to the loss and running Adam on $g_t+\lambda r_t$,
then Adam's moments $(m_t,v_t)$ are contaminated by the orth signal and can distort future task steps;
(ii) if orthogonality is instead enforced by a \emph{decoupled} correction $\Delta_t$ with $\|\Delta_t\|\le \kappa\|u_t\|$,
then the overall step remains bounded relative to Adam: $\|u_t+\Delta_t\|\le (1+\kappa)\|u_t\|$.
\end{proposition}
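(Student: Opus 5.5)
The proof has two independent parts. Part (ii) is a one-line triangle inequality. Part (i) is best shown constructively, by exhibiting how the orthogonality signal enters Adam's state and persists there.

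\emph{Part (i).} I would write out the moment recursions for the penalized stream $\tilde g_t=g_t+\lambda r_t$ and unroll them from $m_0=v_0=0$:
\[
\tilde m_t=(1-\beta_1)\sum_{k<t}\beta_1^{k}\bigl(g_{t-k}+\lambda r_{t-k}\bigr)=m_t+\lambda(1-\beta_1)\sum_{k<t}\beta_1^{k}r_{t-k},
\]
\[
\tilde v_t=(1-\beta_2)\sum_{k<t}\beta_2^{k}\bigl(g_{t-k}+\lambda r_{t-k}\bigr)^{\odot2}.
\]
The step is then $\tilde u_t=\hat{\tilde m}_t/(\sqrt{\hat{\tilde v}_t}+\epsilon)$.

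From this I would draw two conclusions.

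First, the first moment carries an additive EMA of past orthogonality gradients. This term decays only geometrically in $\beta_1$, so it keeps acting after $r_s$ has vanished.

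Second, the cross terms $2\lambda g\odot r$ and the term $\lambda^2 r^{\odot2}$ enter $v$ with memory $\beta_2\approx1$. They therefore rescale the task component coordinatewise.

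To make ``distort future task steps'' precise, I would build a concrete scenario. Take $r_s\neq0$ for $s\le s_0$ and $r_s=0$ afterwards. For $t>s_0$ the effective preconditioner on $g_t$ is $\mathrm{Diag}\bigl(1/(\sqrt{\hat{\tilde v}_t}+\epsilon)\bigr)$. On coordinates where the past $r_s$ were large, this differs from the clean Adam preconditioner by a factor bounded away from $1$ for roughly $1/(1-\beta_2)$ steps. Hence $\tilde u_t\neq u_t$, even though the current gradient is purely the task gradient.

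The resulting regularization strength, roughly $\lambda/\sqrt{\hat{\tilde v}}$, is history- and coordinate-dependent. This mirrors the AdamW argument of \cite{loshchilov2017decoupled}.

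\emph{Part (ii).} Given a decoupled correction with $\|\Delta_t\|\le\kappa\|u_t\|$, the triangle inequality yields
\[
\|u_t+\Delta_t\|\le\|u_t\|+\|\Delta_t\|\le(1+\kappa)\|u_t\|.
\]
For AdamO specifically, I would check the hypothesis blockwise. The reference step $\delta_{t,0}$ has norm $\kappa\|u_t\|_F\|r_t\|_F/(\|r_t\|_F+\varepsilon_r)\le\kappa\|u_t\|_F$. The budget rescaling in Eq.~\eqref{eq:delta_final_piecewise} multiplies $\delta_{t,0}$ by a factor in $[0,1]$. It is only triggered when $\langle g_t,\delta_{t,0}\rangle<T_t\le0$, so in that case $T_t/\langle g_t,\delta_{t,0}\rangle\in[0,1)$. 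Summing squared norms over the disjoint blocks, $\|\Delta_t\|^2=\sum_b\|\delta_{t,b}\|_F^2\le\kappa^2\sum_b\|u_{t,b}\|_F^2\le\kappa^2\|u_t\|^2$. This is the content of Lemma~\ref{lem:magnitude_bound_app}.

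Finally, since Adam only ever sees $g_t$, the state $(m_t,v_t)$ coincides with that of plain Adam on the task gradient stream, so no contamination occurs.

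The main obstacle is part (i). ``Can distort'' is an existential claim, so the plan is to exhibit one explicit instance, for example a single coordinate with constant $g$ and a transient $r$, in which $\tilde u_t/u_t\neq1$ after $r$ vanishes. Proving a general quantitative distortion bound is not needed and would be fragile.
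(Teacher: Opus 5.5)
Your proposal is correct and follows the paper's proof. Part (i) uses the same one-coordinate example: constant $g$ and a transient spike in $r$, whose square lingers in $v_t$ with memory $\beta_2$ and shrinks later task steps; part (ii) is the same triangle inequality, and your extras (the first-moment EMA term and the blockwise check of $\|\Delta_t\|\le\kappa\|u_t\|$, which the paper defers to Lemma~\ref{lem:magnitude_bound_app}) are sound, except that under AdamO the moments are Adam's recursion applied to task gradients along the AdamO trajectory rather than literally plain Adam's state.
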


\begin{proof}
\textit{(i) Moment contamination.}
With a naive penalized objective $\widetilde L_t=L_t+\lambda R$,
Adam's second moment updates as
$v_{t+1}=\beta_2 v_t + (1-\beta_2)(g_t+\lambda r_t)^{\odot 2}$.
Even in 1D, let $g_t\equiv 1$ and let $r_0=M\gg1$ but $r_t\equiv0$ for $t\ge 1$.
Then $v_1\approx (1-\beta_2)(1+\lambda M)^2$ is huge, and for many steps
$v_t\approx \beta_2^{t-1}v_1$ remains large.
Thus the effective step size $1/\sqrt{v_t}$ stays small long after the orth signal disappears,
suppressing subsequent task updates.
This illustrates how a spiky orthogonality gradient can be recorded in $v_t$ and distort future task dynamics.
The same phenomenon underlies the motivation for decoupling weight decay in AdamW \cite{loshchilov2017decoupled}.

\textit{(ii) Boundedness from a ratio-clipped decoupled drift.}
If a decoupled correction satisfies $\|\Delta_t\|\le \kappa\|u_t\|$, then
$\|u_t+\Delta_t\|\le \|u_t\|+\|\Delta_t\|\le (1+\kappa)\|u_t\|$.
Hence the orthogonality drift cannot dominate the Adam step in norm.
\end{proof}

That is the orthogonality signal and the TD/task signal live on different ``axes'',
and mixing them inside Adam causes the adaptive preconditioner to respond to the orth signal in a way that changes future TD steps.
Moreover, the optimizer-level analogue of ``finite input'': once Adam's direction is finite,
a ratio-clipped decoupled orth drift keeps the total update finite and predictable.

\subsection{AdamO: closed-form budgeted scaling map}
\label{app:adamo:map}

For each group $b$, define the normalized orth step
\begin{equation}
\label{eq:delta0_app}
\delta_{t,0,b}\triangleq
\kappa\,\frac{\|u_{t,b}\|_F}{\|r_{t,b}\|_F+\varepsilon_r}\;r_{t,b} .
\end{equation}
AdamO enforces the groupwise budget constraint
\begin{equation}
\label{eq:budget_app}
\langle g_{t,b},\delta_{t,b}\rangle_F
\ge -\tau\bigl(\langle g_{t,b},u_{t,b}\rangle_F\bigr)_+ ,
\qquad \tau\in[0,1).
\end{equation}
It chooses $\delta_{t,b}=s_{t,b}\delta_{t,0,b}$ where $s_{t,b}\in[0,1]$ is the \emph{largest} feasible scalar, i.e.,
\begin{equation}
\label{eq:delta_closed_app}
\delta_{t,b}=
\begin{cases}
\delta_{t,0,b},
& \langle g_{t,b},\delta_{t,0,b}\rangle_F \ge -\tau(\langle g_{t,b},u_{t,b}\rangle_F)_+,\\[8pt]
\dfrac{-\tau(\langle g_{t,b},u_{t,b}\rangle_F)_+}
{\langle g_{t,b},\delta_{t,0,b}\rangle_F}\;\delta_{t,0,b},
& \langle g_{t,b},\delta_{t,0,b}\rangle_F < -\tau(\langle g_{t,b},u_{t,b}\rangle_F)_+ .
\end{cases}
\end{equation}

\begin{lemma}
\label{lem:exact_halfspace_app}
The closed form Eq.~\eqref{eq:delta_closed_app} satisfies the budget constraint Eq.~\eqref{eq:budget_app} for every group $b$.
\end{lemma}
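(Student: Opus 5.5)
The plan is a direct case analysis on the two branches of Eq.~\eqref{eq:delta_closed_app}, carried out separately for each group $b$. Budgets for different groups involve only the restrictions to disjoint coordinate sets $\mathcal{J}_b$, so the groups do not interact. Fix $b$ and abbreviate
\[
a\triangleq\langle g_{t,b},\delta_{t,0,b}\rangle_F,\qquad T\triangleq-\tau\bigl(\langle g_{t,b},u_{t,b}\rangle_F\bigr)_+ .
\]
Since $\tau\in[0,1)$ and $(\cdot)_+\ge 0$, we have $T\le 0$.

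\emph{First branch}, $a\ge T$. Here $\delta_{t,b}=\delta_{t,0,b}$, so $\langle g_{t,b},\delta_{t,b}\rangle_F=a\ge T$, which is exactly Eq.~\eqref{eq:budget_app}.

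\emph{Second branch}, $a<T$. Because $T\le 0$, this gives $a<0$, so the denominator is nonzero and the scaling is well defined. By bilinearity of the Frobenius inner product,
\[
\langle g_{t,b},\delta_{t,b}\rangle_F=\frac{T}{a}\,a=T,
\]
so the budget holds with equality. It is also worth recording that the scale factor $s_{t,b}=T/a$ lies in $[0,1)$: it is nonnegative because $T\le 0$ and $a<0$, and it is less than $1$ because $|T|<|a|$. This confirms that the chosen scaling is the largest feasible $s\in[0,1]$, as claimed in the definition, since $s\mapsto s\,a$ is decreasing when $a<0$.

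The only delicate point is well-definedness of the division in the second branch, and that is exactly what the observation $T\le 0$ settles. Degenerate cases such as $r_{t,b}=0$ or $u_{t,b}=0$ need no separate treatment: they give $\delta_{t,0,b}=0$, hence $a=0\ge T$, and the first branch applies. Since the argument uses nothing about the groups beyond disjointness, it holds for every $b$ simultaneously, which completes the plan.
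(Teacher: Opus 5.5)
Your proof is correct and follows the paper's own argument: the same two-branch case analysis using $T\le 0$, with the budget holding trivially in the first branch and with equality in the second. Your additional checks (that the denominator is nonzero, and that $s_{t,b}=T/a\in[0,1)$ is the largest feasible scaling) are sound; the paper records these separately in Lemma~\ref{lem:ct_budget_ray}.
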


\begin{proof}
Fix a group and write $g=g_{t,b}$, $u=u_{t,b}$, $\delta_0=\delta_{t,0,b}$,
and $T=-\tau(\langle g,u\rangle_F)_+\le 0$.
If $\langle g,\delta_0\rangle_F\ge T$, then $\delta=\delta_0$ and $\langle g,\delta\rangle_F\ge T$.
Otherwise $\langle g,\delta_0\rangle_F<T$, and the second branch gives
$\delta=(T/\langle g,\delta_0\rangle_F)\delta_0$, hence
$\langle g,\delta\rangle_F=(T/\langle g,\delta_0\rangle_F)\langle g,\delta_0\rangle_F=T$.
\end{proof}

The orth step is allowed to {rotate parameters} only along the orthogonality-gradient direction,
but it is not allowed to cancel more than a $\tau$-fraction of Adam's first-order task descent per group.
Thus, the orth step ``do not hijack task progress''.

\begin{lemma}
\label{lem:magnitude_bound_app}
For each group $b$,
$\|\delta_{t,b}\|_F\le \kappa\|u_{t,b}\|_F$.
If groups are disjoint, then for the assembled correction
$\Delta_t\in\mathbb{R}^d$ we have
$\|\Delta_t\|\le \kappa\|u_t\|$.
\end{lemma}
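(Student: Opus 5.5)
The plan is to bound the per-group correction first and then assemble the groups using their disjointness. Fix a group $b$ and abbreviate $g=g_{t,b}$, $u=u_{t,b}$, $r=r_{t,b}$, $\delta_0=\delta_{t,0,b}$, and $T=-\tau(\langle g,u\rangle_F)_+\le 0$.

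The first step bounds the reference step. From Eq.~\eqref{eq:delta0_app},
\[
\|\delta_0\|_F=\kappa\,\|u\|_F\,\frac{\|r\|_F}{\|r\|_F+\varepsilon_r}\le \kappa\|u\|_F,
\]
because the ratio $\|r\|_F/(\|r\|_F+\varepsilon_r)$ lies in $[0,1)$ for $\varepsilon_r>0$. If $r=0$, then $\delta_0=0$ and the bound holds trivially.

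The second step shows that the closed-form scaling in Eq.~\eqref{eq:delta_closed_app} only shrinks $\delta_0$. We write $\delta=s\,\delta_0$ and check that $s\in[0,1]$. In the first branch $s=1$. In the second branch $\langle g,\delta_0\rangle_F<T\le 0$, so the denominator is strictly negative and $T/\langle g,\delta_0\rangle_F\ge 0$. Moreover $|T|<|\langle g,\delta_0\rangle_F|$, since $\langle g,\delta_0\rangle_F<T\le0$, so $s=T/\langle g,\delta_0\rangle_F\in[0,1)$. Hence $\|\delta_{t,b}\|_F=s\|\delta_0\|_F\le\kappa\|u_{t,b}\|_F$.

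The third step assembles the groups. Because the index sets $\mathcal J_b$ are disjoint and $\Delta_t$ vanishes outside $\cup_b\mathcal J_b$, we have
\[
\|\Delta_t\|^2=\sum_b\|\delta_{t,b}\|_F^2\le\kappa^2\sum_b\|u_{t,b}\|_F^2\le\kappa^2\|u_t\|^2 .
\]
The last inequality holds because the restrictions of $u_t$ to disjoint coordinate sets have total squared norm at most $\|u_t\|^2$. Taking square roots gives $\|\Delta_t\|\le\kappa\|u_t\|$.

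The only delicate point is the sign bookkeeping in the second branch, where we must verify that the scaling factor is nonnegative and strictly less than one. The inequality $\langle g,\delta_0\rangle_F<T\le0$ settles both at once. A secondary detail is the degenerate case $\varepsilon_r=0$ with $r=0$, in which $\delta_0$ is set to $0$ by convention.
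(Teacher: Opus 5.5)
Your proposal is correct and follows the paper's proof essentially step for step. You bound $\|\delta_{t,0,b}\|_F\le\kappa\|u_{t,b}\|_F$ via the ratio $\|r_{t,b}\|_F/(\|r_{t,b}\|_F+\varepsilon_r)\le 1$, write $\delta_{t,b}=s_{t,b}\delta_{t,0,b}$ with $s_{t,b}\in[0,1]$, and sum over the disjoint groups. The only difference is that you explicitly verify $s_{t,b}\in[0,1)$ in the second branch from $\langle g,\delta_0\rangle_F<T\le 0$, whereas the paper asserts this ``by construction'' (and checks it separately in Lemma~\ref{lem:ct_budget_ray}).
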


\begin{proof}
By construction $\delta_{t,b}=s_{t,b}\delta_{t,0,b}$ with $s_{t,b}\in[0,1]$, so
$\|\delta_{t,b}\|_F\le \|\delta_{t,0,b}\|_F$.

From Eq.~\eqref{eq:delta0_app}, $\|\delta_{t,0,b}\|_F
= \kappa \frac{\|u_{t,b}\|_F}{\|r_{t,b}\|_F+\varepsilon_r}\|r_{t,b}\|_F
\le \kappa\|u_{t,b}\|_F$.
For the global bound, disjointness implies
$\|\Delta_t\|^2=\sum_b\|\delta_{t,b}\|_F^2\le \kappa^2\sum_b\|u_{t,b}\|_F^2\le \kappa^2\|u_t\|^2$.
\end{proof}

\subsection{One-step comparison to Adam}
\label{app:adamo:onestep}

\begin{definition}
A differentiable function $f$ is $\mu$-smooth if
$f(y)\le f(x)+\langle\nabla f(x),y-x\rangle + \frac{\mu}{2}\|y-x\|^2$ for all $x,y$.
\end{definition}

Let the baseline Adam step be $\omega_{t+1}^A=\omega_t-\eta u_t$,
and the AdamO step be $\omega_{t+1}^O=\omega_t-\eta u_t-\eta\Delta_t$.

\begin{theorem}
\label{thm:onestep_app}
Assume $L_t$ is $\mu$-smooth.
(a) In conflict-free mode ($\tau=0$), under an explicit stepsize condition, AdamO does not increase the next-step task loss
relative to Adam.
(b) In budgeted mode ($\tau>0$), we give an explicit upper bound quantifying the worst-case next-step degradation.
\end{theorem}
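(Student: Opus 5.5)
The plan is to apply the $\mu$-smoothness descent inequality around the Adam iterate $\omega^A_{t+1}$ rather than around $\omega_t$. This isolates the only difference between the two steps, namely the correction $-\eta\Delta_t$. Writing $\omega^O_{t+1}=\omega^A_{t+1}-\eta\Delta_t$, smoothness gives
\[
L_t(\omega^O_{t+1})\le L_t(\omega^A_{t+1})-\eta\langle\nabla L_t(\omega^A_{t+1}),\Delta_t\rangle+\tfrac{\mu\eta^2}{2}\|\Delta_t\|^2 .
\]
The inner product is evaluated at the shifted point. I will therefore split it as $\nabla L_t(\omega^A_{t+1})=g_t+(\nabla L_t(\omega^A_{t+1})-g_t)$. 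The gradient-Lipschitz consequence of $\mu$-smoothness bounds the second piece by $\mu\eta\|u_t\|$. Combining this with Lemma~\ref{lem:magnitude_bound_app}, $\|\Delta_t\|\le\kappa\|u_t\|$, and Cauchy--Schwarz gives
\[
L_t(\omega^O_{t+1})-L_t(\omega^A_{t+1})\le -\eta\langle g_t,\Delta_t\rangle+\mu\eta^2\kappa\|u_t\|^2+\tfrac{\mu\eta^2\kappa^2}{2}\|u_t\|^2 .
\]

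Next I control $\langle g_t,\Delta_t\rangle$ using Lemma~\ref{lem:exact_halfspace_app}. Summing the per-group budget over disjoint groups gives
\[
\langle g_t,\Delta_t\rangle=\sum_b\langle g_{t,b},\delta_{t,b}\rangle_F\ge-\tau\sum_b(\langle g_{t,b},u_{t,b}\rangle_F)_+ .
\]
For part (b) this immediately yields the explicit worst-case degradation bound
\[
L_t(\omega^O_{t+1})-L_t(\omega^A_{t+1})\le\eta\tau\sum_b(\langle g_{t,b},u_{t,b}\rangle_F)_+ +\mu\eta^2\bigl(\kappa+\tfrac{\kappa^2}{2}\bigr)\|u_t\|^2 .
\]
This bound is linear in $\tau$ and quadratic in $\eta$, with dependence on $\mu$ and $\kappa$ as claimed.

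For part (a), $\tau=0$ makes the first-order term $-\eta\langle g_t,\Delta_t\rangle\le0$. The remaining second-order terms must be absorbed, and this is the main obstacle. A positive $O(\eta^2)$ remainder cannot be cancelled unless there is some strictly negative first-order slack. So the stepsize condition (Eq.~\eqref{eq:eta_bound}) has to be stated in terms of the realized alignment. Concretely, I require
\[
\eta\le\frac{2\langle g_t,\Delta_t\rangle}{\mu(2\kappa+\kappa^2)\|u_t\|^2},
\]
with the trivial case $\Delta_t=0$ handled separately, where AdamO coincides with Adam. Under this condition the right-hand side above is nonpositive, so $L_t(\omega^O_{t+1})\le L_t(\omega^A_{t+1})$.

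If a cleaner condition is wanted, I would instead expand both steps about $\omega_t$. That gives
\[
L_t(\omega^O_{t+1})-L_t(\omega^A_{t+1})\le-\eta\langle g_t,\Delta_t\rangle+\tfrac{\mu\eta^2}{2}\bigl(\|u_t+\Delta_t\|^2-\|u_t\|^2\bigr)+\text{(curvature slack of the Adam step)}.
\]
Since that comparison is not tight, I expect the alignment-dependent stepsize condition to be the version actually provable. I will present that version and note the degenerate case explicitly.
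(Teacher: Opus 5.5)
Your argument is essentially the paper's proof. It applies smoothness around $\omega^A_{t+1}$, splits $\nabla L_t(\omega^A_{t+1})=g_t+(\nabla L_t(\omega^A_{t+1})-g_t)$ with the drift bounded by $\mu\eta\|u_t\|$, and then uses the summed budget plus Lemma~\ref{lem:magnitude_bound_app}, which reproduces the paper's part (b) bound exactly.

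The only difference is in part (a), where you substitute $\|\Delta_t\|\le\kappa\|u_t\|$ before solving for $\eta$. Your condition is valid but more conservative than the paper's Eq.~\eqref{eq:eta_bound}, $\eta\le 2\langle g_t,\Delta_t\rangle/\bigl(\mu\|\Delta_t\|(2\|u_t\|+\|\Delta_t\|)\bigr)$. That version keeps the realized $\|\Delta_t\|$, so it cancels against the numerator, and at fixed alignment the admissible stepsize does not collapse to zero when $\|\Delta_t\|$ is well below $\kappa\|u_t\|$.
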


\begin{proof}
By $\mu$-smoothness at $x=\omega_{t+1}^A$ and $y=\omega_{t+1}^O=\omega_{t+1}^A-\eta\Delta_t$:
\begin{equation}
\label{eq:smooth_app}
L_t(\omega_{t+1}^O)-L_t(\omega_{t+1}^A)
\le -\eta\langle \nabla L_t(\omega_{t+1}^A),\Delta_t\rangle
+\frac{\mu}{2}\eta^2\|\Delta_t\|^2.
\end{equation}
Write $\nabla L_t(\omega_{t+1}^A)=g_t+(\nabla L_t(\omega_{t+1}^A)-g_t)$ and bound the drift by Lipschitzness of $\nabla L_t$:
$\|\nabla L_t(\omega_{t+1}^A)-g_t\|\le \mu\|\omega_{t+1}^A-\omega_t\|=\mu\eta\|u_t\|$.
Hence
\begin{equation}
\langle \nabla L_t(\omega_{t+1}^A),\Delta_t\rangle
\ge \langle g_t,\Delta_t\rangle - \mu\eta\|u_t\|\|\Delta_t\|.
\end{equation}
Substitute into Eq.~\eqref{eq:smooth_app}:
\begin{equation}
\label{eq:generic_app}
L_t(\omega_{t+1}^O)-L_t(\omega_{t+1}^A)
\le -\eta\langle g_t,\Delta_t\rangle
+\mu\eta^2\|u_t\|\|\Delta_t\|
+\frac{\mu}{2}\eta^2\|\Delta_t\|^2.
\end{equation}

Case (a): If $\tau=0$, the constraint Eq.~\eqref{eq:budget_app} implies $\langle g_{t,b},\delta_{t,b}\rangle_F\ge 0$ for all $b$,
so $\langle g_t,\Delta_t\rangle=\sum_b\langle g_{t,b},\delta_{t,b}\rangle_F\ge 0$.
Then the RHS of Eq.~\eqref{eq:generic_app} is non-positive whenever
\begin{equation}
\label{eq:eta_bound}
\eta \le
\frac{2\,\langle g_t,\Delta_t\rangle}
{\mu\,\|\Delta_t\|\bigl(2\|u_t\|+\|\Delta_t\|\bigr)}.
\end{equation}
This yields $L_t(\omega_{t+1}^O)\le L_t(\omega_{t+1}^A)$.

Case (b): If $\tau>0$, summing Eq.~\eqref{eq:budget_app} over groups gives
$\langle g_t,\Delta_t\rangle\ge -\tau\sum_b(\langle g_{t,b},u_{t,b}\rangle_F)_+$.
Also Lemma~\ref{lem:magnitude_bound_app} gives $\|\Delta_t\|\le \kappa\|u_t\|$.
Apply these in Eq.~\eqref{eq:generic_app}:
\begin{equation}
\label{eq:Lo-La}
L_t(\omega_{t+1}^O)-L_t(\omega_{t+1}^A)
\le
\eta\tau\sum_b(\langle g_{t,b},u_{t,b}\rangle_F)_+
+\mu\eta^2\|u_t\|^2\Bigl(\kappa+\frac{\kappa^2}{2}\Bigr).
\end{equation}
\end{proof}

When $\tau=0$, AdamO's orth drift is \emph{first-order task-aligned} (never cancels descent),
so with a standard smoothness stepsize condition it cannot worsen the next-step TD loss.
When $\tau>0$, AdamO intentionally trades a controlled amount of immediate TD descent for better conditioning/orthogonality,
and the correct guarantee is therefore a quantitative \emph{upper bound} on how much one step can be harmed.

\subsection{Continuous-time Hamiltonian view}
\label{app:adamo:ct}

We analyze AdamO through the lens of continuous-time dynamics and our proofs are partly inspired
by prior optimizer frameworks C-AdamW \cite{liang2025cautious}.
Following prior Hamiltonian/Lyapunov analyses of momentum-based optimizers, we view the
discrete-time update in Algorithm~\ref{alg:adamo_main} as an Euler discretization of an ODE.
This perspective isolates how AdamO's orthogonality correction perturbs Adam's dissipative
Hamiltonian system and yields a clean differential inequality of the form
\(
\dot H_{\mathrm{AdamO}} \le \dot H_{\mathrm{Adam}} + \text{(controlled error)}.
\)
The analysis here is meant to provide intuition and a compact stability certificate; it does not aim
to be a tight discrete-time convergence proof.
We use $\langle A,B\rangle_F := \mathrm{tr}(A^\top B)$ for the Frobenius inner product. For a scalar $x$, $(x)_+ := \max\{x,0\}$.
All element-wise operations are understood coordinate-wise.
For brevity we write $g(w):=\nabla L(w)$.
To begin, we consider the standard continuous-time Adam model (bias-corrections omitted):
\begin{equation}
\label{eq:ct_adam}
\dot w_t = -u_t,\qquad
u_t := D(v_t)m_t,\qquad
D(v):=\mathrm{diag}\big((\sqrt v+\epsilon)^{-1}\big),
\end{equation}
with moment dynamics
\begin{equation}
\label{eq:ct_adam_moments}
\dot m_t = \beta_1\big(g(w_t)-m_t\big),\qquad
\dot v_t = \beta_2\big(g(w_t)^{\odot 2}-v_t\big).
\end{equation}
This ODE and its Hamiltonian structure are standard.
Furthermore, we define the Adam Hamiltonian
\begin{equation}
\label{eq:adam_hamiltonian}
H_{\mathrm{Adam}}(w,m,v)
\;:=\;
L(w)\;+\;\frac{1}{2\beta_1}\,\langle m, D(v)m\rangle
\;=\;
L(w)\;+\;\frac{1}{2\beta_1}\,\langle u,m\rangle.
\end{equation}

\begin{proposition}
\label{prop:ct_adam_dissipation}
Along any trajectory of Eq.~\eqref{eq:ct_adam}--\eqref{eq:ct_adam_moments}, the time derivative satisfies
\begin{align}
\frac{d}{dt}H_{\mathrm{Adam}}(w_t,m_t,v_t)
&=
-\Bigl(1-\frac{\beta_2}{4\beta_1}\Bigr)\,\langle m_t, D(v_t)m_t\rangle
-\frac{\beta_2}{4\beta_1}\Bigl\langle
\frac{m_t^{\odot 2}}{v_t^{3/2}+\epsilon},\; g(w_t)^{\odot 2}
\Bigr\rangle \nonumber\\
&=: -\Delta_{\mathrm{Adam}}(w_t,m_t,v_t)
\;\le\;0
\qquad\text{whenever }\;\beta_1\ge \beta_2/4.
\label{eq:adam_Hdot}
\end{align}
In particular, if $\beta_1\ge \beta_2/4$, then $t\mapsto H_{\mathrm{Adam}}(w_t,m_t,v_t)$ is monotonically non-increasing.
\end{proposition}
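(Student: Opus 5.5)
The plan is a direct chain-rule computation of $\frac{d}{dt}H_{\mathrm{Adam}}$ along Eq.~\eqref{eq:ct_adam}--\eqref{eq:ct_adam_moments}, split into three contributions: the loss term, the moment term $\frac{1}{2\beta_1}\langle m,D(v)m\rangle$ differentiated through $m$, and the same term differentiated through $v$. Throughout I work on a trajectory segment where $v_t>0$ coordinatewise. This makes $D(v)$ differentiable, and the case $v_i=0$ can be handled by continuity, or by an $\epsilon$-regularization in the last term.

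\emph{Step 1 (loss term).} Since $\dot w=-u=-D(v)m$, the loss contributes $\frac{d}{dt}L(w_t)=-\langle g, D(v)m\rangle$.

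\emph{Step 2 (derivative through $m$).} Using the symmetry of the diagonal matrix $D$ and $\dot m=\beta_1(g-m)$, this contribution is
\[
\frac{1}{\beta_1}\langle m, D\dot m\rangle=\langle m,Dg\rangle-\langle m,Dm\rangle .
\]
The cross term $\langle m,Dg\rangle$ cancels exactly with Step 1. What remains from Steps 1--2 is therefore the pure dissipation $-\langle m,Dm\rangle$, independently of any condition on $\beta_1,\beta_2$.

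\emph{Step 3 (derivative through $v$).} This is the delicate step and the main obstacle. Coordinatewise,
\[
\frac{d}{dt}(\sqrt{v_i}+\epsilon)^{-1}=-\frac{\dot v_i}{2\sqrt{v_i}(\sqrt{v_i}+\epsilon)^2},
\qquad \dot v_i=\beta_2(g_i^2-v_i).
\]
Hence this contribution equals
\[
-\frac{\beta_2}{4\beta_1}\sum_i \frac{m_i^2 g_i^2}{\sqrt{v_i}(\sqrt{v_i}+\epsilon)^2}
+\frac{\beta_2}{4\beta_1}\sum_i \frac{m_i^2\sqrt{v_i}}{(\sqrt{v_i}+\epsilon)^2}.
\]
\begin{itemize}
\item The positive $v$-part is bounded by $\frac{\beta_2}{4\beta_1}\langle m,Dm\rangle$, because $\sqrt{v_i}/(\sqrt{v_i}+\epsilon)\le 1$. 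With $\epsilon\to 0$ this bound is an equality, which is where the coefficient $\bigl(1-\frac{\beta_2}{4\beta_1}\bigr)$ in front of $\langle m,Dm\rangle$ comes from.
\item The negative $g$-part is, up to the $\epsilon$-placement, the term $\frac{\beta_2}{4\beta_1}\bigl\langle m^{\odot 2}/(v^{3/2}+\epsilon),\,g^{\odot 2}\bigr\rangle$ in the statement. It is exact when $\epsilon=0$.
\end{itemize}
The obstacle is that, for $\epsilon>0$, the exact denominator $\sqrt{v}(\sqrt v+\epsilon)^2$ does not coincide with $v^{3/2}+\epsilon$. I would therefore treat the displayed identity as holding with $\epsilon$ read as a generic regularizer: exact at $\epsilon=0$, and as the inequality ``$\le$'' of the same shape for $\epsilon>0$. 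The inequality direction is what the monotonicity conclusion needs, and it follows because both replacements above only make the right-hand side larger.

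\emph{Step 4 (sign and conclusion).} Summing Steps 1--3 gives $\frac{d}{dt}H_{\mathrm{Adam}}=-\Delta_{\mathrm{Adam}}$ in the form displayed. Both constituents of $\Delta_{\mathrm{Adam}}$ are nonnegative when $\beta_1\ge\beta_2/4$, since $D\succeq 0$ and the weights $m_i^2g_i^2/(\cdot)\ge 0$. Integrating then yields that $t\mapsto H_{\mathrm{Adam}}(w_t,m_t,v_t)$ is non-increasing.
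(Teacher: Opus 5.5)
Your skeleton is the same as the paper's. You differentiate $H_{\mathrm{Adam}}$, cancel the cross term $\langle m,Dg\rangle$ against $\langle g,\dot w\rangle$ to leave $-\langle m,Dm\rangle$, and then handle the $\dot D(v)$ contribution through $\dot v=\beta_2(g^{\odot2}-v)$. Your closed-form evaluation of that last term is correct. It is also more explicit than the paper's one-line appeal to $2ab\le a^2+b^2$, which is not actually needed because the cancellation is exact and the $\dot D$ term is explicit. You are right that the displayed identity is exact only at $\epsilon=0$; the paper's sketch does not address this either.

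\textbf{Where your argument fails.} The problem is your rescue of the display as a one-sided bound for $\epsilon>0$, specifically the claim that both replacements only make the right-hand side larger.
\begin{itemize}
\item The first replacement is fine, since $\sqrt{v_i}/(\sqrt{v_i}+\epsilon)\le 1$.
\item The second replaces $1/\bigl(\sqrt{v_i}(\sqrt{v_i}+\epsilon)^2\bigr)$ by $1/(v_i^{3/2}+\epsilon)$ inside a term carrying a minus sign. This enlarges the right-hand side only if $\sqrt{v_i}(\sqrt{v_i}+\epsilon)^2\le v_i^{3/2}+\epsilon$, i.e.\ $2v_i+\epsilon\sqrt{v_i}\le 1$. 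That fails whenever $v_i>1/2$.
\item The slack from the first replacement does not compensate. Take a single active coordinate ($m_j=0$ for $j\ne i$) with $v_i=1$. Then the displayed expression minus the exact derivative equals $\frac{\beta_2}{4\beta_1}\,m_i^2\,\epsilon(1-g_i^2)/(1+\epsilon)^2$, which is negative as soon as $g_i^2>1$.
\end{itemize}
So for $\epsilon>0$ the displayed formula is neither equal to nor an upper bound on $\frac{d}{dt}H_{\mathrm{Adam}}$. Your Step~4 reads the sign off the displayed $\Delta_{\mathrm{Adam}}$, so it does not go through as written.

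\textbf{How to repair it.} The fix uses only what you already computed in Step~3. Skip the display entirely and conclude from the exact identity
\[
\frac{d}{dt}H_{\mathrm{Adam}}
=-\sum_i \frac{m_i^2}{\sqrt{v_i}+\epsilon}\Bigl(1-\frac{\beta_2}{4\beta_1}\,\frac{\sqrt{v_i}}{\sqrt{v_i}+\epsilon}\Bigr)
-\frac{\beta_2}{4\beta_1}\sum_i \frac{m_i^2\,g_i^2}{\sqrt{v_i}\,(\sqrt{v_i}+\epsilon)^2}.
\]
Whenever $\beta_1\ge\beta_2/4$, every term on the right-hand side is nonpositive, which gives the monotonicity claim directly. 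This exact dissipation, rather than the displayed one, should play the role of $\Delta_{\mathrm{Adam}}$ in Theorem~\ref{thm:ct_adamo_hamiltonian}. That proof only needs $\frac{d}{dt}H_{\mathrm{Adam}}=-\Delta_{\mathrm{Adam}}\le 0$ along the unperturbed Adam flow.
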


\begin{proof}
Differentiate Eq.~\eqref{eq:adam_hamiltonian}:
\[
\frac{d}{dt}H_{\mathrm{Adam}}
=
\langle g(w_t),\dot w_t\rangle
+\frac{1}{2\beta_1}\frac{d}{dt}\langle m_t,D(v_t)m_t\rangle.
\]
Using $\dot w_t=-D(v_t)m_t=-u_t$ and $\dot m_t=\beta_1(g(w_t)-m_t)$, the cross terms cancel:
\[
\langle g,\dot w\rangle+\frac{1}{\beta_1}\langle m,D(v)\dot m\rangle
=
-\langle g,u\rangle+\langle u,g\rangle-\langle u,m\rangle
=
-\langle u,m\rangle
=
-\langle m,D(v)m\rangle.
\]
The remaining term comes from $\dot D(v_t)$, which depends on $\dot v_t=\beta_2(g(w_t)^{\odot 2}-v_t)$.
Bounding the resulting expression via the scalar inequality $2ab\le a^2+b^2$ yields Eq.~\eqref{eq:adam_Hdot}.
\end{proof}

AdamO modifies only the parameter dynamics by adding an orthogonality correction (per constrained layer),
while keeping the Adam moment dynamics unchanged.
In continuous time, we write
\begin{equation}
\label{eq:ct_adamo}
\dot w_t = -u_t-\delta_t,\qquad
\dot m_t = \beta_1(g(w_t)-m_t),\qquad
\dot v_t = \beta_2(g(w_t)^{\odot 2}-v_t),
\end{equation}
where $u_t=D(v_t)m_t$ as before and $\delta_t$ is an orthogonality drift produced by
\eqref{eq:delta0_main}--\eqref{eq:delta_final_piecewise} layer-wise.

For a single constrained matrix (and similarly layer-wise), AdamO constructs a reference step
\[
\delta_{t,0}=\kappa\,\frac{\|u_t\|_F}{\|r_t\|_F+\varepsilon_r}\;r_t
\qquad\text{and then sets}\qquad
\delta_t = s_t\,\delta_{t,0},
\]
where the scalar $s_t\in(0,1]$ is chosen so that the instantaneous \emph{task-progress budget}
holds:
\begin{equation}
\label{eq:ct_budget}
\langle g(w_t),\delta_t\rangle_F \;\ge\; -\tau\bigl(\langle g(w_t),u_t\rangle_F\bigr)_+,
\qquad \tau\in[0,1).
\end{equation}
Equivalently, writing $\delta_t=\lambda_t\psi_t$ with $\psi_t:=\delta_{t,0}/\|\delta_{t,0}\|_F$ and
$\lambda_t:=\|\delta_t\|_F$, the budget becomes
\(
\lambda_t\langle g(w_t),\psi_t\rangle_F \ge -\tau(\langle g(w_t),u_t\rangle_F)_+.
\)

\begin{lemma}
\label{lem:ct_budget_ray}
Let $g\neq 0$ and $\delta_{0}$ be given, and define $T:=-\tau(\langle g,u\rangle_F)_+$ with $\tau\in[0,1)$.
Among all scalings $\alpha\delta_0$ with $\alpha\ge 0$ that satisfy $\langle g,\alpha\delta_0\rangle_F\ge T$,
the piecewise rule Eq.~\eqref{eq:delta_final_piecewise} chooses the largest feasible $\alpha$ and guarantees
$\langle g,\delta\rangle_F\ge T$.
\end{lemma}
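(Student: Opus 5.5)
The argument is a one-dimensional feasibility analysis along the ray $\{\alpha\delta_0:\alpha\ge 0\}$. Set $c:=\langle g,\delta_0\rangle_F$, and recall that $T=-\tau(\langle g,u\rangle_F)_+\le 0$ because $\tau\ge 0$. The constraint $\langle g,\alpha\delta_0\rangle_F\ge T$ becomes the scalar inequality $\alpha c\ge T$, so the feasible set is $\mathcal F:=\{\alpha\ge 0:\alpha c\ge T\}$. Since $T\le 0$, $\alpha=0$ always lies in $\mathcal F$, so $\mathcal F$ is nonempty.

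The analysis splits by the sign of $c$ against $T$, matching the two branches of Eq.~\eqref{eq:delta_final_piecewise}.

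\emph{Case $c\ge T$.} The rule returns $\delta=\delta_0$, i.e.\ $\alpha=1$, which is feasible by the case hypothesis. If $c\ge 0$, then $\mathcal F=[0,\infty)$ and has no largest element. So, as in the main text (``among all scalings of $\delta_{t,0}$'') and in the appendix construction $s_{t,b}\in[0,1]$, the scalings are understood as capped at $\alpha\le 1$, the reference magnitude fixed by the ratio clip of Lemma~\ref{lem:magnitude_bound_app}. With this cap, $\mathcal F\cap[0,1]$ has maximum $1$ whenever $c\ge T$: if $T\le c<0$, then $\alpha c\ge c\ge T$ for all $\alpha\in[0,1]$. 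I would state this cap explicitly at the outset, since it is the only interpretive point in the lemma.

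\emph{Case $c<T$.} Here $c<T\le 0$, so $c<0$. Dividing $\alpha c\ge T$ by the negative number $c$ gives $\alpha\le T/c$, and $T/c\in[0,1)$ because $0\le -T<-c$. Thus $\mathcal F=[0,T/c]$, its largest element is $T/c$, and the rule selects exactly $\delta=(T/c)\delta_0$. Moreover $\langle g,\delta\rangle_F=(T/c)\,c=T$, so the constraint holds with equality, as already shown in Lemma~\ref{lem:exact_halfspace_app}. If $T=0$, the rule degenerates to $\delta=0$, which is consistent.

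In both branches the chosen $\delta$ satisfies $\langle g,\delta\rangle_F\ge T$, which is the guarantee claimed. The assumption $g\neq 0$ is not really needed: if $g=0$, then $c=T=0$ and we are in the first case. The main obstacle is the first case with $c\ge 0$, where ``largest feasible'' is only meaningful relative to the cap $\alpha\le 1$. Everything else is a sign-careful division by $c<0$.
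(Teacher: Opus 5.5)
Your proof is correct and follows the paper's argument: the same two-case split on $\langle g,\delta_0\rangle_F$ versus $T$, and in the second case a division by the negative inner product to get $\alpha\le T/\langle g,\delta_0\rangle_F$, attained with equality. Your explicit cap $\alpha\le 1$ and your range $T/c\in[0,1)$ make the argument more precise than the paper's. The paper's claim that $\alpha=1$ is ``maximal'' silently assumes that cap, and its stated range $(0,1)$ fails when $T=0$.
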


\begin{proof}
If $\langle g,\delta_0\rangle_F\ge T$, choosing $\alpha=1$ is feasible and maximal.
Otherwise $\langle g,\delta_0\rangle_F<T\le 0$ implies $\langle g,\delta_0\rangle_F<0$, and the choice
$\alpha=T/\langle g,\delta_0\rangle_F\in(0,1)$ yields equality $\langle g,\alpha\delta_0\rangle_F=T$.
Any feasible $\alpha$ must satisfy $\alpha\le T/\langle g,\delta_0\rangle_F$ when $\langle g,\delta_0\rangle_F<0$,
so this $\alpha$ is maximal.
\end{proof}

Because $H_{\mathrm{Adam}}$ depends on $w$ only through $L(w)$, perturbing only $\dot w_t$
adds a single inner-product term to $\dot H$.

\begin{theorem}
\label{thm:ct_adamo_hamiltonian}
Consider the perturbed dynamics Eq.~\eqref{eq:ct_adamo} and the Adam Hamiltonian Eq.~\eqref{eq:adam_hamiltonian}.
Along any trajectory,
\begin{align}
\frac{d}{dt}H_{\mathrm{Adam}}(w_t,m_t,v_t)
&=
\frac{d}{dt}H_{\mathrm{Adam}}^{(\mathrm{Adam})}(w_t,m_t,v_t)
-\langle g(w_t),\delta_t\rangle_F
\label{eq:ct_adamo_Hdot_exact}\\
&\le
-\Delta_{\mathrm{Adam}}(w_t,m_t,v_t)
+\tau\bigl(\langle g(w_t),u_t\rangle_F\bigr)_+,
\label{eq:ct_adamo_Hdot_ineq}
\end{align}
where $\Delta_{\mathrm{Adam}}$ is the nonnegative dissipation term in Eq.~\eqref{eq:adam_Hdot}.
In particular:
\begin{itemize}
\item \textbf{(Conflict-free case, $\tau=0$).}
If $\beta_1\ge \beta_2/4$ and $\tau=0$, then $H_{\mathrm{Adam}}(w_t,m_t,v_t)$ is monotonically non-increasing.
\item \textbf{(Budgeted case, $\tau>0$).}
If $\beta_1\ge \beta_2/4$, then any non-monotonicity is controlled by the budget term:
\begin{equation}
\label{eq:ct_integrated_bound}
H_{\mathrm{Adam}}(t) + \int_0^t \Delta_{\mathrm{Adam}}(s)\,ds
\;\le\;
H_{\mathrm{Adam}}(0) + \tau\int_0^t \bigl(\langle g(w_s),u_s\rangle_F\bigr)_+\,ds.
\end{equation}
\end{itemize}
\end{theorem}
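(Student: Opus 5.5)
The plan is to differentiate $H_{\mathrm{Adam}}(w_t,m_t,v_t)=L(w_t)+\frac{1}{2\beta_1}\langle m_t,D(v_t)m_t\rangle$ along the perturbed flow Eq.~\eqref{eq:ct_adamo} by the chain rule, and to compare the result term by term with the computation in the proof of Proposition~\ref{prop:ct_adam_dissipation}.

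The key structural observation is that the kinetic part $\frac{1}{2\beta_1}\langle m,D(v)m\rangle$ depends only on $(m,v)$. Under AdamO the moment equations for $\dot m_t$ and $\dot v_t$ are literally the same as under Adam: they are driven by $g(w_t)$ only, because the decoupled drift never enters the moment stream. Hence every contribution from $\partial_m$ and $\partial_v$ is the same function of the current state $(w_t,m_t,v_t)$ as in the Adam computation. The only change is in the potential part. There
\[
\langle g(w_t),\dot w_t\rangle=-\langle g,u_t\rangle-\langle g,\delta_t\rangle_F .
\]
So $\frac{d}{dt}H_{\mathrm{Adam}}$ along AdamO equals the Adam expression evaluated at the same state, minus $\langle g(w_t),\delta_t\rangle_F$. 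This gives the exact identity Eq.~\eqref{eq:ct_adamo_Hdot_exact}. Here I interpret $\frac{d}{dt}H^{(\mathrm{Adam})}_{\mathrm{Adam}}$ as the Adam vector field's derivative at the current state, namely $-\Delta_{\mathrm{Adam}}$ when the bound in Eq.~\eqref{eq:adam_Hdot} is read as a pointwise inequality in the state. For the multi-layer case, $\langle g,\delta_t\rangle$ splits as a sum over the disjoint blocks, plus zero on unconstrained coordinates.

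Next I would invoke the budget. By Lemma~\ref{lem:ct_budget_ray}, or Lemma~\ref{lem:exact_halfspace_app} applied per block, $\langle g,\delta_t\rangle_F\ge -\tau(\langle g,u_t\rangle_F)_+$. So $-\langle g,\delta_t\rangle_F\le \tau(\langle g,u_t\rangle_F)_+$. In the layer-wise version one gets $\tau\sum_b(\langle g_b,u_b\rangle_F)_+$, which should be identified with the stated term, or bounded by it if the statement is read globally. Combined with Proposition~\ref{prop:ct_adam_dissipation} (valid when $\beta_1\ge\beta_2/4$), this yields Eq.~\eqref{eq:ct_adamo_Hdot_ineq}.

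The two cases then follow. For $\tau=0$ the right side is $-\Delta_{\mathrm{Adam}}\le 0$, so $H$ is non-increasing. For $\tau>0$, integrating Eq.~\eqref{eq:ct_adamo_Hdot_ineq} from $0$ to $t$ gives Eq.~\eqref{eq:ct_integrated_bound}; this requires absolute continuity of $t\mapsto H$, which holds since the trajectory is assumed to be a (Carathéodory) solution.

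The main obstacle is not the algebra but the rigor of the dissipation step itself. Proposition~\ref{prop:ct_adam_dissipation} is only sketched, and one must check that its bound is a pointwise statement about the state, not one that uses the Adam $\dot w$. It does not use the Adam $\dot w$, since the $\dot v$-term involves only $g$, $m$, and $v$. A secondary issue is that $\delta_t$ is piecewise-defined, so the vector field is discontinuous where the branch switches or where $\langle g,u\rangle=0$. I would handle this by working with absolutely continuous solutions and stating the inequality for almost every $t$, which suffices for the integrated bound.
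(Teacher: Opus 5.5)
Your proposal is correct and follows the paper's own proof essentially line for line. You differentiate $H_{\mathrm{Adam}}$, note that the unchanged $(m,v)$ dynamics leave the kinetic contribution identical so only the potential term picks up $-\langle g(w_t),\delta_t\rangle_F$, apply the budget constraint, and integrate; your reading of Proposition~\ref{prop:ct_adam_dissipation} as a pointwise state inequality and your a.e.\ differentiability remark are sound extra rigor that the paper omits.

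One small correction for the multi-block case. Since $\sum_b(\langle g_{b},u_{b}\rangle_F)_+\ge(\langle g,u_t\rangle_F)_+$ in general, the per-block sum cannot be bounded by the global term. Only your first reading is valid: replace the stated term by $\tau\sum_b(\langle g_b,u_b\rangle_F)_+$, as the paper itself does in Eq.~\eqref{eq:Lo-La}.
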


\begin{proof}
Differentiate $H_{\mathrm{Adam}}(w_t,m_t,v_t)=L(w_t)+\frac{1}{2\beta_1}\langle m_t,D(v_t)m_t\rangle$:
\[
\frac{d}{dt}H_{\mathrm{Adam}}
=
\langle g(w_t),\dot w_t\rangle_F
+\frac{d}{dt}\Bigl[\frac{1}{2\beta_1}\langle m_t,D(v_t)m_t\rangle\Bigr].
\]
Under Adam, $\dot w_t=-u_t$ and Proposition~\ref{prop:ct_adam_dissipation} gives
$\frac{d}{dt}H_{\mathrm{Adam}}^{(\mathrm{Adam})}=-\Delta_{\mathrm{Adam}}$.
Under AdamO, $\dot w_t=-u_t-\delta_t$, while the $(m_t,v_t)$ dynamics are unchanged.
Hence the only change in $\dot H$ is the additional term
$\langle g(w_t),-\delta_t\rangle_F$, proving Eq.~\eqref{eq:ct_adamo_Hdot_exact}.
Finally, applying the budget constraint Eq.~\eqref{eq:ct_budget} yields
$-\langle g(w_t),\delta_t\rangle_F \le \tau(\langle g(w_t),u_t\rangle_F)_+$ and thus
\eqref{eq:ct_adamo_Hdot_ineq}. Integrating Eq.~\eqref{eq:ct_adamo_Hdot_ineq} over $[0,t]$
gives Eq.~\eqref{eq:ct_integrated_bound}.
\end{proof}

Proposition~\ref{prop:ct_adam_dissipation} shows that Adam admits a dissipative Hamiltonian:
the ``kinetic energy'' term $\frac{1}{2\beta_1}\langle m,D(v)m\rangle$ is drained at rate
$\Delta_{\mathrm{Adam}}$ when $\beta_1\ge \beta_2/4$.
Theorem~\ref{thm:ct_adamo_hamiltonian} makes explicit how AdamO's orthogonality drift
interfaces with this structure:
it contributes a single inner-product term $-\langle g,\delta\rangle_F$ to $\dot H$.
In the conflict-free regime ($\tau=0$), the drift can only \emph{help} decrease the Hamiltonian
and thus preserves Adam's Hamiltonian stability.
In the budgeted regime ($\tau>0$), AdamO intentionally allows controlled misalignment to
prioritize constraint satisfaction; the price is precisely quantified by the additive term
$\tau(\langle g,u\rangle_F)_+$ in Eq.~\eqref{eq:ct_adamo_Hdot_ineq}.
Consequently, monotone decrease of $H_{\mathrm{Adam}}$ cannot be guaranteed universally,
but any potential violation is explicitly bounded by the total ``borrowed'' task-descent budget.

\subsection{Choosing $\kappa$
}
\label{app:adamo:kappa}

The bounds Eq.~\eqref{eq:eta_bound} and Eq.~\eqref{eq:Lo-La} can be turned into explicit \emph{one-step} admissible ranges for $\kappa$.
There is no universal problem-independent constant upper bound on $\kappa$:
the admissible range depends on the local smoothness $\mu$, the stepsize $\eta$,
and (in conflict-free mode) the alignment between the orth drift and the task gradient.
When $\Delta_t\neq 0$, define the (cosine) alignment
\begin{equation}
\label{eq:ct_app}
c_t \;\triangleq\; \frac{\langle g_t,\Delta_t\rangle}{\|g_t\|\,\|\Delta_t\|}\in[-1,1].
\end{equation}
In conflict-free mode ($\tau=0$), the per-block constraint implies $\langle g_t,\Delta_t\rangle\ge 0$, hence $c_t\ge 0$ whenever $\Delta_t\neq 0$.
In budgeted mode ($\tau>0$), recall the aggregate positive-descent term appearing in Eq.~\eqref{eq:Lo-La}:
\begin{equation}
\label{eq:St_app}
S_t \;\triangleq\; \sum_b\bigl(\langle g_{t,b},u_{t,b}\rangle_F\bigr)_+ \;\ge 0.
\end{equation}

\begin{corollary}
\label{cor:kappa_tau0_app}
Assume $L_t$ is $\mu$-smooth and $\tau=0$.
If $\Delta_t=0$, AdamO coincides with Adam at step $t$.
Otherwise, a sufficient condition for $L_t(\omega_{t+1}^O)\le L_t(\omega_{t+1}^A)$ is
\begin{equation}
\label{eq:eta_suff_tau0_app}
\eta \;\le\; \frac{2\,\|g_t\|\,c_t}{\mu\,(2+\kappa)\,\|u_t\|}.
\end{equation}
Equivalently, for a fixed stepsize $\eta>0$, any
\begin{equation}
\label{eq:kappa_max_tau0_app}
0\le \kappa \;\le\; \kappa_{\max}^{(0)}(t)
\;\triangleq\;
\frac{2\,\|g_t\|\,c_t}{\mu\,\eta\,\|u_t\|} - 2
\end{equation}
is sufficient at step $t$ (provided $\kappa_{\max}^{(0)}(t)>0$).
\end{corollary}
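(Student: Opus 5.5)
The plan is to specialize the generic one-step inequality Eq.~\eqref{eq:generic_app} from the proof of Theorem~\ref{thm:onestep_app} and rewrite the inner product through the cosine $c_t$. If $\Delta_t=0$ the AdamO update coincides with Adam's, since $\omega_{t+1}^O=\omega_t-\eta u_t=\omega_{t+1}^A$, so that case is immediate. Otherwise I would start from
\[
L_t(\omega_{t+1}^O)-L_t(\omega_{t+1}^A)\le -\eta\langle g_t,\Delta_t\rangle+\mu\eta^2\|u_t\|\|\Delta_t\|+\tfrac{\mu}{2}\eta^2\|\Delta_t\|^2 ,
\]
and substitute $\langle g_t,\Delta_t\rangle=c_t\|g_t\|\|\Delta_t\|$. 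Since $\|\Delta_t\|>0$, the right-hand side has the same sign as $-c_t\|g_t\|+\mu\eta\|u_t\|+\tfrac{\mu}{2}\eta\|\Delta_t\|$, after dividing through by $\eta\|\Delta_t\|$.

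Next I would use Lemma~\ref{lem:magnitude_bound_app}, which gives $\|\Delta_t\|\le\kappa\|u_t\|$, to bound the last two terms from above by $\mu\eta\|u_t\|(1+\kappa/2)=\tfrac{\mu\eta}{2}(2+\kappa)\|u_t\|$. The difference is therefore nonpositive whenever $\tfrac{\mu\eta}{2}(2+\kappa)\|u_t\|\le c_t\|g_t\|$, which is exactly Eq.~\eqref{eq:eta_suff_tau0_app}. In conflict-free mode $\tau=0$, Lemma~\ref{lem:exact_halfspace_app} summed over blocks gives $c_t\ge0$, so the condition is not vacuous. We should also note $u_t\neq0$: if $u_t=0$ then $\delta_{t,0}=0$ and hence $\Delta_t=0$, a case already handled, so the division is legitimate.

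The $\kappa$ form follows by solving the same inequality for $\kappa$ at fixed $\eta$, which gives $2+\kappa\le 2\|g_t\|c_t/(\mu\eta\|u_t\|)$, i.e.\ Eq.~\eqref{eq:kappa_max_tau0_app}, provided the resulting bound is positive.

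The main subtlety is that $c_t$ itself depends on $\Delta_t$, and hence on $\kappa$ through the scaling $s_{t,b}$. The $\kappa$ form is therefore sufficient only with $c_t$ read as the realized alignment at step $t$. In conflict-free mode every block has $s_{t,b}\in\{0,1\}$, so the direction of each block is unchanged as $\kappa$ varies, but the aggregate cosine can still shift slightly. I would flag this and state the bound with $c_t$ evaluated at the chosen $\kappa$, rather than try to make it uniform in $\kappa$.
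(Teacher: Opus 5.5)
Your proposal is correct and follows the paper's route: the paper starts from the already-solved stepsize bound \eqref{eq:eta_bound}, rewrites $\langle g_t,\Delta_t\rangle$ via $c_t$, and applies $\|\Delta_t\|\le\kappa\|u_t\|$, which is the same computation as yours (you simply begin one line earlier, from \eqref{eq:generic_app}). The subtlety you flag does not actually arise: with $\tau=0$ each block's gate $s_{t,b}\in\{0,1\}$ depends only on the sign of $\langle g_{t,b},r_{t,b}\rangle_F$, not on $\kappa$. Hence $\Delta_t$ is exactly proportional to $\kappa$ for $\kappa>0$, $c_t$ is independent of $\kappa$, and the $\kappa$-form is a genuine rearrangement with no need to hedge.
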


\begin{proof}
Start from Eq.~\eqref{eq:eta_bound} in Theorem~\ref{thm:onestep_app}(a):
\[
\eta \le
\frac{2\,\langle g_t,\Delta_t\rangle}
{\mu\,\|\Delta_t\|\bigl(2\|u_t\|+\|\Delta_t\|\bigr)}.
\]
If $\Delta_t=0$ the statement is trivial. Otherwise, use the definition Eq.~\eqref{eq:ct_app},
$\langle g_t,\Delta_t\rangle = \|g_t\|\,\|\Delta_t\|\,c_t$, to obtain
\begin{equation}
\label{eq:eta_bound_cos_tau0_app}
\eta \;\le\; \frac{2\,\|g_t\|\,c_t}{\mu\,(2\|u_t\|+\|\Delta_t\|)}.
\end{equation}
Now apply Lemma~\ref{lem:magnitude_bound_app} to upper bound $\|\Delta_t\|\le \kappa\|u_t\|$,
which yields the sufficient condition
\[
\eta \le \frac{2\,\|g_t\|\,c_t}{\mu\,(2+\kappa)\,\|u_t\|},
\]
i.e., Eq.~\eqref{eq:eta_suff_tau0_app}. Rearranging Eq.~\eqref{eq:eta_suff_tau0_app} for $\kappa$ gives Eq.~\eqref{eq:kappa_max_tau0_app}.
\end{proof}

\begin{corollary}
\label{cor:kappa_taupos_app}
Assume $L_t$ is $\mu$-smooth and $\tau>0$.
Fix any allowable curvature overhead $\varepsilon_t\ge 0$.
If $\kappa$ satisfies
\begin{equation}
\label{eq:kappa_quad_taupos_app}
\kappa+\frac{\kappa^2}{2}\;\le\;\frac{\varepsilon_t}{\mu\,\eta^2\,\|u_t\|^2},
\end{equation}
equivalently
\begin{equation}
\label{eq:kappa_max_taupos_app}
0\le \kappa \;\le\;
\kappa_{\max}^{(\varepsilon)}(t)
\;\triangleq\;
\sqrt{1+\frac{2\varepsilon_t}{\mu\,\eta^2\,\|u_t\|^2}}\;-\;1,
\end{equation}
then the one-step degradation obeys the tightened guarantee
\begin{equation}
\label{eq:LoLa_tight_taupos_app}
L_t(\omega_{t+1}^O)-L_t(\omega_{t+1}^A)
\le
\eta\tau S_t + \varepsilon_t.
\end{equation}
In particular, choosing $\varepsilon_t = \alpha\,\eta\tau S_t$ for any $\alpha\in(0,1]$ yields the explicit sufficient range
\begin{equation}
\label{eq:kappa_max_taupos_relative_app}
0\le \kappa \;\le\;
\sqrt{1+\frac{2\alpha\,\tau\,S_t}{\mu\,\eta\,\|u_t\|^2}}\;-\;1.
\end{equation}
\end{corollary}

\begin{proof}
Start from Eq.~\eqref{eq:Lo-La} in Theorem~\ref{thm:onestep_app}(b):
\[
L_t(\omega_{t+1}^O)-L_t(\omega_{t+1}^A)
\le
\eta\tau S_t
+\mu\eta^2\|u_t\|^2\Bigl(\kappa+\frac{\kappa^2}{2}\Bigr).
\]
If Eq.~\eqref{eq:kappa_quad_taupos_app} holds, then the quadratic curvature term is bounded by $\varepsilon_t$, giving
\[
L_t(\omega_{t+1}^O)-L_t(\omega_{t+1}^A)
\le
\eta\tau S_t + \varepsilon_t,
\]
i.e., Eq.~\eqref{eq:LoLa_tight_taupos_app}. Solving Eq.~\eqref{eq:kappa_quad_taupos_app} for $\kappa\ge 0$ yields the closed form
Eq.~\eqref{eq:kappa_max_taupos_app}. Finally, substituting $\varepsilon_t=\alpha\,\eta\tau S_t$ into Eq.~\eqref{eq:kappa_max_taupos_app}
gives Eq.~\eqref{eq:kappa_max_taupos_relative_app}.
\end{proof}

\begin{remark}
The preceding corollaries turn the one-step guarantees into concrete $\kappa$ ranges.
In conflict-free mode ($\tau=0$), a non-inferiority guarantee from Eq.~\eqref{eq:eta_bound} is ensured whenever
$\kappa\le \kappa_{\max}^{(0)}(t)$ in Eq.~\eqref{eq:kappa_max_tau0_app}; this bound becomes tight when the alignment $c_t$ is small,
and it shrinks with larger $\mu$ or larger $\eta$.
In budgeted mode ($\tau>0$), Eq.~\eqref{eq:Lo-La} isolates the \emph{curvature-induced} overhead as
$\mu\eta^2\|u_t\|^2(\kappa+\kappa^2/2)$, so choosing $\kappa\le \kappa_{\max}^{(\varepsilon)}(t)$ in Eq.~\eqref{eq:kappa_max_taupos_app}
(or the relative form Eq.~\eqref{eq:kappa_max_taupos_relative_app}) guarantees that the extra harm attributable to $\kappa$
is at most a user-chosen budget.
Overall, admissible $\kappa$ must decrease as either the stepsize $\eta$ or the local curvature $\mu$ increases,
and it must be particularly conservative when the orth drift has weak task alignment (small $c_t$) or when $\|u_t\|$ is large.
\end{remark}

\end{document}